\PassOptionsToPackage{table}{xcolor}
\documentclass{article}
\usepackage{iclr2027_conference,times}
\usepackage{amsmath,amssymb,amsthm,booktabs,graphicx,tabularx,array,hyperref,url,placeins,longtable,multirow,wrapfig,needspace,float}
\usepackage{xcolor}
\usepackage{portal_layout}
\hypersetup{colorlinks=true,linkcolor=black,citecolor=black,urlcolor=black}
\newcommand{\PORTAL}{\textsc{Portal}}
\newcommand{\pp}{\,\mathrm{pp}}
\newtheorem{theorem}{Theorem}
\newtheorem{proposition}{Proposition}
\newtheorem{lemma}{Lemma}

\title{When Should We Intervene at Test Time?}
\iclrfinalcopy
\author{\begin{tabular}{@{}c@{\hspace{0.4em}}c@{}}
\parbox[t][4em][t]{0.47\textwidth}{\centering\bfseries Jiahao Zhang$^{*}$\\\normalfont\footnotesize Zhengzhou University of Aeronautics\\\texttt{marchematics@gmail.com}} &
\parbox[t][4em][t]{0.47\textwidth}{\centering\bfseries Xinling Wen\\\normalfont\footnotesize Zhengzhou University of Aeronautics\\\texttt{wenxinling@zua.edu.cn}}\\[0.8em]
\parbox[t][4em][t]{0.47\textwidth}{\centering\bfseries Liangliang Li\\\normalfont\footnotesize Xi'an University of Technology\\\texttt{1240311009@stu.xaut.edu.cn}} &
\parbox[t][4em][t]{0.47\textwidth}{\centering\bfseries Yu Chen$^{*}$\\\normalfont\footnotesize Zhengzhou University of Aeronautics\\\texttt{chenyu@zua.edu.cn}}
\end{tabular}\\[0.6em]
\normalfont\footnotesize $^{*}$Corresponding authors}
\begin{document}
\maketitle
\begin{abstract}
Test-time adaptation helps a deployed model respond to changing inputs, but the same intervention can rescue one batch and damage the next. Updating normalization statistics, for example, can improve corrupted images yet fail when a batch contains mostly one class. Average accuracy hides these failures and cannot tell a model when to use an intervention. Target labels can guide this decision, but they are scarce and some choices are already clear. We introduce PORTAL, a controller that chooses when to deploy an intervention, which inputs to label, and when to stop querying. Each available prediction rule is a candidate, and its predictions are fixed before current feedback arrives. Every acquired label therefore evaluates all candidates on the same input. PORTAL combines these shared comparisons and directs queries toward candidates that can still overtake the current leader. It stops when the remaining labels cannot change the choice or when the possible accuracy gain no longer justifies another query. Our analysis links the required feedback to accuracy gaps and shared mistakes, explaining why some choices need much less evidence. Experiments show that this control turns an intervention that loses accuracy overall into a useful choice on selected batches. It prevents severe failures caused by changes in arrival order and preserves gains across complete corruption streams. When a fixed rule already works well, PORTAL uses little feedback to retain its performance. Periodic label checks reveal when past evidence needs refreshing and help the controller recover as conditions change. The acquired labels also train stronger candidates for future batches, and selection brings nearly all of their accuracy gains into deployment. The same feedback therefore improves both the choice made now and the predictions available to later batches.
\end{abstract}
% BEGIN inlined sections/expanded_introduction.tex
\section{Introduction}
A deployed model encounters changes in both image appearance and class composition. Fog may replace clear weather, or one object class may dominate a run of inputs. Test-time adaptation responds by changing the model as data arrive. Yet an update that rescues one batch can break predictions on the next. Its average gain across a test set does not tell us when to use it. Consider batch normalization (BN), which normalizes activations using feature statistics. Recomputing these statistics on incoming images often helps under corruption. Averaging predictions from an image and its horizontally flipped view offers another intervention. We call each available prediction rule a \emph{candidate}. Existing adaptation methods provide many such candidates \citep{wang2021tent,niu2022eata,niu2023sar}. A fixed rule chooses one for the whole stream. Local selection chooses for each batch, retaining the unchanged model when an intervention is unhelpful.

The need for selection becomes clear on PACS, which contains Photo, Art, Cartoon and Sketch domains. Combining BN updates with original and flipped views gives BN+flip. This candidate works well when domains arrive in blocks or are mixed together. Grouping inputs by class, called \emph{class-block arrivals}, changes the batch statistics and reduces its accuracy to 22.58\%. We introduce \PORTAL{}, a controller that uses sparse target labels to choose among the available predictions.\footnote{Code, data, and manuscript: \url{https://anonymous.4open.science/r/PORTAL-8322/}.} It reaches 51.28\% under the same arrivals. Choosing well requires evidence, but labels are scarce. Active test-time adaptation acquires labels to guide updates and selection \citep{gui2024simatta,li2024hiltta,wang2025eatta}; adaptive labeling also supports evaluation under shift \citep{DBLP:conf/nips/MittalMJN24}. A close comparison may cost more feedback than its accuracy gain is worth. The controller must decide which candidate to deploy, which input to label, and when to stop. Appendix~\ref{app:related_work} places these decisions in the broader literature.

\PORTAL{} uses one label to evaluate every candidate on the same input. Agreement gives candidates the same outcome; disagreement reveals which predictions are correct. We group inputs by the classes that candidates predict and estimate class probabilities within each group. Every label updates all comparisons, including the overlap between errors. Queries target rivals that can overtake the leader within the remaining budget. Selection stops when no remaining labels can change the winner or when the possible accuracy gain cannot justify another query. Our analysis relates the feedback requirement to accuracy gaps and shared errors. It also shows why biased queries can make a wrong choice more certain.

Earlier labels reduce query costs while their class relationships remain stable; periodic checks reveal when comparisons need refreshing. After choosing among fixed predictions, we use the new labels to train stronger candidates for later batches. Our experiments follow this progression from intervention failure to future learning. Holding a candidate fixed shows that targeted feedback turns its aggregate accuracy loss into a gain. A more harmful transfer shows how selection prevents most of an intervention's damage. Complete ImageNet-C corruption streams test control at scale. CIFAR comparisons show the label savings when a strong fixed rule already works well. Finally, learned candidates raise PACS deployment accuracy from 59.68\% to 72.64\%. The same labels improve current deployment decisions and train stronger candidates for later batches.
\space
% END inlined sections/expanded_introduction.tex
% BEGIN inlined sections/expanded_method.tex
\section{Shared feedback for intervention selection}
\label{sec:expanded_method}
\subsection{The deployment decision}
At batch $t$, the candidate set $\mathcal A_t$ includes the unchanged rule $I$, called the \emph{incumbent}. A candidate compared against it is a \emph{challenger}. Other candidates update normalization, average image views, or use another classifier. A classifier produces class scores, or \emph{logits}. Averaging these scores gives an ensemble. All candidate predictions are fixed before querying.

Split the batch into a query set $Q_t$ and a separate deployment set $E_t$. Each input keeps its assigned role throughout the stream. The selector labels inputs from $Q_t$ and chooses one candidate for all of $E_t$. Deployment labels are opened only for evaluation. For input $x$ with label $y$, let $R_a(x,y)=\mathbf1\{a(x)=y\}$ record whether candidate $a$ is correct. We seek the highest expected accuracy on $E_t$ given the available evidence. For comparison, a \emph{batch oracle} uses every deployment label to choose the most accurate candidate.

\subsection{One label updates every comparison}
Merge candidates that agree on every current input. Each input's \emph{prediction pattern} lists the remaining candidates' predicted classes. Inputs with the same pattern form a \emph{stratum}. In stratum $s$, candidate $a$ predicts $p_{sa}$; the stratum's deployment share is $w_s=|E_t\cap s|/|E_t|$.

For $C$ classes, let $\theta_s=(\theta_{s1},\ldots,\theta_{sC})$ give the class probabilities in stratum $s$, and collect them as $\theta=(\theta_s)_s$. We assume query and deployment inputs within a stratum share these probabilities. This is the \emph{query-to-deployment transport assumption}. Candidate $a$ then has expected accuracy $A_a(\theta)=\sum_s w_s\theta_{s,p_{sa}}$.

A Dirichlet prior describes uncertainty about each stratum's class probabilities. The priors are independent across strata. After observing labels, let $\alpha_{sc}$ be the class counts including the prior. Write $n_s=\sum_c\alpha_{sc}$ for their total and $\pi_{sc}=\alpha_{sc}/n_s$ for the estimated class probabilities. Given the current evidence $\mathcal F_t$, candidate accuracy means and covariances are
\begin{align}
 \mu_a&=\sum_s w_s\pi_{s,p_{sa}},\label{eq:expanded_mu}\\
 \Sigma_{ab}&=\sum_s\frac{w_s^2}{n_s+1}
 \left[\mathbf1\{p_{sa}=p_{sb}\}\pi_{s,p_{sa}}
       -\pi_{s,p_{sa}}\pi_{s,p_{sb}}\right].\label{eq:expanded_cov}
\end{align}
The covariance records which candidates succeed or fail together. The weights $w_s$ account for the current mix of prediction patterns. A label therefore updates all comparisons without treating shared errors as independent. Reusing past counts also requires stable class probabilities within each stratum. Section~\ref{sec:transport_stress} tests what happens when this stability or representative sampling breaks.

\subsection{Query only while the decision can improve}
Let $j$ be the candidate with the highest estimated accuracy $\mu_a$. A designated fixed candidate breaks ties. A rival is \emph{reachable} if some sequence of affordable labels can put it ahead of $j$. To test this, give every queried disagreement the rival's predicted class. After $k_s$ such labels in stratum $s$, its best possible advantage is
\begin{equation}
 F_a(\boldsymbol k)=\sum_{s:p_{sa}\ne p_{sj}}
 w_s\frac{\alpha_{s,p_{sa}}-\alpha_{s,p_{sj}}+k_s}{n_s+k_s}.
 \label{eq:expanded_reachable}
\end{equation}
Maximize $F_a$ over available query counts and remaining budget. Decreasing marginal increments make greedy integer allocation exact. If every rival is unreachable, further queries have zero expected accuracy value for this batch.

For a reachable rival, let $\mu_a^{(s,c)}$ be the mean after observing class $c$ in stratum $s$. A query's expected decision value is
\begin{equation}
 V_s=\sum_c\pi_{sc}\max_a\mu_a^{(s,c)}-\max_a\mu_a.
 \label{eq:expanded_voi}
\end{equation}
Query the eligible stratum with the greatest $V_s$. A useful switch may need several labels before a rival overtakes the leader. If every $V_s$ is zero, use $T_s$, the largest variance reduction across reachable leader--rival comparisons (Appendix~\ref{app:expanded_algorithm}). In Figure~\ref{fig:expanded_method}, $s^*$ is the selected stratum and $b$ is the remaining label budget. F and BF denote horizontal flip and BN+flip.

\paragraph{Bounding the value of further information.}
The \emph{expected value of perfect information} (EVPI) is the gain from learning all candidate accuracies exactly. Bounding this gain bounds the benefit of further querying. Define the leader's gap $d_a=\mu_j-\mu_a$ and paired posterior variance $s_a^2=\Sigma_{aa}+\Sigma_{jj}-2\Sigma_{aj}$.

\begin{theorem}[Posterior value and local stopping]
\label{thm:expanded_stop}
Assume independent positive Dirichlet stratum priors, the categorical label likelihood, and fixed target weights. For a posterior-mean maximizer $j$, the expected accuracy improvement from any subsequent label policy is at most
\begin{equation}
 U=\min\left\{1-\mu_j,
 \sum_{a\ne j}\frac{\sqrt{d_a^2+s_a^2}-d_a}{2}\right\}.
 \label{eq:expanded_evpi}
\end{equation}
Let $c$ be the cost of another label in expected batch-accuracy units. If $c\ge U$, continuing cannot improve expected net utility. For a fixed remaining budget and available stratum counts, suppose Equation~\eqref{eq:expanded_reachable} is nonpositive for every feasible allocation and every rival. Those remaining labels then have zero expected accuracy value.
\end{theorem}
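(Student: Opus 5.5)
The proof has three parts, one for each claim: the EVPI bound $U$, the cost-based stopping rule, and the unreachability claim.

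\textbf{EVPI bound.} Let $\theta$ be drawn from the posterior given $\mathcal F_t$, and let $\mathcal G\supseteq\mathcal F_t$ be the information available once any subsequent label policy has finished. The value obtained after querying is $\mathbb E[\max_a \mathbb E[A_a(\theta)\mid\mathcal G]]$, and the value of stopping now is $\max_a\mu_a=\mu_j$. By Jensen's inequality for conditional expectations and the tower property, the improvement satisfies
\[
\mathbb E\Big[\max_a \mathbb E[A_a\mid\mathcal G]\Big]-\mu_j\le \mathbb E\Big[\max_a A_a(\theta)\Big]-\mu_j ,
\]
which is the EVPI. Two estimates bound the EVPI. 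First, $A_a\le 1$ gives $\mathbb E[\max_a A_a]\le 1$, so the EVPI is at most $1-\mu_j$. Second, write $\max_a A_a = A_j+\max_a (A_a-A_j)^+$ and bound the maximum of nonnegative terms by their sum. This gives
\[
\mathrm{EVPI}\le\sum_{a\ne j}\mathbb E\big[(A_a-A_j)^+\big].
\]
For each rival $a$, the difference $D=A_a-A_j$ has mean $-d_a$ and variance $s_a^2$. The latter follows from the Dirichlet moment formulas, which give exactly \eqref{eq:expanded_cov} for independent strata with the weights $w_s$ held fixed. I then apply the sharp one-sided bound of Scarf/Lo type, valid for any distribution with mean $m$ and variance $v$:
\[
\mathbb E[D^+]\le \frac{m+\sqrt{m^2+v}}{2}.
\]
This follows from $x^+\le \frac{(x-\tau)^2}{4\tau}+x$ for a suitable $\tau>0$, or equivalently from $\mathbb E[D^+]=\tfrac12(\mathbb E D+\mathbb E|D|)$ combined with Cauchy--Schwarz $\mathbb E|D|\le\sqrt{m^2+v}$. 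Substituting $m=-d_a$ gives $\frac{\sqrt{d_a^2+s_a^2}-d_a}{2}$. Summing over rivals and taking the minimum with $1-\mu_j$ yields $U$.

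\textbf{Stopping rule.} Any continuation that takes at least one more label pays at least $c$ and gains at most $U$ in expectation, so its net expected utility change is at most $U-c\le 0$. The step I would check most carefully is that the bound $U$ is computed at the current posterior and covers all future policies at once, including adaptive ones. This holds because the Jensen step above applies to an arbitrary stopping information set $\mathcal G$, and the cost of the label sequence is at least $c$ as soon as any label is taken.

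\textbf{Unreachability.} Fix the remaining budget. For any realized label sequence within the budget, each stratum's counts increase by some $k_s$ labels, split across classes. I will show that the posterior-mean difference $\mu_a-\mu_j$ after such a sequence is at most $F_a(\boldsymbol k)$:
\begin{itemize}
\item Strata where $p_{sa}=p_{sj}$ contribute zero to the difference.
\item In strata where the predictions differ, the contribution is $w_s(\alpha'_{s,p_{sa}}-\alpha'_{s,p_{sj}})/(n_s+k_s)$.
\item That numerator is maximized by assigning all $k_s$ labels to the class $p_{sa}$.
\end{itemize}
Labels in strata outside $Q_t$'s available counts are infeasible, so they do not enter. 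Hence if $F_a\le 0$ for all feasible allocations and all rivals, then $j$ remains a maximizer on every sample path, with ties broken toward the designated candidate. The deployed choice, and therefore the expected accuracy, is then unchanged, so the value of these labels is zero.

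I expect the main obstacle to be the EVPI step: justifying that the value of any policy is dominated by the value of perfect information, and confirming that $s_a^2$ is the exact posterior variance of $D$. The latter requires checking that the $1/(n_s+1)$ Dirichlet covariance matches \eqref{eq:expanded_cov}, including the cross term when two candidates predict the same class.
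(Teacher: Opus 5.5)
Your proposal is correct and follows essentially the same route as the paper. The main steps match: you dominate any policy's value by $\mathbb E[\max_a A_a\mid\mathcal F]$, bound $\max_a A_a - A_j\le\sum_{a\ne j}(A_a-A_j)_+$ using $x_+=(|x|+x)/2$ with Cauchy--Schwarz, compare the cost $c$ against $U$, and observe that $j$ remains a posterior-mean maximizer on every feasible history.

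One small nuance concerns the last step. When some $F_a$ equals zero, the tie rule can switch the deployed candidate, so "the deployed choice is unchanged" is not quite right. The step is better phrased as $\max_a\mu_a'=\mu_j'$ on every path, combined with $\mathbb E[\mu_j'\mid\mathcal F]=\mu_j$ from the tower property, which is how the paper concludes.
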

The bound adds the possible gains from rivals while accounting for their shared errors. Its proof appears in Appendix~\ref{app:posterior_joint}. It governs the current batch under the transport assumption. Reusing past counts also requires temporal stability.

\begin{figure}[!htb]
\centering
\includegraphics[width=\linewidth]{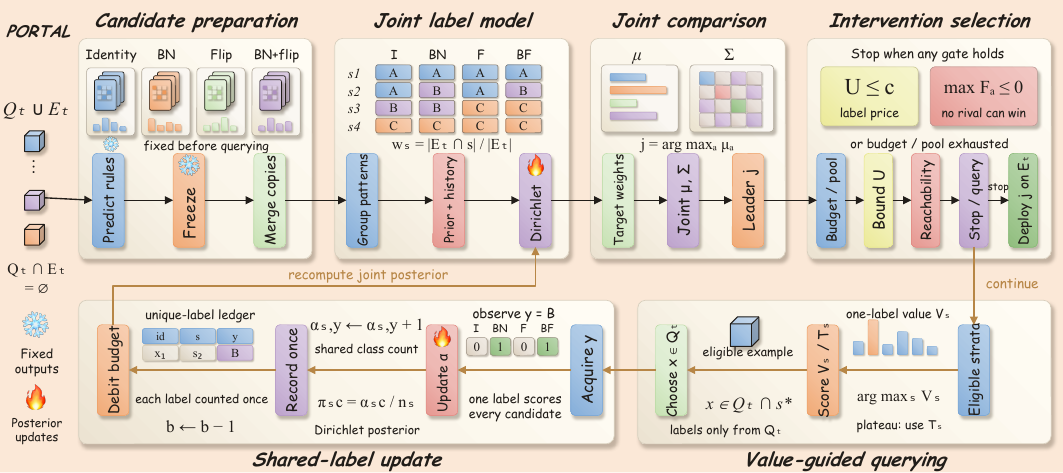}
\caption{\PORTAL{} compares fixed candidates through shared labels. It deploys the leader when no rival can overtake it or when the value bound falls below the label price. Otherwise, it queries to improve the decision. Snowflakes mark fixed predictions; flames mark updates to the class-probability estimates.}
\label{fig:expanded_method}
\end{figure}

\paragraph{Learning after selection.}
A label can help beyond the current decision. After comparing the fixed predictions, we use it to train the next batch's classifier. Uniform queries fill any unused training allowance, and each label is counted once. Paired experiments retain or undo an update to measure its effect on later predictions (Appendix~\ref{app:expanded_state}).
\space
% END inlined sections/expanded_method.tex
% BEGIN inlined sections/expanded_theory.tex
\section{The evidence needed to choose an intervention}
\label{sec:expanded_theory}
A larger accuracy gap makes a choice easier. Shared errors also help: labels need only distinguish the candidates where their predictions differ. We first quantify these effects for independent queries within strata, then examine a fixed query pool.

\subsection{Accuracy gaps and paired differences}
Fix the prediction map, class probabilities and positive weights $w_s$. Let $*$ be the unique best candidate after merging duplicates. For rival $a$, define $\Delta_a=A_*-A_a$ and $v_{s,a}=\operatorname{Var}(R_*-R_a\mid s)$. An allocation $q$ assigns query fractions $q_s\ge0$ with $\sum_s q_s=1$. The paired variance is
\begin{equation}
 V_a(q)=\sum_s\frac{w_s^2v_{s,a}}{q_s},\qquad
 v_{s,a}=\mathbf1\{p_{s*}\ne p_{sa}\}
 [\theta_{s,p_{s*}}+\theta_{s,p_{sa}}
 -(\theta_{s,p_{s*}}-\theta_{s,p_{sa}})^2].
 \label{eq:expanded_variance}
\end{equation}
Only disagreements contribute to this variance. Adding a duplicate candidate creates no new comparison. A close rival, by contrast, can require many labels to distinguish.

A \emph{uniformly $\delta$-correct} procedure selects the best candidate with error probability at most $\delta\in(0,1/2)$ for every allowed distribution. Let $\tau$ count its labels. Categorical Kullback--Leibler divergence $\operatorname{KL}$ and binary divergence $\operatorname{kl}$ measure how well observations distinguish distributions.

\begin{theorem}[Shared-label information and local allocation]

\label{thm:expanded_information}
Let $\operatorname{Alt}(\theta)$ be the set of alternative class distributions $\lambda$ whose best candidate differs from that under $\theta$. Every uniformly $\delta$-correct strategy that stops almost surely at each allowed distribution with a unique best candidate satisfies
\begin{equation}
 \mathbb E_\theta\tau\ge
 \frac{\operatorname{kl}(1-\delta,\delta)}{
 \sup_q\inf_{\lambda\in\operatorname{Alt}(\theta)}
 \sum_s q_s\operatorname{KL}(\theta_s\Vert\lambda_s)}.
 \label{eq:expanded_lower}
\end{equation}
For small gaps, with class probabilities and query fractions bounded away from zero, the local optimal allocation minimizes $\max_a V_a(q)/\Delta_a^2$. A pilot-then-confirm procedure approaches the leading label coefficient as error probability tends to zero before the gap. Appendix~\ref{app:local_allocation} gives the expansion, conditions and construction.
\end{theorem}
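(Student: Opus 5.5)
The plan has three parts: the lower bound via a change of measure, a local quadratic expansion of the resulting characteristic time, and an achievability construction. For Equation~\eqref{eq:expanded_lower} I will follow the change-of-measure argument of Garivier and Kaufmann for best-arm identification, adapted to the structure here. One query in stratum $s$ returns a label drawn from $\theta_s$, and that single label reveals the correctness of every candidate simultaneously. The log-likelihood ratio between $\theta$ and an alternative $\lambda$ is therefore a sum over queried labels of $\log(\theta_{s,y}/\lambda_{s,y})$. Wald's identity then gives $\mathbb E_\theta[\log L_\tau]=\sum_s \mathbb E_\theta[N_s(\tau)]\operatorname{KL}(\theta_s\Vert\lambda_s)$, where $N_s$ counts queries in stratum $s$. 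For any $\lambda\in\operatorname{Alt}(\theta)$, the data-processing (transportation) inequality applied to the event "the procedure selects the best candidate under $\theta$" yields
\[
\sum_s \mathbb E_\theta[N_s]\operatorname{KL}(\theta_s\Vert\lambda_s)\ge\operatorname{kl}(1-\delta,\delta).
\]
This uses that $\tau<\infty$ almost surely and that the procedure is uniformly $\delta$-correct under both $\theta$ and $\lambda$. I then take the infimum over $\lambda$ and write $q_s=\mathbb E_\theta N_s/\mathbb E_\theta\tau$. The left side is at most $\mathbb E_\theta\tau\cdot\sup_q\inf_\lambda\sum_s q_s\operatorname{KL}$, which rearranges to the bound. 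A technicality to handle is the case where some $\theta_{sc}=0$; I will restrict $\operatorname{Alt}$ to $\lambda$ with $\theta_s\ll\lambda_s$, which keeps every KL term finite.

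For the local allocation statement I will decompose $\operatorname{Alt}(\theta)$ as a union over rivals $a$ of the sets $\{\lambda: A_a(\lambda)\ge A_*(\lambda)\}$. This turns the inner infimum into $\min_a$ of a constrained KL projection. As the gaps shrink, the minimizing $\lambda$ lies close to $\theta$, so I can replace each KL term by its second-order form $\tfrac12\sum_c(\lambda_{sc}-\theta_{sc})^2/\theta_{sc}$; this is where the assumption that class probabilities are bounded away from zero is needed. The constraint becomes linear: $\sum_s w_s(\lambda_{s,p_{s*}}-\lambda_{s,p_{sa}})\le0$, starting from the value $\Delta_a$ at $\theta$. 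Minimizing a quadratic form subject to one linear constraint on the simplex has a closed-form solution via Lagrange multipliers. I expect the value to be
\[
\frac{\Delta_a^2}{2\sum_s w_s^2\tilde v_{s,a}/q_s},
\]
where $\tilde v_{s,a}$ is the variance of $\mathbf 1\{y=p_{s*}\}-\mathbf 1\{y=p_{sa}\}$ under $\theta_s$, restricted to tangent directions that preserve total mass. I need to check that this equals $v_{s,a}$ in Equation~\eqref{eq:expanded_variance}: the variance is $\theta_{s,p_{s*}}+\theta_{s,p_{sa}}-(\theta_{s,p_{s*}}-\theta_{s,p_{sa}})^2$ when the two predictions differ and $0$ otherwise, which matches. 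Hence the characteristic time is $2\max_a V_a(q)/\Delta_a^2$, minimized over $q$, up to factors $1+o(1)$. The main obstacle is making this error term uniform: I need the KL expansion to hold with a uniform remainder over the constrained minimizers, and the optimal $q$ to stay bounded away from zero. I would first prove that the minimizer satisfies $\|\lambda-\theta\|=O(\max_a\Delta_a)$, and then apply Taylor's theorem with a uniform bound on the third derivative.

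For achievability I will build a pilot-then-confirm procedure. The pilot phase spends $o(\log(1/\delta))$ uniform queries to estimate $\theta$ and plug in an allocation $\hat q$. The confirmation phase samples according to $\hat q$ and stops with a GLR test at threshold $\log(C\log(1/\delta)/\delta)$. Consistency of the plug-in allocation together with the paired CLT (or a Chernoff bound) shows that $\mathbb E\tau/\log(1/\delta)$ tends to the optimal coefficient as $\delta\to0$ first and the gap tends to zero second. Full details are deferred to Appendix~\ref{app:local_allocation}.
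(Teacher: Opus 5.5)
Your first two parts follow the paper's proof closely. The paper's lower bound also cancels the action kernels in the transcript likelihood ratio, applies data processing to $\{\widehat a=*\}$ with monotonicity of $\operatorname{kl}$, and normalises by $\mathbb E_\theta\tau$. Its local lemma uses your tangent solution $u_{sc}\propto(w_s/q_s)\theta_{sc}(g_{sc}-d_s)$, confines the exact minimiser to an $O(\Delta)$ ball via the Hessian lower bound $b_0\eta$, and applies a uniform third-order Taylor remainder. Three small repairs are needed:
\begin{itemize}
\item Wald's identity needs $\mathbb E_\theta\tau<\infty$, not only $\tau<\infty$ almost surely; the case $\mathbb E_\theta\tau=\infty$ is trivial.
\item The union of the sets $\{A_a(\lambda)\ge A_*(\lambda)\}$ is the closure of $\operatorname{Alt}(\theta)$, not a decomposition of it. You need the tie-breaking perturbation argument to show the two infima agree.
\item You cannot prove in general that the unrestricted optimal $q$ stays away from zero. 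The paper gets this only under a coverage condition: every stratum must separate $*$ from some locally tied rival. Otherwise it works on the floor set $Q_\eta$, which is what the statement assumes.
\end{itemize}

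The genuine gaps are in your achievability sketch. You route it through GLR stopping rather than the paper's fresh-sample empirical-Bernstein certificate at geometric checkpoints.

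\emph{First gap: the threshold.} A threshold $\log(C\log(1/\delta)/\delta)$ that does not grow with the counts does not give uniform $\delta$-correctness. On a wrong recommendation the GLR statistic is at most $\sum_s N_s\operatorname{KL}(\widehat\theta_s\Vert\theta_s)$. By the law of the iterated logarithm this quantity exceeds any fixed level eventually. Since the run length is unbounded as the gaps shrink, no fixed level keeps the error below $\delta$ uniformly over admissible $\theta$. Yet uniformly $\delta$-correct procedures are exactly the class your lower bound refers to. You need a threshold that grows with the counts, from a time-uniform multinomial deviation bound. Because your confirmation counts depend on the pilot, you also need either a bound uniform over count vectors (the paper notes this costs an $S\log n$ term), or the paper's device of excluding pilot labels so that counts are deterministic given the pilot, with stage thresholds $H_h=\log[4SM(h+1)(h+2)/\delta]$.

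\emph{Second gap: expectation control.} A CLT plus consistency controls $\tau/\log(1/\delta)$ in probability, not $\mathbb E\tau$. You need uniform integrability, i.e.\ a tail bound on $\tau$ that holds uniformly over pilot outcomes, including pilots that pick the wrong leader. The paper obtains this from the floor $\widehat q\in Q_\eta$, which guarantees $\eta t$ fresh labels per stratum whatever the pilot, together with an exponential tail $(K-1)e^{-c_\theta t_h}$ summed over checkpoints. It also floors pilot gaps at $(n_0+1)^{-1}$, since $\widehat\Delta_a$ can be zero or negative. And it tracks convergence of the achieved design objective rather than of $\widehat q$, whose minimiser need not be unique.

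With these repairs your route works and buys something real. At each fixed $\theta$, GLR stopping is governed by the exact categorical information $\min_a I_a(\theta,\widehat q)$ of the chosen allocation. The paper's certificate attains only the variance coefficient $2H_\eta$, which matches the KL characteristic time only in the local limit. In exchange, the paper's route gives explicit finite-$\delta$ stopping bounds and a more elementary correctness proof.
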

The proof adapts best-action identification \citep{garivier2016optimal} to shared class labels. The ratio $V_a(q)/\Delta_a^2$ gives a direct reading of the result. Small gaps increase the label requirement; shared errors reduce the variance that labels must resolve.

\subsection{Finite budgets and biased queries}
For a fixed pair, each query favors one candidate or ties their errors. Uniform sampling without replacement gives exact selection probabilities, including ties, and hence expected deployment accuracy. Appendix~\ref{app:finite_budget} derives the sampling law; Section~\ref{sec:real_pair_budget} tests it on real predictions.

Let $\Delta>0$ be the better candidate's deployment advantage. After $m$ queried labels, let $\widehat d_m\sim\mathcal N(\Delta+b,\sigma^2/m)$ estimate this advantage. Here $b$ is query bias and $\sigma^2/m$ is the variance, with $\sigma>0$. Write $\Phi$ for the standard normal cumulative distribution. Selecting by its sign gives expected regret, or extra deployment error,
\begin{equation}
 \mathcal R_m=\Delta\Phi\!\left(-\frac{\sqrt m(\Delta+b)}{\sigma}\right),
 \label{eq:expanded_gaussian}
\end{equation}
More labels reduce regret when $\Delta+b>0$. When bias reverses that sign, more labels increase regret: the evidence becomes more certain about the wrong choice. Appendix~\ref{app:finite_budget} extends this calculation to multiple candidates.

With four candidates, correlated errors strengthen the query ranking by canceling shared noise (Appendix~\ref{app:finite_budget}, Figure~\ref{fig:expanded_gaussian}). This helps representative queries and can reinforce biased choices.

\space
% END inlined sections/expanded_theory.tex
% BEGIN inlined sections/expanded_experiments.tex
\begingroup
% Keep evidence close to its discussion without changing the template text area.
\setlength{\intextsep}{6pt plus 1pt minus 1pt}
\setlength{\textfloatsep}{10pt plus 2pt minus 2pt}
\setlength{\floatsep}{8pt plus 1pt minus 1pt}
\captionsetup{skip=4pt}
\section{Experiments}
\label{sec:expanded_experiments}
We first expose intervention failures, then hold the candidates fixed to measure the effect of control. Full corruption streams test the scale of the effect. We next examine stopping and changing evidence, before using the same feedback to train stronger candidates. Accuracy differences are in percentage points (pp).

The main comparisons use ResNet-18 on CIFAR-10-C/100-C and a Photo-trained ResNet-50 on PACS. ImageNet-C uses a ResNet-50 updated by EATTA, an active test-time adaptation method \citep{wang2025eatta}. Natural-domain tests use contrastive language--image pretraining (CLIP), with a large vision transformer and 14-pixel patches (ViT-L/14). Its \emph{readout} maps features to class scores. CIFAR-10.1 supplies newly collected images \citep{recht2018cifar101}.

Fixed rules use the unchanged model, BN, horizontal flip, or BN+flip. We also compare their mean-logit ensemble. \emph{Empirical risk minimization} (ERM) selects the candidate with the fewest queried errors, using a fixed tie rule. Its queries are uniform, prioritize class disagreements, or target symmetric KL divergence between prediction probabilities (Top-KL). Shared-feedback comparisons keep query and evaluation inputs separate. Historical paired-admission tests use queried-error comparisons. Their protocols are described below; the shared-posterior experiments use Section~\ref{sec:expanded_method}.

\subsection{Arrival order can break a strong intervention}
\label{sec:intervention_failure}
On PACS, BN+flip works well when domains arrive in blocks or are mixed together. Grouping images by class changes its behavior sharply. Accuracy falls to 22.58\%, below the unchanged model's 50.88\%. \PORTAL{} chooses among the same predictions and reaches 51.28\%. Local selection prevents the fixed intervention from turning a change in arrival order into a large accuracy loss.

Figure~\ref{fig:expanded_capacity} also measures \emph{net batch harm}: positive batch error increases over the unchanged model, summed and divided by evaluation size. \emph{New errors} count previously correct predictions made wrong. These measures distinguish the damage an intervention causes from its net accuracy.

\begin{figure}[!htb]
\centering
\begin{tikzpicture}
\node[anchor=south west,inner sep=0] (arrivalplot) at (0,0)
  {\includegraphics[width=\linewidth]{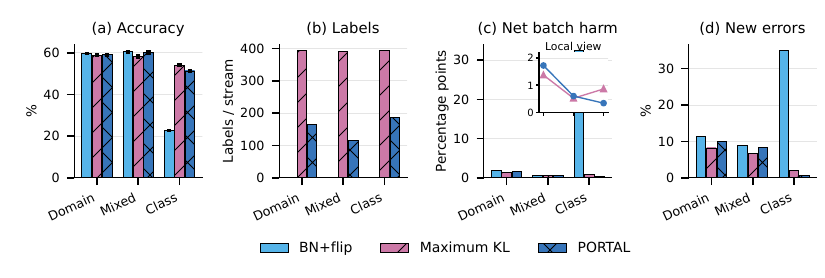}};
\begin{scope}[x={(arrivalplot.south east)},y={(arrivalplot.north west)}]
\node[font=\fontsize{6.4}{7}\selectfont\bfseries,fill=white,inner sep=.7pt] at (.205,.562) {22.58};
\node[font=\fontsize{6.4}{7}\selectfont\bfseries,fill=white,inner sep=.7pt] at (.231,.792) {51.28};
\end{scope}
\end{tikzpicture}
\caption{A strong fixed intervention fails when classes arrive together. On class-block PACS, BN+flip reaches 22.58\% accuracy and \PORTAL{} reaches 51.28\%. Net batch harm falls from 32.41 to 0.34 percentage points. Domain, Mixed and Class denote domain blocks, mixed domains and class blocks. Results use a Photo source and batch size 64. Bars average three orders; accuracy error bars show order standard deviation (SD). The inset enlarges the selectors' harm values.}
\label{fig:expanded_capacity}
\end{figure}

\paragraph{Adding candidates can help or hurt.}
A separate controlled experiment expands the candidate set, or \emph{portfolio}. Expansion raises error by 14.32 points under a HILTTA-style controller \citep{li2024hiltta}. Under \PORTAL{}, it lowers error by 1.24 points. This yields a 15.56-point controller--portfolio interaction across ten seeds (Appendix~\ref{app:controller_stress}). We next fix the predictions to measure the effect of control directly.

\subsection{Control makes a harmful candidate useful}
\label{sec:paired_admission}
We use \emph{paired admission}: deploy a challenger only when it makes fewer queried errors than the incumbent. Ties keep the incumbent. The historical PACS replay holds the incumbent, horizontal-view challenger and 816 batches fixed. Its \PORTAL{} rule targets three symmetric-KL queries per batch; admission then equals ERM on those labels.

\begin{wrapfigure}{R}{0.45\textwidth}
\vspace{-8pt}
\centering
\setlength{\parskip}{0pt}
\captionsetup{font=small,skip=2pt}
\includegraphics[width=\linewidth]{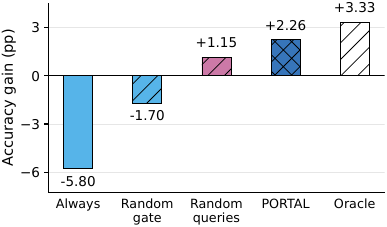}
\caption{Accuracy gains over the incumbent on the recorded PACS trajectory, including queried inputs. \PORTAL{} uses symmetric-KL queries and paired ERM. Random gate matches its admission rate; random queries use three labels. Oracle uses full-batch labels. Appendix~\ref{app:paired_admission} gives the full protocol.}
\label{fig:controller_attribution}
\par\vspace{3pt}
\centering
\captionsetup{font=small,skip=2pt}
\captionof{table}{Gain over the incumbent (pp). Selection uses paired admission on PACS/VLCS and sequential admission on ImageNet-C.}
\label{tab:intervention_control}
\fontsize{8.5}{10}\selectfont
\setlength{\tabcolsep}{4pt}
\begin{tabular*}{\linewidth}{@{\extracolsep{\fill}}lrr@{}}
\toprule
Stream & Always & Selected \\
\midrule
VLCS & -11.258 & -0.353 \\
PACS & -5.797 & +2.258 \\
ImageNet-C & +1.409 & +0.590 \\
\bottomrule
\end{tabular*}

\end{wrapfigure}

Always deploying the challenger loses 5.80 points. Targeted feedback turns this into a 2.26-point gain (Figure~\ref{fig:controller_attribution}). Random admission still loses accuracy, and uniform queries recover less of the gain. Excluding current queried inputs preserves the ordering: targeted queries gain 1.79 points and uniform queries gain 0.94. The useful intervention is unchanged; the evidence determines when to use it.

VLCS tests a more harmful transfer. Always deploying its challenger loses 11.26 points; paired admission reduces the loss to 0.35. This avoids 96.9\% of the fixed intervention's net accuracy loss (Table~\ref{tab:intervention_control}).

\paragraph{Control on a full corruption stream.}
In a shared-prefix variant, EATTA reuses the sequence of selection labels for its committed update. On 750,000 ImageNet-C images, error falls from 50.14\% to 49.55\%, improving all 15 corruptions. Of admitted batches, 77.3\% benefit. Accepted choices stop after a mean prefix of 2.87 labels. Across all batches, the mean prefix is 3.999 labels. Trial updates access additional labels outside this count (Appendix~\ref{app:empirical-imagenet}).

With separate acquisition, three selection labels in addition to three adaptation labels reduce error by 1.008 points. This retains 73.37\% of the fixed challenger's gain. An independent full stream gives 0.982 points. Appendix~\ref{app:empirical-imagenet} reports both protocols and six benchmarks' fixed-intervention and readout comparisons.

\par\WFclear

\subsection{Strong fixed rules need less feedback}
A strong fixed rule leaves less for selection to gain. On CIFAR corruptions, BN supplies most of the improvement and flipped views add to it. Fixed BN+flip reaches 79.65\% on CIFAR-10-C and 54.32\% on CIFAR-100-C. \PORTAL{} reaches 79.26\% and 53.87\%. Compared with Top-KL selection, it keeps similar accuracy while using 91.3\% and 94.6\% fewer labels. Table~\ref{tab:complete-cifar} gives every corruption result.

\begin{table}[!htb]
\centering
\caption{Severity-five CIFAR accuracy (\%) on shared held-out images, averaged over three source models. Summary: mean $\pm$ standard deviation (SD) and labels per suite. Bold marks the best deployable rule; the oracle is an evaluation reference.}
\label{tab:complete-cifar}
% BEGIN inlined sar_relayout_v3/PORTAL_SAR_style_v3/tables/corruption_matrix.tex
\begingroup
\fontsize{6.7}{7.55}\selectfont
\setlength{\tabcolsep}{3.05pt}
\renewcommand{\arraystretch}{1.06}
\arrayrulecolor{black!40}
\setlength{\arrayrulewidth}{.25pt}
\begin{tabular}{@{}l|rrr|rrrr|rrrr|rrrr|>{\columncolor{blue!8}}r|r@{}}
\toprule
&\multicolumn{3}{c|}{Noise}&\multicolumn{4}{c|}{Blur}&\multicolumn{4}{c|}{Weather}&\multicolumn{4}{c|}{Digital}&\multicolumn{2}{c}{Summary}\\
Model / rule & Gau. & Shot & Imp. & Def. & Gla. & Mot. & Zoom & Snow & Frost & Fog & Bri. & Con. & Ela. & Pix. & JPEG & Avg. & Labels\\
\midrule
\multicolumn{18}{@{}l}{\bfseries CIFAR-10-C / ResNet-18}\\[-1pt]
Source & 25.8 & 32.8 & 25.0 & 56.6 & 51.5 & 67.6 & 65.0 & 77.3 & 66.4 & 73.5 & 91.2 & 33.9 & 76.8 & 45.6 & 73.1 & 57.48\sd{1.12} & 0\\
\hspace{.45em}BN & 66.2 & 68.5 & 59.5 & 87.2 & 65.0 & 86.1 & 87.3 & 81.2 & 80.1 & 84.7 & 90.6 & 86.9 & 76.6 & 79.0 & 71.4 & 78.02\sd{0.50} & 0\\
\hspace{.45em}Flip & 26.3 & 33.2 & 25.2 & 57.1 & 52.5 & 68.3 & 65.9 & 78.2 & 67.2 & 74.0 & 92.1 & 33.9 & 78.1 & 45.7 & 74.4 & 58.14\sd{1.00} & 0\\
\hspace{.45em}BN+flip & \textbf{68.7} & \textbf{71.2} & \textbf{62.7} & \textbf{88.6} & \textbf{67.2} & \textbf{86.9} & \textbf{88.4} & 83.0 & \textbf{81.6} & \textbf{85.8} & 91.1 & \textbf{87.5} & 78.4 & \textbf{80.1} & 73.4 & \textbf{79.65}\sd{0.56} & 0\\
\hspace{.45em}Mean logits & 58.6 & 64.0 & 54.5 & 83.9 & 64.2 & 84.4 & 85.3 & \textbf{83.4} & 80.4 & 84.9 & \textbf{93.2} & 83.7 & \textbf{80.2} & 72.0 & \textbf{75.7} & 76.55\sd{1.29} & 0\\
\cmidrule{1-18}
\hspace{.45em}Uniform ERM & 66.1 & 68.9 & 60.2 & 87.1 & 64.7 & 85.7 & 87.3 & 81.8 & 79.7 & 84.6 & 91.3 & 86.7 & 78.2 & 78.6 & 73.3 & 78.28\sd{0.58} & 7,065\\
\hspace{.45em}Disagree. ERM & 66.6 & 70.0 & 60.5 & 87.7 & 64.5 & 85.9 & 87.5 & 80.9 & 79.6 & 84.0 & 91.4 & 87.1 & 77.8 & 78.8 & 73.1 & 78.36\sd{0.59} & 7,065\\
\hspace{.45em}Top-KL ERM & 68.7 & 71.0 & 62.3 & 88.5 & 65.8 & 86.8 & 88.3 & 81.2 & 80.8 & 84.9 & 91.3 & \textbf{87.5} & 77.7 & 80.1 & 73.0 & 79.20\sd{0.50} & 7,065\\
\rowcolor{pink!24}\hspace{.45em}\PORTAL{} & 68.5 & 70.9 & 62.4 & 88.3 & 66.4 & 86.6 & 87.9 & 81.8 & 81.1 & 84.9 & 91.1 & 87.4 & 78.1 & 80.1 & 73.5 & 79.26\sd{0.51} & 612.3\\
\cmidrule{1-18}
\textit{Batch oracle} & 69.7 & 72.1 & 63.7 & 89.2 & 68.4 & 87.9 & 89.3 & 84.7 & 82.7 & 86.8 & 94.1 & 88.2 & 81.6 & 81.2 & 77.4 & 81.14\sd{0.45} & --\\
\midrule
\multicolumn{18}{@{}l}{\bfseries CIFAR-100-C / ResNet-18}\\[-1pt]
Source & 11.0 & 12.4 & 7.7 & 34.2 & 19.8 & 43.2 & 41.3 & 46.5 & 34.2 & 40.1 & 67.2 & 18.0 & 50.1 & 24.3 & 44.7 & 32.98\sd{0.41} & 0\\
\hspace{.45em}BN & 39.2 & 40.5 & 34.2 & 65.2 & 41.7 & 62.5 & 64.4 & 53.3 & 53.4 & 57.8 & 67.2 & 64.5 & 53.5 & 56.1 & 45.1 & 53.25\sd{0.11} & 0\\
\hspace{.45em}Flip & 10.9 & 12.6 & 7.7 & 34.3 & 19.9 & 44.1 & 41.8 & 47.3 & 34.8 & 40.7 & 67.8 & 18.1 & 51.5 & 24.4 & 45.4 & 33.43\sd{0.41} & 0\\
\hspace{.45em}BN+flip & \textbf{40.6} & \textbf{42.1} & \textbf{35.7} & \textbf{65.9} & \textbf{42.7} & \textbf{63.2} & \textbf{65.3} & 54.6 & 54.4 & \textbf{58.9} & 68.2 & \textbf{65.2} & 54.6 & \textbf{57.2} & 46.1 & \textbf{54.32}\sd{0.05} & 0\\
\hspace{.45em}Mean logits & 32.5 & 34.8 & 30.2 & 62.4 & 39.3 & 61.6 & 63.6 & \textbf{56.8} & \textbf{55.0} & 58.2 & \textbf{72.0} & 58.9 & \textbf{56.8} & 53.7 & \textbf{49.5} & 52.36\sd{0.21} & 0\\
\cmidrule{1-18}
\hspace{.45em}Uniform ERM & 39.3 & 39.9 & 34.5 & 64.1 & 40.8 & 61.3 & 62.8 & 53.3 & 51.9 & 56.8 & 67.9 & 64.1 & 53.9 & 55.6 & 45.7 & 52.79\sd{0.08} & 7,065\\
\hspace{.45em}Disagree. ERM & 39.4 & 39.8 & 34.6 & 64.7 & 40.9 & 61.4 & 63.0 & 53.1 & 52.6 & 56.6 & 67.7 & 63.8 & 53.3 & 55.8 & 45.5 & 52.80\sd{0.23} & 7,065\\
\hspace{.45em}Top-KL ERM & 40.1 & 41.5 & 35.2 & 65.9 & 42.0 & 62.8 & 64.8 & 53.6 & 53.8 & 58.1 & 67.7 & 65.2 & 53.3 & 56.8 & 45.4 & 53.76\sd{0.12} & 7,065\\
\rowcolor{pink!24}\hspace{.45em}\PORTAL{} & 40.5 & 41.9 & 35.1 & 65.5 & 42.0 & 62.6 & 64.6 & 54.3 & 53.5 & 58.1 & 68.1 & 65.2 & 54.6 & 55.9 & 46.1 & 53.87\sd{0.23} & 381.7\\
\cmidrule{1-18}
\textit{Batch oracle} & 41.6 & 43.3 & 36.9 & 67.0 & 44.0 & 64.4 & 66.4 & 56.4 & 55.7 & 60.1 & 71.6 & 66.1 & 57.4 & 58.3 & 50.0 & 55.95\sd{0.15} & --\\
\bottomrule
\end{tabular}
\endgroup
\space
% END inlined sar_relayout_v3/PORTAL_SAR_style_v3/tables/corruption_matrix.tex
\end{table}

CLIP gives a clearer case for stopping early. On PACS and OfficeHome, \PORTAL{} matches the eight-view output with 1.4 and 7.4 labels per stream. The four readouts use an original image, prompt averaging, two views or eight views. For VLCS, we add StatA, a text-initialized Gaussian classifier fitted to unlabeled features \citep{portal2025_realistic_vlm}.

Adding StatA raises selected VLCS accuracy from 83.92\% to 84.58\%. The fixed row shows the gain available from this candidate (Table~\ref{tab:expanded_vlm}). Appendix~\ref{app:expanded-vlm} gives its settings.

An ensemble can also be a candidate. On PACS, selection reaches 62.07\% against the mean-logit ensemble's 61.63\%, improving four of five orders. Across ten arrival conditions, cumulative-history selection beats the best fixed rule in four. Discounting old counts beats every fixed rule under class blocks (Table~\ref{tab:round_dynamic}). CIFAR-10.1 needs only 1.9 queries per stream: strong fixed outputs leave little gain to resolve (Table~\ref{tab:round_overview}).
\WFclear

\subsection{Small label budgets can resolve real advantages}
\label{sec:real_pair_budget}
\begin{wrapfigure}{r}{0.52\textwidth}
\vspace{-9pt}
\centering
\captionsetup{font=small,skip=2pt}
\includegraphics[width=\linewidth]{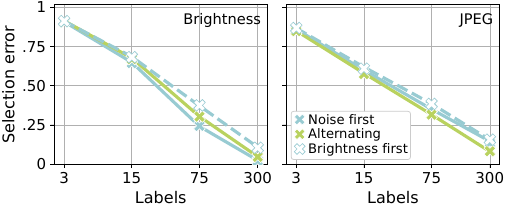}
\caption{Exact selection-error probabilities for brightness (left) and JPEG (right). Curves use uniform sampling without replacement; markers use 2,000 independent subset draws per setting. Means cover three checkpoints. Schedules use distinct image blocks.}
\label{fig:round_paired}
\vspace{-5pt}
\end{wrapfigure}

We compare BN+flip and mean logits on streams with disjoint image sets per corruption segment. BN+flip wins on noise; mean logits wins on brightness and JPEG. We sample uniformly from each segment's full query pool. Selection favors fewer queried errors, with BN+flip breaking ties (Appendix~\ref{app:round_paired}).

Even 75 labels leave a 24--39\% probability of choosing the worse output (Figure~\ref{fig:round_paired}). Exact probabilities agree with independent subset draws. Knowing that a better candidate exists does not make it easy to identify.

\par\WFclear
\Needspace{4\baselineskip}
\subsection{Stopping before high confidence is reached}
\begin{wrapfigure}{R}{0.43\textwidth}
\vspace{-9pt}
\centering
\captionsetup{font=small,skip=2pt}
\includegraphics[width=\linewidth]{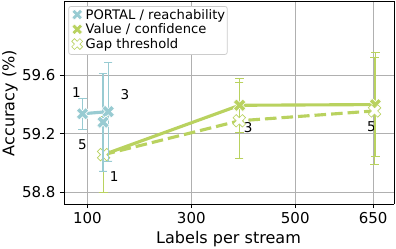}
\caption{Five PACS orders. Error bars show order SD; numbers give credits per batch. Coincident curves share a legend entry.}
\label{fig:round_stopping}
\vspace{-5pt}
\end{wrapfigure}

Uncertainty need not justify another query. A rival can remain plausible even when the remaining budget cannot make it win. On four PACS candidates, we compare \PORTAL{} with stopping based on value, reachability, confidence or the accuracy gap. Each rule carries its own history. Unused label credits accumulate up to four times the per-batch allowance (Appendix~\ref{app:round_stopping}).

At three credits per batch, \PORTAL{} uses 140.0 labels at 59.35\% accuracy. Confidence stopping uses 392.8 at 59.39\%. Reachability drives this saving: the remaining labels cannot change the winner. Removing the value trigger leaves the queries and actions unchanged at the primary label price.

Giving policies the same starting evidence each batch removes history differences. \PORTAL{} and confidence stopping then match accuracy with 85.6 and 392.8 labels. Figure~\ref{fig:round_stopping} instead follows their separate histories.

Table~\ref{tab:core-mechanism} isolates acquisition and stopping. Changing only acquisition, \PORTAL{} uses 85.6 labels versus uniform sampling's 132.8 at similar accuracy. Five queries after a reachability stop yield no mean accuracy gain.

\par\WFclear
\begin{table}[!htb]
\centering\small
\caption{Four-candidate PACS controls. A shares the posterior, history, budget and stopping rule. B adds labels after stopping. Harm averages positive batch accuracy losses relative to the unchanged model over evaluation inputs; pp denotes percentage points.}
\label{tab:core-mechanism}
\begin{tabular*}{\linewidth}{@{\extracolsep{\fill}}lrrr@{}}
\toprule
\multicolumn{4}{l}{\textit{A. Common-posterior acquisition}} \\
Acquisition & Accuracy (\%) & Labels & Harm (pp) \\
\midrule
Uniform & 59.35 $\pm$ 0.13 & 132.8 & 1.78 \\
Disagreement & 59.31 $\pm$ 0.23 & 96.8 & 1.86 \\
Maximum-pair KL & 59.33 $\pm$ 0.26 & 118.8 & 1.85 \\
PORTAL value & 59.32 $\pm$ 0.34 & 85.6 & 1.86 \\
\midrule
\multicolumn{4}{l}{\textit{B. Labels purchased after stopping}} \\
Stop setting & Extra labels & Choice change & $\Delta$ accuracy (pp) \\
\midrule
Primary unreachable & 5.0 & 0.31\% & +0.000 $\pm$ 0.051 \\
EVPI stop, price 0.05 & 1.0 & 1.20\% & -0.075 $\pm$ 0.052 \\
EVPI stop, price 0.05 & 3.0 & 2.21\% & -0.019 $\pm$ 0.100 \\
EVPI stop, price 0.05 & 5.0 & 2.81\% & -0.012 $\pm$ 0.141 \\
\bottomrule
\end{tabular*}
\end{table}

\Needspace{4\baselineskip}
\subsection{Refreshing past evidence as conditions change}
\label{sec:transport_stress}
\begin{wrapfigure}{R}{0.61\textwidth}
\vspace{-9pt}
\centering
\captionsetup{font=small,skip=4pt}
\includegraphics[width=\linewidth]{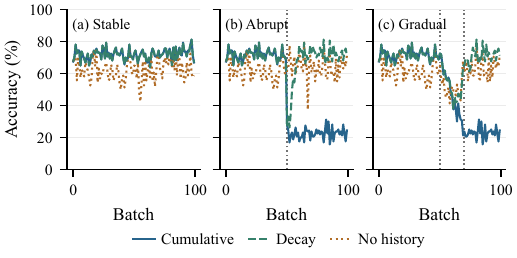}
\caption{A change in the label law with fixed candidate predictions and stratum frequencies. Vertical lines mark the transition. Decay multiplies past counts by 0.9. The original five-seed experiment and its costs appear in Table~\ref{tab:round_drift}.}
\label{fig:round_history}
\vspace{-5pt}
\end{wrapfigure}

Past labels help only while they describe the present. Figure~\ref{fig:round_history} changes class probabilities within prediction patterns. Predictions and pattern frequencies stay fixed, and query and deployment labels share the new probabilities. After an abrupt reversal, cumulative history buys no labels and keeps the old choice. Late accuracy falls from 72.21\% to 22.54\%. Discounting old counts recovers 67.66\%.

The weights $w_s$ track how often a pattern occurs. They cannot detect a change in which labels that pattern contains. Risk monitoring and domain-shift detection address related failures \citep{DBLP:conf/nips/SchirmerJNN25,DBLP:conf/iccv/ParkPKM25}. Here the candidate predictions themselves stay fixed.

We therefore reserve periodic labels to check whether old evidence still holds. These \emph{guarded selectors} buy extra labels after surprising observations. We test a stream with two fixed patterns and another with 300 hidden groups, called \emph{modes}. Each batch in the latter contains four modes. Prioritized queries oversample atypical examples; known query probabilities correct this sampling bias.

The comparator uses exponentially weighted (EW) class counts. Methods share candidates and label ceilings. Recovery begins at the first of five consecutive population-best choices. The target is delay at most ten batches in at least 90\% of streams at each change (Appendix~\ref{app:w1_guarded}).

\begin{table}[!htb]
\centering\small
\caption{Post-change gains over our reconstructed EW selector at equal label ceilings; brackets give paired intervals. Recovery is the fraction with delay $\leq10$. Sparse rows give changes 25/50/75 under prioritized queries. The sparse 160-label variant changes both budget and settings.}
\label{tab:w1_budget}
\setlength{\tabcolsep}{4pt}
\begin{tabular*}{\linewidth}{@{\extracolsep{\fill}}lrrl@{}}
\toprule
Stream & Label ceiling & Post-change gain (pp) & Recovery, delay $\leq 10$ \\
\midrule
Two-pattern abrupt & 120 & $+9.52\ [8.83,10.21]$ & 0.92 \\
Two-pattern gradual & 120 & $+4.87\ [4.16,5.58]$ & 0.80 \\
Two-pattern gradual, larger budget & 160 & $+6.44\ [5.92,6.96]$ & 0.94 \\
300-mode recurrent, priority audit & 100 & $+3.28\ [0.61,5.95]$ & 0.685 / 0.798 / 0.642 \\
300-mode recurrent, larger-budget variant & 160 & $+8.55\ [7.68,9.41]$ & 0.907 / 0.933 / 0.893 \\
\bottomrule
\end{tabular*}
\end{table}

At the original synthetic ceiling, stable-period accuracy stays close to the strongest fixed rule: $-0.122\pp$ with paired interval $[-0.150,-0.094]$. Abrupt recovery is fast; more labels improve gradual recovery (Table~\ref{tab:w1_budget}). Sparse post-change accuracy exceeds EW under both query schedules. With 100 labels, recovery is slow at the first and third recurrent changes. The 160-label variant approaches the target, still missing it at the last change.

A \emph{harmful choice} selects a candidate worse than the prespecified strong fixed rule. Across sparse scenarios and schedules, the mean harmful-batch rate per seed is 3.73\%, with upper confidence bound 3.91\%. Recurrent changes raise the mean to 7.46\%. The pooled bound describes the cross-scenario average; repeated changes remain harder.

\par\WFclear
\Needspace{4\baselineskip}
\subsection{The same labels create stronger future candidates}
\label{sec:learned_candidate}
Feedback can improve the choices available to later batches. A \emph{residual readout} learns a correction to fixed class scores. Earlier labels determine its current predictions. Labels queried for the current choice are \emph{decision labels}. They compare these predictions, then train the next readout. Uniform queries fill the remaining training allowance.

\begin{wraptable}{R}{0.53\textwidth}
\vspace{-8pt}
\centering
\captionsetup{font=small,skip=4pt}
\caption{PACS accuracy (\%). Selected is deployed accuracy, given as mean $\pm$ SD over five stream orders. Learned always uses each policy's learned candidate. Decision labels form part of the 393 total labels per stream. The fixed rule uses zero labels.}
\label{tab:expanded_learning}
\fontsize{8.4}{9.6}\selectfont
\setlength{\tabcolsep}{3pt}
\begin{tabularx}{\linewidth}{@{}Xrrr@{}}
\toprule
Rule & Selected & Learned & Decision \\
\midrule
Fixed BN+flip & \mstd{59.68}{0.28} & --- & 0 \\
Uniform ERM & \mstd{65.91}{0.49} & 73.03 & 393.0 \\
Disagreement ERM & \mstd{70.29}{0.76} & 72.98 & 393.0 \\
\rowcolor{PortalRow}\PORTAL{} & \mstd{72.64}{1.05} & 73.03 & 70.8 \\
\bottomrule
\end{tabularx}
\vspace{-4pt}
\end{wraptable}

Deployment accuracy rises from fixed BN+flip's 59.68\% to 72.64\%. The learned candidate reaches 73.03\%. \PORTAL{} therefore retains all but 0.39 points of its 13.35-point gain (Table~\ref{tab:expanded_learning}).

All learning policies use 393 labels per stream. \PORTAL{} uses 70.8 for current decisions; those labels also enter later training. At the same total budget, disagreement ERM deploys at 70.29\%, while its learned candidate reaches 72.98\%. Learning supplies a large gain, and selection determines how much reaches deployment. Appendix~\ref{app:empirical-head} gives all trajectories and settings.
\par\WFclear

\subsection{Retaining an update changes future predictions}
A useful current choice need not improve the next batch. We compare two model copies with identical future inputs, labels and updates. One retains the current update; the other undoes it. Retaining the update gains 0.23 points now but changes next-batch accuracy by $-0.20$ points. Undoing it leaves future predictions unchanged. Appendix~\ref{app:expanded_state} relates these effects to the update dynamics and bounds the later loss change. Learning a candidate and deciding when to deploy it remain distinct decisions.
\FloatBarrier
\endgroup
\space
% END inlined sections/expanded_experiments.tex
% BEGIN inlined sections/expanded_conclusion.tex
\section{Conclusion}
An intervention's value depends on the batch it receives. A rule that helps under ordinary arrivals can fail when classes arrive together. \PORTAL{} uses shared feedback to choose when to deploy each candidate and stops buying labels when they cannot improve that choice. Holding the challenger fixed shows how targeted feedback turns an aggregate loss into useful local gains. Periodic checks keep the evidence current. The same labels train stronger future candidates, and selection brings their gains to deployment.
\space
% END inlined sections/expanded_conclusion.tex
\label{main_text_end}
\clearpage
\section*{AI use statement}
ChatGPT was used only for literature search. The authors are responsible for the final content.
\section*{Reproducibility statement}
The appendix contains the model settings, query/evaluation splits, update timing, query budgets, and statistical procedures. It also provides complete corruption and domain results, comparisons of method components, and proofs. The changing-label experiments include their selector definitions, reconstructed comparison methods, and independent sampling units. The manuscript, implementation, released result data, and available frozen prediction banks are provided in our anonymous repository: \url{https://anonymous.4open.science/r/PORTAL-8322/}. Its data inventory lists the external datasets and checkpoints needed for further reproduction.
\bibliographystyle{iclr2027_conference}
% Keep the supplied BibTeX metadata verbatim. Its D2O title writes the
% superscript as $\^{} 2$; interpret that spelling only while printing references.
\begingroup
\let\portalSavedHat\^
\def\portalBibSup#1 #2{^{#2}}
\def\^{\ifmmode\let\portalNextHat\portalBibSup\else\let\portalNextHat\portalSavedHat\fi\portalNextHat}
\bibliography{references}
\endgroup
\clearpage
\appendix
% BEGIN inlined sections/expanded_guide.tex
\section{Notation and organization of the analysis}
The analyses ask three questions. How much can shared labels tell us about candidate accuracy? How often does a finite query budget select the better output? How does retaining an update affect future predictions? Tables~\ref{tab:appendix_symbols} and~\ref{tab:appendix_model_map} summarize the models and results. Appendix~\ref{app:round_evidence} adds stopping comparisons, calculations on real prediction pairs, and tests with changing class probabilities.

\begin{table}[h]
\centering\small
\caption{Statistical objects used throughout the paper.}
\label{tab:appendix_symbols}
\begin{tabularx}{\linewidth}{lX}
\toprule
Symbol & Meaning\\
\midrule
$\mathcal A_t$, $I$ & Available batch interventions and incumbent at time $t$.\\
$Q_t$, $E_t$ & Permanent eligible query and deployment/evaluation inputs.\\
$R_a$, $D_a$ & Candidate correctness and its difference from incumbent correctness.\\
$s$, $w_s$, $p_{sa}$ & Joint prediction stratum, target frequency and candidate class.\\
$\theta_s$, $A_a(\theta)$ & Conditional label law and expected target accuracy.\\
$\Delta_a$, $v_{s,a}$ & Best-candidate gap and within-stratum paired reward variance.\\
$q_s$, $N_s$, $\tau$ & Query allocation, acquired stratum count and stopping time.\\
$\alpha_s$, $\mu_a$, $\Sigma_{ab}$ & Dirichlet parameters, posterior accuracy mean and covariance.\\
$V_s$, $U$ & One-label decision value and perfect-information upper bound.\\
$r_a$, $b_a$, $\mathcal R_m$ & Deployment risk, Gaussian evidence bias and selection regret.\\
$n_+,n_-,n_0$, $S_m$ & Finite-pool rescue/harm/neutral counts and queried signed score.\\
$\Delta\Theta$, $\Delta m$, $d$ & Residual-head difference, cache difference and induced logit change.\\
\bottomrule
\end{tabularx}
\end{table}

\begin{table}[h]
\centering\small
\caption{Result dependencies and the quantity each result predicts.}
\label{tab:appendix_model_map}
\begin{tabularx}{\linewidth}{p{.22\linewidth}XX}
\toprule
Analysis & Required observation structure & Result and validation\\
\midrule
Shared-label complexity & Independent categorical labels within known, queryable strata; fixed prediction signatures & Information lower bound, paired-variance upper bound and local allocation coefficient.\\
Posterior querying & Positive independent Dirichlet priors and shared stratum label laws & Joint accuracy covariance, one-label value and sufficient local stopping conditions.\\
Gaussian finite budget & Common-factor joint Gaussian feedback with fixed risks and biases & Exact winner probabilities, budget and correlation responses; independent score simulation.\\
Frozen neural pools & Uniform subsets of a specified finite query pool; frozen predictions and evaluation outcomes & Exact discrete selection and deployment loss; independent subset simulation.\\
Persistent state & Shared future inputs, query labels and affine head/cache updates & Exact state recurrence, directional loss interval and margin bound; explicit branch replay.\\
\bottomrule
\end{tabularx}
\end{table}

The batch oracle in empirical tables chooses the best candidate after seeing that batch's evaluation labels. The categorical population oracle maximizes $A_a(\theta)$. The posterior analysis gives the covariance adjustment required to analyze a realized finite evaluation block. Mean-logit and learned-readout outputs expand the prediction functions themselves and are listed separately when they fall outside an oracle's discrete candidate set.
\space
% END inlined sections/expanded_guide.tex
% BEGIN inlined sections/expanded_joint_feedback.tex
\section{Shared-label information and finite-candidate selection}
\label{app:joint_feedback}
This appendix establishes the information bound and local allocation result in Theorem~\ref{thm:expanded_information}, then derives the posterior calculations and stopping rule in Theorem~\ref{thm:expanded_stop}. The observation is a class label. Its value comes from the joint vector of candidate outcomes that it reveals.

\subsection{Statistical experiment and assumptions}
There are $S$ strata, $C$ classes and $K$ candidate rules. A candidate $a$ predicts class $p_{sa}$ throughout stratum $s$. These prediction signatures are fixed during the decision. The known target weights satisfy $w_s>0$ and $\sum_s w_s=1$. A query in stratum $s$ returns an independent label $Y\sim\operatorname{Categorical}(\theta_s)$, with $\theta_s\in\Delta_C$, where $\Delta_C$ contains nonnegative class probabilities summing to one. Query and target labels have this same conditional law. The population accuracy of candidate $a$ is
\begin{equation}
 A_a(\theta)=\sum_{s=1}^S w_s\theta_{s,p_{sa}}.
 \label{eq:app_accuracy}
\end{equation}
A stratum may be queried repeatedly in this experiment. A fixed finite pool sampled without replacement has a different sampling law, treated in the finite-budget appendix. Conditional agreement of query and target label laws is the transport assumption: changing the frequency of strata is corrected by $w_s$, while a changed label law within a stratum requires a richer model.

Rules with the same prediction signature on every positive-weight stratum are represented once. For fixed-confidence identification, the remaining set has a unique best rule $*$ and gaps $\Delta_a=A_*-A_a>0$ for $a\ne *$. The case $K=1$ requires no queries. A uniformly $\delta$-correct strategy, $0<\delta<1/2$, stops almost surely and returns the best rule with probability at least $1-\delta$ at every admissible unique-best parameter. Randomized stratum choices and stopping times may depend on previously observed labels.

The Bayesian calculations add independent priors $\theta_s\sim\operatorname{Dirichlet}(\alpha_s^0)$, with every coordinate strictly positive. They describe posterior expected population accuracy under this model. The finite-evaluation oracle, which sees the realized labels of a target block, is distinguished in Section~\ref{app:posterior_finite}.

\subsection{Paired reward geometry}
One observed label produces all rewards $R_a=\mathbf1\{p_{sa}=Y\}$. Write $D_{s,ab}=R_a-R_b$, $d_{s,ab}=\mathbb E D_{s,ab}$, and $v_{s,ab}=\operatorname{Var}(D_{s,ab})$. On agreement strata $D_{s,ab}=0$. On disagreement strata its values $1,-1,0$ have probabilities $\theta_{s,p_{sa}},\theta_{s,p_{sb}},1-\theta_{s,p_{sa}}-\theta_{s,p_{sb}}$. Consequently,
\begin{align}
 d_{s,ab}&=\theta_{s,p_{sa}}-\theta_{s,p_{sb}},\\
 v_{s,ab}&=\mathbf1\{p_{sa}\ne p_{sb}\}
 \left[\theta_{s,p_{sa}}+\theta_{s,p_{sb}}-d_{s,ab}^2\right],
 \label{eq:app_pairvar}\\
 \operatorname{Cov}(R_a,R_b\mid s)
 &=\mathbf1\{p_{sa}=p_{sb}\}\theta_{s,p_{sa}}
 -\theta_{s,p_{sa}}\theta_{s,p_{sb}}.
 \label{eq:app_rewardcov}
\end{align}
These expressions retain dependence between candidates. A label is counted once, including when many candidates predict its class. Let $\mathcal I_{ab}=\{s:p_{sa}\ne p_{sb}\}$. Both the pair's mean difference and its estimation variance depend exclusively on these disagreement strata.

\subsection{Adaptive information lower bound}
Let $N_s(\tau)$ be the number of labels from stratum $s$ before stopping. Define
\begin{align}
 \operatorname{Alt}(\theta)
 &=\{\lambda:\lambda\text{ has a unique best candidate different from }*\},\\
 \Gamma^*(\theta)
 &=\sup_{q\in\Delta_S}\inf_{\lambda\in\operatorname{Alt}(\theta)}
 \sum_s q_s\operatorname{KL}(\theta_s\Vert\lambda_s),
 \qquad T^*(\theta)=\Gamma^*(\theta)^{-1}.
 \label{eq:app_characteristic}
\end{align}
The divergence is categorical KL, $\operatorname{KL}(\theta_s\Vert\lambda_s)=\sum_c\theta_{sc}\log(\theta_{sc}/\lambda_{sc})$.

\begin{lemma}[Stopped shared-label likelihood]
\label{lem:app_transcript}
For every uniformly $\delta$-correct strategy and every admissible alternative $\lambda$,
\begin{equation}
 \sum_s\mathbb E_\theta N_s(\tau)
 \operatorname{KL}(\theta_s\Vert\lambda_s)
 \ge\operatorname{kl}(1-\delta,\delta).
 \label{eq:app_change_measure}
\end{equation}
Here $\operatorname{kl}$ is binary relative entropy. Interior parameters suffice; boundary parameters use the usual absolute-continuity and extended-divergence conventions.
\end{lemma}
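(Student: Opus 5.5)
The plan is the standard change-of-measure argument of \citet{garivier2016optimal}, adapted to shared stratum labels. Fix a uniformly $\delta$-correct strategy and an admissible alternative $\lambda\in\operatorname{Alt}(\theta)$. First assume $\theta$ and $\lambda$ are interior, so every categorical law is absolutely continuous with respect to the other.

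Let $\mathcal F_\tau$ be the stopped transcript: the sequence of queried strata, the observed labels, any internal randomization, and the final recommendation. The stratum choices and the randomization have the same conditional law under $\theta$ and $\lambda$. Only the labels differ, and a label queried in stratum $s$ has law $\theta_s$ or $\lambda_s$. The log-likelihood ratio of the transcript is therefore
\[
L_\tau=\sum_s\sum_{i=1}^{N_s(\tau)}\log\frac{\theta_{s,Y_{s,i}}}{\lambda_{s,Y_{s,i}}}.
\]
Each label is counted exactly once, however many candidates predict it. This is the step where sharing matters: one label contributes one categorical KL term, not one term per candidate.

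Next apply Wald's identity to each stratum's i.i.d.\ increment sequence. This requires $\mathbb E_\theta\tau<\infty$. If instead $\mathbb E_\theta\tau=\infty$, the claim is trivial, because positive KL on some stratum forces the left side to be infinite, or else $\lambda$ would not be distinguishable. The identity gives
\[
\mathbb E_\theta L_\tau=\sum_s\mathbb E_\theta N_s(\tau)\operatorname{KL}(\theta_s\Vert\lambda_s).
\]

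Then use the data-processing inequality for KL applied to the stopped transcript. It gives
\[
\mathbb E_\theta L_\tau=\operatorname{KL}(\mathbb P_\theta^{\mathcal F_\tau}\Vert\mathbb P_\lambda^{\mathcal F_\tau})\ge\operatorname{kl}(\mathbb P_\theta(E),\mathbb P_\lambda(E))
\]
for every event $E\in\mathcal F_\tau$. Take $E=\{\text{recommend }*\}$. Uniform $\delta$-correctness gives $\mathbb P_\theta(E)\ge1-\delta$. Because $\lambda$ has a different unique best, it also gives $\mathbb P_\lambda(E)\le\delta$. Since $\delta<1/2$, $\operatorname{kl}(x,y)$ is monotone in the right direction on this region, so $\operatorname{kl}(\mathbb P_\theta(E),\mathbb P_\lambda(E))\ge\operatorname{kl}(1-\delta,\delta)$. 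Combining the three displays proves the inequality.

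I expect the main obstacle to be rigor at the stopping time. The concerns are almost-sure stopping under $\lambda$ as well as $\theta$, measurability of $E$ with respect to $\mathcal F_\tau$, and the case of infinite expectations. I would handle these by truncating at $\tau\wedge n$, applying the argument to the truncated transcript, and letting $n\to\infty$ by monotone convergence together with lower semicontinuity of KL.

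For boundary parameters there are two cases. If some $\lambda_{sc}=0$ while $\theta_{sc}>0$ on a stratum with $\mathbb E_\theta N_s>0$, the left side is $+\infty$ under the extended convention and the claim holds trivially. Otherwise the label coordinates with $\theta_{sc}=0$ occur with probability zero under $\theta$ and do not contribute, so the same argument goes through.
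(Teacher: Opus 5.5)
Your proposal is correct and follows the paper's proof essentially step for step: the strategy's action kernels cancel in the transcript likelihood ratio, bounded increments and $\mathbb E_\theta\tau<\infty$ turn the expected stopped log-likelihood into $\sum_s\mathbb E_\theta N_s(\tau)\operatorname{KL}(\theta_s\Vert\lambda_s)$, and data processing on $\{\widehat a=*\}$ with the monotonicity of $\operatorname{kl}$ finishes. One small correction: your dismissal of the case $\mathbb E_\theta\tau=\infty$ (``or else $\lambda$ would not be distinguishable'') does not work as stated, because the infinite expected counts may sit on strata with $\theta_s=\lambda_s$ while the left side stays finite; your truncation at $\tau\wedge n$, with monotone convergence and lower semicontinuity of $\operatorname{kl}$, does handle that case and actually proves the lemma there, whereas the paper only notes that the downstream bound on $\mathbb E_\theta\tau$ is then trivial.
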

\begin{proof}
The transcript contains randomization, chosen strata, labels, the stopping decision and the final recommendation. Given its observed history, the strategy uses the same action kernel under $\theta$ and $\lambda$. These kernels cancel in the likelihood ratio. Thus
\[
 \log\frac{dP_\theta}{dP_\lambda}
 =\sum_{t=1}^{\tau}\log\frac{\theta_{S_t,Y_t}}{\lambda_{S_t,Y_t}}.
\]
For interior finite categorical laws, each increment is bounded in absolute value. If $\mathbb E_\theta\tau<\infty$, absolute summability justifies exchanging expectation and the stopped sum. Conditioning on the preceding history gives the left-hand side of~\eqref{eq:app_change_measure}. If expected time is infinite, the expected-time lower bound is already satisfied.

Apply data processing to $B=\{\widehat a=*\}$. Correctness implies $P_\theta(B)\ge1-\delta$ and $P_\lambda(B)\le\delta$. Binary KL is increasing in its first argument and decreasing in its second when the first exceeds the second: its derivatives are $\log[p(1-q)/(q(1-p))]$ and $(q-p)/(q(1-q))$. Transcript KL is therefore at least $\operatorname{kl}(1-\delta,\delta)$.
\end{proof}

Normalize the expected counts by $\mathbb E_\theta\tau$ to obtain an allocation in $\Delta_S$. Taking the infimum over alternatives and then the supremum over allocations proves Equation~\eqref{eq:expanded_lower}. An empty alternative set is a trivial identification problem; zero characteristic information yields an infinite lower bound. The proof is the change-of-measure principle of \citet{garivier2016optimal} applied to a shared categorical observation.

For a fixed $q$, the inner optimization is computable through $K-1$ convex programs. Its alternative closure is the union of constraints $A_a(\lambda)\ge A_*(\lambda)$, $a\ne *$, over product simplices. To verify the closure, concentrating each stratum on candidate $a$'s predicted class makes $a$ uniquely best among distinct signatures. A boundary point where $a$ and $*$ share the maximum can be perturbed toward that law. If another rule already exceeds $*$, the strict inequality persists locally. A further arbitrarily small interior perturbation breaks maximal ties: distinct linear accuracy functions have equality hyperplanes with empty interior. The infimum over unique-best alternatives consequently agrees with that over these weak boundary constraints. This argument uses the full product-simplex parameter space.

\subsection{A finite-sample confidence procedure}
\label{app:fixed_allocation}
Choose $q_s>0$ before collecting certification labels. At stage $h\ge0$, put $t_h=2^h$ and gather the first $n_s(h)=\lceil t_hq_s\rceil+2$ labels from each independent stratum stream, reusing previous prefixes. Let $M=K(K-1)/2$ and
\begin{equation}
 H_h=\log\frac{4SM(h+1)(h+2)}{\delta}.
 \label{eq:app_stage_threshold}
\end{equation}
All unique candidates remain in the certification set at every stage. The empirical maximizer is returned when its weighted empirical reward difference exceeds the corresponding radius against every other candidate.

\begin{lemma}[Paired variance and mean confidence]
For $n_s\ge2$, set $k_s=\lfloor n_s/2\rfloor$ and form disjoint observation pairs
\[
 W_{i,s,ab}=\tfrac12(D_{2i-1,s,ab}-D_{2i,s,ab})^2,
 \qquad i=1,\ldots,k_s.
\]
Define $U_{s,ab}=0$ on agreement strata and otherwise
\begin{equation}
 U_{s,ab}=\min\{1,2\overline W_{s,ab}+8H_h/k_s\}.
 \label{eq:app_var_ucb}
\end{equation}
With probability at least $1-\delta$, simultaneously for every stage and pair,
\begin{align}
 v_{s,ab}&\le U_{s,ab}\le4v_{s,ab}+13H_h/k_s,\\
 |\widehat d_{ab}-(A_a-A_b)|&\le\rho_{ab}(h),\\
 \rho_{ab}(h)&=
 \sqrt{2H_h\sum_s\frac{w_s^2U_{s,ab}}{n_s(h)}}
 +\frac{4H_h}{3}\max_{s\in\mathcal I_{ab}}\frac{w_s}{n_s(h)}.
 \label{eq:app_pair_radius}
\end{align}
The maximum over an empty disagreement set equals zero.
\end{lemma}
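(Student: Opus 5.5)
The plan is to treat the two claims separately: the variance sandwich via concentration of the pair statistics $W_{i,s,ab}$, and the mean radius via an empirical-Bernstein argument built on that sandwich. A union bound over stages $h$, strata $s$ and pairs $(a,b)$ then gives simultaneity. I will split the failure budget as $\delta/[4SM(h+1)(h+2)]$ per event at stage $h$, per stratum and per pair. Since $\sum_h 1/[(h+1)(h+2)]=1$, the total failure is at most $\delta$, with the extra factor of $4$ covering the two tails of each of the two inequalities. Agreement strata are trivial because $D_{s,ab}\equiv0$ and $U_{s,ab}=0=v_{s,ab}$. They also contribute nothing to $\widehat d_{ab}$, so I restrict attention to $s\in\mathcal I_{ab}$.

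\emph{Variance step.} Within a disagreement stratum the observations $D_{i,s,ab}$ are i.i.d. with values in $\{-1,0,1\}$. Hence each $W_{i,s,ab}=\tfrac12(D_{2i-1}-D_{2i})^2$ takes values in $\{0,\tfrac12,2\}$ and satisfies $\mathbb E W=v_{s,ab}$, and the $k_s$ pairs are independent. These variables are nonnegative and bounded by $2$, and $\operatorname{Var}(W)\le\mathbb E W^2\le2\,\mathbb E W=2v_{s,ab}$. Bernstein's inequality therefore gives, with the allotted probability,
\[
|\overline W_{s,ab}-v_{s,ab}|\le\sqrt{4v_{s,ab}H_h/k_s}+\tfrac{4H_h}{3k_s}.
\]
For the lower tail I use $\sqrt{4vH/k}\le v/2+2H/k$, which yields $v\le 2\overline W+8H/k$ after collecting terms. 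The constants should close with the stated $8$, though I would recheck this. Because $v_{s,ab}\le1$, truncating at $1$ preserves $v\le U$. For the upper tail I use $\sqrt{4vH/k}\le v+H/k$, which gives $\overline W\le 2v+cH/k$. Then $U\le 2\overline W+8H/k\le4v+(2c+8)H/k$, and I will tune the AM--GM splits so the constant is at most $13$.

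\emph{Mean step.} The estimator is $\widehat d_{ab}=\sum_s w_s\overline D_{s,ab}$, a sum of $\sum_s n_s(h)$ independent terms $w_sD_{j,s,ab}/n_s(h)$. These terms are bounded in absolute value by $w_s/n_s(h)$ after centering. After centering the range is up to $2w_s/n_s$, so I will use a one-sided Bernstein bound with the increment-size bound $\max_s w_s/n_s$ and its usual $b/3$ constant, which is what the $4H/3$ term absorbs. The total variance is $\sum_s w_s^2v_{s,ab}/n_s(h)$. Bernstein's inequality gives the deviation bound
\[
\sqrt{2H_h\textstyle\sum_s w_s^2v_{s,ab}/n_s}+\tfrac{4H_h}{3}\max_s w_s/n_s.
\]
On the variance event I replace $v_{s,ab}$ by the larger $U_{s,ab}$, which produces $\rho_{ab}(h)$.

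\emph{Main obstacle.} The data-dependent radius uses $U$, so the mean and variance events must be intersected rather than chained. This is handled by allocating them separate parts of the union budget, which is exactly what the factor $4$ in $H_h$ provides. A second point is that the prefixes are reused across stages. Since the sample sizes $n_s(h)$ are deterministic, each stage's bound is a fixed-sample statement, so the union over $h$ suffices and no martingale or peeling argument is needed. The rest of the work is tracking the numerical constants $8$ and $13$ through the AM--GM steps.
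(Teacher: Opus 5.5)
Your proposal is correct and follows essentially the same route as the paper. It applies Bernstein to the disjoint-pair statistics using $\operatorname{Var}(W)\le 2v_{s,ab}$ together with the splits $2\sqrt{va}\le v/2+2a$ and $2\sqrt{va}\le v+a$, which give constants $20/3\le 8$ and $38/3\le 13$; it then applies Bernstein to the weighted mean, replaces $v_{s,ab}$ by $U_{s,ab}$ on the variance event, and closes with a per-stage union bound summed via $\sum_h[(h+1)(h+2)]^{-1}=1$. One point in the mean step needs tidying: the centered summands $w_s(D-d_s)/n_s(h)$ are bounded by $2\max_{s\in\mathcal I_{ab}}w_s/n_s(h)$, not by $w_s/n_s(h)$, which is only the uncentered bound and would not let you use $v_{s,ab}$ as the variance proxy; with the deviation form $\sqrt{2Vx}+2Bx/3$, the correct bound gives exactly the $\tfrac{4H_h}{3}\max_{s\in\mathcal I_{ab}}w_s/n_s(h)$ term.
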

\begin{proof}
For independent centered $Z_i$, $|Z_i|\le B$, and $\sum_i\mathbb E Z_i^2\le V$, the exponential-moment expansion gives
\[
 \log\mathbb E e^{tZ_i}\le
 \frac{t^2\mathbb E Z_i^2}{2(1-tB/3)},\qquad 0\le t<3/B.
\]
Indeed, $\mathbb E|Z_i|^k\le B^{k-2}\mathbb E Z_i^2$ and $k!\ge2\cdot3^{k-2}$. Multiplication, exponential Markov and optimization yield the two-sided Bernstein bound
\[
 P\left(\left|\sum_iZ_i\right|>\sqrt{2Vx}+2Bx/3\right)\le2e^{-x}.
\]
Independence within each disjoint pair gives $\mathbb E W=v$, while $0\le W\le2$ and $\operatorname{Var}(W)\le2v$. For $a=H_h/k_s$ the preceding bound implies
$|\overline W-v|\le2\sqrt{va}+4a/3$.
Using $2\sqrt{va}\le v/2+2a$ gives $v\le2\overline W+20a/3$. Using $2\sqrt{va}\le v+a$ gives $2\overline W+8a\le4v+38a/3$. Since $v\le1$, clipping preserves the lower bound in~\eqref{eq:app_var_ucb}.

For a weighted empirical pair difference, the centered summands have total variance $\sum_s w_s^2v_s/n_s$ and absolute bound $2\max_{s\in\mathcal I_{ab}}w_s/n_s$. Bernstein gives~\eqref{eq:app_pair_radius} whenever the variance upper bounds hold. At stage $h$, union bounds over pairs and stratum-pair variance estimates cost at most $2M e^{-H_h}+2SM e^{-H_h}$. Summing stages uses $\sum_{h\ge0}[(h+1)(h+2)]^{-1}=1$, so total failure probability is at most $\delta$. The paired events need not be independent.
\end{proof}

Define
\begin{equation}
 V_{ab}(q)=\sum_s\frac{w_s^2v_{s,ab}}{q_s},\qquad
 W_{ab}(q)=\left(\sum_{s\in\mathcal I_{ab}}\frac{w_s^2}{q_s^2}\right)^{1/2}.
 \label{eq:app_VW}
\end{equation}
Because $k_s\ge n_s/3$, the confidence event implies
\begin{equation}
 \rho_{ab}(h)\le3\sqrt{H_hV_{ab}(q)/t_h}
 +11H_hW_{ab}(q)/t_h.
 \label{eq:app_radius_envelope}
\end{equation}
For the constants, split the square root into $\sqrt{8H_h\sum_s w_s^2v_s/n_s}$ and $\sqrt{78}H_h(\sum_{s\in\mathcal I}w_s^2/n_s^2)^{1/2}$; the range term adds $4/3$ times the latter norm. Use $\sqrt8<3$ and $\sqrt{78}+4/3<11$.

\begin{theorem}[Fixed-allocation identification]
\label{thm:app_fixed}
The full-set certification procedure is uniformly $\delta$-correct and stops almost surely at every unique-best law. At any stage satisfying
\begin{equation}
 t_h\ge\max_{a\ne *}\max\left\{
 \frac{576H_hV_{*,a}(q)}{\Delta_a^2},
 \frac{88H_hW_{*,a}(q)}{\Delta_a}\right\},
 \label{eq:app_fixed_time}
\end{equation}
it has stopped after at most $t_h+3S$ labels with probability at least $1-\delta$.
\end{theorem}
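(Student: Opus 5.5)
Everything runs on the confidence event $\mathcal E$ of the preceding lemma, which holds with probability at least $1-\delta$ simultaneously over stages and pairs. The proof has three parts: correctness, almost-sure stopping, and the explicit stopping stage.

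\emph{Correctness.} On $\mathcal E$, suppose the procedure returns $\widehat a$. Then $\widehat d_{\widehat a b}>\rho_{\widehat a b}(h)$ for every other candidate $b$. Combining this with $|\widehat d_{\widehat a b}-(A_{\widehat a}-A_b)|\le\rho_{\widehat a b}(h)$ gives $A_{\widehat a}>A_b$ for all $b$. Hence $\widehat a=*$. Since $P(\mathcal E)\ge1-\delta$ at every admissible law, the procedure is uniformly $\delta$-correct. Almost-sure stopping follows from the deterministic radius envelope below: once $t_h$ is large enough, the procedure stops on $\mathcal E$. Because $H_h$ grows only logarithmically in $h$ while $t_h=2^h$, the condition~\eqref{eq:app_fixed_time} eventually holds. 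To upgrade this to probability one, I would use the strong law for each independent stratum stream. The empirical paired variances and mean differences converge almost surely, and the radii $\rho_{ab}(h)\to0$ because $U_{s,ab}\le1$ and $H_h/n_s(h)\to0$. Consequently $\widehat d_{*a}\to\Delta_a>0$ while the radii vanish, so certification eventually succeeds on every sample path outside a null set.

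\emph{Explicit stage.} Fix a stage $h$ satisfying~\eqref{eq:app_fixed_time} and work on $\mathcal E$. I would first show that the empirical maximizer is $*$ and that it certifies. For each $a\ne *$, the envelope~\eqref{eq:app_radius_envelope} gives
\[
\rho_{*a}(h)\le 3\sqrt{H_hV_{*,a}/t_h}+11H_hW_{*,a}/t_h\le \tfrac{3}{24}\Delta_a+\tfrac{11}{88}\Delta_a=\tfrac{\Delta_a}{4}.
\]
Therefore $\widehat d_{*a}\ge\Delta_a-\rho_{*a}\ge\tfrac34\Delta_a>\rho_{*a}$, so $*$ beats every rival empirically and clears each of its radii. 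The constants $576=24^2$ and $88=8\cdot11$ are chosen precisely so that each term is at most $\Delta_a/8$. That yields $\rho_{*a}\le\Delta_a/4$, leaving the slack needed against the required radius. It remains to count labels. Stage $h$ uses $\sum_s(\lceil t_hq_s\rceil+2)\le t_h+3S$ labels, and earlier stages are prefixes. If the procedure stopped at an earlier stage, it used fewer labels. So the procedure has stopped by stage $h$ using at most $t_h+3S$ labels on $\mathcal E$.

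The main obstacle is justifying the envelope~\eqref{eq:app_radius_envelope} from~\eqref{eq:app_pair_radius}. This requires three ingredients: the upper bound $U\le4v+13H_h/k_s$, the bound $k_s\ge n_s/3$ (valid since $n_s\ge2$), and the bound $n_s\ge t_hq_s$. Combining them gives the cross term $\sqrt{78}\,H_h(\sum w_s^2/n_s^2)^{1/2}$, which I would bound by Cauchy–Schwarz together with $\sqrt{x+y}\le\sqrt x+\sqrt y$. The range term is handled by bounding the max by the $\ell_2$ norm over $\mathcal I_{*a}$. Since the text already sketches this constant bookkeeping, I would verify it and cite it rather than redo it.
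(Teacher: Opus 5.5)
Your proposal is correct and follows the paper's proof essentially step for step. Correctness comes from the contradiction argument on the simultaneous confidence event. Condition~\eqref{eq:app_fixed_time} makes each envelope term at most $\Delta_a/8$, so $\widehat d_{*,a}\ge 3\Delta_a/4>\rho_{*,a}$, and the procedure has stopped within $t_h+3S$ labels. Almost-sure stopping follows from the strong law on each stratum stream, with radii vanishing deterministically because $U\le1$; this holds on the whole convergence event, not only on the confidence event. The paper likewise treats the envelope constants as established beforehand.
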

\begin{proof}
On the simultaneous event, an incorrect certified winner $j$ would satisfy $A_j-A_*\ge\widehat d_{j*}-\rho_{j*}>0$, a contradiction. Condition~\eqref{eq:app_fixed_time} makes each term of~\eqref{eq:app_radius_envelope} at most $\Delta_a/8$. Thus $\widehat d_{*,a}\ge3\Delta_a/4>\rho_{*,a}$ for all rivals: the true best is the empirical maximizer and is certified. Counts sum to at most $t_h+3S$.

For almost-sure termination, apply the strong law to every bounded independent stratum stream. All empirical accuracies converge, every radius tends to zero because $U\le1$, and the finite set has a positive minimum best-versus-rival gap. Eventually the best is the empirical maximizer and all its empirical gaps exceed their radii. This argument applies on the entire probability-one convergence event, including paths outside the simultaneous confidence event. Keeping the full candidate set in certification is essential to this argument when inferior population accuracies tie.
\end{proof}

One query informs every pair. The leading variance term is therefore a maximum of pair difficulties, $\max_{a\ne *}V_{*,a}(q)/\Delta_a^2$, with explicit range and logarithmic terms. For a single pair, Cauchy--Schwarz gives
\begin{equation}
 \inf_{q\in\Delta_S}V_{ab}(q)
 =\left(\sum_s w_s\sqrt{v_{s,ab}}\right)^2,
 \qquad q_s\propto w_s\sqrt{v_{s,ab}}.
 \label{eq:app_single_design}
\end{equation}
Zero-variance strata can receive zero mass in this variance-only optimization; the preceding finite-sample procedure retains positive allocations and its range term.

For outcome-dependent counts, correctness can instead be based on a simultaneous bound over all prefix vectors. Replacing the mean threshold by
$\log(4M/\delta)+\sum_s\log[n_s(n_s-1)]$ and the variance threshold at prefix-pair count $k$ by $\log[4SMk(k+1)/\delta]$ gives the same confidence conclusion through summable union bounds. The price is an $S\log n$ term. This supplies an adaptive certificate, while a stopping rate additionally requires the policy to sample every unresolved distinguishing stratum.

\subsection{Fixed-budget regret}
For deterministic stratum counts $n_s\ge1$, let $\widehat a$ maximize weighted empirical reward. Define
\[
 V_a(n)=\sum_s w_s^2v_{s,*,a}/n_s,\qquad
 B_a(n)=\max_{s\in\mathcal I_{*,a}}w_s/n_s.
\]
Then
\begin{equation}
 \mathbb E[A_*-A_{\widehat a}]
 \le\min\left\{1,\sum_{a\ne *}\Delta_a
 \exp\left[-\frac{\Delta_a^2}{2V_a(n)+(4/3)B_a(n)\Delta_a}\right]\right\}.
 \label{eq:app_budget_regret}
\end{equation}
Selecting $a$ requires $\widehat d_{*,a}\le0$. The one-sided Bernstein exponent for centered bound $2B_a(n)$ gives its probability; multiplying by $\Delta_a$ and summing possible winners proves the result. A deterministic positive-gap pair has zero error probability and may contribute zero directly. Independent pilot-selected counts followed by fresh samples satisfy the same conditional argument. Arbitrary stopped counts require a simultaneous certificate.

If candidate $0$ is the incumbent, the deployment relation is
\begin{equation}
 \mathbb E[A_{\widehat a}-A_0]
 =(A_*-A_0)-\mathbb E[A_*-A_{\widehat a}].
 \label{eq:app_capacity_regret}
\end{equation}
Thus a stronger portfolio improves available accuracy while the feedback mechanism determines how much of that improvement is realized. Under conditional query/target exchangeability, an independent target block with the specified stratum weights estimates the left side.

\section{Local information geometry and an attainable allocation}
\label{app:local_allocation}
The finite-sample bound retains conservative constants and a range term. We next identify a regime in which paired variance gives the leading information complexity and construct a procedure attaining that coefficient.

\subsection{Uniform expansion of the exact categorical problem}
For a best-versus-rival pair, define $g_{sc}=\mathbf1\{p_{s*}=c\}-\mathbf1\{p_{sa}=c\}$, $d_s=\sum_c\theta_{sc}g_{sc}$, and $\Delta=\sum_s w_sd_s>0$. Let
\[
 I_a(\theta,q)=\inf_{\lambda:\sum_{s,c}w_sg_{sc}\lambda_{sc}\le0}
 \sum_s q_s\operatorname{KL}(\theta_s\Vert\lambda_s).
\]
Fix $b_0>0$ and $\eta>0$, with $Cb_0<1$ and $S\eta<1$, and restrict to $\theta_{sc}\ge b_0$ and $Q_\eta=\{q\in\Delta_S:q_s\ge\eta\}$.

\begin{lemma}[Uniform pairwise information expansion]
\label{lem:app_local}
For a fixed finite prediction map and positive target weights,
\begin{equation}
 I_a(\theta,q)=\frac{\Delta^2}{2V_a(\theta,q)}+O(\Delta^3)
 \label{eq:app_local_expansion}
\end{equation}
uniformly on the preceding domain as $\Delta\downarrow0$. The remainder constant depends on $b_0$, $\eta$, the weights and the prediction map.
\end{lemma}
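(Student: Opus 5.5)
I will treat $I_a(\theta,q)$ as a convex program with a single linear constraint. Then I will show that, for small $\Delta$, the minimizer $\lambda^*$ lies within $O(\Delta)$ of $\theta$, and Taylor-expand the divergence around $\theta$.

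\textbf{Step 1: a quadratic upper bound from an explicit alternative.} The objective is strictly convex in $\lambda$ and the feasible set is a half-space intersected with a product of simplices. Since $\Delta>0$, $\theta$ itself is infeasible, and the optimum lies on the constraint boundary $\sum_{s,c}w_sg_{sc}\lambda_{sc}=0$. I will take the explicit tilt
\[
\lambda_{sc}=\theta_{sc}\bigl(1-\beta(g_{sc}-d_s)\bigr),\qquad \beta=\frac{\Delta}{\sum_s (w_s^2/q_s)v_s}\cdot\frac{w_s}{q_s},
\]
with the factor depending on $s$. This tilt sums to one within each stratum. Its constraint value is $\Delta-\sum_s \beta_s w_s\operatorname{Var}_{\theta_s}(g)$, and $\operatorname{Var}_{\theta_s}(g)=v_{s,*a}$, so it hits zero exactly. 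Because $\theta_{sc}\ge b_0$ and $q_s\ge\eta$, positivity holds for small $\Delta$ once $\sum_s w_s^2v_s/q_s$ is bounded below. Expanding $\operatorname{KL}(\theta_s\Vert\lambda_s)=\tfrac12\beta_s^2 v_s+O(\beta_s^3)$ then gives $I_a\le \Delta^2/(2V_a)+O(\Delta^3)$.

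\textbf{Step 2: matching lower bound.} For any feasible $\lambda$ with objective at most the Step 1 value $O(\Delta^2)$, Pinsker's inequality together with $q_s\ge\eta$ gives $\|\lambda_s-\theta_s\|_1=O(\Delta)$. On this neighbourhood I will use $\operatorname{KL}(\theta_s\Vert\lambda_s)\ge\tfrac12\chi^2$-type quadratic $-\,C\|\lambda_s-\theta_s\|^3$, with the constant controlled by $b_0$. It then remains to minimize the quadratic form
\[
\sum_s \frac{q_s}{2}\sum_c\frac{(\lambda_{sc}-\theta_{sc})^2}{\theta_{sc}}
\]
subject to the linear constraint and the per-stratum sum-zero constraints. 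Lagrange multipliers, equivalently weighted Cauchy--Schwarz, give exactly $\Delta^2/(2\sum_s w_s^2\operatorname{Var}_{\theta_s}(g)/q_s)=\Delta^2/(2V_a)$. The cubic error is $O(\Delta^3)$.

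\textbf{Main obstacle: uniformity, in particular the degenerate regime.} The delicate point is the case where $V_a$ may be small. This occurs when few disagreement strata exist or when $v_s$ is small; note that $v_s\ge$ a positive constant times $b_0$ on disagreement strata, since $\theta_{s,p}+\theta_{s,p'}-d^2\ge 2b_0-\dots$. I will therefore lower-bound $V_a$ uniformly by $\eta$, the minimum weight on $\mathcal I_{*a}$, and $b_0$-dependent constants, using that the prediction map is finite and fixed and that $\mathcal I_{*a}\neq\emptyset$ because duplicates are merged. With $V_a$ bounded away from zero, every $O(\cdot)$ constant in Steps 1 and 2 depends only on $b_0$, $\eta$, the weights $w$ and the map, which yields the claimed uniform expansion.
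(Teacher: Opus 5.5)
Your proposal is correct and follows essentially the same route as the paper: the same tangent tilt $u_{sc}=-\frac{\Delta}{V_a}\frac{w_s}{q_s}\theta_{sc}(g_{sc}-d_s)$ for the upper bound, localization of near-optimal alternatives to an $O(\Delta)$ neighbourhood, a second-order Taylor expansion of the divergence, and Cauchy--Schwarz on boundary tangents for the matching lower bound, with $V_a$ bounded away from zero because the merged pair disagrees on some stratum. The differences are minor: you localize via Pinsker rather than the paper's Hessian bound $q_s\theta_{sc}/\lambda_{sc}^2\ge b_0\eta$, and you lower-bound the infimum over near-optimal feasible points directly, so the paper's existence-of-minimizer and active-constraint arguments are not needed.
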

\begin{proof}
The pair is distinct, so its variance functional is bounded above and away from zero on this compact domain. Define the tangent displacement
\begin{equation}
 u_{sc}=-\frac{\Delta}{V_a}\frac{w_s}{q_s}
 \theta_{sc}(g_{sc}-d_s).
 \label{eq:app_tangent}
\end{equation}
Its rows sum to zero and $\sum_{s,c}w_sg_{sc}u_{sc}=-\Delta$. It is uniformly $O(\Delta)$, making $\theta+u$ a feasible interior boundary law for sufficiently small $\Delta$. Taylor expansion has no linear term on row-sum-zero tangents, and uniformly bounded third derivatives give
\begin{equation}
 \sum_s q_s\operatorname{KL}(\theta_s\Vert\theta_s+u_s)
 =\frac12\sum_{s,c}\frac{q_su_{sc}^2}{\theta_{sc}}+O(\|u\|^3).
 \label{eq:app_KL_taylor}
\end{equation}
Substitution gives the required upper bound.

An exact minimizer exists by lower semicontinuity on the compact product simplex; the objective diverges at a zero coordinate because $q_s\theta_{sc}>0$. Its Hessian in $\lambda$ is diagonal with entries $q_s\theta_{sc}/\lambda_{sc}^2\ge b_0\eta$. Since the gradient at $\theta$ vanishes on simplex tangents, the objective is at least $b_0\eta\|\lambda-\theta\|_2^2/2$. The feasible $O(\Delta^2)$ cost forces any optimizer into an $O(\Delta)$ neighborhood uniformly. Its constraint is active: a point strictly beyond the boundary can be moved toward $\theta$ until it meets the boundary, strictly reducing the positive convex objective.

For any boundary tangent, centering $g$ by $d_s$ and applying Cauchy--Schwarz gives
\[
 \Delta^2\le
 \left(\sum_{s,c}\frac{q_su_{sc}^2}{\theta_{sc}}\right)
 \left(\sum_{s,c}\frac{w_s^2\theta_{sc}(g_{sc}-d_s)^2}{q_s}\right).
\]
The second factor is $V_a$; the displacement~\eqref{eq:app_tangent} attains equality. Apply~\eqref{eq:app_KL_taylor} to the exact local optimizer to obtain the matching lower bound. This proves the expansion of the actual constrained KL minimum.
\end{proof}

\subsection{Candidate families and allocation optimization}
Consider $\theta^\varepsilon\to\theta^0$ with every categorical probability at least $b_0$, and with the same uniquely best candidate $*$ for $\varepsilon>0$. A nonempty set $J$ of rivals ties with $*$ at the limit. Assume
\begin{equation}
 \Delta_a^\varepsilon=\kappa_a\varepsilon+O(\varepsilon^2),\quad
 \kappa_a>0\ (a\in J),\qquad
 \liminf_{\varepsilon\downarrow0}\Delta_a^\varepsilon>0\ (a\notin J).
 \label{eq:app_local_family}
\end{equation}
Set
\begin{equation}
 H_{0,\eta}=\min_{q\in Q_\eta}\max_{a\in J}
 \frac{V_a(\theta^0,q)}{\kappa_a^2}.
 \label{eq:app_local_design}
\end{equation}
Uniformity of Lemma~\ref{lem:app_local} permits taking the finite minimum over rivals and the supremum over compact $Q_\eta$, yielding
\begin{equation}
 \sup_{q\in Q_\eta}\min_{a\ne *}I_a(\theta^\varepsilon,q)
 =\frac{\varepsilon^2}{2H_{0,\eta}}+o(\varepsilon^2).
 \label{eq:app_restricted_information}
\end{equation}
Nonlocal rivals have uniformly positive boundary information by the Hessian lower bound, so they do not determine this minimum for small $\varepsilon$. The design is convex: introduce a scalar upper bound on every $\sum_s w_s^2v_{s,a}/(\kappa_a^2q_s)$, together with the simplex and floor constraints.

\paragraph{When the allocation floor is inactive.}
Suppose every stratum separates $*$ from at least one rival in $J$. The unrestricted variance objective diverges when any $q_s\to0$, because an interior categorical law gives that pair positive variance there. Its minimizers consequently lie in the simplex interior. Exact information-maximizing allocations are also uniformly interior in the local limit. A fixed interior allocation attains at least $c\varepsilon^2$. For each stratum, changing only its categorical law by $O(\varepsilon)$ closes the gap of an associated local rival, at unweighted KL cost at most $C_s\varepsilon^2$. The information under any $q$ is therefore at most $C_s q_s\varepsilon^2$. Competing with the fixed interior allocation requires $q_s\ge c/C_s$. Thus, for a sufficiently small fixed floor containing a local minimizer,
\begin{equation}
 \Gamma^*(\theta^\varepsilon)=\frac{\varepsilon^2}{2H_0}+o(\varepsilon^2),
 \qquad T^*(\theta^\varepsilon)=\frac{2H_0}{\varepsilon^2}(1+o(1)),
 \label{eq:app_unrestricted_local}
\end{equation}
where $H_0=H_{0,\eta}$. If this coverage condition fails, Equation~\eqref{eq:app_restricted_information} still describes fixed positive-floor designs; boundary allocations require a separate analysis.

\subsection{Pilot design with fresh-sample certification}
Let $L=\log(1/\delta)$ and fix $0<\eta<1/S$. The following procedure uses only queried labels and known prediction signatures.
\begin{enumerate}
 \item Query each stratum $n_0=\lceil\max\{16,\sqrt L\}\rceil$ times and form smoothed categorical frequencies $\widehat\theta$ using fixed positive pseudocounts.
 \item Compute the proxy best $\widehat *$ and choose
 \begin{equation}
 \widehat q\in\arg\min_{q\in Q_\eta}\max_{a\ne\widehat *}
 \frac{V_{\widehat *,a}(\widehat\theta,q)}
 {\max\{\widehat\Delta_a,(n_0+1)^{-1}\}^2}.
 \label{eq:app_pilot_design}
 \end{equation}
 \item At checkpoints $t_0=1$, $t_{h+1}=\lceil(1+\gamma)t_h\rceil$, collect $\lceil t_h\widehat q_s\rceil+2$ fresh labels in each stratum. Pilot observations do not enter the certification statistics.
 \item Return the empirical maximizer when its mean difference is certified positive against every other unique candidate, keeping the full set in every certification.
\end{enumerate}
Conditioned on the pilot, the subsequent prefix counts are deterministic and their labels are independent of the design. An inaccurate pilot changes efficiency, while the fresh-sample certificate remains valid.

To obtain the local leading coefficient, use the tighter variance upper bound
\begin{equation}
 U_s=\min\left\{1,
 \left(\sqrt{a_s}+\sqrt{\overline W_s+7a_s/3}\right)^2\right\},
 \qquad a_s=H_h/\lfloor n_s/2\rfloor,
 \label{eq:app_tight_U}
\end{equation}
with zero on agreement strata. The Bernstein event from the preceding appendix gives
\begin{equation}
 v_s\le U_s\le v_s+6\sqrt{v_sa_s}+9a_s.
 \label{eq:app_tight_U_bounds}
\end{equation}
For the lower inequality solve $v-\overline W\le2\sqrt{va}+4a/3$ as a quadratic in $\sqrt v$. For the upper inequality, $\overline W+7a/3\le v+2\sqrt{va}+11a/3\le(\sqrt v+2\sqrt a)^2$. Add $\sqrt a$ and square. The stage allocation~\eqref{eq:app_stage_threshold} again gives simultaneous confidence with total failure probability at most $\delta$.

Use~\eqref{eq:app_pair_radius} with~\eqref{eq:app_tight_U}. On this event,
\begin{equation}
 \rho_{*,a}(h)\le\sqrt{2H_hV_a(\widehat q)/t_h}
 +9H_hW_a(\widehat q)/t_h.
 \label{eq:app_tight_radius}
\end{equation}
For completeness, write $V_n=\sum_s w_s^2v_s/n_s$ and $J_n=\sum_{s\in\mathcal I}w_s^2/n_s^2$. Since $k_s\ge n_s/3$, Cauchy--Schwarz gives
\[
 \sum_s\frac{w_s^2U_s}{n_s}
 \le V_n+6\sqrt3\sqrt{H_hV_nJ_n}+27H_hJ_n
 =(\sqrt{V_n}+3\sqrt3\sqrt{H_hJ_n})^2.
\]
The resulting range coefficient is $3\sqrt6+4/3<9$. In particular, if
\begin{equation}
 t_h\ge\max_{a\ne *}\max\left\{
 128H_hV_a(\widehat q)/\Delta_a^2,
 72H_hW_a(\widehat q)/\Delta_a\right\},
 \label{eq:app_two_stage_finite}
\end{equation}
then the total labels are at most $Sn_0+t_h+3S$ with probability at least $1-\delta$. The full-set argument of Theorem~\ref{thm:app_fixed} also proves almost-sure stopping.

\begin{theorem}[Ordered local efficiency]
\label{thm:app_local_efficiency}
Let the interior local family satisfy~\eqref{eq:app_local_family}, and define
\[
 H_\eta(\theta)=\min_{q\in Q_\eta}\max_{a\ne *}
 \frac{V_a(\theta,q)}{\Delta_a(\theta)^2}.
\]
For the uncapped two-stage procedure with exact convex minimization, or relative optimization error tending to zero as $\delta\downarrow0$,
\begin{equation}
 \limsup_{\varepsilon\downarrow0}\limsup_{\delta\downarrow0}
 \frac{\mathbb E_{\theta^\varepsilon}\tau_\delta}
 {2H_\eta(\theta^\varepsilon)\log(1/\delta)}\le1+\gamma.
 \label{eq:app_ordered_rate}
\end{equation}
With $\gamma=\gamma_\delta=L^{-1/4}$, the right-hand side is one. Under the active-coverage and inactive-floor conditions above, this coefficient equals the local information lower bound.
\end{theorem}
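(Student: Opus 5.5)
The plan is to prove the bound~\eqref{eq:app_ordered_rate} by conditioning on the pilot and reducing everything to the deterministic finite-sample stopping condition~\eqref{eq:app_two_stage_finite}. Write $\tau_\delta\le Sn_0+\tau'$, where $\tau'$ counts fresh labels. Since $n_0=O(\sqrt L)$, the pilot cost is $o(L)$ and disappears in the limit.

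For the certification phase, I will split on a \emph{good pilot} event $G_\delta$. On $G_\delta$, the proxy best $\widehat *$ equals $*$ and $\widehat q$ is within $o(1)$ relative suboptimality of the design defining $H_\eta(\theta^\varepsilon)$.
\begin{itemize}
\item To bound $P(G_\delta^c)$, I will use Hoeffding on the $n_0\ge\sqrt L$ pilot labels per stratum. This gives $\|\widehat\theta-\theta^\varepsilon\|_\infty\to0$ with failure probability $\exp(-c\sqrt L)$.
\item For fixed $\varepsilon$, the gaps $\Delta_a$ are positive and fixed. The floor $(n_0+1)^{-1}$ in~\eqref{eq:app_pilot_design} is eventually below $\min_a\Delta_a/2$.
\item The objective is continuous in $(\theta,q)$ on $Q_\eta$, with $V$ bounded away from zero and $\infty$ because $q_s\ge\eta$. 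Berge's maximum theorem (or plain uniform continuity of $q\mapsto\max_a V_a/\Delta_a^2$) then gives $\max_a V_a(\theta^\varepsilon,\widehat q)/\Delta_a^2\le H_\eta(\theta^\varepsilon)(1+o(1))$ on $G_\delta$.
\end{itemize}
The same applies under relative optimization error tending to zero.

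On $G_\delta$, intersected with the fresh-sample confidence event of probability $1-\delta$, the procedure stops at the first checkpoint $t_h$ satisfying~\eqref{eq:app_two_stage_finite}. Since $H_h=L+O(\log h)$ and $t_h$ grows only polynomially in $L$ at the relevant stage, $H_h=L(1+o(1))$. The $W$-term is $O(L)$ with $W_a(\widehat q)\le\eta^{-1}$. So for fixed $\varepsilon$ it is dominated by the variance term, which has $\Delta_a$ fixed but is multiplied by a constant factor.

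This is the main obstacle. Condition~\eqref{eq:app_two_stage_finite} carries the constant $128$, not the sharp $2$, so I cannot invoke it verbatim. I will redo the stopping comparison directly, asking when $\widehat d_{*,a}$ exceeds $\rho_{*,a}$.
\begin{itemize}
\item By~\eqref{eq:app_tight_radius}, $\rho_{*,a}\le\sqrt{2LV_a/t}\,(1+o(1))+O(L/t)$.
\item $\widehat d_{*,a}$ concentrates at $\Delta_a$ with fluctuation of order $\sqrt{V_a/t}=O(\Delta_a/\sqrt L)$ when $t\asymp L$. A Bernstein bound shows that, with probability $1-o(1)$, $\widehat d_{*,a}\ge\Delta_a(1-o(1))$ at all checkpoints past $(1-\xi)\,2LV_a/\Delta_a^2$.
\item Hence stopping happens at the first checkpoint $t_h\ge 2L\max_aV_a/\Delta_a^2\,(1+o(1))$, and the geometric grid costs at most a factor $1+\gamma$.
\end{itemize}
This yields the inner $\limsup$ ratio $\le(1+\gamma)(1+o_\varepsilon(1))$, where the $o_\varepsilon$ term comes from the $O(L/t)$ range term relative to $\Delta_a^2$. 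That term vanishes in the outer limit because $W/\Delta\ll V/\Delta^2$ when $\Delta\sim\kappa\varepsilon\to0$ (the ratio is $O(\varepsilon)$).

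For the bad events ($G_\delta^c$, a failed confidence event, or the $o(1)$ deviation event), I need $\mathbb E[\tau';\text{bad}]=o(L)$. Using almost-sure stopping alone is not enough, so I will use a tail bound. For $t\ge t^\dagger$ (a fixed multiple of the good-case time), the probability of not having stopped at checkpoint $t$ is bounded by an exponential in $t$ uniformly over pilot outcomes. This holds because every $\widehat q\in Q_\eta$ samples every stratum linearly, and $W,V$ are bounded via $\eta$. Summing the tail gives $\mathbb E[\tau'\mid\text{pilot}]=O(L)$ uniformly. Multiplying by the bad-event probability $o(1)$ yields $o(L)$.

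With $\gamma_\delta=L^{-1/4}$ the grid factor tends to one, and the checkpoint growth $\lceil(1+\gamma)t_h\rceil$ still keeps $h=O(L^{1/4}\log L)$, so $H_h=L(1+o(1))$ persists. The final claim follows by combining with~\eqref{eq:app_unrestricted_local} and Lemma~\ref{lem:app_transcript}. Under the coverage and inactive-floor conditions, $2H_\eta(\theta^\varepsilon)\sim2H_0/\varepsilon^2\sim T^*(\theta^\varepsilon)$, and $\operatorname{kl}(1-\delta,\delta)\sim L$. So the upper coefficient matches the lower bound~\eqref{eq:expanded_lower}.
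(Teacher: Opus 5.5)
Your proposal is correct and follows the paper's proof essentially step for step: pilot consistency at fixed $\varepsilon$, then the sharp radius at scale $2(1+\zeta)H_\eta(\theta^\varepsilon)L$, with the range term costing only $O(\varepsilon)$ relative to the gap, then the $1+\gamma$ checkpoint overshoot, a pilot-uniform exponential tail for uniform integrability, and finally \eqref{eq:app_unrestricted_local} with Lemma~\ref{lem:app_transcript} for the matching lower bound; your only variation, using the on-event bound \eqref{eq:app_tight_radius} where the paper substitutes consistent variance estimates into \eqref{eq:app_tight_U}, is equivalent. Two small repairs are needed: the tail bound must use the deterministic clip $U\le1$, since it is needed exactly off the confidence event where \eqref{eq:app_tight_radius} is unavailable, and for bad events $B$ that depend on fresh samples you should bound $\mathbb E[\tau';B]\le t^\dagger P(B)+\mathbb E[\tau';\tau'>t^\dagger]$ rather than multiply a conditional expectation by $P(B)$.
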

\begin{proof}
First fix an interior unique-best $\theta$. As $\delta\downarrow0$, the pilot size diverges and is $o(L)$. The proxy best is correct with probability tending to one. All true gaps are positive at this fixed parameter, so the gap floor disappears asymptotically, and the pilot objective converges uniformly on compact $Q_\eta$. Its achieved true objective therefore converges in probability to $H_\eta(\theta)$. No uniqueness of the allocation minimizer is needed.

Now consider $\theta=\theta^\varepsilon$ and the scale
$t=2(1+\zeta)H_\eta(\theta^\varepsilon)L$, for fixed $\zeta>0$. At each fixed positive $\varepsilon$, fresh means and disjoint-pair variance estimates converge uniformly in probability over $q\in Q_\eta$: every count is at least $\eta t$. The geometric stage count is $O_\gamma(\log L)$, hence $H_h/L\to1$. Uniform local convergence also gives $\varepsilon^2H_\eta(\theta^\varepsilon)\to H_{0,\eta}$.

At this scale $H_h/k_s=O(\varepsilon^2)$. Substituting the consistent pair variances into~\eqref{eq:app_tight_U} gives $U_s=v_s+O(\varepsilon)$ on disagreement strata. Their true variances are bounded away from zero by the categorical interior assumption. For $a\in J$, the limiting radius-to-gap ratio is at most
\[
 \frac{1+O(\varepsilon)}{\sqrt{1+\zeta}}+O(\varepsilon).
\]
The range term is $O(\varepsilon^2)$ and the local gap is of order $\varepsilon$. Nonlocal rivals retain positive gaps while their radii vanish with $\varepsilon$. For sufficiently small $\varepsilon$, all limiting radii are strictly below the true gaps. Empirical errors vanish as $\delta\downarrow0$ at this fixed $\varepsilon$, so certification occurs by this scale with probability tending to one. Moving to the next checkpoint costs a factor at most $1+\gamma+o(1)$.

To pass to expectation, use $U\le1$ and the allocation floor. Uniformly over pilot outcomes,
$\rho_{*,a}(h)\le C_\eta\sqrt{H_h/t_h}+C_\eta H_h/t_h$.
At a fixed unique-best $\theta$, a constant $C_\theta$ makes every radius smaller than $\Delta_{\min}/4$ for all $t_h\ge C_\theta L$ and sufficiently large $L$. Here $H_h\le L+C+2\log\log(t_h+e)$. Failure to stop at such a checkpoint requires a best-versus-rival empirical difference below half its true gap. Conditional on any pilot, bounded independent fresh streams and $n_s\ge\eta t_h$ give
\begin{equation}
 P_\theta(\tau>Sn_0+t_h+3S\mid\text{pilot})
 \le(K-1)e^{-c_\theta t_h}
 \label{eq:app_time_tail}
\end{equation}
for $c_\theta>0$ independent of the pilot allocation. Summing this tail over geometric intervals proves uniform integrability of $\tau/L$. The probability bound consequently passes to expectations; pilot cost is $Sn_0=o(L)$. Taking the ordered limits and then $\zeta\downarrow0$ proves~\eqref{eq:app_ordered_rate}.

For $\gamma_\delta=L^{-1/4}$, checkpoint overshoot tends to one and the logarithm of the stage count remains $o(L)$. The same tail argument, with at most polynomially many additional checkpoints at the relevant scale, proves the unit coefficient. Finally~\eqref{eq:app_unrestricted_local} and the adaptive lower bound identify the unrestricted coefficient when the floor contains a local oracle.
\end{proof}

This result takes confidence error to zero at each fixed positive gap before taking the local gap limit. A simultaneous shrinking-gap regime would also have to control the pilot's probability of identifying the relevant competitors. The posterior acquisition rule uses the same paired structure, while its finite-batch decisions are evaluated directly in the experiments.

\section{Posterior joint information and stopping}
\label{app:posterior_joint}
\subsection{Posterior moments and one-label updates}
After observing stratum class counts $c_{sc}$, let $\alpha_{sc}=\alpha_{sc}^0+c_{sc}$, $a_s=\sum_c\alpha_{sc}$ and $m_{sc}=\alpha_{sc}/a_s$. Adaptive stratum selection is ignorable in this likelihood when queried inputs are conditionally exchangeable with deployment inputs within a stratum: its action kernels depend on observed history, while the remaining parameter factors are the categorical label probabilities. Selection favoring a subgroup with a different label law is not ignorable. A single class label updates its stratum count once.

Dirichlet integration gives
\[
 \mathbb E\theta_{sc}=\frac{\alpha_{sc}}{a_s},\qquad
 \mathbb E[\theta_{sc}\theta_{sd}]
 =\frac{\alpha_{sc}(\alpha_{sd}+\mathbf1\{c=d\})}{a_s(a_s+1)}.
\]
These identities follow by adding one or two to the corresponding exponents in the normalized Dirichlet density. Independence between strata and the linear accuracy definition then give Equations~\eqref{eq:expanded_mu}--\eqref{eq:expanded_cov}:
\begin{align}
 \mu_a&=\sum_s w_sm_{s,p_{sa}},\\
 \Sigma_{ab}&=\sum_s\frac{w_s^2}{a_s+1}
 \left[\mathbf1\{p_{sa}=p_{sb}\}m_{s,p_{sa}}
 -m_{s,p_{sa}}m_{s,p_{sb}}\right].
 \label{eq:app_dirichlet_cov}
\end{align}
Thus $\operatorname{Var}(A_a-A_b\mid\mathcal F)=\Sigma_{aa}+\Sigma_{bb}-2\Sigma_{ab}$. Exact prediction copies have zero difference variance.

If the next label from stratum $s$ is $y$, each candidate mean changes by
\begin{equation}
 \mu_a'=\mu_a+\frac{w_s}{a_s+1}
 \left[\mathbf1\{p_{sa}=y\}-m_{s,p_{sa}}\right].
 \label{eq:app_posterior_update}
\end{equation}
Weighting the new maximum by $P(Y=y\mid\mathcal F)=m_{sy}$ yields the one-step value in Equation~\eqref{eq:expanded_voi}. The law of total variance further gives the expected reduction in posterior pair variance:
\begin{equation}
 \operatorname{Var}(A_a-A_b\mid\mathcal F)
 -\mathbb E[\operatorname{Var}(A_a-A_b\mid\mathcal F,Y_s)\mid\mathcal F]
 =\frac{w_s^2\operatorname{Var}_{m_s}(R_a-R_b)}{(a_s+1)^2}.
 \label{eq:app_variance_reduction}
\end{equation}
Indeed, the right side is the variance of the conditional-mean change in~\eqref{eq:app_posterior_update}. It supplies the paired information score used when one label cannot yet change the posterior winner.

\subsection{Perfect-information upper bound}
Let $j$ maximize the current posterior means, with deterministic tie handling, and set $d_a=\mu_j-\mu_a\ge0$ and $v_a=\operatorname{Var}(A_a-A_j\mid\mathcal F)$. For $X_a=A_a-A_j$, the identity $x_+=(|x|+x)/2$ and Cauchy--Schwarz imply
\begin{equation}
 \mathbb E[(X_a)_+\mid\mathcal F]
 \le\frac{\sqrt{d_a^2+v_a}-d_a}{2}.
 \label{eq:app_positive_part}
\end{equation}
Since $X_j=0$, $\max_a X_a\le\sum_{a\ne j}(X_a)_+$. Also $\max_a A_a\le1$. These inequalities prove
\begin{equation}
 \mathbb E[\max_a A_a-A_j\mid\mathcal F]
 \le\min\left\{1-\mu_j,\sum_{a\ne j}
 \frac{\sqrt{d_a^2+v_a}-d_a}{2}\right\}=U,
 \label{eq:app_EVPI}
\end{equation}
as stated in Theorem~\ref{thm:expanded_stop}.

For any future query policy and eventual recommendation $J$, the pointwise inequality $A_J\le\max_a A_a$ bounds its conditional expected improvement by $U$. If its first further label costs $c\ge U$ in the same expected-accuracy units and later costs are nonnegative, its net expected improvement is nonpositive. If one decision serves $L_d$ equally weighted future deployments and label cost is in aggregate utility units, the corresponding condition is $L_dU\le c$. This is a sufficient cost-based stopping condition.

The sum in~\eqref{eq:app_EVPI} includes all unique candidates. Removing exact copies preserves the maximum; repeatedly including a nonleading copy would only loosen the sum. A smaller competitive set may be used for acquisition calculations after establishing reachability, while the displayed perfect-information bound is evaluated over the full unique set.

\subsection{Finite-budget reachable winners}
\label{app:reachable_winners}
Suppose at most $B$ additional labels are allowed, and at most $b_s$ remain queryable in stratum $s$. Take $b_s=\infty$ for independent unlimited streams. For a rival $a$ and feasible integer allocation $n$, its largest possible future posterior mean advantage over current leader $j$ is
\begin{equation}
 F_a(n)=\sum_{s\in\mathcal I_{aj}}w_s
 \frac{\alpha_{s,p_{sa}}-\alpha_{s,p_{sj}}+n_s}{a_s+n_s},
 \quad 0\le n_s\le b_s,\quad\sum_s n_s\le B.
 \label{eq:app_reachable}
\end{equation}
Every new label in a disagreement stratum is assigned to the rival's predicted class to maximize the numerator. Agreement strata always contribute zero. Positivity of the Dirichlet parameters makes every such finite label history have positive posterior predictive probability.

\begin{lemma}[Exact finite-horizon no-switch criterion]
All feasible further query policies have zero expected accuracy improvement if and only if $\max_nF_a(n)\le0$ for every rival $a$.
\end{lemma}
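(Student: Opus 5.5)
The plan is to reduce everything to the posterior-mean martingale. For any feasible policy with final information $\mathcal G$ and final recommendation $J$ measurable with respect to $\mathcal G$, the tower property gives $\mathbb E[A_J\mid\mathcal F]=\mathbb E[\mu_J^{\mathcal G}\mid\mathcal F]$. Here $\mu^{\mathcal G}$ denotes the posterior means from Equation~\eqref{eq:expanded_mu}, recomputed with the final counts. The best recommendation given $\mathcal G$ is therefore the final posterior argmax, and the expected improvement of the best use of a policy equals $\mathbb E[\max_a\mu_a^{\mathcal G}\mid\mathcal F]-\mu_j$.

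Since at most $B$ further labels are taken, the process $\mu_j^{(t)}$ is a bounded martingale under the posterior predictive law, because each update~\eqref{eq:app_posterior_update} has conditional mean zero. Optional stopping therefore gives $\mathbb E[\mu_j^{\mathcal G}\mid\mathcal F]=\mu_j$. The expected improvement then equals $\mathbb E[\max_a\mu_a^{\mathcal G}-\mu_j^{\mathcal G}\mid\mathcal F]$, which is nonnegative. So the lemma becomes a statement about whether $\max_a(\mu_a^{\mathcal G}-\mu_j^{\mathcal G})$ can be positive with positive probability.

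For sufficiency, fix any realized history with stratum counts $n$. These counts are feasible because the history satisfies $n_s\le b_s$ and $\sum_s n_s\le B$. Let $c_{s,y}$ be the number of new labels of class $y$ in stratum $s$. Agreement strata contribute nothing to $\mu_a^{\mathcal G}-\mu_j^{\mathcal G}$. In a disagreement stratum the contribution is
\[
 w_s\frac{\alpha_{s,p_{sa}}+c_{s,p_{sa}}-\alpha_{s,p_{sj}}-c_{s,p_{sj}}}{a_s+n_s},
\]
and this is at most the corresponding term of $F_a(n)$, since $c_{s,p_{sa}}-c_{s,p_{sj}}\le n_s$. Hence $\mu_a^{\mathcal G}-\mu_j^{\mathcal G}\le F_a(n)\le0$ pathwise for every rival $a$. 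It follows that $\max_a\mu_a^{\mathcal G}=\mu_j^{\mathcal G}$ on every path, so the improvement is exactly zero for every policy. Exact ties are harmless here: they leave the maximum equal to $\mu_j^{\mathcal G}$ whatever the tie rule selects.

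For necessity, suppose $F_a(n^\star)>0$ for some rival $a$ and some feasible $n^\star$. Consider the nonadaptive policy that queries $n_s^\star$ labels in each stratum $s$ and then deploys the final posterior argmax. The history in which every label from a stratum in $\mathcal I_{aj}$ has class $p_{sa}$ realizes $\mu_a^{\mathcal G}-\mu_j^{\mathcal G}=F_a(n^\star)>0$; labels in other strata do not affect this difference. Because all Dirichlet parameters are positive, this history has positive posterior predictive probability, namely a product of Pólya-urn factors $\alpha_{sy}/a_s$ and their successors. Therefore $\mathbb E[\max_b\mu_b^{\mathcal G}-\mu_j^{\mathcal G}\mid\mathcal F]>0$, and this policy has strictly positive value.

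The main obstacle I expect is the bookkeeping in the first step. I need to state precisely what "expected accuracy improvement" means for an adaptive policy whose recommendation may be arbitrary. I also need to justify the martingale identity under adaptive stratum choices. For the latter I would invoke the ignorability remark at the start of this appendix, so that the predictive law of each label is $m_{sy}$ irrespective of how the stratum was chosen, together with the finite horizon $B$, which keeps optional stopping trivial.
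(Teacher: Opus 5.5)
Your proposal is correct and follows the paper's proof. For sufficiency, both you and the paper show that $j$ remains a posterior-mean maximizer on every feasible history and then apply the tower property; for necessity, both fix the favorable allocation as a nonadaptive schedule and use that the favorable history has positive probability, so $\mathbb E[\max_b\mu_b^{\mathcal G}-\mu_j^{\mathcal G}\mid\mathcal F]>0$. Your per-stratum bound via $c_{s,p_{sa}}-c_{s,p_{sj}}\le n_s$ makes explicit the pathwise step the paper only asserts, and the optional-stopping detour is unnecessary, since $\mathbb E[\mu_j^{\mathcal G}\mid\mathcal F]=\mu_j$ follows directly from the tower property once ignorability identifies $\mu^{\mathcal G}$ with the count-based posterior means.
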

\begin{proof}
If all these maxima are nonpositive, $j$ remains a posterior-mean maximizer after every feasible history. Its conditional expected accuracy satisfies the tower property, so revealing additional labels cannot raise the expected posterior optimum.

Conversely, if a feasible allocation makes $F_a(n)>0$, use that fixed query schedule. The favorable finite history has positive probability and gives a rival strictly greater posterior mean than $j$. Write $\mu_a'=\mathbb E[A_a\mid\mathcal F']$ for the posterior after this schedule. Then
\[
 \mathbb E[\max_a\mu_a'\mid\mathcal F]-\mu_j
 =\mathbb E[\max_a\mu_a'-\mu_j'\mid\mathcal F]>0.
\]
The integrand is nonnegative on every history and strictly positive on the favorable event. This proves positive information value when labels have zero cost; a positive information value need not exceed their acquisition cost.
\end{proof}

An exact integer solution is available by dynamic programming. Let $f_s(n)$ be the stratum contribution in~\eqref{eq:app_reachable}, $D_0(0)=0$ and $D_0(b)=-\infty$ for $b>0$. Then
\[
 D_s(b)=\max_{0\le n\le\min\{b,b_s\}}
 \{D_{s-1}(b-n)+f_s(n)\}.
\]
Maximizing $D_S(b)$ over $b\le B$ costs $O(SB^2)$ per rival. A simpler equivalent implementation exploits decreasing marginal returns. If $d_s=m_{s,p_{sa}}-m_{s,p_{sj}}$, the gain from the $n$th favorable label is
\begin{equation}
 f_s(n)-f_s(n-1)
 =\frac{w_s(1-d_s)a_s}{(a_s+n)(a_s+n-1)}.
 \label{eq:app_reachable_marginal}
\end{equation}
These gains are nonnegative and nonincreasing. The $B$ largest gains across the capped stratum sequences therefore solve the allocation: each selected gain has predecessors at least as large, so all prefix constraints can be satisfied. This is the competitive-set computation associated with Equation~\eqref{eq:expanded_reachable}.

A rival can require several favorable labels before overtaking $j$. Thus a zero one-step value does not imply zero remaining multi-label value. Reachability separates these cases; Equation~\eqref{eq:app_variance_reduction} then measures information gained before the winner changes. In numerical implementations, a tolerance $\epsilon$ on the maximal reachable advantage gives an $\epsilon$ bound on remaining expected accuracy value in exact arithmetic. Floating-point roundoff is an additional numerical approximation.

\subsection{Population and realized-block uncertainty}
\label{app:posterior_finite}
Suppose a target block has $M_s\ge1$ unseen conditionally independent labels in stratum $s$. Its realized candidate accuracy is
$\widetilde A_a=\sum_s w_sM_s^{-1}\sum_{i=1}^{M_s}R_{s,i,a}$.
The mean remains $\mu_a$, while total covariance gives
\begin{equation}
 \operatorname{Cov}(\widetilde A_a,\widetilde A_b\mid\mathcal F)
 =\sum_s w_s^2\frac{a_s+M_s}{M_s(a_s+1)}
 \left[\mathbf1\{p_{sa}=p_{sb}\}m_{s,p_{sa}}
 -m_{s,p_{sa}}m_{s,p_{sb}}\right].
 \label{eq:app_finite_cov}
\end{equation}
The expected conditional categorical covariance equals $a_s/(a_s+1)$ times the bracket; averaging $M_s$ labels divides it by $M_s$. The posterior covariance of the conditional mean adds $1/(a_s+1)$ times the bracket, yielding the displayed factor. Applying~\eqref{eq:app_positive_part} with this covariance bounds the hindsight finite-block oracle instead of the population oracle.

For example, two opposite binary predictions under a known balanced label law both have population accuracy $1/2$, giving zero population perfect-information value. On one realized target item, one prediction is always correct, so a hindsight selector has expected advantage $1/2$. The two oracle values answer different questions. The experiment tables identify their reference unit, and the posterior stopping rule uses conditional expected target accuracy.
\space
% END inlined sections/expanded_joint_feedback.tex
% BEGIN inlined sections/expanded_finite_budget.tex
\section{Finite-budget selection models}
\label{app:finite_budget}
A finite budget determines how often an available intervention is selected, even when its deployment risk is fixed. This section derives exact selection probabilities for a joint Gaussian experiment and a finite categorical query pool. The first model isolates gaps, correlation and evidence bias. The second conditions on actual neural predictions and keeps the discrete sampling law and tie convention. Both calculate expected deployment loss by combining selection probabilities with the corresponding candidate risks.

\subsection{Binary risk, evidence alignment and concentration}
Let action 0 have deployment risk $r_0$, and let action 1 have risk $r_1=r_0+\Delta$, where $\Delta\ge0$. Suppose the estimated risk difference after $m$ labels follows
\begin{equation}
 \widehat d_m\sim\mathcal N(\Delta+b,\sigma^2/m).
\end{equation}
The bias difference $b$ concerns the feedback experiment; the deployment risk gap remains $\Delta$. For $\sigma>0$, selection of action 1 has probability
$\Phi[-\sqrt m(\Delta+b)/\sigma]$. A tie has probability zero. Selection regret is zero for action 0 and $\Delta$ for action 1, giving Equation~\eqref{eq:expanded_gaussian}.

\begin{proposition}[Direction of the budget response]
For $\sigma>0$ and $\Delta>0$, Gaussian selection regret decreases strictly with $m$ when $\Delta+b>0$, increases strictly when $\Delta+b<0$, and equals $\Delta/2$ for every $m$ when $\Delta+b=0$.
\end{proposition}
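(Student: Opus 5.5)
The plan is to work directly with the closed form $\mathcal R_m=\Delta\,\Phi\bigl(-\sqrt m(\Delta+b)/\sigma\bigr)$ from Equation~\eqref{eq:expanded_gaussian}, which the preceding paragraph derives by noting that action 1 is selected exactly when $\widehat d_m<0$ (ties have probability zero because $\sigma>0$). Since $\Delta>0$ is a fixed positive factor, the sign of the budget response is the sign of the change in $\Phi(z_m)$, where $z_m=-\sqrt m(\Delta+b)/\sigma$. I will treat $m$ as a continuous variable on $(0,\infty)$. Strict monotonicity in real $m$ then implies strict monotonicity along the integers.

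The key step is a chain-rule computation:
\[
 \frac{d\mathcal R_m}{dm}=\Delta\,\varphi(z_m)\,\frac{dz_m}{dm}
 =-\Delta\,\varphi(z_m)\,\frac{\Delta+b}{2\sigma\sqrt m}.
\]
Here $\varphi>0$ is the standard normal density. Because $\Delta>0$, $\sigma>0$ and $\sqrt m>0$, the derivative has sign $-\operatorname{sgn}(\Delta+b)$. So it is strictly negative when $\Delta+b>0$ and strictly positive when $\Delta+b<0$, which gives the first two claims.

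An equivalent derivative-free argument is also available. The map $m\mapsto\sqrt m$ is strictly increasing, and $\Phi$ is strictly increasing on $\mathbb R$ since it has a positive density. Hence $z_m$ is strictly decreasing in $m$ when $\Delta+b>0$ and strictly increasing when $\Delta+b<0$, and composing with $\Phi$ preserves strictness.

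For the boundary case $\Delta+b=0$, we have $z_m=0$ for every $m$, so $\mathcal R_m=\Delta\Phi(0)=\Delta/2$, which is constant. There is no real obstacle in this proposition. The only points needing care are that strictness relies on both $\Delta>0$ and $\sigma>0$ (if $\Delta=0$ the regret is identically zero), and that the no-tie convention is what justifies using the closed form for every $m$.
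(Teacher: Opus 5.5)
Your proposal is correct and matches the paper's proof. Like the paper, you extend $m$ to positive reals, differentiate $\mathcal R_m$ to get $-\Delta(\Delta+b)\phi(\cdot)/(2\sigma\sqrt m)$, read off the sign from positivity of the normal density, restrict back to integers, and use $\Phi(0)=1/2$ when $\Delta+b=0$. Your derivative-free composition argument is a valid, slightly more elementary alternative. The paper also records the derivative in $b$, which the stated claim does not need.
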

\begin{proof}
Let $\phi=\Phi'$ denote the standard normal density. Extend $m$ to positive real values and differentiate:
\begin{align}
 \frac{\partial\mathcal R_m}{\partial m}
 &=-\frac{\Delta(\Delta+b)}{2\sigma\sqrt m}
 \phi\!\left(\frac{\sqrt m(\Delta+b)}{\sigma}\right),\\
 \frac{\partial\mathcal R_m}{\partial b}
 &=-\frac{\Delta\sqrt m}{\sigma}
 \phi\!\left(\frac{\sqrt m(\Delta+b)}{\sigma}\right).
\end{align}
The normal density is positive at every finite argument, establishing the sign. Restricting back to integer budgets preserves the monotonicity. At a zero mean, symmetry gives probability $1/2$.
\end{proof}

When $\sigma=0$, the evidence difference is deterministic. Regret is $\Delta$ if $\Delta+b<0$ and zero if $\Delta+b>0$. At equality, it is $\Delta$ multiplied by the probability assigned to action 1 by the specified tie rule. If $\Delta=0$, all decisions have zero regret. These cases matter for duplicated candidates and for noise whose common component cancels completely.

The result separates the amount of feedback from its ordering. Increasing $m$ concentrates the estimator on its mean. The resulting deployment value depends on the sign of that mean relative to the true risk gap. This same mechanism will appear below when a realized query pool and evaluation pool favor different candidates.

\subsection{A solvable joint candidate experiment}
For candidates $j=0,\ldots,K-1$, let $r_j$ be fixed deployment risks and $b_j$ be fixed feedback biases. One joint observation is
\begin{equation}
 X_{tj}=r_j+b_j+s\left(\sqrt\rho Z_{t0}
       +\sqrt{1-\rho}Z_{t,g(j)}\right),
 \qquad t=1,\ldots,m,
 \label{eq:app_common_factor}
\end{equation}
where $s\ge0$, $0\le\rho\le1$, and all common and group-specific standard normals are independent across distinct indices and times. Candidates in the same group are exact copies with equal $r_j,b_j$ and the same residual draw. The common-factor index is distinct from every group index. We minimize the empirical risk score and break ties by the first candidate index.

Writing $\nu_j=r_j+b_j$ and $\tau=s\sqrt{(1-\rho)/m}$, the empirical scores have the joint law
\begin{equation}
 \overline X_j=\nu_j+\frac{s\sqrt\rho}{\sqrt m}Z_0+\tau Z_{g(j)}.
\end{equation}
The common term cancels from all comparisons. Let $\mathcal G$ contain the smallest-index representative of each exact-copy group.

\begin{proposition}[Joint winner probabilities]
For $\tau>0$, representative $j\in\mathcal G$ wins with probability
\begin{equation}
 P_j=\int_{-\infty}^{\infty}\phi(z)
 \prod_{\ell\in\mathcal G\setminus\{j\}}
 \Phi\!\left(\frac{\nu_\ell-\nu_j}{\tau}-z\right)\,dz.
 \label{eq:app_winner_integral}
\end{equation}
Nonrepresentative copies have probability zero under first-index ties. Expected deployed risk is $\sum_j P_jr_j$ and regret is $\sum_jP_j(r_j-\min_\ell r_\ell)$.
\end{proposition}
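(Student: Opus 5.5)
The plan is to condition on the common factor $Z_0$ and on the winning representative's own group draw, and then reduce the event ``$j$ wins'' to a product of independent one-dimensional normal events. By the joint law displayed just before the proposition, every score contains the same term $s\sqrt\rho Z_0/\sqrt m$. That term therefore cancels from every pairwise comparison $\overline X_j\le \overline X_\ell$. The comparisons depend only on $\nu_\ell-\nu_j+\tau(Z_{g(\ell)}-Z_{g(j)})$.

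Exact copies share $\nu$ and the residual draw, so their scores are identical almost surely. Under the first-index tie rule, a nonrepresentative copy never wins, since its representative has a smaller index and the same score. This gives the zero-probability claim and reduces the problem to the representatives in $\mathcal G$. Distinct representatives have distinct group indices, so the variables $Z_{g(\ell)}$, $\ell\in\mathcal G$, are i.i.d.\ standard normal.

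For $j\in\mathcal G$ to win, its score must be at most every other representative's score. Strict inequality is needed against smaller indices, but for $\tau>0$ ties between distinct representatives have probability zero, so the tie rule does not affect $P_j$. Condition on $Z_{g(j)}=z$. The event for each rival $\ell$ is
\[
Z_{g(\ell)}\ge z-(\nu_\ell-\nu_j)/\tau .
\]
By symmetry of the normal distribution this has probability $\Phi\bigl((\nu_\ell-\nu_j)/\tau-z\bigr)$. Conditional independence of the rival draws turns the joint event into a product of these factors. Integrating against $\phi(z)$ (Fubini, with a bounded integrand) gives Equation~\eqref{eq:app_winner_integral}.

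The risk statements then follow by linearity. The deployed risk is $r_J$ for the random winner $J$, and the winning events partition the sample space up to a null set. So $\mathbb E r_J=\sum_j P_j r_j$, and subtracting $\min_\ell r_\ell$ gives the regret formula.

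The main point needing care is the tie and copy bookkeeping. One must check that copies within a group tie exactly but are resolved deterministically. One must also check that ties across groups are null events only because $\tau>0$, i.e.\ $\rho<1$, $s>0$, $m\ge1$. The degenerate case $\tau=0$ is excluded from the statement.
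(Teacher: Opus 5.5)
Your proposal is correct and follows the paper's proof: condition on the representative's residual $Z_{g(j)}=z$, note the common factor cancels, reduce each rival comparison to an independent normal tail giving $\Phi\!\left((\nu_\ell-\nu_j)/\tau-z\right)$, multiply, integrate against $\phi$, assign copies' wins to the first representative, and get the risk formulas by total expectation. Your extra care about cross-group ties being null for $\tau>0$, the use of $\sum_j P_j=1$, and Fubini only makes explicit what the paper's shorter argument leaves implicit.
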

\begin{proof}
Condition on the representative residual $Z_{g(j)}=z$. Candidate $j$ wins if, for every other group $\ell$,
$Z_{g(\ell)}>z+(\nu_j-\nu_\ell)/\tau$.
The residuals are conditionally independent and continuous. Their joint conditional probability is the product in Equation~\eqref{eq:app_winner_integral}; integration over $z$ proves the result. Exact copies always tie, assigning their common win to the first representative. The risk formulas follow by total expectation.
\end{proof}

For $\tau=0$, the winner is the first index minimizing $\nu_j$. For two distinct groups, the single-observation difference variance is $\sigma^2=2s^2(1-\rho)$, recovering the binary expression. A correlation increase reduces this difference variance at fixed marginal variance. It improves concentration around a correct ordering and strengthens concentration around an incorrect ordering.

\paragraph{Redundancy and complementarity.}
Appending exact copies changes neither the group competition nor the deployed risk. Complementarity can increase both oracle capacity and selection loss. Consider two equally likely regimes, an incumbent with risk $(0.3,0.3)$, and a specialist with risk $(0.05,0.3)$. A four-candidate redundant portfolio contains three copies of that specialist. A complementary four-candidate portfolio contains the incumbent, the original specialist, a second specialist with risks $(0.3,0.05)$ and a generalist with risks $(0.2,0.2)$. With $s=0.3$, $\rho=0.5$ and zero bias, the oracle risks are $0.175$ and $0.05$. At $m=1$, the expected deployed risks are $0.20029$ and $0.14021$. The complementary portfolio has better deployed risk and larger selection regret, $0.09021$ against $0.02529$. At $m=16$, deployed risks approach $0.17505$ and $0.05355$. Reporting both quantities shows how additional useful capacity can create a larger evidence requirement while still improving the system.

\begin{figure}[!htb]
\centering
\includegraphics[width=.88\linewidth]{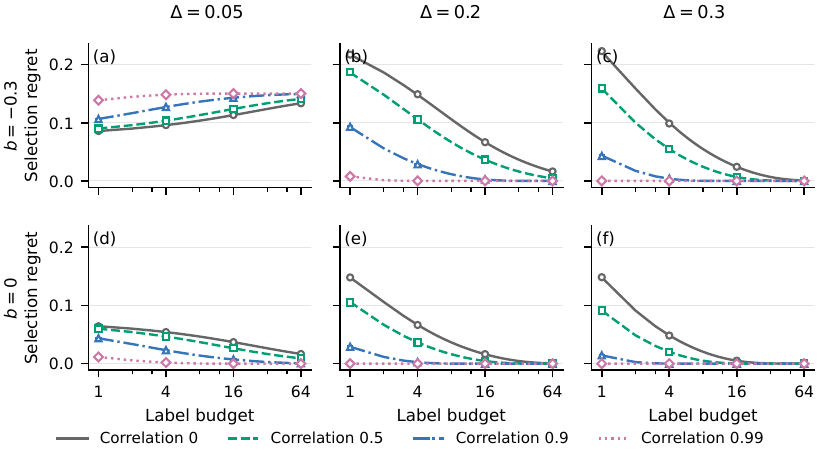}
\caption{Four-candidate selection under varying accuracy gaps (columns), query bias (rows) and error correlation $\rho$ (curves). Lines integrate Gaussian winner probabilities. Markers and 95\% error bars use 100,000 independent score vectors per condition. Canceling shared noise strengthens the query ranking, including when bias makes that ranking wrong. Appendix~\ref{app:finite_budget} gives parameters and neural-network validation.}
\label{fig:expanded_gaussian}
\end{figure}

\subsection{Numerical solution and independent simulation}
The deterministic solver evaluates Equation~\eqref{eq:app_winner_integral} in standard-normal coordinates. It computes products of cumulative distribution functions (CDFs) by summing their logarithms. Truncation to $[-12,12]$ discards at most $2\Phi(-12)<3.6\times10^{-33}$ probability per integral, because the integrand is bounded by $\phi(z)$. Adaptive quadrature splits the interval at zero and at clipped transition points between candidate means. The winner probabilities are checked for summing to one without renormalization.

The simulation generates raw joint Gaussian score vectors and applies the actual first-index argmin. It neither reads winner probabilities nor samples winners from the theoretical distribution. Gaussian closure permits direct generation of the empirical mean vector for a budget $m$. The common input to the two programs is the declared parameter grid.

The main factorial contains three risk gaps, four correlations, three bias settings and four budgets, giving 144 conditions. Additional binary, fixed-count portfolio, duplicate and deterministic cases bring the total to 197. Every condition uses 100,000 independent score vectors. Across 740 winner-count comparisons, exact binomial tests use a Bonferroni threshold of $0.01/740$. No comparison rejects. The maximum absolute probability discrepancy is $0.00412$, and the largest regret discrepancy is 2.58 Monte Carlo standard errors. The binary integral matches its closed form within $5.6\times10^{-17}$. These checks concern the solved common-factor experiment, with separate theoretical curves and sampled decisions.

\subsection{Exact finite-pool selection probabilities}
For a frozen pair of neural predictors, each eligible query label reveals a signed accuracy difference $D\in\{-1,0,1\}$. Let $n_+,n_-,n_0$ denote the counts in a fixed pool of size $N$. A uniform subset of $m\le N$ examples has counts $K_+,K_-,K_0$ with multivariate hypergeometric probability in Equation~\eqref{eq:expanded_hypergeom}. Terms outside $0\le K_c\le n_c$ or $K_++K_-\le m$ have probability zero. The score $S_m=K_+-K_-$ determines the empirical rule.

\begin{equation}
 \Pr(K_+=u,K_-=v)=
 \frac{\binom{n_+}{u}\binom{n_-}{v}\binom{n_0}{m-u-v}}
 {\binom{n_++n_-+n_0}{m}}.
 \label{eq:expanded_hypergeom}
\end{equation}

For strict incumbent-default selection, the candidate is chosen if $S_m>0$; candidate-default ties change this to $S_m\ge0$. For a threshold $h$, the strict rule requires $S_m\ge\lfloor mh\rfloor+1$. Let $p_m$ be the corresponding summed hypergeometric probability, and let $e_I,e_C$ be the fixed evaluation error counts. Then
\begin{align}
 \mathbb E[e_{\mathrm{selected}}]&=(1-p_m)e_I+p_me_C,\\
 \mathbb E[\text{gain}]&=\frac{100p_m(e_I-e_C)}{|E|},\\
 \mathbb E[\text{oracle regret}]&=
 \frac{100[(1-p_m)e_I+p_me_C-\min(e_I,e_C)]}{|E|}.
 \label{eq:app_exact_deployment}
\end{align}
Aggregating error counts and evaluation sizes across batches yields the stream metrics. This exact calculation conditions on the entire frozen pool and is used by the evaluator. The running selector sees acquired labels alone.

\paragraph{Finite-population Gaussian approximation.}
The pool mean and variance are
\begin{equation}
 \bar d=\frac{n_+-n_-}{N},\qquad
 v=\frac{n_++n_-}{N}-\bar d^2.
\end{equation}
For $N>1$, uniform sampling without replacement gives
\begin{equation}
 \mathbb E S_m=m\bar d,\qquad
 \operatorname{Var}(S_m)=m\frac{N-m}{N-1}v.
\end{equation}
A normal tail approximates the probability of crossing the signed-count threshold. The calculation uses the midpoint between adjacent attainable support values as its continuity correction; zero variance and thresholds outside support are handled exactly. The support can have spacing two when only rescue and harm outcomes are present, so a universal half-unit correction is inappropriate. The exact hypergeometric law remains the reference at small budgets.

\paragraph{Query-to-evaluation alignment.}
For a query pool $P\subseteq Q$, write
\begin{equation}
 \bar d_P-\bar d_E=(\bar d_P-\bar d_Q)+(\bar d_Q-\bar d_E).
 \label{eq:app_pool_mismatch}
\end{equation}
The first term describes the realized effect of restricting the acquisition pool; the second records the realized difference between the permanent query and evaluation pools. Both terms can affect a batch decision. Even uniform querying has a finite realized $Q/E$ discrepancy. The sign groups below are evaluator-defined conditional diagnostics, not strata available from unlabeled inputs.

\subsection{Neural prediction validation}
We retain all 655 PACS batches from five source-checkpoint stream orders. Two pools are evaluated: the complete eligible query pool, and its label-free top half ranked by symmetric KL. Both sample uniformly without replacement within the specified pool. The grid includes label caps $1,3,5,10$ and both tie defaults. Actual usage is $\min(m,|P|)$: seven top-half pools contain fewer than ten items and remain in the evaluation.

The exact solver covers 10,480 conditions. The independent simulator samples actual subsets, sums their signed outcomes and applies the rule, giving 52.4 million subset draws. All 10,480 winner-frequency tests pass a simultaneous 1\% Bonferroni test family. The largest aggregate difference is 1.99 Monte Carlo standard errors. Exhaustive enumeration independently verifies another 2,310 small-population configurations. The Gaussian approximation has mean absolute probability error $0.01039$ and maximum error $0.17174$.

For incumbent-default ties, uniform-query regret on aligned pools falls from 11.35 to 2.62 percentage points as the cap grows from one to ten. On opposed pools it rises from 4.39 to 6.45 points. The corresponding top-half-KL curves change from 9.01 to 1.26 and from 4.50 to 7.02 points. With candidate-default ties, the strong fixed branch already supplies a high starting accuracy; intermediate-budget curves need not be monotone. Fixed BN+flip remains above the aggregate selected gain for every displayed budget. The opposite conditional trends describe how the same concentration mechanism operates within the observed deployment stream.

\begin{figure}[t]
\centering
\includegraphics[width=\linewidth]{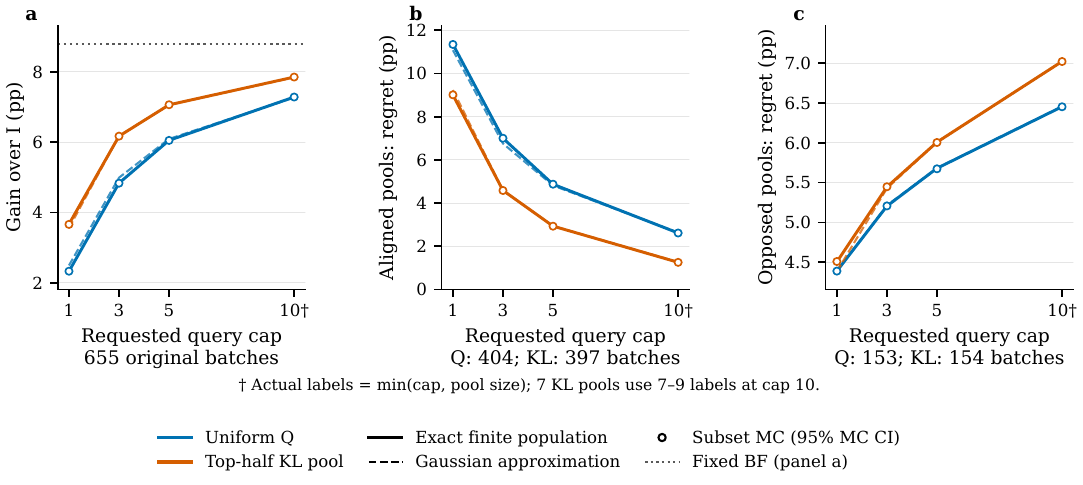}
\caption{Exact finite-budget predictions on all 655 PACS batches with incumbent-default ties. Solid lines use the hypergeometric law; dashed lines use its finite-population Gaussian approximation; points and 95\% error bars use independent subset simulation. The left panel evaluates the same full population for both pools. The middle and right panels condition on agreement or opposition between realized pool and evaluation advantage signs. Uniform and top-half-KL pools have 404/397 aligned and 153/154 opposed batches, respectively. Seven top-half pools use fewer than ten labels at cap ten. Error bars measure simulation error.}
\end{figure}
\begin{figure}[t]
\centering
\includegraphics[width=\linewidth]{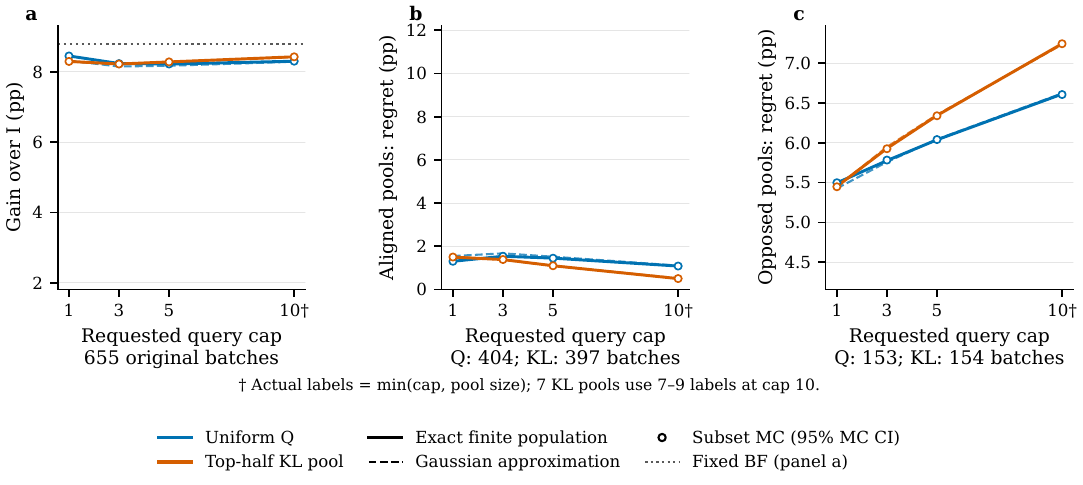}
\caption{The same finite-budget experiment with BN+flip selected on ties. All-batch gain uses all 20,775 evaluation occurrences across five orders; conditional panels retain their pool-specific aligned and opposed groups. The strong fixed default changes the small-budget response while leaving the exact sampling calculation unchanged. Curves and uncertainty follow the preceding figure.}
\end{figure}
\space
% END inlined sections/expanded_finite_budget.tex
% BEGIN inlined sections/expanded_state.tex
\section{Persistent state and future deployment loss}
\label{app:expanded_state}
Retaining an update changes future predictions. Undoing it after the current prediction restores the previous state. This section tracks the difference between these two choices under identical future data and updates. We derive exact recurrences for the residual classifier and feature cache, then bound their effects on future losses.

\subsection{A general common-update bound}
Let two branches leave a decision with states $S^a$ and $S^I$, whose distance is $e=\|S^a-S^I\|$. Both subsequently use the same update maps $U_j$. Assume $\|U_j(s)-U_j(s')\|\le L_j\|s-s'\|$ and that the scored loss at future step $k$ is $L_{\ell,k}$-Lipschitz in state. Repeated application gives
\begin{equation}
 \|S_k^a-S_k^I\|\le e\prod_{j<k}L_j,
 \qquad
 |\ell_k(S_k^a)-\ell_k(S_k^I)|
 \le L_{\ell,k}e\prod_{j<k}L_j.
\end{equation}
Let $\xi(H)=\sum_{k=0}^{H-1}[\ell_k(S_k^a)-\ell_k(S_k^I)]$ be the cumulative loss difference over $H$ future steps. Summing the preceding bound gives
\begin{equation}
 |\xi(H)|\le e\sum_{k=0}^{H-1}L_{\ell,k}\prod_{j<k}L_j.
 \label{eq:app_general_continuation}
\end{equation}
For constants $L_j=L_U$ and $L_{\ell,k}=L_\ell$, this becomes $L_\ell e\sum_{k=0}^{H-1}L_U^k$. The common update maps include their future data and labels; changing those inputs between branches changes the comparison. The experiments implement shared inputs explicitly, so the branch-state difference is the manipulated variable.

\subsection{Residual-head dynamics}
Let $d$ be the feature dimension and $C$ the number of classes. The rows of $X_s\in\mathbb R^{q\times d}$ are the unit-normalized features of $q=3$ queried examples. Their targets $R_s$ subtract the fixed source probabilities from the one-hot class labels. The residual classifier has weights $\Theta\in\mathbb R^{d\times C}$. With learning rate $\eta$ and regularization strength $\lambda$, a squared-loss update is
\begin{align}
 U_s(\Theta)&=M_s\Theta+B_s,\\
 M_s&=(1-\eta\lambda)I-\eta X_s^\top X_s/q,\qquad
 B_s=\eta X_s^\top R_s/q.
 \label{eq:app_head_update}
\end{align}
The implemented values are $\eta=0.5$ and $\lambda=0.1$. Fixed-source targets make the affine term identical on both branches. Subtraction therefore gives the exact recurrence
\begin{equation}
 \Delta\Theta_{s+1}=M_s\Delta\Theta_s.
 \label{eq:app_head_difference}
\end{equation}

\begin{proposition}[Head contraction and projected output]
For unit-norm rows of $X_s$ and the specified parameters,
$\|\Delta\Theta_{s+1}\|_F\le0.95\|\Delta\Theta_s\|_F$.
For any unit-norm future evaluation feature $x$, the logit difference after $k$ common updates is
\begin{equation}
 d_k=xM_{t+k-1}\cdots M_t\Delta\Theta_t,
 \qquad \|d_k\|_2\le0.95^k\|\Delta\Theta_t\|_F.
\end{equation}
\end{proposition}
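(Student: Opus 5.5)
The plan is a direct operator-norm computation on the exact recurrence~\eqref{eq:app_head_difference}. For the contraction, I will show that $M_s=(1-\eta\lambda)I-\eta X_s^\top X_s/q$ is symmetric with all eigenvalues in $[0,0.95]$. Then I will use the fact that left multiplication by a matrix with spectral norm at most $0.95$ contracts the Frobenius norm by that factor. The reason is that $\|M\Delta\Theta\|_F^2=\sum_c\|M\Delta\Theta_{:,c}\|_2^2\le\|M\|_2^2\|\Delta\Theta\|_F^2$, applied column by column.

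The central step is locating the spectrum of $G_s=X_s^\top X_s/q$. This matrix is positive semidefinite, so its eigenvalues are nonnegative. Its largest eigenvalue is at most $\operatorname{tr}(G_s)=\sum_i\|x_i\|_2^2/q=1$, because the rows have unit norm. Hence the eigenvalues of $M_s$ are $(1-\eta\lambda)-\eta g$ with $g\in[0,1]$. With $\eta=0.5$ and $\lambda=0.1$, these lie in $[0.95-0.5,\,0.95]=[0.45,0.95]$. The spectral norm of $M_s$ is therefore at most $0.95$, which gives $\|\Delta\Theta_{s+1}\|_F\le0.95\|\Delta\Theta_s\|_F$.

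For the projected output, I will iterate~\eqref{eq:app_head_difference} from step $t$ to obtain $\Delta\Theta_{t+k}=M_{t+k-1}\cdots M_t\Delta\Theta_t$. The logit difference for feature $x$ is $x\Delta\Theta_{t+k}$, since the fixed source logits cancel between the branches, and this is exactly the formula for $d_k$. To bound it, I will use $\|xA\|_2\le\|x\|_2\|A\|_2\le\|A\|_F$ for a unit row vector $x$. Combining this with the $k$-fold Frobenius contraction gives $\|d_k\|_2\le0.95^k\|\Delta\Theta_t\|_F$.

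The main obstacle is only bookkeeping. I must confirm that the shared affine term $B_s$ really cancels: the targets are built from fixed source probabilities and the same queried labels on both branches, so it does. I must also check that the order of the matrix product matches the timing of the updates. The eigenvalue bound itself is immediate once the trace bound is in place.
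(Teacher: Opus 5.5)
Your proposal is correct and is essentially the paper's own argument. Positive semidefiniteness and unit trace of $X_s^\top X_s/q$ place the eigenvalues of $M_s$ in $[0.45,0.95]$, and the submultiplicativity $\|MA\|_F\le\|M\|_2\|A\|_F$ (which your column-by-column computation makes explicit) then gives both displayed bounds, together with iterating $\Delta\Theta_{s+1}=M_s\Delta\Theta_s$ and using $\|xA\|_2\le\|A\|_2\le\|A\|_F$.
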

\begin{proof}
The matrix $X_s^\top X_s/q$ is positive semidefinite and has trace one. Its eigenvalues belong to $[0,1]$. Equation~\eqref{eq:app_head_update} gives eigenvalues of $M_s$ in $[0.45,0.95]$, so its spectral norm is at most $0.95$. Submultiplicativity proves contraction and the norm envelope. Iterating the exact difference recurrence and applying the linear readout gives the displayed $d_k$.
\end{proof}
The product tracks the direction of the update difference. A large parameter difference can have a small future output effect when future features have little projection onto that direction. The implementation applies the rank-$q$ update to the difference without constructing a dense $d\times d$ matrix. Independently updating both explicit head states verifies the recurrence.

\subsection{Feature-centering cache}
Let $m_s$ be the stored feature mean, $\overline f_{Q,s}$ the mean of the selected query features, and $\rho$ the weight retained from the previous mean. The cache update is
\begin{equation}
 m_{s+1}=\rho m_s+(1-\rho)\overline f_{Q,s},\qquad\rho=0.9,
\end{equation}
The classifier subtracts $\alpha m_sW^\top$ from its logits, where $W$ is the fixed source classifier and $\alpha=0.05$ controls the correction strength. The two model copies use the same query features. Their differences obey
\begin{equation}
 \Delta m_{s+1}=\rho\Delta m_s,\qquad
 d_k=-\alpha\rho^k\Delta m_tW^\top.
 \label{eq:app_cache_difference}
\end{equation}
Both identities follow by subtracting the shared affine updates. The continuation scale is $\alpha\|\Delta m_tW^\top\|_2$, the classifier-visible state difference. A component in the null space of $W$ has zero prediction effect at every horizon, despite contributing to the feature-state distance.

The cache operates at the classifier input. Its state is an actual persistent feature mean, distinct from the internal running statistics of backbone batch-normalization layers. Both this cache and the residual head change future logits while the backbone stays fixed.

\subsection{Directional cross-entropy bounds}
For $\ell_y(z)=\log\sum_c e^{z_c}-z_y$, the gradient is $p-e_y$ and the Hessian is $\operatorname{diag}(p)-pp^\top$. For any $v$, its quadratic form is the variance of the coordinates $v_c$ under $c\sim p$. The range bound for a bounded variable gives
\begin{equation}
 v^\top\nabla^2\ell_y(z)v
 \le\frac{(\max_c v_c-\min_c v_c)^2}{4}
 \le\frac{\|v\|_2^2}{2}.
\end{equation}
The last step uses $(v_i-v_j)^2\le2(v_i^2+v_j^2)$. The Hessian is positive semidefinite and has operator norm at most $1/2$.

Taylor's theorem with integral remainder now gives, for every $z,d,y$,
\begin{align}
 \ell_y(z+d)-\ell_y(z)
 &=(p-e_y)^\top d+
 \int_0^1(1-u)d^\top\nabla^2\ell_y(z+ud)d\,du,\\
 0&\le\ell_y(z+d)-\ell_y(z)-(p-e_y)^\top d
 \le\tfrac14\|d\|_2^2.
 \label{eq:app_ce_interval}
\end{align}
Summing over future evaluation examples yields the interval tested in the state experiments. The propagated $d$ comes from the known update operators and initial state difference. Evaluation labels determine the directional gradient, making this a retrospective mechanism calculation. It can distinguish positive and negative loss effects that a magnitude bound treats identically.

A label-free envelope follows from $\|p-e_y\|_2\le\sqrt2$. For one mean-loss score per future batch, head continuation satisfies
\begin{equation}
 \sum_{k=0}^{H-1}|\Delta\overline\ell_k|
 \le\sqrt2\|\Delta\Theta_t\|_F\sum_{k=0}^{H-1}0.95^k.
\end{equation}
The cache envelope replaces the initial norm by $\alpha\|\Delta m_tW^\top\|_2$ and $0.95$ by $\rho$. For image-sum rather than mean-batch losses, each term is multiplied by its evaluation batch size.

\subsection{Prediction changes and zero-one loss}
Let $m(z)$ be the difference between the largest and second-largest baseline logits. Each logit moves by at most $\|d\|_\infty$, so the winning class cannot change if $m(z)>2\|d\|_\infty$. For a deterministic shared tie rule,
\begin{equation}
 \left|\mathbf1\{\arg\max(z+d)\ne y\}
       -\mathbf1\{\arg\max z\ne y\}\right|
 \le\mathbf1\{m(z)\le2\|d\|_\infty\}.
 \label{eq:app_margin}
\end{equation}
The bound concerns the set where a prediction could change. Corrected errors, newly introduced errors and neutral prediction changes are measured separately. Contraction of state or cross-entropy does not fix the sign of an accuracy change, because the decision boundary depends on individual margins and perturbation directions.

\subsection{Paired update diagnostic}
An update can change both the current choice and future predictions. We separate these effects with two model copies. One keeps the current update; the other undoes it. Both then receive the same inputs, labels and updates. We test a learned linear classifier and a cache that adjusts scores using a running feature mean.

\Needspace{6\baselineskip}
The parameter difference contracts by at most $0.95$ per classifier step and $0.9$ per cache step, predicting future logit differences $d$. For baseline logits $z$, let $p=\operatorname{softmax}(z)$, $e_y$ be the true-class unit vector, and $\ell_y(z)=\log\sum_c e^{z_c}-z_y$ be cross-entropy. Then
\begin{equation}
 (p-e_y)^\top d\le\ell_y(z+d)-\ell_y(z)
 \le(p-e_y)^\top d+\tfrac14\|d\|_2^2.
 \label{eq:expanded_state}
\end{equation}
Define the loss change $\Delta\mathrm{CE}$, lower bound $L=(p-e_y)^\top d$ and width $R=\tfrac14\|d\|_2^2$. For $R>0$, $(\Delta\mathrm{CE}-L)/R\in[0,1]$ (Figure~\ref{fig:expanded_state}).

Retaining an update improves current accuracy by 0.23 points and changes next-batch accuracy by $-0.20$ points. Undoing it leaves future predictions unchanged. Immediate gains and lasting effects need separate measurement.

\begin{figure}[!htb]
\centering
\includegraphics[width=.85\linewidth]{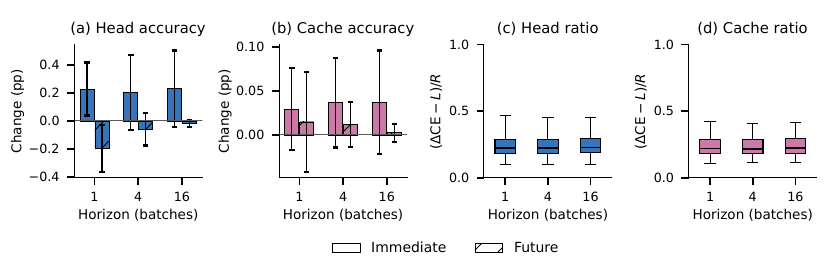}
\caption{Retain versus undo an update. (a,b) Immediate and future accuracy changes with order SD. (c,d) Normalized loss changes $(\Delta\mathrm{CE}-L)/R$: quartiles, medians and full-range whiskers within the predicted interval $[0,1]$.}
\label{fig:expanded_state}
\end{figure}

\subsection{Update timing and experimental verification}
At a chosen batch $t$, we score an intervention and either retain or undo its update. We call this the branch point. Both copies then receive the same future query features, residual targets, and update order. Each future batch is scored before its update. Thus $H$ future scored batches contain $H-1$ common updates before the final score. The baseline copy skips only the update at $t$. The reference stream continues its prescribed classifier and cache updates, resetting at domain boundaries.

Each of five complete PACS orders acquires 393 distinct query labels. The model copies use these same labels. Horizons $H=1,4,16$ retain 128,119,83 valid branch points per order. Their future windows overlap. The means summarize paired update effects, and the stream-order standard deviations describe sensitivity to arrival order. Immediate means vary with the horizon because the eligible branch points differ.

Source logits reconstructed from the released classifier agree with the stored predictions within $1.1\times10^{-6}$. Explicit branch updates and propagated state differences agree within $1.3\times10^{-15}$. All cross-entropy observations satisfy Equation~\eqref{eq:app_ce_interval}, and all hard-prediction changes satisfy Equation~\eqref{eq:app_margin}. Changing evaluation labels leaves learned state unchanged. Output rollback produces exactly zero future logit and loss difference.

The residual-head intervention improves immediate accuracy by 0.23 percentage points at one-batch branch points, while next-batch accuracy changes by $-0.20$ points and next-batch cross-entropy by $-0.00059$. Immediate and future cross-entropy changes have opposite signs in 20.16\% of one-batch comparisons. The feature cache produces smaller persistent changes. Together these observations establish that a local intervention's immediate benefit, future accuracy and future cross-entropy are distinct quantities, connected by the update operator and the direction of the parameter change.
\space
% END inlined sections/expanded_state.tex
\newcommand{\EmpiricalRoot}{appendix_materials/empirical}
\newcommand{\EmpJoint}{\PORTAL{}}
% BEGIN inlined sections/expanded_appendix_empirical.tex
\providecommand{\EmpiricalRoot}{.}
\providecommand{\EmpJoint}{Joint selection}

\section{Experimental populations, feedback and outputs}
\label{app:empirical-populations}

The experiments measure fixed intervention capacity, its realization through output selection, and its expansion through learned states. Each comparison retains its model, evaluation population, output timing and annotation budget.

\paragraph{Corruption populations.}
CIFAR10-C and CIFAR100-C use all 15 severity-5 corruptions and three source ResNet18 checkpoints. Each corruption contains 10,000 images. Batches contain 64 images, including a final batch of 16. The total is 150,000 images per dataset and checkpoint. A permanent split assigns 5,000 base-image indices to the query population $Q$ and 5,000 to the evaluation population $E$. The role of an underlying image is unchanged across corruptions and checkpoints. Each corruption is a separate deployment: selection statistics and annotation history reset at its boundary. An annotation purchased in one deployment is charged again if queried in another.

\paragraph{Natural domain shifts.}
The CLIP ViT-L/14 experiments include all four domains of PACS, OfficeHome and VLCS, with 9,991, 15,588 and 7,510 image instances, respectively. Their permanent query/evaluation counts are 5,070/4,921, 7,801/7,787 and 3,751/3,759. The split is assigned to encoded-content groups, so repeated occurrences remain on the same side. All dataset instances and their original annotations are retained. OfficeHome has 15,176 content groups, including 87 groups whose 178 occurrences have conflicting directory labels. VLCS contains 7,506 groups and no conflicting labels. PACS has no repeated content in this enumeration. Annotation costs count queried instances. PACS and VLCS identities are recorded during inference. OfficeHome identities follow the deterministic class/file order of its saved four-readout stream and are checked against its stored labels.

\paragraph{Timing and aggregation.}
Fixed whole-dataset scores include every example and use zero target labels. Controlled comparisons score the common $E$ population after acquiring current-batch labels from $Q$. These labels choose among predictions already formed. In the online-classifier experiment, they train the classifier after current predictions have been scored. That update affects later batches. CIFAR scores pool equal-size corruption populations. Vision-language model (VLM) results report both equal-domain and image-pooled accuracy, with the equal-domain mean as primary. Standard deviations cover three CIFAR checkpoints or five natural-domain stream orders, as indicated in each table.

\paragraph{Query rules and defaults.}
Empirical selection minimizes queried classification errors and keeps the designated default on ties. Uniform querying samples three available $Q$ examples. Disagreement-first querying prioritizes inputs with different candidate class predictions, in random order. Symmetric-KL querying ranks disagreement between predicted class probabilities. The complete CIFAR experiment sums the six pairwise scores; the four-readout CLIP experiment uses their mean. Both give the same ranking for a fixed candidate set. The StatA and convex-readout experiments use the largest pairwise score.

The fixed default is BN+flip on CIFAR and the eight-view readout on CLIP. \PORTAL{} uses total prior mass 1: it assigns 0.75 to candidate-predicted classes and spreads the remaining 0.25 over all classes (Equation~\eqref{eq:app_prior}). The label price is 0.002, with at most 12 queries per batch. Three query credits arrive per batch. CIFAR caps the stored balance at 12; the VLM experiments carry all unused credits. Tables report actual label use.

\section{Complete corruption results}
\label{app:empirical-cifar}

\subsection{Capacity of fixed normalization and view interventions}

The four CIFAR candidates share one source network. I uses source normalization statistics; B uses current-batch statistics. F averages the original and horizontally flipped source logits. BF averages these two views using batch statistics. B and BF leave the source normalization buffers unchanged. The four candidates require four forward passes, with each view shared by the relevant readouts. A fixed I or B output requires one forward pass, and F or BF requires two.

Tables~\ref{tab:emp-cifar10-fixed} and~\ref{tab:emp-cifar100-fixed} retain every corruption and checkpoint. BF gives 79.9218\% and 54.7544\% on CIFAR10-C and CIFAR100-C, respectively. Its fixed increments over I are 22.2178 and 21.5864 percentage points. These gains precede label acquisition. The B and F columns separate normalization from views, while local variation leaves room for different batch decisions.

Short names denote the standard noise, blur, elastic-transformation and JPEG-compression corruptions. All scores include the final 16 images.

% BEGIN inlined appendix_materials/empirical/tables/cifar10-fixed.tex
{
\small
\setlength{\tabcolsep}{2pt}\setlength{\LTpre}{3pt}\setlength{\LTpost}{4pt}
\setlength{\LTcapwidth}{\linewidth}
\begin{longtable}{@{}lrrrrrrrrrrrr@{}}
\caption{Complete CIFAR10-C fixed-intervention accuracy (\%). Each checkpoint evaluates all 10,000 images in every severity-5 corruption; all columns use zero feedback.}\label{tab:emp-cifar10-fixed}\\
\toprule
& \multicolumn{4}{c}{Checkpoint 0} & \multicolumn{4}{c}{Checkpoint 1} & \multicolumn{4}{c}{Checkpoint 2} \\
Corruption & I & B & F & BF & I & B & F & BF & I & B & F & BF \\
\midrule
\endfirsthead
\caption[]{Complete CIFAR10-C fixed-intervention accuracy (\%) (continued).}\\
\toprule
& \multicolumn{4}{c}{Checkpoint 0} & \multicolumn{4}{c}{Checkpoint 1} & \multicolumn{4}{c}{Checkpoint 2} \\
Corruption & I & B & F & BF & I & B & F & BF & I & B & F & BF \\
\midrule
\endhead
\bottomrule
\endfoot
Gaussian & 22.13 & 66.36 & 22.03 & \textbf{68.41} & 27.73 & 67.01 & 28.48 & \textbf{69.52} & 28.06 & 67.37 & 28.30 & \textbf{69.90} \\
Shot & 29.25 & 67.61 & 29.26 & \textbf{70.43} & 34.26 & 69.00 & 34.81 & \textbf{71.78} & 36.06 & 69.52 & 36.32 & \textbf{72.13} \\
Impulse & 22.53 & 58.86 & 22.97 & \textbf{62.09} & 22.02 & 60.92 & 22.04 & \textbf{63.55} & 29.56 & 60.85 & 30.38 & \textbf{63.89} \\
Defocus & 54.54 & 87.34 & 55.31 & \textbf{88.61} & 55.88 & 87.73 & 56.48 & \textbf{88.93} & 59.74 & 87.70 & 60.19 & \textbf{88.99} \\
Glass & 55.10 & 65.17 & 56.08 & \textbf{67.32} & 50.14 & 64.88 & 51.63 & \textbf{67.06} & 50.83 & 65.94 & 51.99 & \textbf{67.76} \\
Motion & 67.45 & 86.04 & 68.18 & \textbf{87.02} & 66.11 & 86.09 & 67.04 & \textbf{87.15} & 68.58 & 86.55 & 69.30 & \textbf{87.61} \\
Zoom & 64.36 & 86.64 & 65.63 & \textbf{88.18} & 62.50 & 87.64 & 63.83 & \textbf{88.68} & 67.54 & 87.97 & 68.14 & \textbf{88.54} \\
Snow & 78.24 & 80.88 & 79.09 & \textbf{82.19} & 77.52 & 81.56 & 78.22 & \textbf{83.28} & 78.14 & 81.86 & 78.92 & \textbf{83.68} \\
Frost & 69.93 & 79.57 & 71.02 & \textbf{80.98} & 65.31 & 80.72 & 66.59 & \textbf{81.68} & 65.76 & 80.95 & 66.51 & \textbf{82.49} \\
Fog & 74.26 & 85.32 & 75.24 & \textbf{86.16} & 73.26 & 84.84 & 73.90 & \textbf{86.12} & 74.92 & 86.09 & 75.83 & \textbf{87.09} \\
Brightness & 91.29 & 90.75 & \textbf{92.36} & 91.23 & 91.43 & 90.60 & \textbf{92.17} & 91.11 & 91.79 & 91.15 & \textbf{92.45} & 91.80 \\
Contrast & 35.45 & 87.05 & 35.42 & \textbf{87.54} & 32.01 & 87.20 & 32.29 & \textbf{87.84} & 34.25 & 87.02 & 34.44 & \textbf{87.69} \\
Elastic & 77.20 & 76.49 & \textbf{78.56} & 78.26 & 75.89 & 75.82 & 77.52 & \textbf{77.75} & 76.76 & 77.14 & 77.99 & \textbf{78.93} \\
Pixelate & 46.55 & 79.41 & 46.47 & \textbf{80.75} & 45.74 & 79.11 & 46.11 & \textbf{80.32} & 46.89 & 79.30 & 47.05 & \textbf{80.58} \\
JPEG & 73.39 & 71.31 & \textbf{74.83} & 73.36 & 72.46 & 71.34 & \textbf{73.46} & 73.38 & 73.87 & 72.81 & \textbf{75.41} & 74.72 \\
Mean & 57.44 & 77.92 & 58.16 & 79.50 & 56.82 & 78.30 & 57.64 & 79.88 & 58.85 & 78.81 & 59.55 & 80.39 \\
\end{longtable}
}
\space
% END inlined appendix_materials/empirical/tables/cifar10-fixed.tex
% BEGIN inlined appendix_materials/empirical/tables/cifar100-fixed.tex
{
\small
\setlength{\tabcolsep}{2pt}\setlength{\LTpre}{3pt}\setlength{\LTpost}{4pt}
\setlength{\LTcapwidth}{\linewidth}
\begin{longtable}{@{}lrrrrrrrrrrrr@{}}
\caption{Complete CIFAR100-C fixed-intervention accuracy (\%). Each checkpoint evaluates all 10,000 images in every severity-5 corruption; all columns use zero feedback.}\label{tab:emp-cifar100-fixed}\\
\toprule
& \multicolumn{4}{c}{Checkpoint 0} & \multicolumn{4}{c}{Checkpoint 1} & \multicolumn{4}{c}{Checkpoint 2} \\
Corruption & I & B & F & BF & I & B & F & BF & I & B & F & BF \\
\midrule
\endfirsthead
\caption[]{Complete CIFAR100-C fixed-intervention accuracy (\%) (continued).}\\
\toprule
& \multicolumn{4}{c}{Checkpoint 0} & \multicolumn{4}{c}{Checkpoint 1} & \multicolumn{4}{c}{Checkpoint 2} \\
Corruption & I & B & F & BF & I & B & F & BF & I & B & F & BF \\
\midrule
\endhead
\bottomrule
\endfoot
Gaussian & 10.21 & 39.76 & 10.19 & \textbf{41.13} & 9.53 & 39.58 & 9.83 & \textbf{41.09} & 12.13 & 38.31 & 12.01 & \textbf{40.48} \\
Shot & 11.99 & 41.11 & 12.21 & \textbf{42.85} & 11.03 & 40.19 & 11.38 & \textbf{42.05} & 13.41 & 40.20 & 13.56 & \textbf{42.22} \\
Impulse & 6.64 & 34.72 & 6.77 & \textbf{36.32} & 6.63 & 33.61 & 6.62 & \textbf{35.27} & 8.23 & 33.57 & 8.23 & \textbf{35.83} \\
Defocus & 34.80 & 66.18 & 35.00 & \textbf{67.23} & 34.49 & 66.08 & 34.85 & \textbf{66.98} & 35.22 & 65.67 & 35.47 & \textbf{66.23} \\
Glass & 20.19 & 42.66 & 19.93 & \textbf{43.56} & 19.06 & 42.69 & 19.49 & \textbf{43.66} & 19.48 & 41.91 & 19.56 & \textbf{43.46} \\
Motion & 43.89 & 63.33 & 44.81 & \textbf{64.21} & 44.78 & 63.81 & 45.53 & \textbf{64.62} & 43.60 & 62.99 & 44.52 & \textbf{63.97} \\
Zoom & 41.65 & 64.93 & 42.15 & \textbf{65.64} & 42.44 & 64.76 & 42.79 & \textbf{65.70} & 41.70 & 64.66 & 42.63 & \textbf{65.41} \\
Snow & 46.38 & 53.87 & 46.92 & \textbf{55.13} & 46.05 & 53.65 & 47.06 & \textbf{54.90} & 46.55 & 53.41 & 47.10 & \textbf{54.40} \\
Frost & 34.05 & 53.59 & 34.45 & \textbf{54.63} & 33.93 & 53.63 & 34.36 & \textbf{54.43} & 34.61 & 53.76 & 35.29 & \textbf{54.80} \\
Fog & 40.01 & 57.85 & 40.63 & \textbf{58.89} & 41.44 & 58.22 & 42.22 & \textbf{59.19} & 41.71 & 57.93 & 42.34 & \textbf{59.28} \\
Brightness & 67.44 & 68.04 & 68.37 & \textbf{68.69} & 67.32 & 68.07 & 68.30 & \textbf{69.04} & 67.15 & 68.17 & 67.88 & \textbf{69.23} \\
Contrast & 17.79 & 64.24 & 17.87 & \textbf{64.75} & 18.61 & 64.62 & 18.66 & \textbf{65.33} & 19.00 & 64.08 & 19.00 & \textbf{65.03} \\
Elastic & 50.73 & 54.26 & 51.63 & \textbf{55.41} & 50.11 & 53.55 & 51.35 & \textbf{54.77} & 50.13 & 54.49 & 51.25 & \textbf{55.68} \\
Pixelate & 22.59 & 57.24 & 22.86 & \textbf{57.87} & 24.69 & 56.67 & 24.71 & \textbf{57.85} & 27.48 & 56.46 & 27.74 & \textbf{57.48} \\
JPEG & 45.28 & 45.65 & 45.87 & \textbf{46.44} & 44.41 & 44.70 & 44.90 & \textbf{46.10} & 44.00 & 45.46 & 44.91 & \textbf{46.72} \\
Mean & 32.91 & 53.83 & 33.31 & 54.85 & 32.97 & 53.59 & 33.47 & 54.73 & 33.63 & 53.40 & 34.10 & 54.68 \\
\end{longtable}
}
\space
% END inlined appendix_materials/empirical/tables/cifar100-fixed.tex

\subsection{Selection on a common evaluation population}

Table~\ref{tab:emp-cifar-summary} evaluates the entire portfolio on the same held-aside image roles. The fixed mean-logit rule averages the four candidate logits and needs no annotations. The batch oracle chooses its action from all current $E$ labels and provides an evaluation-only upper reference. Each empirical rule spends 471 labels per corruption, including three on the final 16-image batch, for 7,065 labels per suite.

\PORTAL{} attains 79.2587\% on CIFAR10-C with 612.3 labels per suite and 53.8680\% on CIFAR100-C with 381.7 labels, on average. Symmetric-KL empirical selection gives 79.2022\% and 53.7604\%, using 7,065 labels in either dataset. \PORTAL{} achieves similar accuracy with far fewer labels. Fixed BF has the highest deployable accuracy here: 79.6462\% and 54.3213\% on the same $E$ populations.

The six checkpoint tables retain every common-$E$ corruption outcome, including individual fixed actions, the batch oracle, and actual joint-query counts.

% BEGIN inlined appendix_materials/empirical/tables/cifar-summary.tex
{
\small
\setlength{\tabcolsep}{2pt}\setlength{\LTpre}{3pt}\setlength{\LTpost}{4pt}
\setlength{\LTcapwidth}{\linewidth}
\begin{longtable}{@{}lrrrr@{}}
\caption{Complete-suite comparison on the common evaluation population. Accuracy is averaged over three checkpoints; the accompanying value is the sample SD. Label counts are per 15-corruption suite.}\label{tab:emp-cifar-summary}\\
\toprule
Dataset & Method & Accuracy (\%) & $\Delta$ BF (pp) & Labels \\
\midrule
\endfirsthead
\caption[]{Complete-suite comparison on the common evaluation population (continued).}\\
\toprule
Dataset & Method & Accuracy (\%) & $\Delta$ BF (pp) & Labels \\
\midrule
\endhead
\bottomrule
\endfoot
CIFAR10-C & Fixed BF & $79.646\pm0.561$ & 0.000 & 0.0 \\
CIFAR10-C & Mean logits & $76.548\pm1.295$ & -3.099 & 0.0 \\
CIFAR10-C & Uniform ERM & $78.275\pm0.583$ & -1.371 & 7065.0 \\
CIFAR10-C & Disagreement ERM & $78.361\pm0.591$ & -1.285 & 7065.0 \\
CIFAR10-C & Symmetric-KL ERM & $79.202\pm0.499$ & -0.444 & 7065.0 \\
CIFAR10-C & \EmpJoint & $79.259\pm0.506$ & -0.388 & 612.3 \\
CIFAR10-C & Batch oracle & $81.135\pm0.449$ & 1.489 & 75000.0 \\
CIFAR100-C & Fixed BF & $54.321\pm0.052$ & 0.000 & 0.0 \\
CIFAR100-C & Mean logits & $52.356\pm0.206$ & -1.966 & 0.0 \\
CIFAR100-C & Uniform ERM & $52.788\pm0.082$ & -1.533 & 7065.0 \\
CIFAR100-C & Disagreement ERM & $52.802\pm0.232$ & -1.519 & 7065.0 \\
CIFAR100-C & Symmetric-KL ERM & $53.760\pm0.119$ & -0.561 & 7065.0 \\
CIFAR100-C & \EmpJoint & $53.868\pm0.231$ & -0.453 & 381.7 \\
CIFAR100-C & Batch oracle & $55.952\pm0.149$ & 1.630 & 75000.0 \\
\end{longtable}
}
\space
% END inlined appendix_materials/empirical/tables/cifar-summary.tex
% BEGIN inlined appendix_materials/empirical/tables/cifar10-control-0.tex
{
\small
\setlength{\tabcolsep}{2pt}\setlength{\LTpre}{3pt}\setlength{\LTpost}{4pt}
\setlength{\LTcapwidth}{\linewidth}
\begin{longtable}{@{}lrrrrrrrrrrr@{}}
\caption{CIFAR10-C, checkpoint 0: accuracy (\%) on the same 5,000 evaluation images per corruption. U, D and K are uniform, disagreement-first and symmetric-KL empirical selection; J is joint selection; O is the batch oracle. The last column counts J labels, with a suite total in the Mean row.}\label{tab:emp-cifar10-control-0}\\
\toprule
Corruption & I & B & F & BF & Mean & U & D & K & J & O & $Q_{J}$ \\
\midrule
\endfirsthead
\caption[]{CIFAR10-C, checkpoint 0: accuracy (\%) on the same 5,000 evaluation images per corruption (continued).}\\
\toprule
Corruption & I & B & F & BF & Mean & U & D & K & J & O & $Q_{J}$ \\
\midrule
\endhead
\bottomrule
\endfoot
Gaussian & 21.78 & 65.92 & 21.86 & 67.56 & 56.70 & 65.08 & 65.58 & 67.58 & 67.56 & 68.76 & 53 \\
Shot & 29.04 & 68.12 & 29.14 & 70.48 & 63.04 & 68.62 & 68.84 & 70.24 & 70.24 & 71.42 & 50 \\
Impulse & 23.06 & 57.74 & 23.04 & 61.38 & 55.34 & 57.96 & 58.04 & 61.28 & 61.38 & 62.24 & 54 \\
Defocus & 54.62 & 87.14 & 54.90 & 88.26 & 83.20 & 86.80 & 87.34 & 88.26 & 87.96 & 89.06 & 44 \\
Glass & 54.60 & 65.04 & 55.58 & 67.44 & 65.86 & 65.08 & 64.74 & 66.24 & 65.72 & 68.72 & 72 \\
Motion & 67.50 & 86.20 & 68.22 & 86.64 & 84.72 & 84.94 & 85.64 & 86.42 & 86.34 & 87.72 & 42 \\
Zoom & 64.42 & 86.58 & 65.32 & 88.18 & 85.10 & 87.04 & 86.38 & 88.08 & 87.96 & 88.78 & 39 \\
Snow & 77.56 & 80.70 & 78.44 & 81.62 & 82.88 & 80.84 & 80.18 & 79.94 & 81.08 & 84.14 & 27 \\
Frost & 69.32 & 79.02 & 69.92 & 80.80 & 81.34 & 79.38 & 78.14 & 80.00 & 79.74 & 82.02 & 44 \\
Fog & 73.70 & 84.26 & 73.88 & 85.24 & 84.90 & 84.24 & 83.24 & 84.18 & 84.04 & 86.40 & 60 \\
Brightness & 91.10 & 90.50 & 92.06 & 91.06 & 93.10 & 90.98 & 91.18 & 91.12 & 90.96 & 93.98 & 10 \\
Contrast & 35.20 & 86.84 & 35.20 & 87.32 & 84.16 & 87.32 & 87.08 & 87.32 & 87.26 & 88.00 & 59 \\
Elastic & 77.50 & 76.54 & 78.52 & 78.20 & 80.40 & 78.30 & 78.04 & 77.78 & 78.18 & 81.90 & 34 \\
Pixelate & 45.94 & 79.56 & 45.70 & 80.56 & 72.38 & 79.24 & 79.48 & 80.52 & 80.56 & 81.64 & 30 \\
JPEG & 73.46 & 71.04 & 74.76 & 72.66 & 75.74 & 72.62 & 72.66 & 73.04 & 73.06 & 77.38 & 31 \\
Mean & 57.25 & 77.68 & 57.77 & 79.16 & 76.59 & 77.90 & 77.77 & 78.80 & 78.80 & 80.81 & 649 \\
\end{longtable}
}
\space
% END inlined appendix_materials/empirical/tables/cifar10-control-0.tex
% BEGIN inlined appendix_materials/empirical/tables/cifar10-control-1.tex
{
\small
\setlength{\tabcolsep}{2pt}\setlength{\LTpre}{3pt}\setlength{\LTpost}{4pt}
\setlength{\LTcapwidth}{\linewidth}
\begin{longtable}{@{}lrrrrrrrrrrr@{}}
\caption{CIFAR10-C, checkpoint 1: accuracy (\%) on the same 5,000 evaluation images per corruption. U, D and K are uniform, disagreement-first and symmetric-KL empirical selection; J is joint selection; O is the batch oracle. The last column counts J labels, with a suite total in the Mean row.}\label{tab:emp-cifar10-control-1}\\
\toprule
Corruption & I & B & F & BF & Mean & U & D & K & J & O & $Q_{J}$ \\
\midrule
\endfirsthead
\caption[]{CIFAR10-C, checkpoint 1: accuracy (\%) on the same 5,000 evaluation images per corruption (continued).}\\
\toprule
Corruption & I & B & F & BF & Mean & U & D & K & J & O & $Q_{J}$ \\
\midrule
\endhead
\bottomrule
\endfoot
Gaussian & 27.50 & 66.08 & 28.64 & 69.28 & 56.76 & 66.60 & 67.16 & 69.24 & 68.80 & 70.14 & 58 \\
Shot & 33.60 & 68.32 & 34.50 & 71.30 & 61.78 & 67.86 & 70.12 & 71.20 & 70.94 & 72.00 & 55 \\
Impulse & 22.04 & 60.30 & 22.04 & 62.82 & 50.36 & 60.20 & 60.86 & 62.48 & 62.74 & 63.92 & 62 \\
Defocus & 55.72 & 87.28 & 56.48 & 88.66 & 82.80 & 87.38 & 87.94 & 88.64 & 88.66 & 89.26 & 34 \\
Glass & 49.28 & 64.18 & 50.62 & 66.74 & 62.28 & 63.88 & 64.18 & 64.86 & 66.34 & 67.72 & 34 \\
Motion & 66.32 & 85.60 & 66.86 & 86.44 & 83.02 & 85.74 & 85.52 & 86.32 & 86.34 & 87.46 & 28 \\
Zoom & 62.70 & 87.58 & 64.10 & 88.58 & 84.38 & 87.38 & 88.10 & 88.42 & 87.64 & 89.44 & 33 \\
Snow & 76.86 & 81.18 & 77.74 & 83.36 & 83.10 & 82.00 & 81.18 & 81.40 & 81.90 & 84.62 & 39 \\
Frost & 64.54 & 80.42 & 65.94 & 81.58 & 79.60 & 79.38 & 80.20 & 80.78 & 81.32 & 82.84 & 36 \\
Fog & 72.80 & 84.18 & 73.46 & 85.50 & 84.24 & 84.08 & 84.28 & 85.02 & 85.30 & 86.42 & 29 \\
Brightness & 91.06 & 90.24 & 91.90 & 90.58 & 92.90 & 90.90 & 91.10 & 90.92 & 90.64 & 93.84 & 16 \\
Contrast & 32.30 & 87.10 & 32.40 & 87.78 & 81.86 & 86.22 & 87.00 & 87.78 & 87.52 & 88.56 & 46 \\
Elastic & 75.96 & 76.12 & 77.74 & 77.96 & 79.66 & 77.72 & 77.32 & 77.10 & 77.38 & 81.26 & 25 \\
Pixelate & 44.60 & 78.02 & 45.04 & 79.28 & 70.76 & 77.68 & 78.02 & 79.26 & 79.10 & 80.32 & 27 \\
JPEG & 72.10 & 70.30 & 73.06 & 72.92 & 74.98 & 72.72 & 72.42 & 72.28 & 72.94 & 76.40 & 34 \\
Mean & 56.49 & 77.79 & 57.37 & 79.52 & 75.23 & 77.98 & 78.36 & 79.05 & 79.17 & 80.95 & 556 \\
\end{longtable}
}
\space
% END inlined appendix_materials/empirical/tables/cifar10-control-1.tex
% BEGIN inlined appendix_materials/empirical/tables/cifar10-control-2.tex
{
\small
\setlength{\tabcolsep}{2pt}\setlength{\LTpre}{3pt}\setlength{\LTpost}{4pt}
\setlength{\LTcapwidth}{\linewidth}
\begin{longtable}{@{}lrrrrrrrrrrr@{}}
\caption{CIFAR10-C, checkpoint 2: accuracy (\%) on the same 5,000 evaluation images per corruption. U, D and K are uniform, disagreement-first and symmetric-KL empirical selection; J is joint selection; O is the batch oracle. The last column counts J labels, with a suite total in the Mean row.}\label{tab:emp-cifar10-control-2}\\
\toprule
Corruption & I & B & F & BF & Mean & U & D & K & J & O & $Q_{J}$ \\
\midrule
\endfirsthead
\caption[]{CIFAR10-C, checkpoint 2: accuracy (\%) on the same 5,000 evaluation images per corruption (continued).}\\
\toprule
Corruption & I & B & F & BF & Mean & U & D & K & J & O & $Q_{J}$ \\
\midrule
\endhead
\bottomrule
\endfoot
Gaussian & 28.14 & 66.70 & 28.40 & 69.34 & 62.20 & 66.70 & 67.04 & 69.24 & 69.16 & 70.28 & 56 \\
Shot & 35.88 & 69.10 & 35.84 & 71.80 & 67.14 & 70.26 & 70.92 & 71.64 & 71.38 & 72.74 & 66 \\
Impulse & 29.98 & 60.48 & 30.44 & 64.04 & 57.80 & 62.32 & 62.50 & 63.24 & 63.14 & 64.96 & 64 \\
Defocus & 59.58 & 87.24 & 60.00 & 88.74 & 85.76 & 87.16 & 87.80 & 88.66 & 88.34 & 89.38 & 43 \\
Glass & 50.66 & 65.82 & 51.42 & 67.44 & 64.54 & 65.18 & 64.60 & 66.42 & 67.02 & 68.76 & 47 \\
Motion & 68.98 & 86.62 & 69.86 & 87.68 & 85.48 & 86.28 & 86.52 & 87.66 & 87.14 & 88.40 & 38 \\
Zoom & 67.84 & 87.80 & 68.16 & 88.52 & 86.28 & 87.52 & 88.10 & 88.50 & 88.20 & 89.56 & 40 \\
Snow & 77.42 & 81.58 & 78.42 & 83.94 & 84.26 & 82.64 & 81.40 & 82.18 & 82.34 & 85.26 & 38 \\
Frost & 65.22 & 80.76 & 65.76 & 82.50 & 80.34 & 80.32 & 80.52 & 81.54 & 82.22 & 83.30 & 37 \\
Fog & 74.08 & 85.56 & 74.76 & 86.62 & 85.48 & 85.48 & 84.58 & 85.38 & 85.34 & 87.72 & 55 \\
Brightness & 91.32 & 91.08 & 92.20 & 91.74 & 93.60 & 91.96 & 92.02 & 91.88 & 91.56 & 94.56 & 13 \\
Contrast & 34.22 & 86.82 & 34.12 & 87.42 & 84.94 & 86.56 & 87.10 & 87.42 & 87.42 & 88.08 & 47 \\
Elastic & 77.06 & 77.02 & 77.96 & 78.94 & 80.40 & 78.56 & 77.94 & 78.30 & 78.76 & 81.56 & 29 \\
Pixelate & 46.38 & 79.50 & 46.32 & 80.60 & 72.78 & 78.82 & 79.04 & 80.56 & 80.62 & 81.76 & 33 \\
JPEG & 73.66 & 72.88 & 75.38 & 74.58 & 76.30 & 74.44 & 74.20 & 73.78 & 74.40 & 78.40 & 26 \\
Mean & 58.69 & 78.60 & 59.27 & 80.26 & 77.82 & 78.95 & 78.95 & 79.76 & 79.80 & 81.65 & 632 \\
\end{longtable}
}
\space
% END inlined appendix_materials/empirical/tables/cifar10-control-2.tex
% BEGIN inlined appendix_materials/empirical/tables/cifar100-control-0.tex
{
\small
\setlength{\tabcolsep}{2pt}\setlength{\LTpre}{3pt}\setlength{\LTpost}{4pt}
\setlength{\LTcapwidth}{\linewidth}
\begin{longtable}{@{}lrrrrrrrrrrr@{}}
\caption{CIFAR100-C, checkpoint 0: accuracy (\%) on the same 5,000 evaluation images per corruption. U, D and K are uniform, disagreement-first and symmetric-KL empirical selection; J is joint selection; O is the batch oracle. The last column counts J labels, with a suite total in the Mean row.}\label{tab:emp-cifar100-control-0}\\
\toprule
Corruption & I & B & F & BF & Mean & U & D & K & J & O & $Q_{J}$ \\
\midrule
\endfirsthead
\caption[]{CIFAR100-C, checkpoint 0: accuracy (\%) on the same 5,000 evaluation images per corruption (continued).}\\
\toprule
Corruption & I & B & F & BF & Mean & U & D & K & J & O & $Q_{J}$ \\
\midrule
\endhead
\bottomrule
\endfoot
Gaussian & 10.70 & 39.66 & 10.54 & 40.84 & 32.32 & 39.46 & 39.84 & 40.34 & 40.84 & 41.88 & 52 \\
Shot & 12.26 & 41.08 & 12.52 & 42.68 & 34.58 & 41.06 & 40.72 & 41.94 & 42.36 & 44.00 & 63 \\
Impulse & 7.16 & 35.04 & 7.22 & 36.34 & 29.18 & 35.34 & 35.32 & 36.16 & 34.74 & 37.74 & 46 \\
Defocus & 34.18 & 65.30 & 34.00 & 66.22 & 62.44 & 64.24 & 64.84 & 66.22 & 65.50 & 67.24 & 27 \\
Glass & 20.32 & 42.12 & 19.92 & 42.86 & 39.44 & 40.30 & 40.46 & 42.30 & 42.56 & 44.30 & 36 \\
Motion & 43.28 & 62.58 & 44.28 & 63.28 & 61.76 & 61.46 & 60.86 & 62.64 & 63.08 & 64.48 & 12 \\
Zoom & 41.04 & 64.10 & 41.40 & 64.98 & 63.70 & 62.06 & 62.22 & 64.64 & 63.64 & 66.14 & 25 \\
Snow & 46.32 & 53.10 & 47.00 & 54.46 & 56.74 & 52.84 & 52.48 & 53.70 & 53.98 & 56.08 & 8 \\
Frost & 33.84 & 52.98 & 34.42 & 54.14 & 54.78 & 51.48 & 52.14 & 53.62 & 52.38 & 55.26 & 25 \\
Fog & 39.04 & 57.50 & 39.76 & 58.76 & 57.92 & 56.58 & 56.38 & 58.20 & 58.76 & 59.88 & 21 \\
Brightness & 67.52 & 66.62 & 68.00 & 67.58 & 71.62 & 67.36 & 67.52 & 67.44 & 67.38 & 71.56 & 4 \\
Contrast & 17.42 & 64.62 & 17.38 & 65.06 & 58.66 & 64.16 & 63.18 & 65.04 & 65.04 & 66.14 & 34 \\
Elastic & 50.20 & 53.62 & 51.54 & 54.26 & 56.90 & 53.54 & 53.04 & 53.18 & 54.28 & 57.38 & 10 \\
Pixelate & 22.06 & 56.42 & 22.34 & 57.10 & 53.26 & 55.98 & 55.54 & 56.92 & 53.84 & 58.40 & 38 \\
JPEG & 45.70 & 45.64 & 46.54 & 46.18 & 50.10 & 46.16 & 45.74 & 45.42 & 46.20 & 50.68 & 8 \\
Mean & 32.74 & 53.36 & 33.12 & 54.32 & 52.23 & 52.80 & 52.69 & 53.85 & 53.64 & 56.08 & 409 \\
\end{longtable}
}
\space
% END inlined appendix_materials/empirical/tables/cifar100-control-0.tex
% BEGIN inlined appendix_materials/empirical/tables/cifar100-control-1.tex
{
\small
\setlength{\tabcolsep}{2pt}\setlength{\LTpre}{3pt}\setlength{\LTpost}{4pt}
\setlength{\LTcapwidth}{\linewidth}
\begin{longtable}{@{}lrrrrrrrrrrr@{}}
\caption{CIFAR100-C, checkpoint 1: accuracy (\%) on the same 5,000 evaluation images per corruption. U, D and K are uniform, disagreement-first and symmetric-KL empirical selection; J is joint selection; O is the batch oracle. The last column counts J labels, with a suite total in the Mean row.}\label{tab:emp-cifar100-control-1}\\
\toprule
Corruption & I & B & F & BF & Mean & U & D & K & J & O & $Q_{J}$ \\
\midrule
\endfirsthead
\caption[]{CIFAR100-C, checkpoint 1: accuracy (\%) on the same 5,000 evaluation images per corruption (continued).}\\
\toprule
Corruption & I & B & F & BF & Mean & U & D & K & J & O & $Q_{J}$ \\
\midrule
\endhead
\bottomrule
\endfoot
Gaussian & 9.92 & 39.70 & 10.20 & 40.94 & 31.30 & 39.64 & 39.60 & 40.72 & 40.94 & 42.08 & 58 \\
Shot & 11.12 & 40.30 & 11.56 & 41.78 & 33.60 & 38.40 & 38.68 & 41.48 & 41.78 & 42.82 & 57 \\
Impulse & 7.36 & 34.02 & 7.36 & 35.30 & 30.76 & 34.56 & 34.90 & 34.96 & 35.10 & 36.50 & 41 \\
Defocus & 33.86 & 64.76 & 33.88 & 65.88 & 62.60 & 63.38 & 64.46 & 65.76 & 65.48 & 66.84 & 25 \\
Glass & 19.10 & 42.24 & 19.78 & 43.16 & 39.14 & 41.76 & 42.14 & 42.54 & 42.32 & 44.28 & 34 \\
Motion & 43.52 & 62.80 & 44.54 & 63.64 & 62.12 & 61.80 & 62.38 & 63.46 & 62.42 & 64.96 & 11 \\
Zoom & 41.86 & 64.20 & 42.18 & 65.44 & 63.70 & 62.44 & 62.80 & 65.02 & 65.44 & 66.58 & 22 \\
Snow & 46.82 & 53.52 & 47.70 & 54.94 & 57.42 & 54.02 & 53.38 & 53.58 & 54.72 & 56.82 & 9 \\
Frost & 34.06 & 53.44 & 34.60 & 54.18 & 54.44 & 51.90 & 52.50 & 53.34 & 53.68 & 55.56 & 19 \\
Fog & 40.20 & 58.04 & 40.96 & 58.98 & 57.90 & 56.58 & 56.80 & 57.88 & 58.20 & 60.18 & 19 \\
Brightness & 67.24 & 67.46 & 68.16 & 68.60 & 71.90 & 68.44 & 68.24 & 68.02 & 68.48 & 71.88 & 8 \\
Contrast & 18.06 & 65.02 & 18.26 & 65.66 & 59.34 & 64.54 & 64.70 & 65.62 & 65.66 & 66.60 & 46 \\
Elastic & 49.58 & 53.06 & 51.14 & 54.44 & 56.34 & 53.66 & 53.14 & 52.56 & 54.44 & 57.20 & 10 \\
Pixelate & 24.04 & 55.82 & 23.90 & 57.16 & 54.12 & 56.20 & 56.54 & 56.88 & 57.16 & 58.16 & 29 \\
JPEG & 44.54 & 44.20 & 44.76 & 45.54 & 49.02 & 45.62 & 45.78 & 45.26 & 45.68 & 49.40 & 5 \\
Mean & 32.75 & 53.24 & 33.27 & 54.38 & 52.25 & 52.86 & 53.07 & 53.81 & 54.10 & 55.99 & 393 \\
\end{longtable}
}
\space
% END inlined appendix_materials/empirical/tables/cifar100-control-1.tex
% BEGIN inlined appendix_materials/empirical/tables/cifar100-control-2.tex
{
\small
\setlength{\tabcolsep}{2pt}\setlength{\LTpre}{3pt}\setlength{\LTpost}{4pt}
\setlength{\LTcapwidth}{\linewidth}
\begin{longtable}{@{}lrrrrrrrrrrr@{}}
\caption{CIFAR100-C, checkpoint 2: accuracy (\%) on the same 5,000 evaluation images per corruption. U, D and K are uniform, disagreement-first and symmetric-KL empirical selection; J is joint selection; O is the batch oracle. The last column counts J labels, with a suite total in the Mean row.}\label{tab:emp-cifar100-control-2}\\
\toprule
Corruption & I & B & F & BF & Mean & U & D & K & J & O & $Q_{J}$ \\
\midrule
\endfirsthead
\caption[]{CIFAR100-C, checkpoint 2: accuracy (\%) on the same 5,000 evaluation images per corruption (continued).}\\
\toprule
Corruption & I & B & F & BF & Mean & U & D & K & J & O & $Q_{J}$ \\
\midrule
\endhead
\bottomrule
\endfoot
Gaussian & 12.32 & 38.18 & 12.06 & 39.90 & 34.02 & 38.80 & 38.66 & 39.38 & 39.66 & 40.76 & 53 \\
Shot & 13.68 & 40.16 & 13.62 & 41.86 & 36.10 & 40.10 & 39.98 & 41.20 & 41.56 & 43.06 & 36 \\
Impulse & 8.64 & 33.44 & 8.66 & 35.58 & 30.74 & 33.60 & 33.52 & 34.60 & 35.58 & 36.60 & 32 \\
Defocus & 34.64 & 65.46 & 34.90 & 65.72 & 62.14 & 64.62 & 64.76 & 65.70 & 65.62 & 67.06 & 29 \\
Glass & 19.90 & 40.74 & 19.88 & 42.22 & 39.28 & 40.30 & 40.22 & 41.24 & 41.12 & 43.30 & 27 \\
Motion & 42.68 & 62.22 & 43.58 & 62.74 & 60.96 & 60.78 & 60.90 & 62.42 & 62.28 & 63.72 & 14 \\
Zoom & 41.10 & 64.98 & 41.96 & 65.40 & 63.46 & 63.80 & 63.84 & 64.76 & 64.80 & 66.54 & 16 \\
Snow & 46.50 & 53.22 & 47.08 & 54.36 & 56.38 & 52.98 & 53.40 & 53.46 & 54.08 & 56.16 & 10 \\
Frost & 34.70 & 53.86 & 35.46 & 54.96 & 55.64 & 52.32 & 53.08 & 54.48 & 54.40 & 56.22 & 23 \\
Fog & 40.98 & 57.98 & 41.40 & 59.08 & 58.70 & 57.32 & 56.60 & 58.30 & 57.38 & 60.30 & 14 \\
Brightness & 66.74 & 67.52 & 67.34 & 68.44 & 72.40 & 67.88 & 67.38 & 67.58 & 68.36 & 71.42 & 8 \\
Contrast & 18.62 & 63.90 & 18.78 & 65.00 & 58.62 & 63.66 & 63.54 & 65.02 & 65.00 & 65.70 & 36 \\
Elastic & 50.56 & 53.96 & 51.78 & 55.10 & 57.26 & 54.46 & 53.58 & 54.06 & 55.10 & 57.74 & 2 \\
Pixelate & 26.90 & 56.12 & 27.10 & 57.24 & 53.70 & 54.68 & 55.20 & 56.74 & 56.60 & 58.38 & 31 \\
JPEG & 43.92 & 45.38 & 44.86 & 46.48 & 49.50 & 45.20 & 45.12 & 45.44 & 46.44 & 49.84 & 12 \\
Mean & 33.46 & 53.14 & 33.90 & 54.27 & 52.59 & 52.70 & 52.65 & 53.63 & 53.87 & 55.79 & 343 \\
\end{longtable}
}
\space
% END inlined appendix_materials/empirical/tables/cifar100-control-2.tex

\section{Visual-language readouts across natural domains}
\label{app:empirical-vlm}

\subsection{Whole-dataset fixed performance}

The CLIP candidates combine text prompts and image views. S uses one text template and T averages eight templates, both on the original image. V2 averages the eight-template log probabilities over the original image and its horizontal flip. V8 adds three resized crops and their flips, for eight views in total. Later tables call V2 and V8 ``U2V'' and ``U8V'' to indicate the shared template set. The crop scale is $[0.8,1.0]$ and the aspect-ratio interval is $[3/4,4/3]$. All experiments use these same transformations. Five repeats vary only the stream and query order.

Table~\ref{tab:emp-vlm-whole} reports every domain and both aggregation rules. The V8 readout gives 98.5171\% equal-domain accuracy on PACS and 83.7927\% on VLCS without target labels. The gains are distributed unevenly across domains. On VLCS, the SUN domain rises from 66.3909\% under S to 77.7536\% under V8, while LABELME decreases from 71.6514\% to 71.3825\%. This domain variation motivates retaining the complete domain table alongside the aggregate.

The OfficeHome saved prediction stream used for current control has V8 accuracy 88.8263\%. The separate full-stream fixed endpoint used for the reported-reference comparison is 88.8435\%. Table~\ref{tab:emp-reported-comparison} retains that endpoint. The control comparisons below always use their own fixed predictions as the paired reference.

% BEGIN inlined appendix_materials/empirical/tables/vlm-whole.tex
{
\small
\setlength{\tabcolsep}{2pt}\setlength{\LTpre}{3pt}\setlength{\LTpost}{4pt}
\setlength{\LTcapwidth}{\linewidth}
\begin{longtable}{@{}llrrrrr@{}}
\caption{Whole-dataset CLIP ViT-L/14 readouts with zero feedback. S uses one template and one view; T uses eight templates on the original view; V2 and V8 average universal-template log probabilities over two and eight views. OfficeHome uses the saved prediction stream evaluated by its controller.}\label{tab:emp-vlm-whole}\\
\toprule
Dataset & Domain & $n$ & S & T & V2 & V8 \\
\midrule
\endfirsthead
\caption[]{Whole-dataset CLIP ViT-L/14 readouts with zero feedback (continued).}\\
\toprule
Dataset & Domain & $n$ & S & T & V2 & V8 \\
\midrule
\endhead
\bottomrule
\endfoot
PACS & Art Painting & 2048 & 98.68 & 99.02 & 99.17 & 99.17 \\
PACS & Cartoon & 2344 & 99.45 & 99.57 & 99.57 & 99.62 \\
PACS & Photo & 1670 & 99.88 & 99.94 & 99.94 & 99.94 \\
PACS & Sketch & 3929 & 95.47 & 95.04 & 95.19 & 95.34 \\
PACS & Equal-domain & 9991 & 98.369 & 98.393 & 98.468 & 98.517 \\
PACS & Pooled & 9991 & 97.798 & 97.738 & 97.828 & 97.898 \\
OfficeHome & Art & 2427 & 86.98 & 88.01 & 88.26 & 88.71 \\
OfficeHome & Clipart & 4365 & 73.08 & 76.01 & 76.33 & 78.49 \\
OfficeHome & Product & 4439 & 92.90 & 93.74 & 93.67 & 94.12 \\
OfficeHome & Real World & 4357 & 93.05 & 93.94 & 93.83 & 93.99 \\
OfficeHome & Equal-domain & 15588 & 86.503 & 87.925 & 88.022 & 88.826 \\
OfficeHome & Pooled & 15588 & 86.470 & 87.939 & 88.016 & 88.863 \\
VLCS & Caltech & 991 & 100.00 & 100.00 & 100.00 & 100.00 \\
VLCS & Labelme & 1859 & 71.65 & 71.87 & 71.60 & 71.38 \\
VLCS & Pascal & 2363 & 84.85 & 85.78 & 86.16 & 86.03 \\
VLCS & Sun & 2297 & 66.39 & 74.75 & 75.88 & 77.75 \\
VLCS & Equal-domain & 7510 & 80.723 & 83.099 & 83.410 & 83.793 \\
VLCS & Pooled & 7510 & 77.936 & 80.839 & 81.238 & 81.718 \\
\end{longtable}
}
\space
% END inlined appendix_materials/empirical/tables/vlm-whole.tex

\begin{table}[!htb]
\centering
\captionsetup{font=small,skip=4pt}
\caption{CLIP accuracy (\%) / labels on shared evaluation images. Accuracy averages domains; labels average five orders. Mean-pair KL ranks inputs by average pairwise divergence. A dash marks an unmeasured entry. Appendix~\ref{app:empirical-vlm} gives full readouts.}
\label{tab:expanded_vlm}
\fontsize{8}{9.3}\selectfont
\setlength{\tabcolsep}{3pt}
\begin{tabularx}{\linewidth}{@{}Xrrr@{}}
\toprule
Rule & PACS & OfficeHome & VLCS \\
\midrule
Fixed eight views & 98.44/0 & 89.01/0 & 84.10/0 \\
Uniform ERM & 98.43/473.6 & 88.90/738.0 & 84.07/354.8 \\
Mean-pair KL & 98.44/473.6 & 88.63/738.0 & 83.91/354.8 \\
\rowcolor{PortalRow}\PORTAL{}, four & 98.44/1.4 & 89.01/7.4 & 83.92/89.4 \\
\midrule
Fixed StatA & 98.40/0 & --- & 85.11/0 \\
\rowcolor{PortalRow}\PORTAL{} + StatA & 98.44/2.4 & --- & 84.58/96.0 \\
\bottomrule
\end{tabularx}
\end{table}

\subsection{Fixed capacity and actual use of feedback}

Table~\ref{tab:emp-vlm-common} places all fixed readouts and query rules on their common $E$ populations. Joint selection uses 1.4 labels per PACS stream and changes the equal-domain score by $+0.0050$ percentage points relative to V8. On OfficeHome, it uses 7.4 labels and changes accuracy by $+0.0021$ points. These near-default outcomes retain the high accuracy of the fixed readout with little annotation. On VLCS, joint selection gives 83.9188\%, below fixed V8 at 84.0966\% and uniform empirical selection at 84.0657\%. The actual effect of feedback therefore depends on both the available readouts and the dataset.

Table~\ref{tab:emp-vlm-domains} resolves these effects by domain. Fixed readouts have zero stream-order variance because their predictions and evaluation membership remain unchanged.

% BEGIN inlined appendix_materials/empirical/tables/vlm-common.tex
{
\small
\setlength{\tabcolsep}{2pt}\setlength{\LTpre}{3pt}\setlength{\LTpost}{4pt}
\setlength{\LTcapwidth}{\linewidth}
\begin{longtable}{@{}llrrrr@{}}
\caption{VLM evaluation on a permanent common E population. Accuracy is an equal-domain mean over five stream orders; SD measures stream/query variation on one checkpoint and set of image transformations. Pooled accuracy weights images. Labels are actual per-stream annotations.}\label{tab:emp-vlm-common}\\
\toprule
Dataset & Method & Equal-domain (\%) & Pooled & $\Delta$ V8 & Labels \\
\midrule
\endfirsthead
\caption[]{VLM evaluation on a permanent common E population (continued).}\\
\toprule
Dataset & Method & Equal-domain (\%) & Pooled & $\Delta$ V8 & Labels \\
\midrule
\endhead
\bottomrule
\endfoot
PACS & Fixed S & $98.1906\pm0.0000$ & 97.5412 & -0.2462 & 0.0 \\
PACS & Fixed T & $98.2791\pm0.0000$ & 97.5615 & -0.1578 & 0.0 \\
PACS & Fixed V2 & $98.4249\pm0.0000$ & 97.7037 & -0.0120 & 0.0 \\
PACS & Fixed V8 & $98.4369\pm0.0000$ & 97.7444 & 0.0000 & 0.0 \\
PACS & Uniform ERM & $98.4343\pm0.0056$ & 97.7403 & -0.0025 & 473.6 \\
PACS & Disagreement ERM & $98.4343\pm0.0314$ & 97.7403 & -0.0025 & 473.6 \\
PACS & Symmetric-KL ERM & $98.4394\pm0.0287$ & 97.7484 & 0.0025 & 473.6 \\
PACS & \EmpJoint & $98.4419\pm0.0069$ & 97.7525 & 0.0050 & 1.4 \\
OfficeHome & Fixed S & $86.6214\pm0.0000$ & 86.4133 & -2.3866 & 0.0 \\
OfficeHome & Fixed T & $88.1175\pm0.0000$ & 88.0057 & -0.8905 & 0.0 \\
OfficeHome & Fixed V2 & $88.2257\pm0.0000$ & 88.1341 & -0.7823 & 0.0 \\
OfficeHome & Fixed V8 & $89.0080\pm0.0000$ & 88.9174 & 0.0000 & 0.0 \\
OfficeHome & Uniform ERM & $88.8990\pm0.0930$ & 88.7967 & -0.1091 & 738.0 \\
OfficeHome & Disagreement ERM & $88.5772\pm0.1174$ & 88.4911 & -0.4308 & 738.0 \\
OfficeHome & Symmetric-KL ERM & $88.6338\pm0.1278$ & 88.5579 & -0.3742 & 738.0 \\
OfficeHome & \EmpJoint & $89.0101\pm0.0124$ & 88.9200 & 0.0021 & 7.4 \\
VLCS & Fixed S & $80.8449\pm0.0000$ & 78.0261 & -3.2517 & 0.0 \\
VLCS & Fixed T & $83.5229\pm0.0000$ & 81.2982 & -0.5737 & 0.0 \\
VLCS & Fixed V2 & $83.8614\pm0.0000$ & 81.7239 & -0.2352 & 0.0 \\
VLCS & Fixed V8 & $84.0966\pm0.0000$ & 82.0165 & 0.0000 & 0.0 \\
VLCS & Uniform ERM & $84.0657\pm0.1743$ & 81.9633 & -0.0309 & 354.8 \\
VLCS & Disagreement ERM & $83.8731\pm0.1502$ & 81.7398 & -0.2235 & 354.8 \\
VLCS & Symmetric-KL ERM & $83.9097\pm0.1239$ & 81.7930 & -0.1869 & 354.8 \\
VLCS & \EmpJoint & $83.9188\pm0.1703$ & 81.7930 & -0.1778 & 89.4 \\
\end{longtable}
}
\space
% END inlined appendix_materials/empirical/tables/vlm-common.tex
% BEGIN inlined appendix_materials/empirical/tables/vlm-domains.tex
{
\small
\setlength{\tabcolsep}{2pt}\setlength{\LTpre}{3pt}\setlength{\LTpost}{4pt}
\setlength{\LTcapwidth}{\linewidth}
\begin{longtable}{@{}llrrrrrr@{}}
\caption{VLM common-E domain-level accuracy (\%) averaged over five stream orders. U, D and K denote uniform, disagreement-first and symmetric-KL empirical selection. J denotes \PORTAL{}, and $Q_J$ counts its labels. V8 is the fixed default.}\label{tab:emp-vlm-domains}\\
\toprule
Dataset & Domain & V8 & U & D & K & J & $Q_J$ \\
\midrule
\endfirsthead
\caption[]{VLM common-E domain-level accuracy (\%) averaged over five stream orders (continued).}\\
\toprule
Dataset & Domain & V8 & U & D & K & J & $Q_J$ \\
\midrule
\endhead
\bottomrule
\endfoot
PACS & Art Painting & 99.069 & 99.069 & 99.069 & 99.069 & 99.069 & 0.0 \\
PACS & Cartoon & 99.570 & 99.570 & 99.570 & 99.570 & 99.570 & 0.0 \\
PACS & Photo & 100.000 & 100.000 & 100.000 & 100.000 & 100.000 & 0.0 \\
PACS & Sketch & 95.108 & 95.098 & 95.098 & 95.119 & 95.129 & 1.4 \\
OfficeHome & Art & 88.898 & 88.865 & 88.405 & 88.388 & 88.898 & 0.0 \\
OfficeHome & Clipart & 78.016 & 77.668 & 77.239 & 77.426 & 78.034 & 6.0 \\
OfficeHome & Product & 94.177 & 94.131 & 93.949 & 93.985 & 94.177 & 0.6 \\
OfficeHome & Real World & 94.941 & 94.932 & 94.717 & 94.735 & 94.932 & 0.8 \\
VLCS & Caltech & 100.000 & 100.000 & 100.000 & 100.000 & 100.000 & 0.0 \\
VLCS & Labelme & 71.354 & 71.647 & 71.396 & 71.228 & 71.438 & 61.2 \\
VLCS & Pascal & 86.356 & 86.304 & 85.872 & 85.717 & 85.665 & 14.6 \\
VLCS & Sun & 78.677 & 78.312 & 78.225 & 78.695 & 78.573 & 13.6 \\
\end{longtable}
}
\space
% END inlined appendix_materials/empirical/tables/vlm-domains.tex

\subsection{Output timing at a higher annotation density}

The OfficeHome experiment with batches of eight measures the effect of decision timing. Four readouts are formed first, then three queried labels select one of them. We apply that choice either to the current batch or to the next batch. Table~\ref{tab:emp-officehome-timing} shows a 1.3689-point gain when all current examples are scored. Excluding the queried examples reduces the gain to 0.0639 points; delaying application gives $-0.0887$ points. Most of the current-batch gain comes from the examples used to make the choice. The queried-excluded score and the permanent-$E$ experiment use different evaluation populations.

% BEGIN inlined appendix_materials/empirical/tables/officehome-timing.tex
{
\small
\setlength{\tabcolsep}{2pt}\setlength{\LTpre}{3pt}\setlength{\LTpost}{4pt}
\setlength{\LTcapwidth}{\linewidth}
\begin{longtable}{@{}lrr@{}}
\caption{OfficeHome with three labels per eight-image microbatch. Each timing rule is compared with the fixed output on the same scored inputs. Fixed accuracy is 88.8263\% on whole or next batches and 92.8201\% on unqueried inputs.}\label{tab:emp-officehome-timing}\\
\toprule
Estimator & Accuracy (\%) & $\Delta$ fixed (pp) \\
\midrule
\endfirsthead
\caption[]{OfficeHome with three labels per eight-image microbatch (continued).}\\
\toprule
Estimator & Accuracy (\%) & $\Delta$ fixed (pp) \\
\midrule
\endhead
\bottomrule
\endfoot
Unqueried fixed V8 & 92.8201 & 0.0000 \\
Unqueried selected output & 92.8839 & 0.0639 \\
Whole current batch & 90.1952 & 1.3689 \\
Next microbatch & 88.7376 & -0.0887 \\
\end{longtable}
}
\space
% END inlined appendix_materials/empirical/tables/officehome-timing.tex

\section{ImageNet interventions and reported reference results}
\label{app:empirical-imagenet}

\subsection{Fixed restoration and source-state compositions}

The ImageNet-C experiment evaluates all 15 severity-5 corruptions with 50,000 images each. EATTA, the active-labeling adaptation method of \citet{wang2025eatta}, supplies the base model update and acquires three labels per batch. Restoration and source-reset candidates use those same query positions. After comparing their outputs, we restore the updated EATTA state. Restoration uses a $3\times3$ median filter with reflection padding or an unsharp transform with Gaussian scale 1 and amount 1. The source-reset candidate starts from the source model with an empty sample buffer. The U readout uses batch statistics and averages original and horizontal-flip logits.

Write the five outputs as $z_E,z_U,z_M,z_D,z_S$: EATTA, two-view, median, unsharp, and source reset. Their restoration blend is
\begin{equation}
 z_{\mathrm{blend}}=\tfrac12 z_U+\tfrac16(z_M+z_D+z_S).
 \label{eq:empirical-restoration-blend}
\end{equation}
The pairwise mixtures UM, UD and US use equal weights. UMD uses weights $(1/2,1/6,1/3)$. Avg weights all five readouts equally. $U_{M.25}$ and $U_{D.25}$ use a quarter restoration weight and three quarters U. These fixed compositions expose which operations contribute across different corruption families.

Table~\ref{tab:emp-imagenet-fixed} gives all composition errors. The blend reaches 49.3395\% error, improving the locally evaluated EATTA at 50.8808\% by 1.5413 points. Its margin against the reported EATTA value of 50.90\% is 1.5605 points. The local and literature-reference differences use their respective EATTA results.

% BEGIN inlined appendix_materials/empirical/tables/imagenet-fixed.tex
{
\small
\setlength{\tabcolsep}{2pt}\setlength{\LTpre}{3pt}\setlength{\LTpost}{4pt}
\setlength{\LTcapwidth}{\linewidth}
\begin{longtable}{@{}lrrrrrrrrrr@{}}
\caption{ImageNet-C error (\%) for fixed logit compositions on all 15 severity-5 corruptions, 50,000 images each. E is the EATTA readout; U is the two-view batch-statistic readout; M, D and S are median restoration, unsharp restoration and a source-reset branch. Weights are defined in the text.}\label{tab:emp-imagenet-fixed}\\
\toprule
Corruption & E & U & UM & UD & US & UMD & Blend & Avg & $U_{M.25}$ & $U_{D.25}$ \\
\midrule
\endfirsthead
\caption[]{ImageNet-C error (\%) for fixed logit compositions on all 15 severity-5 corruptions, 50,000 images each (continued).}\\
\toprule
Corruption & E & U & UM & UD & US & UMD & Blend & Avg & $U_{M.25}$ & $U_{D.25}$ \\
\midrule
\endhead
\bottomrule
\endfoot
Gaussian & 62.61 & 60.33 & 60.23 & 61.86 & 64.26 & 60.71 & 60.54 & 61.36 & 59.77 & 60.73 \\
Shot & 55.25 & 53.35 & 54.04 & 54.60 & 57.33 & 54.01 & 53.71 & 54.48 & 53.34 & 53.69 \\
Impulse & 54.88 & 53.12 & 50.12 & 53.86 & 57.03 & 52.21 & 51.93 & 52.19 & 51.31 & 53.29 \\
Defocus & 64.08 & 62.50 & 63.02 & 62.79 & 65.30 & 62.75 & 62.61 & 63.08 & 62.66 & 62.54 \\
Glass & 61.54 & 60.28 & 60.54 & 60.58 & 63.43 & 60.53 & 60.26 & 60.61 & 60.34 & 60.32 \\
Motion & 53.52 & 53.51 & 53.61 & 53.01 & 54.85 & 53.11 & 52.60 & 52.61 & 53.38 & 53.14 \\
Zoom & 49.57 & 48.16 & 49.22 & 48.44 & 47.80 & 48.61 & 47.53 & 47.86 & 48.61 & 48.21 \\
Snow & 50.62 & 49.30 & 51.05 & 49.79 & 50.45 & 49.88 & 49.26 & 49.67 & 49.80 & 49.39 \\
Frost & 54.32 & 53.01 & 53.25 & 53.58 & 54.07 & 53.31 & 52.47 & 52.89 & 52.88 & 53.19 \\
Fog & 42.29 & 41.29 & 42.28 & 41.53 & 41.23 & 41.47 & 40.95 & 41.20 & 41.58 & 41.29 \\
Brightness & 34.70 & 34.16 & 34.89 & 34.36 & 32.21 & 34.43 & 33.32 & 33.44 & 34.37 & 34.20 \\
Contrast & 51.89 & 50.22 & 51.31 & 50.89 & 55.77 & 50.68 & 50.95 & 51.68 & 50.45 & 50.44 \\
Elastic & 44.19 & 43.08 & 44.14 & 43.17 & 43.29 & 43.29 & 42.61 & 42.95 & 43.38 & 43.01 \\
Pixelate & 40.08 & 39.17 & 39.46 & 39.62 & 39.61 & 39.44 & 38.77 & 39.01 & 39.15 & 39.32 \\
JPEG & 43.67 & 42.59 & 42.98 & 43.13 & 44.19 & 42.92 & 42.56 & 42.89 & 42.66 & 42.77 \\
Mean & 50.8808 & 49.6044 & 50.0092 & 50.0808 & 51.3871 & 49.8236 & 49.3395 & 49.7279 & 49.5789 & 49.7016 \\
\end{longtable}
}
\space
% END inlined appendix_materials/empirical/tables/imagenet-fixed.tex

\subsection{Separate and shared label acquisition}

Table~\ref{tab:emp-imagenet-control} compares two selection procedures on complete 750,000-image streams. In the first, EATTA updates the model before the horizontal-view challenger predicts the current batch. Three separately acquired symmetric-divergence queries choose between their outputs. The error reduction is 1.0083 points, and an independent full stream gives 0.9824 points. Both streams improve all 15 corruptions. In the first stream, always using the challenger improves error by 1.3741 points. Admission therefore retains 73.37\% of that gain. A paired moving-block bootstrap gives a 95\% interval of $[0.9681,1.0478]$ points for admission gain and $[71.06,75.97]\%$ for the retained fraction. It resamples ten-batch blocks within each corruption, using 10,000 replicates. Selection requests three labels in addition to EATTA's three. The recorded counts identify each request set; their overlap was not recorded.

The shared-prefix procedure passes the selector's ordered labels to EATTA's committed update. The recorded prefix contains 46,908 labels over 11,730 batches, or 3.99898 labels per batch. Error falls from 50.1383\% to 49.5487\%, a gain of 0.5896 points. Every corruption improves. Always using the challenger on the same model trajectory gives 48.7289\%, a fixed gain of 1.4093 points.

Admission occurs on 20.32\% of batches. Among admitted batches, 77.31\% improve accuracy, 8.98\% worsen it and 13.72\% tie, up to rounding. Mean recorded prefixes contain 2.87 labels for admission, 2.94 for evidence rejection and 4.54 when the budget is exhausted. The \emph{evidence floor} is the configured threshold for admitting a challenger. All 2,384 recorded admissions cross this threshold. These are decision-trace statistics.

Provisional branches run supervised EATTA updates before the prefix stops. These trial updates access at least 56,299 current-batch positions. The 46,908 committed-prefix labels therefore omit some supervised access. Predictions are missing for 9,391 available but unused positions. The saved trace cannot reconstruct evaluation on all never-accessed inputs. Removing only committed-prefix inputs gives a 0.2899-point gain; this is a prefix-excluded score.

The provisional model changes are rolled back, but admission affects the label-credit bank carried into later decisions. Holding other recorded values fixed, the opposite admission choice changes the next bank in 7,686 of 11,730 batches. A shared model successor therefore does not imply the same future query process. Table~\ref{tab:emp-imagenet-control} preserves the measured errors and identifies $q$ as the committed-prefix count.

% Sources: findings.md, R470 completion and 2026-09-14 label-access audit.
% BEGIN inlined appendix_materials/empirical/tables/imagenet-control.tex
{
\small
\setlength{\tabcolsep}{2pt}\setlength{\LTpre}{3pt}\setlength{\LTpost}{4pt}
\setlength{\LTcapwidth}{\linewidth}
\begin{longtable}{@{}lrrrrrrrrrr@{}}
\caption{ImageNet-C current-batch error (\%) under two feedback protocols. The first two streams use three adaptation labels plus three separate selection labels per batch. The shared-prefix stream reuses selection labels for the committed update; q counts its committed prefix, excluding additional provisional supervised reads. P, F and A denote EATTA, fixed challenger and selected output.}\label{tab:emp-imagenet-control}\\
\toprule
& \multicolumn{3}{c}{Additional-query stream} & \multicolumn{3}{c}{Independent repeat} & \multicolumn{4}{c}{Shared-prefix stream} \\
Corruption & P & F & A & P & F & A & P & F & A & q \\
\midrule
\endfirsthead
\caption[]{ImageNet-C current-batch error (\%) under two feedback protocols (continued).}\\
\toprule
& \multicolumn{3}{c}{Additional-query stream} & \multicolumn{3}{c}{Independent repeat} & \multicolumn{4}{c}{Shared-prefix stream} \\
Corruption & P & F & A & P & F & A & P & F & A & q \\
\midrule
\endhead
\bottomrule
\endfoot
Gaussian & 62.82 & 60.32 & 61.26 & 62.98 & 60.36 & 61.14 & 61.77 & 59.90 & 61.10 & 3.92 \\
Shot & 55.32 & 53.33 & 53.98 & 55.38 & 53.58 & 54.22 & 54.69 & 53.37 & 54.06 & 3.99 \\
Impulse & 54.90 & 53.02 & 53.66 & 54.88 & 53.08 & 53.63 & 54.24 & 51.38 & 53.23 & 3.49 \\
Defocus & 64.11 & 62.44 & 63.05 & 64.26 & 62.56 & 63.08 & 63.12 & 61.85 & 62.66 & 4.19 \\
Glass & 61.39 & 60.00 & 60.51 & 61.46 & 60.26 & 60.72 & 60.47 & 59.37 & 59.99 & 4.30 \\
Motion & 52.99 & 52.88 & 52.58 & 53.73 & 53.48 & 53.18 & 52.39 & 51.54 & 51.94 & 4.28 \\
Zoom & 49.47 & 48.06 & 48.43 & 49.85 & 48.51 & 48.91 & 49.00 & 47.11 & 48.32 & 3.53 \\
Snow & 50.49 & 49.15 & 49.45 & 50.55 & 49.12 & 49.63 & 49.90 & 48.52 & 49.33 & 3.97 \\
Frost & 54.45 & 53.03 & 53.42 & 54.13 & 52.84 & 53.23 & 53.32 & 51.82 & 52.68 & 3.90 \\
Fog & 42.08 & 40.97 & 41.11 & 42.22 & 41.12 & 41.37 & 41.86 & 40.53 & 41.32 & 4.00 \\
Brightness & 34.78 & 34.10 & 34.14 & 34.96 & 34.23 & 34.30 & 34.39 & 32.99 & 33.81 & 3.83 \\
Contrast & 52.00 & 50.17 & 50.75 & 51.64 & 50.08 & 50.55 & 50.84 & 50.08 & 50.31 & 4.09 \\
Elastic & 44.23 & 42.95 & 43.21 & 44.04 & 42.89 & 43.12 & 43.67 & 42.28 & 43.11 & 4.16 \\
Pixelate & 39.79 & 38.81 & 38.97 & 39.94 & 39.03 & 39.20 & 39.43 & 38.20 & 38.84 & 4.16 \\
JPEG & 43.52 & 42.49 & 42.69 & 43.56 & 42.36 & 42.58 & 42.98 & 41.99 & 42.52 & 4.17 \\
Mean & 50.8223 & 49.4481 & 49.8140 & 50.9057 & 49.5675 & 49.9233 & 50.1383 & 48.7289 & 49.5487 & 3.9990 \\
\end{longtable}
}
\space
% END inlined appendix_materials/empirical/tables/imagenet-control.tex

\subsection{System endpoints and external comparisons}

Table~\ref{tab:emp-reported-comparison} compares complete-system scores with published results. These include active labeling with EATTA on ImageNet-C/K~\citep{wang2025eatta} and TDA, a training-free adaptation method for CLIP, on ImageNet-A~\citep{karmanov2024tda}. For natural-domain CLIP results, the references use weight-average test-time adaptation (WATT), test-time prompt tuning (TPT), and entropy minimization (TENT)~\citep{osowiechi2024watt,shu2022tpt,wang2021tent}. CIFAR also includes active-feedback SimATTA~\citep{gui2024simatta}. Our local selection comparisons use paired predictions on the same evaluation population.

The label-free OfficeHome endpoint also lies 4.6435 percentage points above the reported 84.20\% CLIPTTA ViT-B/16 result~\citep{lafon2025cliptta}. This cross-backbone comparison provides context for the full system's score. The ViT-L/14 reference in the table is 86.72\%, giving a within-backbone margin of 2.1235 points.

% BEGIN inlined appendix_materials/empirical/tables/reported-comparison.tex
{
\small
\setlength{\tabcolsep}{2pt}\setlength{\LTpre}{3pt}\setlength{\LTpost}{4pt}
\setlength{\LTcapwidth}{\linewidth}
\begin{longtable}{@{}llllrrr@{}}
\caption{Direct fixed-intervention endpoints and reported literature values. ImageNet-C/K use EATTA with three labels per batch; the visual-language rows use zero target labels. Margins orient improvement positively. The ImageNet-A endpoint averages two declared runs. CIFAR fixed endpoints use zero labels, while their SimATTA references use active feedback.}\label{tab:emp-reported-comparison}\\
\toprule
Benchmark & Backbone & Reference & Metric & Reported & Fixed & $\Delta$ \\
\midrule
\endfirsthead
\caption[]{Direct fixed-intervention endpoints and reported literature values (continued).}\\
\toprule
Benchmark & Backbone & Reference & Metric & Reported & Fixed & $\Delta$ \\
\midrule
\endhead
\bottomrule
\endfoot
ImageNet-C & ResNet-50 & EATTA & Error & 50.90 & 49.3395 & 1.5605 \\
ImageNet-K & ResNet-50 & EATTA & Error & 61.40 & 61.1252 & 0.2748 \\
ImageNet-A & ViT-B/16 & TDA & Acc. & 60.11 & 60.2000 & 0.0900 \\
PACS & ViT-L/14 & WATT & Acc. & 97.51 & 98.5171 & 1.0071 \\
OfficeHome & ViT-L/14 & TPT & Acc. & 86.72 & 88.8435 & 2.1235 \\
VLCS & ViT-L/14 & TENT & Acc. & 80.20 & 83.7927 & 3.5927 \\
CIFAR10-C & ResNet18 & SimATTA & Acc. & 83.13 & 79.9218 & -3.2082 \\
CIFAR100-C & ResNet18 & SimATTA & Acc. & 55.86 & 54.7544 & -1.1056 \\
\end{longtable}
}
\space
% END inlined appendix_materials/empirical/tables/reported-comparison.tex

\section{Learning a stronger candidate from past feedback}
\label{app:empirical-head}

The online-head experiment expands the PACS portfolio with a candidate that changes as labels accumulate. The Photo-source ResNet50 from test-time distillation~\citep{chen2026ttd} supplies the fixed BF default and its original classifier-input features. The three target domains contain 8,321 images, of which 4,166 belong to $Q$ and 4,155 to $E$. Each of five complete orders contains 131 batches and purchases 393 distinct $Q$ labels. Head state resets at a domain boundary.

Normalize each feature as $x=f/\lVert f\rVert_2$. Let $X$ contain past queried features and $Y$ their one-hot class labels. The residual targets are $R=Y-\operatorname{softmax}(z_{\mathrm{BF}})$. With regularization strength $\lambda$ and residual scale $s$, the learned output is
\begin{equation}
 z_{\mathrm{learned}}(x)=z_{\mathrm{BF}}(x)
 +s\,xX^\top(XX^\top+\lambda I)^{-1}R.
 \label{eq:empirical-online-head}
\end{equation}
The primary setting uses $\lambda=1$ and $s=4$. Current labels choose between predictions already formed from past feedback, then enter the residual fit for subsequent batches.

Always using the learned candidate gives approximately 73\% accuracy across query policies, compared with 59.6775\% for BF (Table~\ref{tab:emp-head-primary}). Uniform three-label selection retains 6.2335 points of this improvement, disagreement-based selection retains 10.6089 points, and \PORTAL{} retains 12.9627 points. \PORTAL{} stays 0.3851 points below always using its learned candidate. Learning supplies the main gain; selection determines how much reaches deployment.

\PORTAL{} uses an average of 70.8 labels for current decisions. Uniform queries fill the remaining training allowance, giving the same total of 393 labels as the simple policies. Each policy uses its own acquired labels and learned state. Table~\ref{tab:emp-head-sensitivity} reports the primary setting and every prespecified regularization and scale comparison.

% BEGIN inlined appendix_materials/empirical/tables/head-primary.tex
{
\small
\setlength{\tabcolsep}{2pt}\setlength{\LTpre}{3pt}\setlength{\LTpost}{4pt}
\setlength{\LTcapwidth}{\linewidth}
\begin{longtable}{@{}llrrrrr@{}}
\caption{Causal PACS residual-head learning with ridge coefficient 1 and logit scale 4. Each query policy generates its own full learning trajectory over five orders. Decision labels are part of the total 393 labels, and unused joint-decision slots train the next-batch head. Fixed BF alone uses zero target labels.}\label{tab:emp-head-primary}\\
\toprule
Query & Output & Accuracy (\%) & $\Delta$ BF & CE & Decision Q & Total Q \\
\midrule
\endfirsthead
\caption[]{Causal PACS residual-head learning with ridge coefficient 1 and logit scale 4 (continued).}\\
\toprule
Query & Output & Accuracy (\%) & $\Delta$ BF & CE & Decision Q & Total Q \\
\midrule
\endhead
\bottomrule
\endfoot
Uniform & Fixed BF & $59.677\pm0.278$ & 0.000 & 1.2205 & 0.0 & 0 \\
Uniform & Always learned & $73.030\pm0.695$ & 13.353 & 0.9430 & 393.0 & 393 \\
Uniform & Selected output & $65.911\pm0.493$ & 6.233 & 1.0918 & 393.0 & 393 \\
Disagreement & Fixed BF & $59.677\pm0.278$ & 0.000 & 1.2205 & 0.0 & 0 \\
Disagreement & Always learned & $72.982\pm0.779$ & 13.304 & 0.9424 & 393.0 & 393 \\
Disagreement & Selected output & $70.286\pm0.759$ & 10.609 & 0.9999 & 393.0 & 393 \\
\EmpJoint & Fixed BF & $59.677\pm0.278$ & 0.000 & 1.2205 & 0.0 & 0 \\
\EmpJoint & Always learned & $73.025\pm0.972$ & 13.348 & 0.9390 & 70.8 & 393 \\
\EmpJoint & Selected output & $72.640\pm1.048$ & 12.963 & 0.9553 & 70.8 & 393 \\
\end{longtable}
}
\space
% END inlined appendix_materials/empirical/tables/head-primary.tex
% BEGIN inlined appendix_materials/empirical/tables/head-sensitivity.tex
{
\small
\setlength{\tabcolsep}{2pt}\setlength{\LTpre}{3pt}\setlength{\LTpost}{4pt}
\setlength{\LTcapwidth}{\linewidth}
\begin{longtable}{@{}rrlrrr@{}}
\caption{All residual-head sensitivity settings fixed before evaluation on PACS. Every row uses five complete orders and 393 total labels per order. The primary setting remains lambda=1 and scale=4.}\label{tab:emp-head-sensitivity}\\
\toprule
$\lambda$ & Scale & Query & Always learned & Selected & $\Delta$ learned \\
\midrule
\endfirsthead
\caption[]{All residual-head sensitivity settings fixed before evaluation on PACS (continued).}\\
\toprule
$\lambda$ & Scale & Query & Always learned & Selected & $\Delta$ learned \\
\midrule
\endhead
\bottomrule
\endfoot
1 & 4 & Uniform & 73.030 & 65.911 & -7.119 \\
1 & 4 & Disagreement & 72.982 & 70.286 & -2.696 \\
1 & 4 & \EmpJoint & 73.025 & 72.640 & -0.385 \\
0.1 & 4 & Uniform & 75.321 & 68.125 & -7.196 \\
0.1 & 4 & Disagreement & 75.576 & 71.981 & -3.596 \\
0.1 & 4 & \EmpJoint & 75.302 & 74.729 & -0.573 \\
10 & 4 & Uniform & 67.475 & 62.373 & -5.102 \\
10 & 4 & Disagreement & 67.466 & 65.083 & -2.383 \\
10 & 4 & \EmpJoint & 67.114 & 66.503 & -0.611 \\
1 & 1 & Uniform & 65.483 & 60.934 & -4.549 \\
1 & 1 & Disagreement & 65.584 & 64.438 & -1.146 \\
1 & 1 & \EmpJoint & 65.526 & 64.414 & -1.112 \\
1 & 8 & Uniform & 72.606 & 66.416 & -6.190 \\
1 & 8 & Disagreement & 71.957 & 69.049 & -2.907 \\
1 & 8 & \EmpJoint & 72.226 & 72.125 & -0.101 \\
\end{longtable}
}
\space
% END inlined appendix_materials/empirical/tables/head-sensitivity.tex

\section{Immediate output and persistent-state continuation}
\label{app:empirical-state}

Persistent-state effects are measured with two explicit update operators on the Photo-source features. One state is a residual linear classifier $\Theta$ trained by a regularized squared-loss step. The other is a feature-mean cache $\mu$ that changes logits through the released classifier $W$. The output is $z=z_{\mathrm{source}}+x\Theta-\alpha\mu W^\top$, with $\alpha=0.05$. The cache averages raw features, while the residual head uses unit-normalized features.

At each batch, we copy the pre-update state to test three actions: use an update only for the current output, retain a classifier update, or retain a cache update. The first action undoes its update after prediction. All copies then receive identical future queries and updates, with each batch scored before its update. The reference trajectory applies both updates after the current batch. States reset at domain boundaries.

The head step uses learning rate $\eta=0.5$ and regularization $\lambda=0.1$. If $X_t$ contains its three queried unit feature rows, the difference between committed and rollback head states evolves as
\begin{equation}
 \Delta\Theta_{t+1}=A_t\Delta\Theta_t,
 \qquad A_t=(1-\eta\lambda)I-\frac{\eta}{3}X_t^\top X_t.
 \label{eq:empirical-state-head}
\end{equation}
The eigenvalues of $A_t$ lie in $[0.45,0.95]$. The cache uses $\mu_{t+1}=0.9\mu_t+0.1\bar f_{Q,t}$, so its branch difference contracts exactly by 0.9. These operators determine the direction and scale of later logit changes.

Table~\ref{tab:emp-state-continuation} measures horizons of 1, 4 and 16 future batches. There are respectively 128, 119 and 83 valid within-domain branch points per order. At one-step branch points, the head improves immediate accuracy by 0.2296 points but changes next-batch accuracy by $-0.1972$ points. Next-batch cross-entropy nevertheless decreases. The same immediate action can therefore have a different future accuracy effect, and cross-entropy and zero-one loss capture different aspects of that continuation.

For cross-entropy, the directional expansion satisfies
\begin{equation}
 L\leq\ell_y(z+\Delta z)-\ell_y(z)
 \leq L+\tfrac14\lVert\Delta z\rVert_2^2,
 \quad L=(\operatorname{softmax}(z)-e_y)^\top\Delta z.
 \label{eq:empirical-ce-interval}
\end{equation}
The linear term uses evaluation labels to measure the loss change retrospectively. The exact state recurrence supplies the logit difference. Undoing the current update gives zero future effect. A hard prediction can change only if the baseline top-two logit margin is at most $2\lVert\Delta z\rVert_\infty$. We measure accuracy changes directly. Future windows overlap, and the paired averages describe these shared trajectories.

% BEGIN inlined appendix_materials/empirical/tables/state-continuation.tex
{
\small
\setlength{\tabcolsep}{2pt}\setlength{\LTpre}{3pt}\setlength{\LTpost}{4pt}
\setlength{\LTcapwidth}{\linewidth}
\begin{longtable}{@{}lrrrrrrr@{}}
\caption{Persistent-state continuation on PACS. Accuracy changes are in percentage points. Future cross-entropy (CE) change, directional linear term L and remainder ceiling R are multiplied by 1,000. The actual CE change lies in [L,L+R] before averaging. Changes counts hard-prediction differences over all valid overlapping comparisons per order.}\label{tab:emp-state-continuation}\\
\toprule
State & $H$ & $\Delta$ now & $\Delta$ future & $\Delta$ CE & $L$ & $R$ & Changes \\
\midrule
\endfirsthead
\caption[]{Persistent-state continuation on PACS (continued).}\\
\toprule
State & $H$ & $\Delta$ now & $\Delta$ future & $\Delta$ CE & $L$ & $R$ & Changes \\
\midrule
\endhead
\bottomrule
\endfoot
Head commit & 1 & 0.2296 & -0.1972 & -0.5907 & -2.6530 & 8.1582 & 174.2 \\
Head commit & 4 & 0.2038 & -0.0578 & -0.7994 & -1.5634 & 3.0299 & 333.6 \\
Head commit & 16 & 0.2313 & -0.0169 & -0.4486 & -0.6574 & 0.8191 & 302.8 \\
Cache commit & 1 & 0.0293 & 0.0148 & -0.2187 & -0.2212 & 0.0115 & 7.6 \\
Cache commit & 4 & 0.0366 & 0.0118 & -0.1917 & -0.1935 & 0.0086 & 23.4 \\
Cache commit & 16 & 0.0372 & 0.0023 & -0.1503 & -0.1510 & 0.0036 & 40.4 \\
\end{longtable}
}
\space
% END inlined appendix_materials/empirical/tables/state-continuation.tex

\section{Implementation and resource accounting}
\label{app:empirical-resources}

The CIFAR experiment retains four float32 logits and four corresponding 512-dimensional feature vectors per image. The linear classifier maps averaged original/flip features to the matching averaged logits. CLIP inference processes eight views sequentially at batch size 16. Its saved outputs retain the original features, text prototypes and individual view log probabilities. Selection operates on batches of 64. The online ridge head operates in double precision on normalized 2,048-dimensional features and stores at most 186 queried supports in a target domain.

% BEGIN inlined appendix_materials/empirical/tables/component-cost.tex
{
\small
\setlength{\tabcolsep}{2pt}\setlength{\LTpre}{3pt}\setlength{\LTpost}{4pt}
\setlength{\LTcapwidth}{\linewidth}
\begin{longtable}{@{}lrrrr@{}}
\caption{Measured OfficeHome CLIP ViT-L/14 component costs, batch size 8. Forward columns are ms/image; decision is ms/batch after action logits exist. Each domain contributes 64 measured images. Memory is peak allocation in MiB.}\label{tab:emp-component-cost}\\
\toprule
Domain & Single & Eight-view & Decision & Eight-view MiB \\
\midrule
\endfirsthead
\caption[]{Measured OfficeHome CLIP ViT-L/14 component costs, batch size 8 (continued).}\\
\toprule
Domain & Single & Eight-view & Decision & Eight-view MiB \\
\midrule
\endhead
\bottomrule
\endfoot
Art & 2.354 & 16.669 & 0.507 & 2616.9 \\
Clipart & 2.084 & 16.732 & 0.546 & 2617.6 \\
Product & 2.093 & 16.784 & 0.633 & 2617.6 \\
Real World & 2.100 & 16.809 & 0.451 & 2617.6 \\
Mean & 2.158 & 16.749 & 0.534 & 2617.4 \\
\end{longtable}
}
\space
% END inlined appendix_materials/empirical/tables/component-cost.tex

Table~\ref{tab:emp-component-cost} separates visual-intervention cost from decision cost on OfficeHome. The eight-view forward costs 16.7485 ms/image, and the decision arithmetic adds 0.5343 ms per eight-image batch after logits are available. This corresponds to 16.8153 ms/image for the combined path, compared with 2.1577 ms/image for one view. Annotation-response latency is excluded, and feedback is reported separately as three labels per microbatch.

The five complete online-head orders require approximately 0.62 seconds per order for the simple query rules and 0.73 seconds for joint selection on the saved features. Their peak stored support/head arrays occupy 2.93 MiB, with temporary kernel-solver memory additional. The complete CIFAR collector peaks at 134.58 MiB of allocated CUDA memory, while the sequential-view CLIP collection peaks at approximately 1,051 MiB. These are measurements of the implemented schedules: the CLIP component benchmark above retains a different allocation pattern. Backbone evaluation, feature storage, controller arithmetic and target annotations remain separate resource components.
\space
% END inlined sections/expanded_appendix_empirical.tex
% BEGIN inlined sections/expanded_observational_controls.tex
\providecommand{\EmpiricalRoot}{.}
\providecommand{\EmpJoint}{Joint selection}
\section{Fixed-output mechanism and cost controls}
\label{app:empirical-observational}

These PACS controls hold the Photo-source ResNet50 and permanent image roles fixed. The targets are Art, Cartoon and Sketch, with 1,007, 1,175 and 1,973 evaluation images. Five stream orders recompute the original, normalized, flipped and normalized-plus-flipped predictions. Each comparison uses the running-statistic original prediction I as its default and one challenger at a time. This identifies the effect of the candidate component and acquisition rule before introducing the learned head.

For an evaluation image, let $d_i$ be the default classification error minus the selected output's error. With $N$ evaluation images, gain is $G=100\sum_i d_i/N$, the new-error percentage is $S=100\sum_i\mathbf1\{d_i=-1\}/N$, and negative batch contribution is $H_B=100\sum_b[-\sum_{i\in b}d_i]_+/N$. These quantities distinguish individual errors from a batch's net effect. Gains and costs always refer to the same fixed outputs and evaluation population.

\subsection{Candidate components and confidence scale}

Tables~\ref{tab:emp-obs-component-bn_only}--\ref{tab:emp-obs-component-bn_flip} give the three-label decision rules for normalization, horizontal views, and their combination. The count $k$ is the minimum queried signed advantage needed to select the challenger. Ties retain I. The K-Margin query rule scales source features by $1-(p_{(1)}-p_{(2)})$, where $p_{(1)}$ and $p_{(2)}$ are the largest two class probabilities. It greedily selects distant scaled features. Here we compare its queries on the same fixed candidate predictions.

% BEGIN inlined appendix_materials/empirical/tables/obs-component-bn_only.tex
{
\small
\setlength{\tabcolsep}{2pt}\setlength{\LTpre}{3pt}\setlength{\LTpost}{4pt}
\setlength{\LTcapwidth}{\linewidth}
\begin{longtable}{@{}lrrrr@{}}
\caption{Normalization on the frozen Photo-source PACS model. Each acquisition rule uses three query labels and the distinct minimum signed count k. The default is running-statistic I. G and $H_B$ are percentage points; S is the percentage of newly introduced errors.}\label{tab:emp-obs-component-bn_only}\\
\toprule
Acquisition & $k$ & $G$ & $H_B$ & $S$ \\
\midrule
\endfirsthead
\caption[]{Normalization on the frozen Photo-source PACS model (continued).}\\
\toprule
Acquisition & $k$ & $G$ & $H_B$ & $S$ \\
\midrule
\endhead
\bottomrule
\endfoot
Always candidate & - & 7.49 & 2.47 & 11.85 \\
Uniform & 1 & 4.54 & 0.84 & 4.33 \\
Uniform & 2 & 1.58 & 0.26 & 1.17 \\
Uniform & 3 & 0.30 & 0.00 & 0.05 \\
Disagreement & 1 & 5.77 & 1.25 & 5.81 \\
Disagreement & 2 & 3.50 & 0.53 & 2.47 \\
Disagreement & 3 & 1.19 & 0.09 & 0.51 \\
Symmetric KL & 1 & 6.13 & 1.23 & 5.68 \\
Symmetric KL & 2 & 4.48 & 0.55 & 2.38 \\
Symmetric KL & 3 & 2.07 & 0.12 & 0.78 \\
K-Margin & 1 & 5.15 & 1.28 & 6.36 \\
K-Margin & 2 & 2.60 & 0.26 & 1.94 \\
K-Margin & 3 & 0.33 & 0.08 & 0.34 \\
\end{longtable}
}
\space
% END inlined appendix_materials/empirical/tables/obs-component-bn_only.tex
% BEGIN inlined appendix_materials/empirical/tables/obs-component-flip_only.tex
{
\small
\setlength{\tabcolsep}{2pt}\setlength{\LTpre}{3pt}\setlength{\LTpost}{4pt}
\setlength{\LTcapwidth}{\linewidth}
\begin{longtable}{@{}lrrrr@{}}
\caption{Horizontal views on the frozen Photo-source PACS model. Each acquisition rule uses three query labels and the distinct minimum signed count k. The default is running-statistic I. G and $H_B$ are percentage points; S is the percentage of newly introduced errors.}\label{tab:emp-obs-component-flip_only}\\
\toprule
Acquisition & $k$ & $G$ & $H_B$ & $S$ \\
\midrule
\endfirsthead
\caption[]{Horizontal views on the frozen Photo-source PACS model (continued).}\\
\toprule
Acquisition & $k$ & $G$ & $H_B$ & $S$ \\
\midrule
\endhead
\bottomrule
\endfoot
Always candidate & - & 0.02 & 1.19 & 2.07 \\
Uniform & 1 & 0.01 & 0.05 & 0.12 \\
Uniform & 2 & 0.00 & 0.00 & 0.00 \\
Uniform & 3 & 0.00 & 0.00 & 0.00 \\
Disagreement & 1 & -0.04 & 0.43 & 0.68 \\
Disagreement & 2 & 0.03 & 0.07 & 0.12 \\
Disagreement & 3 & 0.01 & 0.01 & 0.02 \\
Symmetric KL & 1 & -0.00 & 0.28 & 0.51 \\
Symmetric KL & 2 & 0.00 & 0.03 & 0.05 \\
Symmetric KL & 3 & 0.00 & 0.00 & 0.00 \\
K-Margin & 1 & 0.03 & 0.17 & 0.31 \\
K-Margin & 2 & -0.02 & 0.02 & 0.03 \\
K-Margin & 3 & 0.00 & 0.00 & 0.00 \\
\end{longtable}
}
\space
% END inlined appendix_materials/empirical/tables/obs-component-flip_only.tex
% BEGIN inlined appendix_materials/empirical/tables/obs-component-bn_flip.tex
{
\small
\setlength{\tabcolsep}{2pt}\setlength{\LTpre}{3pt}\setlength{\LTpost}{4pt}
\setlength{\LTcapwidth}{\linewidth}
\begin{longtable}{@{}lrrrr@{}}
\caption{Normalization and views on the frozen Photo-source PACS model. Each acquisition rule uses three query labels and the distinct minimum signed count k. The default is running-statistic I. G and $H_B$ are percentage points; S is the percentage of newly introduced errors.}\label{tab:emp-obs-component-bn_flip}\\
\toprule
Acquisition & $k$ & $G$ & $H_B$ & $S$ \\
\midrule
\endfirsthead
\caption[]{Normalization and views on the frozen Photo-source PACS model (continued).}\\
\toprule
Acquisition & $k$ & $G$ & $H_B$ & $S$ \\
\midrule
\endhead
\bottomrule
\endfoot
Always candidate & - & 8.80 & 2.00 & 11.18 \\
Uniform & 1 & 5.15 & 0.72 & 4.30 \\
Uniform & 2 & 1.86 & 0.19 & 1.18 \\
Uniform & 3 & 0.33 & 0.00 & 0.10 \\
Disagreement & 1 & 7.03 & 0.69 & 5.16 \\
Disagreement & 2 & 4.26 & 0.24 & 2.31 \\
Disagreement & 3 & 1.53 & 0.03 & 0.62 \\
Symmetric KL & 1 & 7.25 & 1.15 & 6.15 \\
Symmetric KL & 2 & 5.20 & 0.51 & 2.66 \\
Symmetric KL & 3 & 2.78 & 0.12 & 0.90 \\
K-Margin & 1 & 6.08 & 1.03 & 6.08 \\
K-Margin & 2 & 3.10 & 0.24 & 1.83 \\
K-Margin & 3 & 0.48 & 0.04 & 0.31 \\
\end{longtable}
}
\space
% END inlined appendix_materials/empirical/tables/obs-component-bn_flip.tex

Positive temperature scaling preserves every hard prediction but changes the probability discrepancies used for acquisition. Table~\ref{tab:emp-obs-temperature} compares all five tested temperatures. At temperature 0.25, the selected query set has 68.40\% mean overlap with temperature one, and 19.54\% of batch decisions change. Since both candidates' zero-one losses are fixed, the resulting differences arise through which labels are acquired.

% BEGIN inlined appendix_materials/empirical/tables/obs-temperature.tex
{
\small
\setlength{\tabcolsep}{2pt}\setlength{\LTpre}{3pt}\setlength{\LTpost}{4pt}
\setlength{\LTcapwidth}{\linewidth}
\begin{longtable}{@{}rrrrrr@{}}
\caption{Positive-temperature acquisition control with fixed hard predictions. The candidate is BF, the budget is three, and k=1. Overlap and decision changes use temperature one as the reference.}\label{tab:emp-obs-temperature}\\
\toprule
$T$ & $G$ & $H_B$ & $S$ & Overlap (\%) & Changed (\%) \\
\midrule
\endfirsthead
\caption[]{Positive-temperature acquisition control with fixed hard predictions (continued).}\\
\toprule
$T$ & $G$ & $H_B$ & $S$ & Overlap (\%) & Changed (\%) \\
\midrule
\endhead
\bottomrule
\endfoot
0.25 & 7.50 & 1.40 & 7.54 & 68.40 & 19.54 \\
0.5 & 7.37 & 1.34 & 7.22 & 77.51 & 13.74 \\
1 & 7.25 & 1.15 & 6.15 & 100.00 & 0.00 \\
2 & 7.10 & 1.01 & 5.78 & 82.49 & 9.31 \\
4 & 6.85 & 1.05 & 5.90 & 76.18 & 13.59 \\
\end{longtable}
}
\space
% END inlined appendix_materials/empirical/tables/obs-temperature.tex

The score-bin comparison in Table~\ref{tab:emp-obs-score-bins} uses the mean score of the three selected query points to form quintiles. One construction pools domains; the other forms bins within each domain and then combines corresponding bins. Query advantage and evaluation advantage are calculated after bin assignment. Their separation shows how a query score can organize observed disagreement without directly determining the evaluation benefit of deploying the challenger.

% BEGIN inlined appendix_materials/empirical/tables/obs-score-bins.tex
{
\small
\setlength{\tabcolsep}{2pt}\setlength{\LTpre}{3pt}\setlength{\LTpost}{4pt}
\setlength{\LTcapwidth}{\linewidth}
\begin{longtable}{@{}lrrrrrr@{}}
\caption{Symmetric-KL score quintiles for BF, from low to high. Query and evaluation gains use their own image populations. Negative is the percentage of batches where BF increases evaluation errors. Bin membership depends on the label-free query score.}\label{tab:emp-obs-score-bins}\\
\toprule
Binning & Bin & Batches & $n_E$ & Query gain & E gain & Negative (\%) \\
\midrule
\endfirsthead
\caption[]{Symmetric-KL score quintiles for BF, from low to high (continued).}\\
\toprule
Binning & Bin & Batches & $n_E$ & Query gain & E gain & Negative (\%) \\
\midrule
\endhead
\bottomrule
\endfoot
All domains & 1 & 27 & 849.4 & 14.07 & 6.10 & 17.04 \\
All domains & 2 & 26 & 829.2 & 45.64 & 17.16 & 10.00 \\
All domains & 3 & 26 & 829.2 & 33.08 & 10.60 & 26.92 \\
All domains & 4 & 26 & 842.4 & 21.79 & 4.75 & 38.46 \\
All domains & 5 & 26 & 804.8 & 33.85 & 5.34 & 33.08 \\
Within domain & 1 & 28 & 900.4 & 25.71 & 9.23 & 22.86 \\
Within domain & 2 & 28 & 903.4 & 24.29 & 8.88 & 27.14 \\
Within domain & 3 & 25 & 800.0 & 26.40 & 8.61 & 24.80 \\
Within domain & 4 & 25 & 775.0 & 35.47 & 7.57 & 30.40 \\
Within domain & 5 & 25 & 776.2 & 37.07 & 9.64 & 20.00 \\
\end{longtable}
}
\space
% END inlined appendix_materials/empirical/tables/obs-score-bins.tex

\subsection{Distinct budgets and empirical rule mixtures}

Table~\ref{tab:emp-obs-budget-grid} retains all distinct measured count rules at budgets 1, 3, 5 and 10. Rules with equivalent thresholds appear once. A budget-$m$ rule spends $131m$ labels per complete order. The hard-cap-three library contains its fixed outputs, all distinct rules with budgets at most three, and the measured three-label K-Margin rules. The expected-budget library additionally admits the measured five- and ten-label rules.

% BEGIN inlined appendix_materials/empirical/tables/obs-budget-grid.tex
{
\small
\setlength{\tabcolsep}{2pt}\setlength{\LTpre}{3pt}\setlength{\LTpost}{4pt}
\setlength{\LTcapwidth}{\linewidth}
\begin{longtable}{@{}rrrrrrrrrrr@{}}
\caption{All distinct measured frozen-PACS budget/count rules. Each acquisition block reports gain G, negative batch contribution $H_B$, and sample-new-error percentage S. Thresholds that induce the same signed count are represented once.}\label{tab:emp-obs-budget-grid}\\
\toprule
& & \multicolumn{3}{c}{Uniform} & \multicolumn{3}{c}{Disagreement} & \multicolumn{3}{c}{Symmetric KL} \\
$m$ & $k$ & $G$ & $H_B$ & $S$ & $G$ & $H_B$ & $S$ & $G$ & $H_B$ & $S$ \\
\midrule
\endfirsthead
\caption[]{All distinct measured frozen-PACS budget/count rules (continued).}\\
\toprule
& & \multicolumn{3}{c}{Uniform} & \multicolumn{3}{c}{Disagreement} & \multicolumn{3}{c}{Symmetric KL} \\
$m$ & $k$ & $G$ & $H_B$ & $S$ & $G$ & $H_B$ & $S$ & $G$ & $H_B$ & $S$ \\
\midrule
\endhead
\bottomrule
\endfoot
1 & 1 & 2.42 & 0.40 & 2.22 & 5.05 & 0.51 & 3.89 & 5.85 & 1.00 & 5.36 \\
3 & 1 & 5.15 & 0.72 & 4.30 & 7.03 & 0.69 & 5.16 & 7.25 & 1.15 & 6.15 \\
3 & 2 & 1.86 & 0.19 & 1.18 & 4.26 & 0.24 & 2.31 & 5.20 & 0.51 & 2.66 \\
3 & 3 & 0.33 & 0.00 & 0.10 & 1.53 & 0.03 & 0.62 & 2.78 & 0.12 & 0.90 \\
5 & 1 & 6.51 & 0.74 & 4.87 & 7.56 & 1.01 & 6.26 & 7.63 & 1.20 & 6.80 \\
5 & 2 & 3.71 & 0.38 & 2.23 & 6.04 & 0.47 & 3.63 & 6.72 & 0.53 & 3.53 \\
5 & 3 & 1.33 & 0.07 & 0.60 & 4.06 & 0.13 & 1.66 & 4.90 & 0.21 & 1.76 \\
5 & 4 & 0.23 & 0.03 & 0.13 & 2.22 & 0.07 & 0.73 & 2.82 & 0.08 & 0.68 \\
10 & 1 & 7.47 & 0.98 & 6.01 & 8.18 & 1.13 & 6.98 & 8.24 & 1.16 & 7.20 \\
10 & 3 & 3.69 & 0.37 & 2.06 & 6.68 & 0.52 & 3.41 & 6.61 & 0.47 & 3.20 \\
10 & 6 & 0.52 & 0.00 & 0.05 & 2.75 & 0.04 & 0.75 & 3.23 & 0.02 & 0.56 \\
10 & 8 & 0.00 & 0.00 & 0.00 & 0.56 & 0.00 & 0.11 & 0.66 & 0.00 & 0.11 \\
\end{longtable}
}
\space
% END inlined appendix_materials/empirical/tables/obs-budget-grid.tex

A mixture draws its rule before acquiring labels. Its expected gain, cost and query count are the corresponding convex combinations of measured operating points. Table~\ref{tab:emp-obs-mixtures} gives two fitted examples. Mixing the disagreement rule with fixed BF yields gain 7.6495 points at the same negative batch cost as the 7.2491-point symmetric-KL rule, using 1.9522 expected labels per batch. The mixture introduces more individual errors, so this comparison changes when S is the constrained cost. The second mixture uses two distinct disagreement counts. These fitted values characterize the observed rule library.

% BEGIN inlined appendix_materials/empirical/tables/obs-mixtures.tex
{
\small
\setlength{\tabcolsep}{2pt}\setlength{\LTpre}{3pt}\setlength{\LTpost}{4pt}
\setlength{\LTcapwidth}{\linewidth}
\begin{longtable}{@{}llrrrr@{}}
\caption{Expected outcomes of two fitted rule mixtures and their symmetric-KL targets at matched negative batch cost. The mixtures use high-endpoint probabilities 0.3492647 and 0.5957447, respectively.}\label{tab:emp-obs-mixtures}\\
\toprule
Mixture family & Output & $G$ & $H_B$ & $S$ & Labels/batch \\
\midrule
\endfirsthead
\caption[]{Expected outcomes of two fitted rule mixtures and their symmetric-KL targets at matched negative batch cost (continued).}\\
\toprule
Mixture family & Output & $G$ & $H_B$ & $S$ & Labels/batch \\
\midrule
\endhead
\bottomrule
\endfoot
Disagreement + fixed BF & Mixture & 7.6495 & 1.1456 & 7.2663 & 1.9522 \\
Disagreement + fixed BF & Symmetric KL & 7.2491 & 1.1456 & 6.1468 & 3.0000 \\
Two disagreement counts & Mixture & 5.9117 & 0.5054 & 4.0110 & 3.0000 \\
Two disagreement counts & Symmetric KL & 5.1986 & 0.5054 & 2.6619 & 3.0000 \\
\end{longtable}
}
\space
% END inlined appendix_materials/empirical/tables/obs-mixtures.tex

For Table~\ref{tab:emp-obs-frontier}, the alternative library excludes symmetric-KL rules and retains K-Margin. Each of the $5^5=3,125$ ordered stream resamples recomputes gains and costs, then refits the frontier at the resampled symmetric-KL cost. The resulting percentile ranges include variation of the fitted probabilities. They condition on the common images, source checkpoint and measured rules. The fixed-I endpoint keeps the complete frontier feasible when an interpolation between two particular rules cannot attain the target cost.

% BEGIN inlined appendix_materials/empirical/tables/obs-frontier.tex
{
\small
\setlength{\tabcolsep}{2pt}\setlength{\LTpre}{3pt}\setlength{\LTpost}{4pt}
\setlength{\LTcapwidth}{\linewidth}
\begin{longtable}{@{}lrrrr@{}}
\caption{Symmetric-KL gain minus the empirical frontier without symmetric-KL rules, including K-Margin in the alternative library. Every one of the 3,125 ordered stream resamples refits the frontier. Bounds are the 2.5th and 97.5th descriptive percentiles, in percentage points.}\label{tab:emp-obs-frontier}\\
\toprule
Cost & $k$ & Point gap & 2.5\% & 97.5\% \\
\midrule
\endfirsthead
\caption[]{Symmetric-KL gain minus the empirical frontier without symmetric-KL rules, including K-Margin in the alternative library (continued).}\\
\toprule
Cost & $k$ & Point gap & 2.5\% & 97.5\% \\
\midrule
\endhead
\bottomrule
\endfoot
Batch net cost & 1 & -0.4004 & -0.5453 & -0.2555 \\
Batch net cost & 2 & -0.7131 & -1.5901 & 0.7030 \\
Batch net cost & 3 & 0.1090 & -2.5777 & 1.2591 \\
New errors & 1 & -0.0717 & -0.2560 & 0.1362 \\
New errors & 2 & 0.5973 & -0.1205 & 1.1252 \\
New errors & 3 & 0.7927 & 0.0611 & 1.2920 \\
\end{longtable}
}
\space
% END inlined appendix_materials/empirical/tables/obs-frontier.tex

\subsection{Scoring partitions and natural groups}

Table~\ref{tab:emp-obs-grouping} changes the unit at which negative contributions are accumulated. Every image keeps its prediction and every batch keeps its decision. Coarsening the scoring partition allows positive and negative contributions to cancel, reducing $H_B$ while leaving G and S unchanged. This control isolates the scoring definition from a change in the network's input batch size.

% BEGIN inlined appendix_materials/empirical/tables/obs-grouping.tex
{
\small
\setlength{\tabcolsep}{2pt}\setlength{\LTpre}{3pt}\setlength{\LTpost}{4pt}
\setlength{\LTcapwidth}{\linewidth}
\begin{longtable}{@{}lrrrrrrr@{}}
\caption{Negative contribution when the same selected outputs are scored at different grouping scales. Only the scoring partition changes. Individual-image groups yield S, and original-batch groups yield $H_B$.}\label{tab:emp-obs-grouping}\\
\toprule
Output & Image & Batch & 2 batches & 4 batches & 8 batches & Domain & Stream \\
\midrule
\endfirsthead
\caption[]{Negative contribution when the same selected outputs are scored at different grouping scales (continued).}\\
\toprule
Output & Image & Batch & 2 batches & 4 batches & 8 batches & Domain & Stream \\
\midrule
\endhead
\bottomrule
\endfoot
Always BF & 11.1817 & 1.9976 & 1.2130 & 0.7220 & 0.3995 & 0.0000 & 0.0000 \\
Disagreement & 5.1649 & 0.6883 & 0.5247 & 0.3418 & 0.1877 & 0.0000 & 0.0000 \\
Symmetric KL & 6.1468 & 1.1456 & 0.8857 & 0.6691 & 0.4043 & 0.0193 & 0.0000 \\
\end{longtable}
}
\space
% END inlined appendix_materials/empirical/tables/obs-grouping.tex

Table~\ref{tab:emp-obs-natural-groups} retains all 21 domain-by-class groups. The same frozen BF intervention that improves overall accuracy can worsen a particular group's accuracy. Matching identities across the five orders also identifies repeated errors: fixed BF harms 330 evaluation images in all five orders, compared with 35 under symmetric-KL selection and 15 under the reference disagreement rule. These counts describe recurring outcomes on the shared evaluation images.

% BEGIN inlined appendix_materials/empirical/tables/obs-natural-groups.tex
{
\small
\setlength{\tabcolsep}{2pt}\setlength{\LTpre}{3pt}\setlength{\LTpost}{4pt}
\setlength{\LTcapwidth}{\linewidth}
\begin{longtable}{@{}llrrrr@{}}
\caption{All domain-by-class gains on the permanent PACS evaluation population. Each group denominator is fixed across the five orders. The two query rules use three labels and k=1.}\label{tab:emp-obs-natural-groups}\\
\toprule
Domain & Class & $n_E$ & Always BF & Disagreement & Symmetric KL \\
\midrule
\endfirsthead
\caption[]{All domain-by-class gains on the permanent PACS evaluation population (continued).}\\
\toprule
Domain & Class & $n_E$ & Always BF & Disagreement & Symmetric KL \\
\midrule
\endhead
\bottomrule
\endfoot
Art & dog & 183 & -6.23 & -3.72 & -2.40 \\
Art & elephant & 123 & 9.43 & 5.20 & 4.88 \\
Art & giraffe & 131 & -10.08 & -5.19 & -3.66 \\
Art & guitar & 91 & -4.84 & -2.86 & -2.42 \\
Art & horse & 92 & -3.26 & -2.17 & -2.17 \\
Art & house & 149 & 1.48 & 0.94 & 0.81 \\
Art & person & 238 & 29.83 & 17.56 & 13.61 \\
Cartoon & dog & 218 & 8.62 & 6.88 & 8.35 \\
Cartoon & elephant & 222 & 31.17 & 25.68 & 30.09 \\
Cartoon & giraffe & 172 & 28.84 & 22.67 & 27.21 \\
Cartoon & guitar & 65 & -3.38 & -3.38 & -3.38 \\
Cartoon & horse & 168 & 10.36 & 8.33 & 9.40 \\
Cartoon & house & 137 & 5.84 & 4.53 & 5.69 \\
Cartoon & person & 193 & 60.41 & 50.88 & 56.06 \\
Sketch & dog & 381 & 14.17 & 5.98 & 6.61 \\
Sketch & elephant & 371 & 43.56 & 19.57 & 22.59 \\
Sketch & giraffe & 392 & -34.23 & -12.14 & -16.73 \\
Sketch & guitar & 313 & -20.45 & -8.31 & -11.69 \\
Sketch & horse & 393 & -8.85 & -2.85 & -4.99 \\
Sketch & house & 35 & 78.29 & 34.29 & 36.00 \\
Sketch & person & 88 & 28.86 & 12.50 & 15.68 \\
\end{longtable}
}
\space
% END inlined appendix_materials/empirical/tables/obs-natural-groups.tex
\space
% END inlined sections/expanded_observational_controls.tex
\newcommand{\ExpandedVLMRoot}{appendix_materials/expanded_vlm}
% BEGIN inlined sections/expanded_vlm_appendix.tex
\providecommand{\ExpandedVLMRoot}{.}
\providecommand{\EmpJoint}{Joint selection}

\clearpage
\section{Gaussian readout adaptation and portfolio expansion}
\label{app:expanded-vlm}

\subsection{Episodic StatA on shared CLIP representations}

StatA~\citep{portal2025_realistic_vlm} adapts a Gaussian classifier initialized from text prototypes. It combines class assignments, a feature-neighbor graph, and updates to class means and covariances. We use the authors' solver with original-image CLIP ViT-L/14 features and the same eight-template prototypes as the fixed readouts. Each batch starts from these prototypes. The data include all 9,991 PACS and 7,510 VLCS instances, in five stream orders with batches of 64. StatA sees the full batch's unlabeled features and uses zero target labels. We score both the full data and the permanent evaluation set $E$.
% Author solver revision used for reproducibility: 2635f935.

StatA's anchor strength $\alpha$, initial-assignment weight $\lambda_{\hat y}$, and graph regularization weight $\lambda_{\mathrm{Laplacian}}$ are all 1. Its mixing coefficient $\beta$ balances batch-estimated statistics against the text anchor and is updated from hard class counts. We run ten mean/covariance iterations with three neighbors. The final three-image LABELME batch uses two neighbors. Table~\ref{tab:expanded-stata-summary} gives full-data and common-$E$ performance; Tables~\ref{tab:expanded-stata-domain-full} and~\ref{tab:expanded-stata-domain-e} give every domain.

On VLCS, StatA raises full-data equal-domain accuracy from the fixed U8V readout's $83.793\%$ to $84.754\%$. On common $E$, the corresponding scores are $84.097\%$ and $85.106\%$. The LABELME improvement contributes strongly to this result. PACS exhibits a different ordering: StatA attains $98.445\%$ on the full data and $98.404\%$ on $E$, compared with U8V's $98.517\%$ and $98.437\%$. This variation motivates retaining both readouts in the portfolio.

% BEGIN inlined appendix_materials/expanded_vlm/tables/stata-summary.tex
\begin{table}[h]
\centering\small
\caption{Episodic StatA and fixed CLIP readouts. Accuracy is equal-domain mean $\pm$ stream-order SD. Fixed readouts have no order variation; all rows use zero target labels.}
\label{tab:expanded-stata-summary}
\begin{tabular}{llrrr}
\toprule
Dataset & Readout & Full data (\%) & Common $E$ (\%) & Labels \\
\midrule
PACS & Prompt average & 98.393 $\pm$ 0.000 & 98.279 $\pm$ 0.000 & 0 \\
PACS & Eight views & 98.517 $\pm$ 0.000 & 98.437 $\pm$ 0.000 & 0 \\
PACS & Mean logits & 98.502 $\pm$ 0.000 & 98.390 $\pm$ 0.000 & 0 \\
PACS & StatA & 98.445 $\pm$ 0.051 & 98.404 $\pm$ 0.063 & 0 \\
VLCS & Prompt average & 83.099 $\pm$ 0.000 & 83.523 $\pm$ 0.000 & 0 \\
VLCS & Eight views & 83.793 $\pm$ 0.000 & 84.097 $\pm$ 0.000 & 0 \\
VLCS & Mean logits & 83.176 $\pm$ 0.000 & 83.650 $\pm$ 0.000 & 0 \\
VLCS & StatA & 84.754 $\pm$ 0.081 & 85.106 $\pm$ 0.137 & 0 \\
\bottomrule
\end{tabular}
\end{table}
\space
% END inlined appendix_materials/expanded_vlm/tables/stata-summary.tex
% BEGIN inlined appendix_materials/expanded_vlm/tables/stata-domain-full.tex
\begin{table}[h]
\centering\small
\caption{Domain-level StatA comparison on full data. Values are accuracy means $\pm$ stream-order SD over five orders.}
\label{tab:expanded-stata-domain-full}
\begin{tabular}{lllr}
\toprule
Dataset & Domain & Readout & Accuracy (\%) \\
\midrule
PACS & Art & Eight views & 99.170 $\pm$ 0.000 \\
PACS & Art & StatA & 99.219 $\pm$ 0.035 \\
PACS & Cartoon & Eight views & 99.616 $\pm$ 0.000 \\
PACS & Cartoon & StatA & 99.727 $\pm$ 0.038 \\
PACS & Photo & Eight views & 99.940 $\pm$ 0.000 \\
PACS & Photo & StatA & 100.000 $\pm$ 0.000 \\
PACS & Sketch & Eight views & 95.342 $\pm$ 0.000 \\
PACS & Sketch & StatA & 94.833 $\pm$ 0.172 \\
VLCS & Caltech & Eight views & 100.000 $\pm$ 0.000 \\
VLCS & Caltech & StatA & 100.000 $\pm$ 0.000 \\
VLCS & LabelMe & Eight views & 71.382 $\pm$ 0.000 \\
VLCS & LabelMe & StatA & 75.148 $\pm$ 0.101 \\
VLCS & Pascal & Eight views & 86.035 $\pm$ 0.000 \\
VLCS & Pascal & StatA & 85.722 $\pm$ 0.231 \\
VLCS & SUN & Eight views & 77.754 $\pm$ 0.000 \\
VLCS & SUN & StatA & 78.145 $\pm$ 0.392 \\
\bottomrule
\end{tabular}
\end{table}
\space
% END inlined appendix_materials/expanded_vlm/tables/stata-domain-full.tex
% BEGIN inlined appendix_materials/expanded_vlm/tables/stata-domain-e.tex
\begin{table}[h]
\centering\small
\caption{Domain-level StatA comparison on common $E$. Values are accuracy means $\pm$ stream-order SD over five orders.}
\label{tab:expanded-stata-domain-e}
\begin{tabular}{lllr}
\toprule
Dataset & Domain & Readout & Accuracy (\%) \\
\midrule
PACS & Art & Eight views & 99.069 $\pm$ 0.000 \\
PACS & Art & StatA & 99.255 $\pm$ 0.046 \\
PACS & Cartoon & Eight views & 99.570 $\pm$ 0.000 \\
PACS & Cartoon & StatA & 99.725 $\pm$ 0.094 \\
PACS & Photo & Eight views & 100.000 $\pm$ 0.000 \\
PACS & Photo & StatA & 100.000 $\pm$ 0.000 \\
PACS & Sketch & Eight views & 95.108 $\pm$ 0.000 \\
PACS & Sketch & StatA & 94.634 $\pm$ 0.170 \\
VLCS & Caltech & Eight views & 100.000 $\pm$ 0.000 \\
VLCS & Caltech & StatA & 100.000 $\pm$ 0.000 \\
VLCS & LabelMe & Eight views & 71.354 $\pm$ 0.000 \\
VLCS & LabelMe & StatA & 75.803 $\pm$ 0.410 \\
VLCS & Pascal & Eight views & 86.356 $\pm$ 0.000 \\
VLCS & Pascal & StatA & 85.630 $\pm$ 0.384 \\
VLCS & SUN & Eight views & 78.677 $\pm$ 0.000 \\
VLCS & SUN & StatA & 78.990 $\pm$ 0.475 \\
\bottomrule
\end{tabular}
\end{table}
\space
% END inlined appendix_materials/expanded_vlm/tables/stata-domain-e.tex
% BEGIN inlined appendix_materials/expanded_vlm/tables/stata-resources.tex
\begin{table}[h]
\centering\small
\caption{StatA resource measurements from an isolated rerun. Time is seconds per four-domain stream; memory is peak allocated GPU memory. The final three-image batch uses two neighbors. Initialization $|\Delta p|$ is the largest difference between the solver's initial probabilities and the stored prompt-average probabilities.}
\label{tab:expanded-stata-resources}
\begin{tabular}{lrrrr}
\toprule
Dataset & Time (s) & Peak MiB & Final-batch adjustments & Initialization $|\Delta p|$ \\
\midrule
PACS & 7.95 $\pm$ 0.24 & 24.51 & 0 & 1.75e-06 \\
VLCS & 5.92 $\pm$ 0.03 & 20.33 & 5 & 1.83e-06 \\
\bottomrule
\end{tabular}
\end{table}
\space
% END inlined appendix_materials/expanded_vlm/tables/stata-resources.tex

The resource measurements in Table~\ref{tab:expanded-stata-resources} come from a dedicated rerun launched without another GPU process. They cover synchronized solver calls, including feature transfers and final CPU outputs; CLIP feature extraction is outside the timing. PACS and VLCS take $7.95\pm0.24$ and $5.92\pm0.03$ seconds per complete four-domain stream. The outputs and probabilities exactly match the original full run. The reported anchor difference is the maximum absolute gap between the solver's initial probabilities and the cached universal-template probabilities. StatA consumes one original-image embedding per instance, whereas U8V combines eight image views. The Gaussian readout is reset after each batch, so its cost and output depend on batch composition without creating a carried adaptation state.

\subsection{Adding the Gaussian readout as a fifth candidate}

We append StatA to the original single-template, universal-template, U2V, and U8V readouts. Each StatA prediction comes from the exact stream order and batch in which the five-candidate comparison is evaluated. The original four logits are normalized with log-softmax. StatA probabilities are floored at $10^{-12}$ and converted to normalized log probabilities; their hard predictions are preserved. Mean-logit prediction averages these normalized score vectors. U8V remains the declared default, including empirical-error ties, in both portfolios.

Uniform, disagreement-first, and maximum pairwise symmetric-KL selection each use up to three current-batch $Q$ labels. Uniform uses identical query sets for the four- and five-candidate comparisons. The other rules can change their queries when StatA changes the disagreement pattern. \EmpJoint{} keeps prior mass 1, predicted-class mass 0.75, and label price 0.002. Three query credits arrive per batch; unused credits carry forward, with at most twelve queries in one batch. Each candidate set has its own history. The batch oracle uses current $E$ labels to choose one candidate for the whole batch. We rerun the empirical query rules in paired four- and five-candidate comparisons. Fixed outputs and \EmpJoint{} reproduce their original scores.

Tables~\ref{tab:expanded-portfolio-pacs} and~\ref{tab:expanded-portfolio-vlcs} show every fixed readout, the zero-label mean, and the query-based rules. On VLCS, adding StatA raises batch-oracle accuracy from $85.637\%$ to $86.769\%$. \EmpJoint{} rises from $83.919\%$ to $84.576\%$, with mean query counts increasing from 89.4 to 96.0. The candidate adds $1.132$ percentage points of attainable batch accuracy, of which this selection rule realizes $0.657$ points. The new mean-logit rule reaches $84.732\%$, and fixed StatA reaches $85.106\%$, both with zero labels. These comparisons expose the remaining selection loss alongside the additional capability.

On PACS, the batch oracle rises from $98.646\%$ to $98.774\%$, while \EmpJoint{} stays at $98.442\%$. Mean-logit accuracy improves from $98.390\%$ to $98.472\%$. StatA helps through averaging even when selection changes little. Its additional computation is measured separately. Its predictions are recomputed for each order and batch, then stored for the selection experiment.

% BEGIN inlined appendix_materials/expanded_vlm/tables/portfolio-pacs.tex
\begin{table}[h]
\centering\small
\caption{PACS portfolio expansion on common $E$. Accuracy is equal-domain mean $\pm$ stream-order SD. The four-action portfolio contains the original, prompt-average, two-view and eight-view readouts; the fifth action is episodic StatA. Empirical-query rows are paired reruns for this expansion.}
\label{tab:expanded-portfolio-pacs}
\begin{tabular}{lrrrr}
\toprule
& \multicolumn{2}{c}{Four readouts} & \multicolumn{2}{c}{Add StatA} \\
\cmidrule(lr){2-3}\cmidrule(l){4-5}
Rule & Accuracy (\%) & Labels & Accuracy (\%) & Labels \\
\midrule
Original view & 98.191 $\pm$ 0.000 & 0.0 & 98.191 $\pm$ 0.000 & 0.0 \\
Prompt average & 98.279 $\pm$ 0.000 & 0.0 & 98.279 $\pm$ 0.000 & 0.0 \\
Two views & 98.425 $\pm$ 0.000 & 0.0 & 98.425 $\pm$ 0.000 & 0.0 \\
Eight views & 98.437 $\pm$ 0.000 & 0.0 & 98.437 $\pm$ 0.000 & 0.0 \\
StatA & --- & --- & 98.404 $\pm$ 0.063 & 0.0 \\
Mean logits & 98.390 $\pm$ 0.000 & 0.0 & 98.472 $\pm$ 0.055 & 0.0 \\
Uniform ERM & 98.429 $\pm$ 0.007 & 473.6 & 98.424 $\pm$ 0.009 & 473.6 \\
Disagreement ERM & 98.434 $\pm$ 0.031 & 473.6 & 98.419 $\pm$ 0.034 & 473.6 \\
Maximum-KL ERM & 98.439 $\pm$ 0.029 & 473.6 & 98.422 $\pm$ 0.033 & 473.6 \\
\PORTAL{} & 98.442 $\pm$ 0.007 & 1.4 & 98.442 $\pm$ 0.017 & 2.4 \\
Batch oracle & 98.646 $\pm$ 0.016 & --- & 98.774 $\pm$ 0.035 & --- \\
\bottomrule
\end{tabular}
\end{table}
\space
% END inlined appendix_materials/expanded_vlm/tables/portfolio-pacs.tex
% BEGIN inlined appendix_materials/expanded_vlm/tables/portfolio-vlcs.tex
\begin{table}[h]
\centering\small
\caption{VLCS portfolio expansion on common $E$. Accuracy is equal-domain mean $\pm$ stream-order SD. The four-action portfolio contains the original, prompt-average, two-view and eight-view readouts; the fifth action is episodic StatA. Empirical-query rows are paired reruns for this expansion.}
\label{tab:expanded-portfolio-vlcs}
\begin{tabular}{lrrrr}
\toprule
& \multicolumn{2}{c}{Four readouts} & \multicolumn{2}{c}{Add StatA} \\
\cmidrule(lr){2-3}\cmidrule(l){4-5}
Rule & Accuracy (\%) & Labels & Accuracy (\%) & Labels \\
\midrule
Original view & 80.845 $\pm$ 0.000 & 0.0 & 80.845 $\pm$ 0.000 & 0.0 \\
Prompt average & 83.523 $\pm$ 0.000 & 0.0 & 83.523 $\pm$ 0.000 & 0.0 \\
Two views & 83.861 $\pm$ 0.000 & 0.0 & 83.861 $\pm$ 0.000 & 0.0 \\
Eight views & 84.097 $\pm$ 0.000 & 0.0 & 84.097 $\pm$ 0.000 & 0.0 \\
StatA & --- & --- & 85.106 $\pm$ 0.137 & 0.0 \\
Mean logits & 83.650 $\pm$ 0.000 & 0.0 & 84.732 $\pm$ 0.140 & 0.0 \\
Uniform ERM & 83.907 $\pm$ 0.202 & 354.8 & 83.962 $\pm$ 0.224 & 354.8 \\
Disagreement ERM & 83.869 $\pm$ 0.126 & 354.8 & 84.109 $\pm$ 0.095 & 354.8 \\
Maximum-KL ERM & 83.918 $\pm$ 0.158 & 354.8 & 84.258 $\pm$ 0.160 & 354.8 \\
\PORTAL{} & 83.919 $\pm$ 0.170 & 89.4 & 84.576 $\pm$ 0.174 & 96.0 \\
Batch oracle & 85.637 $\pm$ 0.148 & --- & 86.769 $\pm$ 0.073 & --- \\
\bottomrule
\end{tabular}
\end{table}
\space
% END inlined appendix_materials/expanded_vlm/tables/portfolio-vlcs.tex

\section{Supervised convex combinations of readout logits}
\label{app:convex-readout}

\subsection{A learnable readout and target-aware acquisition}

We also fit a weighted average of the original four readouts on PACS, OfficeHome, and VLCS. Let $K$ be the candidate count and $C$ the class count. Matrix $Z_x\in\mathbb R^{K\times C}$ contains their logits for input $x$, with class column $z_{x,c}$. The weights $w\in\Delta_K$ are nonnegative and sum to one, giving combined logits $Z_x^\top w$. Let $Q_{\mathrm{obs}}$ be the acquired inputs and $y_q$ their labels. We regularize toward uniform weights $u$ with strength $\lambda>0$ by minimizing
\begin{equation}
 \sum_{q\in Q_{\mathrm{obs}}}
 \left[\log\sum_c\exp(w^\top z_{q,c})-w^\top z_{q,y_q}\right]
 +\frac{\lambda}{2}\|w-u\|_2^2,
 \qquad u=\frac{1}{K}\mathbf 1.
 \label{eq:expanded-convex-objective}
\end{equation}
The regularizer favors uniform averaging when evidence is limited. We subtract each candidate's mean class logit, which leaves the mixture probabilities unchanged. Sequential least-squares programming (SLSQP) uses analytic gradients, tolerance $10^{-9}$, and at most 200 iterations. Weights reset to $u$ each batch and are refitted after each query, up to three labels. The primary regularization strength is $\lambda=1$; sensitivity comparisons use $0.1$ and $10$.

Let $\ell_x(w,y)$ be the cross-entropy loss of the mixture for label $y$, and let $p_x=\operatorname{softmax}(Z_x^\top w)$. Its gradient and Hessian are
\begin{align}
 \nabla_w\ell_x(w,y)&=Z_x(p_x-e_y),\\
 H_x=\nabla_w^2\ell_x(w,y)
 &=Z_x\left[\operatorname{diag}(p_x)-p_xp_x^\top\right]Z_x^\top.
 \label{eq:expanded-logit-hessian}
\end{align}
The quadratic form of $H_x$ is a predictive variance, so $H_x$ is positive semidefinite. The positive regularizer makes the objective strictly convex on the simplex. Under the current categorical model $Y\sim p_x$, the gradient covariance also equals $H_x$, connecting curvature to paired candidate-gradient uncertainty.

We rank queries by a local interior curvature approximation in the sum-zero simplex subspace. Let $B$ have orthonormal columns spanning that subspace, $\bar H_x=B^\top H_xB$, and
\begin{equation}
 A=\lambda I+\sum_{q\in Q_{\mathrm{obs}}}\bar H_q,
 \qquad
 \bar H_E=\frac{1}{|E|}\sum_{e\in E}\bar H_e.
\end{equation}
We call the following acquisition score \emph{target-Fisher}, because it uses the curvature of the target prediction loss:
\begin{equation}
 s(x)=\frac12\operatorname{tr}\!\left[
 \bar H_E\left(A^{-1}-(A+\bar H_x)^{-1}\right)\right].
 \label{eq:expanded-target-fisher}
\end{equation}
This score measures prospective covariance reduction weighted by target curvature. Its $E$ dependence uses unlabeled prediction vectors. After each query, the fitted weights and curvature are updated before ranking the remaining $Q$ examples.

\subsection{Readout fitting and acquisition outcomes}

Table~\ref{tab:expanded-convex-primary} compares uniform, disagreement-first, maximum symmetric-KL, and target-Fisher queries at $\lambda=1$. Each fitted mixture is paired with ERM on exactly the same queries, using U8V on ties. Fixed U8V and mean logits provide zero-label references. \EmpJoint{} uses its accumulated evidence and actual query budget. All OfficeHome instances remain included. The convex experiment skips batches without evaluation inputs, using 737.4 labels on average; the discrete-selector experiment uses 738.0.

Fitting improves on uniform averaging for OfficeHome and VLCS. At the primary regularizer, target-Fisher reaches $88.735\%$ and $84.030\%$, respectively, compared with mean logits at $88.237\%$ and $83.650\%$. Maximum symmetric-KL reaches $88.757\%$ and $84.115\%$, so the curvature score does not improve this acquisition comparison. Fixed U8V reaches $89.008\%$ and $84.097\%$. On PACS, the fitted target-Fisher mixture gives $98.387\%$, close to mean logits at $98.390\%$ and below U8V at $98.437\%$. Cross-entropy and Brier score measure probability quality; the Brier score is the squared distance to the one-hot label. These metrics and all 36 regularizer/acquisition settings are retained in Tables~\ref{tab:expanded-convex-pacs}, \ref{tab:expanded-convex-officehome}, and~\ref{tab:expanded-convex-vlcs}. The readout fit supplies an additional learnable candidate, while its advantages over the strongest fixed readout remain setting-dependent.

% BEGIN inlined appendix_materials/expanded_vlm/tables/convex-primary.tex
\begin{table}[H]
\centering\scriptsize
\caption{Primary convex-readout results ($\lambda=1$) on common $E$. Accuracy is equal-domain mean $\pm$ stream-order SD; labels average five orders. Discrete ERM rows use the same queries as the corresponding mixture.}
\label{tab:expanded-convex-primary}
\begin{tabular}{lrrrrrr}
\toprule
& \multicolumn{2}{c}{PACS} & \multicolumn{2}{c}{OfficeHome} & \multicolumn{2}{c}{VLCS} \\
\cmidrule(lr){2-3}\cmidrule(lr){4-5}\cmidrule(l){6-7}
Rule & Acc. & Labels & Acc. & Labels & Acc. & Labels \\
\midrule
Eight views & 98.437 $\pm$ 0.000 & 0.0 & 89.008 $\pm$ 0.000 & 0.0 & 84.097 $\pm$ 0.000 & 0.0 \\
Mean logits & 98.390 $\pm$ 0.000 & 0.0 & 88.237 $\pm$ 0.000 & 0.0 & 83.650 $\pm$ 0.000 & 0.0 \\
\PORTAL{} & 98.442 $\pm$ 0.007 & 1.4 & 89.010 $\pm$ 0.012 & 7.4 & 83.919 $\pm$ 0.170 & 89.4 \\
Uniform / mixture & 98.398 $\pm$ 0.014 & 473.6 & 88.415 $\pm$ 0.098 & 737.4 & 83.868 $\pm$ 0.228 & 354.8 \\
Disagreement / mixture & 98.415 $\pm$ 0.009 & 473.6 & 88.636 $\pm$ 0.068 & 737.4 & 84.005 $\pm$ 0.139 & 354.8 \\
Maximum KL / mixture & 98.393 $\pm$ 0.014 & 473.6 & 88.757 $\pm$ 0.087 & 737.4 & 84.115 $\pm$ 0.083 & 354.8 \\
Target Fisher / mixture & 98.387 $\pm$ 0.014 & 473.6 & 88.735 $\pm$ 0.109 & 737.4 & 84.030 $\pm$ 0.137 & 354.8 \\
Uniform / discrete ERM & 98.429 $\pm$ 0.007 & 473.6 & 88.863 $\pm$ 0.121 & 737.4 & 83.907 $\pm$ 0.202 & 354.8 \\
Disagreement / discrete ERM & 98.434 $\pm$ 0.031 & 473.6 & 88.637 $\pm$ 0.118 & 737.4 & 83.869 $\pm$ 0.126 & 354.8 \\
Maximum KL / discrete ERM & 98.439 $\pm$ 0.029 & 473.6 & 88.688 $\pm$ 0.122 & 737.4 & 83.918 $\pm$ 0.158 & 354.8 \\
Target Fisher / discrete ERM & 98.439 $\pm$ 0.029 & 473.6 & 88.685 $\pm$ 0.127 & 737.4 & 83.850 $\pm$ 0.052 & 354.8 \\
\bottomrule
\end{tabular}
\end{table}
\space
% END inlined appendix_materials/expanded_vlm/tables/convex-primary.tex
\FloatBarrier
% BEGIN inlined appendix_materials/expanded_vlm/tables/convex-pacs.tex
\begingroup\scriptsize
\setlength{\tabcolsep}{2.4pt}
\begin{longtable}{rllrrrrrrr}
\caption{Complete convex-readout grid on PACS common $E$. Every acquisition uses up to three current-batch labels; means and SDs cover five stream orders. Accuracy, cross-entropy and Brier columns pair discrete ERM (D) with the fitted mixture (M). Runtime is CPU mixture-fitting/replay time per stream.}\label{tab:expanded-convex-pacs}\\
\toprule
& & & \multicolumn{2}{c}{Accuracy (\%)} & \multicolumn{2}{c}{CE} & \multicolumn{2}{c}{Brier} & \\
$\lambda$ & Acquisition & Labels & D & M & D & M & D & M & Fit (s) \\
\midrule
\endfirsthead
\toprule
& & & \multicolumn{2}{c}{Accuracy (\%)} & \multicolumn{2}{c}{CE} & \multicolumn{2}{c}{Brier} & \\
$\lambda$ & Acquisition & Labels & D & M & D & M & D & M & Fit (s) \\
\midrule
\endhead
0.1 & Uniform & 473.6 & 98.429 $\pm$ 0.007 & 98.382 $\pm$ 0.021 & 0.0580 & 0.0604 & 0.0252 & 0.0264 & 0.821 \\
0.1 & Disagreement & 473.6 & 98.434 $\pm$ 0.031 & 98.393 $\pm$ 0.033 & 0.0584 & 0.0603 & 0.0253 & 0.0262 & 0.825 \\
0.1 & Maximum KL & 473.6 & 98.439 $\pm$ 0.029 & 98.407 $\pm$ 0.049 & 0.0584 & 0.0597 & 0.0253 & 0.0259 & 1.028 \\
0.1 & Target Fisher & 473.6 & 98.434 $\pm$ 0.034 & 98.417 $\pm$ 0.069 & 0.0583 & 0.0595 & 0.0253 & 0.0258 & 1.167 \\
\addlinespace[2pt]
1 & Uniform & 473.6 & 98.429 $\pm$ 0.007 & 98.398 $\pm$ 0.014 & 0.0580 & 0.0604 & 0.0252 & 0.0264 & 0.503 \\
1 & Disagreement & 473.6 & 98.434 $\pm$ 0.031 & 98.415 $\pm$ 0.009 & 0.0584 & 0.0603 & 0.0253 & 0.0263 & 0.560 \\
1 & Maximum KL & 473.6 & 98.439 $\pm$ 0.029 & 98.393 $\pm$ 0.014 & 0.0584 & 0.0597 & 0.0253 & 0.0260 & 0.706 \\
1 & Target Fisher & 473.6 & 98.439 $\pm$ 0.029 & 98.387 $\pm$ 0.014 & 0.0584 & 0.0597 & 0.0253 & 0.0261 & 0.825 \\
\addlinespace[2pt]
10 & Uniform & 473.6 & 98.429 $\pm$ 0.007 & 98.388 $\pm$ 0.011 & 0.0580 & 0.0604 & 0.0252 & 0.0265 & 0.597 \\
10 & Disagreement & 473.6 & 98.434 $\pm$ 0.031 & 98.393 $\pm$ 0.006 & 0.0584 & 0.0604 & 0.0253 & 0.0264 & 0.705 \\
10 & Maximum KL & 473.6 & 98.439 $\pm$ 0.029 & 98.388 $\pm$ 0.011 & 0.0584 & 0.0603 & 0.0253 & 0.0264 & 0.987 \\
10 & Target Fisher & 473.6 & 98.439 $\pm$ 0.029 & 98.388 $\pm$ 0.011 & 0.0584 & 0.0603 & 0.0253 & 0.0264 & 1.101 \\
\addlinespace[2pt]
\bottomrule
\end{longtable}
\endgroup
\space
% END inlined appendix_materials/expanded_vlm/tables/convex-pacs.tex
% BEGIN inlined appendix_materials/expanded_vlm/tables/convex-officehome.tex
\begingroup\scriptsize
\setlength{\tabcolsep}{2.4pt}
\begin{longtable}{rllrrrrrrr}
\caption{Complete convex-readout grid on OFFICEHOME common $E$. Every acquisition uses up to three current-batch labels; means and SDs cover five stream orders. Accuracy, cross-entropy and Brier columns pair discrete ERM (D) with the fitted mixture (M). Runtime is CPU mixture-fitting/replay time per stream.}\label{tab:expanded-convex-officehome}\\
\toprule
& & & \multicolumn{2}{c}{Accuracy (\%)} & \multicolumn{2}{c}{CE} & \multicolumn{2}{c}{Brier} & \\
$\lambda$ & Acquisition & Labels & D & M & D & M & D & M & Fit (s) \\
\midrule
\endfirsthead
\toprule
& & & \multicolumn{2}{c}{Accuracy (\%)} & \multicolumn{2}{c}{CE} & \multicolumn{2}{c}{Brier} & \\
$\lambda$ & Acquisition & Labels & D & M & D & M & D & M & Fit (s) \\
\midrule
\endhead
0.1 & Uniform & 737.4 & 88.863 $\pm$ 0.121 & 88.523 $\pm$ 0.144 & 0.3952 & 0.4105 & 0.1580 & 0.1623 & 1.464 \\
0.1 & Disagreement & 737.4 & 88.637 $\pm$ 0.118 & 88.579 $\pm$ 0.094 & 0.4080 & 0.4060 & 0.1617 & 0.1609 & 1.283 \\
0.1 & Maximum KL & 737.4 & 88.688 $\pm$ 0.122 & 88.726 $\pm$ 0.112 & 0.4018 & 0.4000 & 0.1601 & 0.1592 & 1.401 \\
0.1 & Target Fisher & 737.4 & 88.687 $\pm$ 0.088 & 88.724 $\pm$ 0.077 & 0.4029 & 0.4006 & 0.1604 & 0.1595 & 1.570 \\
\addlinespace[2pt]
1 & Uniform & 737.4 & 88.863 $\pm$ 0.121 & 88.415 $\pm$ 0.098 & 0.3952 & 0.4084 & 0.1580 & 0.1619 & 0.938 \\
1 & Disagreement & 737.4 & 88.637 $\pm$ 0.118 & 88.636 $\pm$ 0.068 & 0.4080 & 0.4030 & 0.1617 & 0.1599 & 1.151 \\
1 & Maximum KL & 737.4 & 88.688 $\pm$ 0.122 & 88.757 $\pm$ 0.087 & 0.4018 & 0.3982 & 0.1601 & 0.1586 & 1.377 \\
1 & Target Fisher & 737.4 & 88.685 $\pm$ 0.127 & 88.735 $\pm$ 0.109 & 0.4020 & 0.3984 & 0.1603 & 0.1587 & 1.544 \\
\addlinespace[2pt]
10 & Uniform & 737.4 & 88.863 $\pm$ 0.121 & 88.298 $\pm$ 0.040 & 0.3952 & 0.4107 & 0.1580 & 0.1627 & 1.272 \\
10 & Disagreement & 737.4 & 88.637 $\pm$ 0.118 & 88.433 $\pm$ 0.076 & 0.4080 & 0.4071 & 0.1617 & 0.1614 & 1.684 \\
10 & Maximum KL & 737.4 & 88.688 $\pm$ 0.122 & 88.555 $\pm$ 0.031 & 0.4018 & 0.4039 & 0.1601 & 0.1605 & 1.971 \\
10 & Target Fisher & 737.4 & 88.637 $\pm$ 0.112 & 88.561 $\pm$ 0.027 & 0.4027 & 0.4036 & 0.1605 & 0.1604 & 2.135 \\
\addlinespace[2pt]
\bottomrule
\end{longtable}
\endgroup
\space
% END inlined appendix_materials/expanded_vlm/tables/convex-officehome.tex
% BEGIN inlined appendix_materials/expanded_vlm/tables/convex-vlcs.tex
\begingroup\scriptsize
\setlength{\tabcolsep}{2.4pt}
\begin{longtable}{rllrrrrrrr}
\caption{Complete convex-readout grid on VLCS common $E$. Every acquisition uses up to three current-batch labels; means and SDs cover five stream orders. Accuracy, cross-entropy and Brier columns pair discrete ERM (D) with the fitted mixture (M). Runtime is CPU mixture-fitting/replay time per stream.}\label{tab:expanded-convex-vlcs}\\
\toprule
& & & \multicolumn{2}{c}{Accuracy (\%)} & \multicolumn{2}{c}{CE} & \multicolumn{2}{c}{Brier} & \\
$\lambda$ & Acquisition & Labels & D & M & D & M & D & M & Fit (s) \\
\midrule
\endfirsthead
\toprule
& & & \multicolumn{2}{c}{Accuracy (\%)} & \multicolumn{2}{c}{CE} & \multicolumn{2}{c}{Brier} & \\
$\lambda$ & Acquisition & Labels & D & M & D & M & D & M & Fit (s) \\
\midrule
\endhead
0.1 & Uniform & 354.8 & 83.907 $\pm$ 0.202 & 83.819 $\pm$ 0.248 & 0.4629 & 0.4648 & 0.2402 & 0.2392 & 0.681 \\
0.1 & Disagreement & 354.8 & 83.869 $\pm$ 0.126 & 83.956 $\pm$ 0.164 & 0.4651 & 0.4625 & 0.2404 & 0.2390 & 0.630 \\
0.1 & Maximum KL & 354.8 & 83.918 $\pm$ 0.158 & 84.068 $\pm$ 0.151 & 0.4661 & 0.4603 & 0.2408 & 0.2371 & 0.666 \\
0.1 & Target Fisher & 354.8 & 83.887 $\pm$ 0.181 & 83.901 $\pm$ 0.241 & 0.4659 & 0.4613 & 0.2409 & 0.2381 & 0.775 \\
\addlinespace[2pt]
1 & Uniform & 354.8 & 83.907 $\pm$ 0.202 & 83.868 $\pm$ 0.228 & 0.4629 & 0.4615 & 0.2402 & 0.2382 & 0.462 \\
1 & Disagreement & 354.8 & 83.869 $\pm$ 0.126 & 84.005 $\pm$ 0.139 & 0.4651 & 0.4610 & 0.2404 & 0.2386 & 0.517 \\
1 & Maximum KL & 354.8 & 83.918 $\pm$ 0.158 & 84.115 $\pm$ 0.083 & 0.4661 & 0.4584 & 0.2408 & 0.2366 & 0.562 \\
1 & Target Fisher & 354.8 & 83.850 $\pm$ 0.052 & 84.030 $\pm$ 0.137 & 0.4661 & 0.4585 & 0.2407 & 0.2370 & 0.651 \\
\addlinespace[2pt]
10 & Uniform & 354.8 & 83.907 $\pm$ 0.202 & 83.757 $\pm$ 0.068 & 0.4629 & 0.4641 & 0.2402 & 0.2408 & 0.653 \\
10 & Disagreement & 354.8 & 83.869 $\pm$ 0.126 & 83.850 $\pm$ 0.144 & 0.4651 & 0.4630 & 0.2404 & 0.2404 & 0.755 \\
10 & Maximum KL & 354.8 & 83.918 $\pm$ 0.158 & 83.842 $\pm$ 0.125 & 0.4661 & 0.4609 & 0.2408 & 0.2390 & 0.828 \\
10 & Target Fisher & 354.8 & 83.876 $\pm$ 0.080 & 83.851 $\pm$ 0.095 & 0.4658 & 0.4607 & 0.2406 & 0.2390 & 0.926 \\
\addlinespace[2pt]
\bottomrule
\end{longtable}
\endgroup
\space
% END inlined appendix_materials/expanded_vlm/tables/convex-vlcs.tex
\space
% END inlined sections/expanded_vlm_appendix.tex
\FloatBarrier
% BEGIN inlined sections/expanded_dynamic_appendix.tex
\section{Dynamic conditions, source changes and class-label noise}
\label{app:expanded_dynamic}
Batch composition changes normalization-based predictions. We regenerate all four ResNet outputs for each condition and keep every target image once per stream. Three arrival orders are tested with four source checkpoints. For the Photo source, batch sizes are 16,32,64,128. At batch size 64, we test mixed, reversed, gradually overlapping, and class-block orders. Art, Cartoon, and Sketch source models are each evaluated on the remaining three domains. This gives 33 complete runs. The original query/evaluation roles remain fixed. Photo images, used as targets for the other source models, receive a content-based split before evaluation.
% Stream seeds: 301--303.

\paragraph{Arrival construction.}
Domain-block streams shuffle images within each domain. Reverse streams reverse the domain order while preserving their within-domain permutations. Mixed streams uniformly permute all target images. For gradual transitions, the within-domain normalized rank receives Gaussian jitter of standard deviation $0.45$ before sorting across domains; overlapping regions retain every image once. Class-block streams group the benchmark's labels within each domain, constructing temporally imbalanced arrivals. Labels define this offline evaluation arrangement and remain hidden from querying and deployment. Batches ending a block retain their original smaller size.

Every model uses the same released architecture and preprocessing as the primary PACS experiment. Source and batch-statistic branches are frozen before querying; batch-statistic normalization uses the actual current inputs and preserves running buffers. Selection history carries across the stream, and a new selector begins each condition/order. Three-label empirical rules and joint selection see the same candidate predictions and evaluation images. Small query pools reduce the attainable fixed-three budget, so tables report actual label counts.

\begin{figure}[t]
\centering
\includegraphics[width=\linewidth]{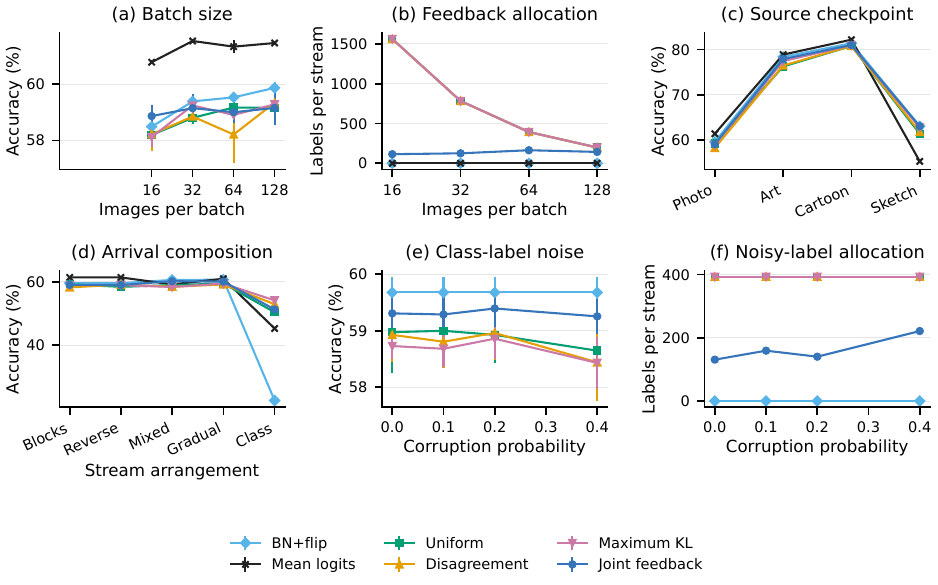}
\caption{Actual batch/source changes and label-channel stress tests. Panels (a)--(d) use regenerated predictions under three stream orders per condition; (e)--(f) use the primary five frozen-prediction orders with multiclass annotation replacement. Accuracy error bars are order standard deviations; label curves show mean actual acquisitions. Fixed output and mean-logit references retain zero feedback cost. Class-block arrivals change the normalization predictions, while label corruption leaves every candidate prediction fixed.}
\label{fig:expanded_dynamic_grid}
\end{figure}

\paragraph{Batch and domain effects.}
For Photo source, smaller batches reduce the fixed normalization-and-flip accuracy: the batch-16 mean is 58.49\%, compared with 59.87\% at batch 128. The corresponding batch oracle changes its action granularity, so its accuracy is interpreted within each condition. Uniform mean logits remain a strong reference. Joint selection at batch 16 reaches 58.87\% using 114.3 labels per stream, compared with 58.12\% for maximum-KL selection using 1,561.7 labels. Across mixed-domain arrivals, joint selection reaches 60.16\% with 114.7 labels and maximum-KL selection reaches 58.34\% with 390.3. Fixed BN+flip reaches 60.56\% in that same condition.

\paragraph{Temporally concentrated classes.}
Class-block arrivals materially alter batch statistics. The fixed BN+flip branch reaches 22.58\%, while source accuracy is about 50.88\%. Joint selection reaches 51.28\% with 187.7 labels; maximum-KL selection reaches 54.11\% with 393. Their batch-net harm values are 0.34 and 0.86 percentage points, respectively, versus 32.41 for fixed BN+flip. The result exposes a different evidence--accuracy tradeoff from mixed arrivals. It also demonstrates that a candidate strong under ordinary domain-block evaluation can require frequent local rejection under a changed temporal class distribution.

\paragraph{Source-model variation.}
The Art, Cartoon and Sketch checkpoints alter both the learned source representation and target domains. Joint accuracy is 77.91\%,81.02\%,63.00\%, with 69.0,64.0,54.3 labels per target stream. The corresponding maximum-KL accuracies are 77.45\%,80.97\%,62.71\%. Fixed BN+flip reaches 78.39\%,81.41\%,63.02\%. Each released source checkpoint is repeated over three target stream orders.

\begin{table}[!tbp]
\centering
\caption{PACS under changes in source checkpoint, batch size and arrival order. Every entry averages three regenerated stream orders.}
\label{tab:dynamic-matrix-performance}
\textit{Accuracy (\%).}\par\vspace{2pt}
% BEGIN inlined layout_kit/PORTAL_layout_kit/tables/dynamic_0.tex
\begingroup\fontsize{7.8}{8.6}\selectfont\setlength{\tabcolsep}{1.5pt}\renewcommand{\arraystretch}{1.0}
\setlength{\PortalTableArea}{\dimexpr(\linewidth-22\tabcolsep-20mm)/11\relax}
\begin{tabular}{@{}L{20mm}*{11}{R{\PortalTableArea}}@{}}
\toprule
&\multicolumn{3}{c}{Source checkpoint}&\multicolumn{4}{c}{Photo source: batch size}&\multicolumn{4}{c}{Photo source: arrival order}\\
\cmidrule(lr){2-4}\cmidrule(lr){5-8}\cmidrule(l){9-12}
Rule & Art & Cart. & Sketch & 16 & 32 & 64 & 128 & Rev. & Mixed & Grad. & Class\\
\midrule
Source & 68.42 & 76.91 & 38.32 & 50.85 & 50.85 & 50.88 & 50.88 & 50.88 & 50.88 & 50.88 & 50.88\\
BN & 77.33 & 80.35 & 62.45 & 57.18 & 58.07 & 58.17 & 58.80 & 58.17 & 59.67 & 59.08 & 22.65\\
Flip & 68.94 & 77.51 & 38.02 & 50.90 & 50.90 & 50.90 & 50.90 & 50.90 & 50.90 & 50.90 & 50.90\\
BN+flip & 78.39 & 81.41 & 63.02 & 58.49 & 59.39 & 59.53 & 59.87 & 59.53 & 60.56 & 60.63 & 22.58\\
Mean logits & 78.89 & 82.20 & 55.22 & 60.79 & 61.54 & 61.34 & 61.47 & 61.34 & 59.15 & 60.92 & 45.25\\
\addlinespace[1.5pt]
Uniform & 76.25 & 80.81 & 61.34 & 58.19 & 58.80 & 59.17 & 59.16 & 58.38 & 58.92 & 59.43 & 50.39\\
Disagreement & 76.47 & 80.82 & 61.72 & 58.22 & 58.85 & 58.21 & 59.34 & 59.09 & 58.41 & 59.13 & 52.92\\
Max. KL & 77.45 & 80.97 & 62.71 & 58.12 & 59.25 & 58.91 & 59.29 & 58.91 & 58.34 & 59.40 & 54.11\\
\rowcolor{PortalRow}
\PORTAL{} & 77.91 & 81.02 & 63.00 & 58.87 & 59.15 & 59.01 & 59.16 & 58.96 & 60.16 & 60.02 & 51.28\\
\midrule
Oracle & 79.94 & 83.66 & 64.00 & 66.05 & 64.28 & 62.81 & 61.93 & 62.81 & 62.58 & 62.87 & 56.52\\
\bottomrule\end{tabular}\endgroup
\space
% END inlined layout_kit/PORTAL_layout_kit/tables/dynamic_0.tex
\par\vspace{6pt}
\textit{Order standard deviation (percentage points).}\par\vspace{2pt}
% BEGIN inlined layout_kit/PORTAL_layout_kit/tables/dynamic_1.tex
\begingroup\fontsize{7.8}{8.6}\selectfont\setlength{\tabcolsep}{1.5pt}\renewcommand{\arraystretch}{1.0}
\setlength{\PortalTableArea}{\dimexpr(\linewidth-22\tabcolsep-20mm)/11\relax}
\begin{tabular}{@{}L{20mm}*{11}{R{\PortalTableArea}}@{}}
\toprule
&\multicolumn{3}{c}{Source checkpoint}&\multicolumn{4}{c}{Photo source: batch size}&\multicolumn{4}{c}{Photo source: arrival order}\\
\cmidrule(lr){2-4}\cmidrule(lr){5-8}\cmidrule(l){9-12}
Rule & Art & Cart. & Sketch & 16 & 32 & 64 & 128 & Rev. & Mixed & Grad. & Class\\
\midrule
Source & 0.00 & 0.00 & 0.00 & 0.00 & 0.00 & 0.00 & 0.00 & 0.00 & 0.00 & 0.00 & 0.00\\
BN & 0.08 & 0.17 & 0.15 & 0.10 & 0.28 & 0.09 & 0.18 & 0.09 & 0.16 & 0.16 & 0.35\\
Flip & 0.00 & 0.00 & 0.00 & 0.00 & 0.00 & 0.00 & 0.00 & 0.00 & 0.00 & 0.00 & 0.00\\
BN+flip & 0.25 & 0.04 & 0.13 & 0.21 & 0.09 & 0.18 & 0.22 & 0.18 & 0.41 & 0.10 & 0.17\\
Mean logits & 0.40 & 0.16 & 0.43 & 0.07 & 0.07 & 0.23 & 0.09 & 0.23 & 0.22 & 0.18 & 0.05\\
\addlinespace[1.5pt]
Uniform & 0.46 & 0.34 & 0.23 & 0.28 & 0.23 & 0.29 & 0.15 & 0.47 & 0.26 & 0.24 & 0.86\\
Disagreement & 0.70 & 0.50 & 0.81 & 0.61 & 0.18 & 1.02 & 0.20 & 0.29 & 0.41 & 0.54 & 0.13\\
Max. KL & 0.12 & 0.12 & 0.21 & 0.35 & 0.38 & 0.26 & 0.09 & 0.26 & 0.61 & 0.58 & 0.31\\
\rowcolor{PortalRow}
\PORTAL{} & 0.07 & 0.54 & 0.16 & 0.40 & 0.50 & 0.38 & 0.60 & 0.30 & 0.53 & 0.72 & 0.51\\
\midrule
Oracle & 0.18 & 0.20 & 0.06 & 0.28 & 0.25 & 0.33 & 0.08 & 0.33 & 0.15 & 0.20 & 0.07\\
\bottomrule\end{tabular}\endgroup
\space
% END inlined layout_kit/PORTAL_layout_kit/tables/dynamic_1.tex
\par\vspace{6pt}
\textit{Acquired labels per stream.}\par\vspace{2pt}
% BEGIN inlined layout_kit/PORTAL_layout_kit/tables/dynamic_2.tex
\begingroup\fontsize{7.8}{8.6}\selectfont\setlength{\tabcolsep}{1.5pt}\renewcommand{\arraystretch}{1.0}
\setlength{\PortalTableArea}{\dimexpr(\linewidth-22\tabcolsep-20mm)/11\relax}
\begin{tabular}{@{}L{20mm}*{11}{R{\PortalTableArea}}@{}}
\toprule
&\multicolumn{3}{c}{Source checkpoint}&\multicolumn{4}{c}{Photo source: batch size}&\multicolumn{4}{c}{Photo source: arrival order}\\
\cmidrule(lr){2-4}\cmidrule(lr){5-8}\cmidrule(l){9-12}
Rule & Art & Cart. & Sketch & 16 & 32 & 64 & 128 & Rev. & Mixed & Grad. & Class\\
\midrule
Source & 0 & 0 & 0 & 0 & 0 & 0 & 0 & 0 & 0 & 0 & 0\\
BN & 0 & 0 & 0 & 0 & 0 & 0 & 0 & 0 & 0 & 0 & 0\\
Flip & 0 & 0 & 0 & 0 & 0 & 0 & 0 & 0 & 0 & 0 & 0\\
BN+flip & 0 & 0 & 0 & 0 & 0 & 0 & 0 & 0 & 0 & 0 & 0\\
Mean logits & 0 & 0 & 0 & 0 & 0 & 0 & 0 & 0 & 0 & 0 & 0\\
\addlinespace[1.5pt]
Uniform & 377.3 & 362.7 & 287.7 & 1,561.7 & 783 & 393 & 198 & 393 & 390.3 & 390 & 393\\
Disagreement & 377.3 & 362.7 & 287.7 & 1,561.7 & 783 & 393 & 198 & 393 & 390.3 & 390 & 393\\
Max. KL & 377.3 & 362.7 & 287.7 & 1,561.7 & 783 & 393 & 198 & 393 & 390.3 & 390 & 393\\
\rowcolor{PortalRow}
\PORTAL{} & 69 & 64 & 54.3 & 114.3 & 125.3 & 164 & 143 & 189.3 & 114.7 & 110 & 187.7\\
\midrule
Oracle & \dash & \dash & \dash & \dash & \dash & \dash & \dash & \dash & \dash & \dash & \dash\\
\bottomrule\end{tabular}\endgroup
\space
% END inlined layout_kit/PORTAL_layout_kit/tables/dynamic_2.tex
\end{table}

\begin{table}[!tbp]
\centering
\caption{Net batch harm and sample-level new errors under the same PACS conditions as Table~\ref{tab:dynamic-matrix-performance}.}
\label{tab:dynamic-matrix-errors}
\textit{Net batch harm (percentage points).}\par\vspace{2pt}
% BEGIN inlined layout_kit/PORTAL_layout_kit/tables/dynamic_3.tex
\begingroup\fontsize{7.8}{8.6}\selectfont\setlength{\tabcolsep}{1.5pt}\renewcommand{\arraystretch}{1.0}
\setlength{\PortalTableArea}{\dimexpr(\linewidth-22\tabcolsep-20mm)/11\relax}
\begin{tabular}{@{}L{20mm}*{11}{R{\PortalTableArea}}@{}}
\toprule
&\multicolumn{3}{c}{Source checkpoint}&\multicolumn{4}{c}{Photo source: batch size}&\multicolumn{4}{c}{Photo source: arrival order}\\
\cmidrule(lr){2-4}\cmidrule(lr){5-8}\cmidrule(l){9-12}
Rule & Art & Cart. & Sketch & 16 & 32 & 64 & 128 & Rev. & Mixed & Grad. & Class\\
\midrule
Source & 0.00 & 0.00 & 0.00 & 0.00 & 0.00 & 0.00 & 0.00 & 0.00 & 0.00 & 0.00 & 0.00\\
BN & 0.59 & 1.73 & 0.01 & 5.97 & 3.83 & 2.58 & 1.48 & 2.58 & 0.87 & 1.53 & 32.41\\
Flip & 1.00 & 1.13 & 1.46 & 1.75 & 1.52 & 1.21 & 0.91 & 1.21 & 1.28 & 1.29 & 1.12\\
BN+flip & 0.55 & 1.21 & 0.00 & 5.21 & 3.28 & 1.93 & 1.12 & 1.93 & 0.70 & 1.08 & 32.41\\
Mean logits & 0.12 & 0.37 & 0.01 & 1.14 & 0.58 & 0.19 & 0.04 & 0.19 & 0.14 & 0.13 & 8.10\\
\addlinespace[1.5pt]
Uniform & 0.51 & 1.11 & 0.01 & 3.73 & 2.35 & 1.38 & 0.71 & 1.40 & 0.63 & 0.88 & 4.62\\
Disagreement & 0.42 & 0.99 & 0.00 & 3.62 & 2.35 & 1.45 & 0.64 & 1.29 & 0.61 & 0.84 & 1.94\\
Max. KL & 0.46 & 0.84 & 0.00 & 3.67 & 2.10 & 1.37 & 0.86 & 1.37 & 0.52 & 0.91 & 0.86\\
\rowcolor{PortalRow}
\PORTAL{} & 0.52 & 0.80 & 0.00 & 3.56 & 2.89 & 1.72 & 1.16 & 1.57 & 0.60 & 1.15 & 0.34\\
\midrule
Oracle & 0.00 & 0.00 & 0.00 & 0.00 & 0.00 & 0.00 & 0.00 & 0.00 & 0.00 & 0.00 & 0.00\\
\bottomrule\end{tabular}\endgroup
\space
% END inlined layout_kit/PORTAL_layout_kit/tables/dynamic_3.tex
\par\vspace{6pt}
\textit{New errors (\%).}\par\vspace{2pt}
% BEGIN inlined layout_kit/PORTAL_layout_kit/tables/dynamic_4.tex
\begingroup\fontsize{7.8}{8.6}\selectfont\setlength{\tabcolsep}{1.5pt}\renewcommand{\arraystretch}{1.0}
\setlength{\PortalTableArea}{\dimexpr(\linewidth-22\tabcolsep-20mm)/11\relax}
\begin{tabular}{@{}L{20mm}*{11}{R{\PortalTableArea}}@{}}
\toprule
&\multicolumn{3}{c}{Source checkpoint}&\multicolumn{4}{c}{Photo source: batch size}&\multicolumn{4}{c}{Photo source: arrival order}\\
\cmidrule(lr){2-4}\cmidrule(lr){5-8}\cmidrule(l){9-12}
Rule & Art & Cart. & Sketch & 16 & 32 & 64 & 128 & Rev. & Mixed & Grad. & Class\\
\midrule
Source & 0.00 & 0.00 & 0.00 & 0.00 & 0.00 & 0.00 & 0.00 & 0.00 & 0.00 & 0.00 & 0.00\\
BN & 7.13 & 7.16 & 3.24 & 12.96 & 12.27 & 12.07 & 11.62 & 12.07 & 9.48 & 10.42 & 34.79\\
Flip & 1.78 & 2.04 & 2.25 & 2.07 & 2.07 & 2.07 & 2.07 & 2.07 & 2.07 & 2.07 & 2.07\\
BN+flip & 6.74 & 6.36 & 3.25 & 12.24 & 11.58 & 11.30 & 11.05 & 11.30 & 8.85 & 9.68 & 34.88\\
Mean logits & 1.40 & 1.90 & 1.31 & 2.51 & 2.34 & 2.49 & 2.47 & 2.49 & 2.57 & 2.49 & 10.24\\
\addlinespace[1.5pt]
Uniform & 5.53 & 5.53 & 3.13 & 8.95 & 8.31 & 8.42 & 7.61 & 8.22 & 7.47 & 8.05 & 6.38\\
Disagreement & 5.39 & 5.09 & 3.10 & 8.49 & 8.50 & 7.90 & 7.93 & 7.91 & 6.99 & 7.82 & 3.35\\
Max. KL & 6.28 & 4.57 & 3.20 & 8.58 & 8.18 & 8.20 & 8.33 & 8.20 & 6.58 & 8.15 & 1.96\\
\rowcolor{PortalRow}
\PORTAL{} & 6.58 & 5.17 & 3.22 & 9.10 & 10.35 & 10.01 & 10.49 & 9.63 & 8.34 & 9.34 & 0.66\\
\midrule
Oracle & 4.56 & 2.78 & 2.86 & 2.73 & 4.20 & 5.17 & 6.27 & 5.17 & 5.86 & 5.70 & 0.71\\
\bottomrule\end{tabular}\endgroup
\space
% END inlined layout_kit/PORTAL_layout_kit/tables/dynamic_4.tex
\end{table}

\begin{figure}[p]
\centering
\includegraphics[width=\linewidth]{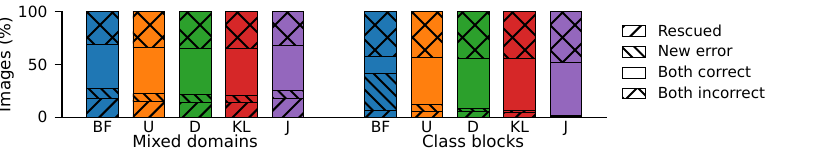}
\caption{Outcome decomposition under dynamic arrival conditions. Color identifies the selected rule and texture identifies whether a sample is rescued, newly harmed, correct under both outputs, or incorrect under both. The components are a linear decomposition of the reported source accuracy, selected accuracy and sample-level new errors on the common evaluation population.}
\label{fig:dynamic-outcomes}
\end{figure}

\subsection{Multiclass annotation noise}
To test the annotation channel, every requested class label is replaced by a uniformly chosen different class with probability $\eta\in\{0,0.1,0.2,0.4\}$. A corrupted class is shared by every candidate's correctness evaluation. This can move rescue, harm and neutral outcomes in either direction, as occurs in multiclass annotation. Candidate logits and evaluation labels remain unchanged. Image-keyed randomization couples noise across methods and noise levels; each acquired instance still costs one label.

The complete primary four-action PACS banks and all five stream orders are retained. Joint selection uses the same prior, label price, default, carryover rule and query cap as the uncorrupted experiment. Every zero-noise result reproduces its primary counterpart. The independently checked traces reconstruct acquired-label counts, actual class replacements and all evaluation errors from the prediction banks.

Joint accuracy varies from 59.31\% at zero corruption to 59.25\% at corruption probability $0.4$, while mean queried labels increase from 130.8 to 221.2. Three-label maximum-KL selection varies from 58.73\% to 58.43\% at its constant 393 labels. Fixed BN+flip remains at 59.68\% with zero labels throughout. The experiment shows how the implemented posterior rule responds to corrupted class evidence; Table~\ref{tab:class_noise} retains all tested rates and simple selection references.

% BEGIN inlined tables/expanded/class_noise.tex
\begin{table}[t]
\centering\small
\caption{Class-label corruption on the complete frozen PACS portfolio, five stream orders. A queried class is replaced uniformly by a different class with probability $\eta$. Accuracy is mean $\pm$ order SD; labels and newly introduced errors are means. Predictions and evaluation labels are unchanged.}\label{tab:class_noise}\begin{tabular}{llrrr}\toprule
$\eta$ & Rule & Accuracy (\%) & Labels & New errors (pp)\\\midrule
0 & Uniform & 58.97 $\pm$ 0.72 & 393.0 & 8.53 \\
0 & Disagreement & 58.93 $\pm$ 0.47 & 393.0 & 8.66 \\
0 & Maximum KL & 58.73 $\pm$ 0.23 & 393.0 & 8.55 \\
0 & \PORTAL{} & 59.31 $\pm$ 0.36 & 130.8 & 10.53 \\
\midrule
0.1 & Uniform & 59.00 $\pm$ 0.65 & 393.0 & 8.52 \\
0.1 & Disagreement & 58.81 $\pm$ 0.47 & 393.0 & 8.45 \\
0.1 & Maximum KL & 58.68 $\pm$ 0.32 & 393.0 & 8.53 \\
0.1 & \PORTAL{} & 59.29 $\pm$ 0.32 & 159.2 & 10.33 \\
\midrule
0.2 & Uniform & 58.93 $\pm$ 0.49 & 393.0 & 8.48 \\
0.2 & Disagreement & 58.96 $\pm$ 0.47 & 393.0 & 8.49 \\
0.2 & Maximum KL & 58.86 $\pm$ 0.36 & 393.0 & 8.34 \\
0.2 & \PORTAL{} & 59.39 $\pm$ 0.32 & 140.2 & 10.48 \\
\midrule
0.4 & Uniform & 58.65 $\pm$ 0.46 & 393.0 & 8.54 \\
0.4 & Disagreement & 58.44 $\pm$ 0.68 & 393.0 & 8.22 \\
0.4 & Maximum KL & 58.43 $\pm$ 0.46 & 393.0 & 8.29 \\
0.4 & \PORTAL{} & 59.25 $\pm$ 0.30 & 221.2 & 9.97 \\
\midrule
\bottomrule\end{tabular}
\end{table}
\space
% END inlined tables/expanded/class_noise.tex
\space
% END inlined sections/expanded_dynamic_appendix.tex
\FloatBarrier

\FloatBarrier
\section{Historical controller comparisons}
\label{app:controller_stress}
\subsection{Response to candidate-set expansion}
This experiment asks how the benefit of adding candidates changes with the controller. The locked confirmation uses one RobustBench CIFAR-10 Standard checkpoint, CIFAR-10/CIFAR-10-C and three corruption families. Ten new seeds (10--19) produce 120 episodes and 192,000 scored examples. The HILTTA-style controller follows a harmonized selection rule based on \citet{li2024hiltta}. In this experiment, \PORTAL{} evaluates candidates reset to the pretrained model through queried errors. This configuration isolates the controller--portfolio interaction. The main experiments additionally evaluate the shared-posterior formulation in Section~\ref{sec:expanded_method}.

Define the expansion effect as expanded-set error minus narrow-set error. It is $+14.3203$ points for HILTTA-style and $-1.2422$ points for \PORTAL{}. Their difference is the $15.5625$-point controller--portfolio interaction. This comparison follows each controller's own trajectory. It measures how expansion behaves under the two policies, rather than isolating selection on one common prediction pool.

\begin{table}[!htb]
\centering\small
\caption{Locked controller comparison across ten confirmation seeds. Errors include every scored episode input. Positive expansion effects mean that adding candidates increases error. A dash indicates that no expansion contrast is reported for that comparator.}
\label{tab:controller_stress}
\begin{tabular*}{\linewidth}{@{\extracolsep{\fill}}lrr@{}}
\toprule
Controller & Expanded-set error (\%) & Expansion effect (pp) \\
\midrule
HILTTA-style & 36.1698 & +14.3203 \\
\PORTAL{} & 18.9375 & -1.2422 \\
Harmonized SimATTA-style & 15.2719 & --- \\
Committee & 18.3438 & --- \\
\bottomrule
\end{tabular*}
\end{table}

\PORTAL{} reduces expanded-set error by 17.2323 points relative to HILTTA-style and wins in all ten seeds. The exact two-sided sign-test probability is $0.001953$. Table~\ref{tab:controller_stress} also reports the harmonized SimATTA-style and Committee comparators. SimATTA-style has the lowest error in this experiment. The interaction answers the question about expansion; these complete scores show the accompanying controller ranking.

All expanded-set controllers use eight labels per episode. The narrow-portfolio \PORTAL{} cell uses at most eight: 748 of 960 available labels, with all eight used in 70 of 120 episodes. Primary error includes queried inputs. Their labels do not enter current candidate generation or gradients. These details distinguish this experiment from the permanent query/deployment split used for the main shared-feedback comparisons.

% Source record: findings.md, 2026-07-19 R056 confirmation and audit.
% SHA256: adb8b0089d115fa02adee2b25706be2c313f8965bdb21fbccc86cd55eb8916d4

\subsection{Fixed-challenger admission on PACS and VLCS}
\label{app:paired_admission}
The PACS replay fixes a SimATTA incumbent trajectory, a horizontal-view challenger called U02, and 816 batches. U02 averages original and horizontally flipped predictions. Every controller in Table~\ref{tab:paired_admission_replay} uses the same recorded outputs. Positive gains mean higher accuracy than the incumbent. Paired admission selects U02 only when it makes strictly fewer queried errors; ties keep the incumbent. The \PORTAL{} row queries the three largest symmetric-KL disagreements. Its admission decisions coincide with queried ERM in all 816 batches.

Always deployment uses U02 throughout. The random gate admits with probability 29.41\%, matching the targeted controller's rate. Uniform admission samples three query positions and applies the same error comparison. The oracle selects from full-batch evaluation labels. It is an evaluation reference. Keeping these predictions fixed isolates the effect of acquiring and using evidence on the recorded trajectory.

\begin{table}[!htb]
\centering\small
\caption{PACS gains in percentage points over the matched incumbent. All inputs include current queried examples. Query-excluded removes current query positions, not historical labels or source-domain reuse. Brackets give 95\% random-policy replay intervals where available.}
\label{tab:paired_admission_replay}
\begin{tabularx}{\linewidth}{@{}Xrr@{}}
\toprule
Controller & All inputs & Current-query-excluded \\
\midrule
Always challenger & -5.7965 & -5.4337 \\
Random gate & $-1.6997\ [-2.2946,-1.1256]$ & $-1.5937\ [-2.1556,-1.0389]$ \\
Uniform three-query admission & $+1.1508\ [0.8320,1.4689]$ & $+0.9441\ [0.6104,1.2663]$ \\
\PORTAL{} paired admission & +2.2583 & +1.7863 \\
Full-batch oracle & +3.3280 & +3.1247 \\
\bottomrule
\end{tabularx}
\end{table}

The 79,928 PACS observations are eight readouts of 9,991 images, across 32 phases. They include the Photo source domain and images used in earlier adaptation. Current-query exclusion therefore does not produce a never-labeled population. A later replay first separates each batch into a common query pool and evaluation pool. Across five split/query seeds, symmetric-KL admission gains $1.8250\pm0.1222$ points, uniform admission $0.9428\pm0.0382$, always deployment $-5.7861\pm0.0974$, and the evaluation-pool oracle $3.6789\pm0.1470$. These repeats vary the split and queries on one recorded trajectory.

The VLCS transfer uses 872 batches and 85,832 observations over 32 phases. Always using U02 loses 11.2580 points; three-query symmetric-KL admission loses 0.3530. The avoided fraction of net accuracy loss is $1-0.3530/11.2580=96.86\%$. A ten-batch moving-block bootstrap gives an admission-gain interval of $[-0.5632,-0.1746]$ points. This measures the suppression of a harmful intervention; the remaining loss is reported in full.

% Sources: findings.md, R098/R099, R478, R482 and the 2026-09-14 replay audit.

% BEGIN inlined sections/expanded_algorithm.tex
\section{Joint selection algorithm and implementation}
\label{app:expanded_algorithm}
The selector receives candidate logits, query and deployment indices, image identifiers, and the remaining label allowance. It can request query labels; the evaluator alone holds deployment labels. This section specifies the finite-batch procedure. Appendix~\ref{app:joint_feedback} gives the separate fixed-confidence procedure and its sampling rules.

\subsection{Candidate representation and prior}
Candidates with identical hard outputs on every current input are merged. If a group contains the designated fixed default, it is retained as the representative; otherwise the first action is retained. A joint prediction pattern $p_s=(p_{s1},\ldots,p_{sK})$ groups current inputs into strata. Patterns occurring only in the query pool have target weight zero. For $C$ classes, let $\mathcal C_s$ contain the distinct predicted classes and let $\nu$ be the prior mass assigned to them. The initial parameters are
\begin{equation}
 \alpha^0_{sc}=\frac{1-\nu}{C}
       +\frac{\nu}{|\mathcal C_s|}\mathbf1\{c\in\mathcal C_s\},
 \qquad\nu=0.75.
 \label{eq:app_prior}
\end{equation}
The total prior mass is one. Each acquired label adds one class count. We store historical counts by candidate identity and prediction pattern, and count each acquired image once per deployment. When the same pattern reappears, its accumulated counts initialize the posterior. This reuse assumes that the pattern's class probabilities stay stable. The no-history comparison starts from Equation~\eqref{eq:app_prior} at every batch.

Target weights are empirical frequencies of unlabeled deployment inputs. The selector recomputes posterior means and covariances after every acquired class label using Equations~\eqref{eq:expanded_mu}--\eqref{eq:expanded_cov}. The leader is the default if its posterior mean is within $10^{-12}$ of the maximum; otherwise the first maximizer is used. Class labels supply complete candidate correctness vectors, so the annotation count grows by one regardless of the number of candidates.

\subsection{Reachability and query scores}
For each rival, Equation~\eqref{eq:expanded_reachable} bounds its best possible posterior advantage after the remaining queries. Write $d_{sa}=\pi_{s,p_{sa}}-\pi_{s,p_{sj}}$. The improvement from assigning the $k$th favorable label to a disagreement stratum is
\begin{equation}
 w_s(1-d_{sa})\frac{n_s}{(n_s+k)(n_s+k-1)}.
 \label{eq:app_reach_marginal}
\end{equation}
This quantity is nonnegative and decreases with $k$. Collecting the largest available increments, up to the remaining label count, solves the capped integer allocation. The rival remains competitive if its optimistic advantage exceeds $10^{-12}$. The leader is always retained. The full posterior perfect-information bound still includes every distinct action.

For each eligible target stratum, Equation~\eqref{eq:expanded_voi} computes one-label value on the competitive set. If all values are at most $10^{-14}$, the selector uses paired posterior variance reduction. For rival $a$ and leader $j$ that predict different classes in stratum $s$, the expected reduction is
\begin{equation}
 B_{s,aj}=\frac{w_s^2}{(n_s+1)^2}
 \left[\pi_{s,p_{sa}}+\pi_{s,p_{sj}}
       -(\pi_{s,p_{sa}}-\pi_{s,p_{sj}})^2\right].
\end{equation}
The fallback stratum score is
\begin{equation}
 T_s=\max_{a\ne j\,\mathrm{competitive}}
 \frac{B_{s,aj}}{(\mu_j-\mu_a)^2+
              \Sigma_{aa}+\Sigma_{jj}-2\Sigma_{aj}+10^{-12}}.
 \label{eq:app_variance_fallback}
\end{equation}
The numerator is the expected squared change in the pair's estimated accuracy difference after one label. The denominator gives priority to uncertain comparisons with small estimated gaps. This score can gather evidence when several labels are needed before the winner changes. Appendix~\ref{app:joint_feedback} analyzes a separate frequentist allocation procedure.

The highest-scoring eligible stratum is selected, with deterministic pattern-order ties. Its first unqueried input in stream order is labeled. No score uses evaluation labels. If both one-label value and the fallback vanish, or no target stratum contains an available query, the current batch decision stops. The mandatory-spending ablation instead fills its remaining budget from available query strata, including strata absent from the target batch when necessary.

\subsection{Execution sequence}
\begin{table}[H]
\centering\small
\caption{Finite-batch joint selection. Predictions, current target composition and available query locations are inputs; acquired class labels are the feedback.}
\label{tab:expanded_algorithm}
\begin{tabularx}{\linewidth}{rX}
\toprule
Step & Operation\\
\midrule
1 & Freeze all candidate logits and merge exact hard-output copies. Form joint prediction strata and target frequencies.\\
2 & Initialize each stratum from Equation~\eqref{eq:app_prior}, adding eligible historical acquired-label counts.\\
3 & Compute the posterior means/covariance, default-tied leader and all-candidate bound $U$.\\
4 & Stop if the local query allowance is exhausted or $U\le0.002$. Stop if no available target-stratum query remains.\\
5 & Compute capped rival reachability. Stop if every rival is unreachable within the remaining local allowance.\\
6 & Compute one-label decision values. On a zero-value plateau, use Equation~\eqref{eq:app_variance_fallback}. Stop if no available query has positive score.\\
7 & Acquire one class label in the chosen stratum. Update its posterior and historical count once, record its identifier and decrement the annotation allowance. Return to step 3.\\
8 & Deploy the posterior leader's frozen predictions on the current target inputs. Apply the experiment's prescribed subsequent model update.\\
\bottomrule
\end{tabularx}
\end{table}

\paragraph{Budget handling.}
The fixed-prediction PACS and VLM experiments accrue three labels per batch, carry unspent credits, and permit at most twelve acquisitions within a batch. The complete CIFAR frozen-selection experiment caps the stored balance at twelve as well as the within-batch count. Fixed-three baselines acquire three eligible labels, including on agreement inputs when needed. Their budget is reduced only when the eligible pool has fewer than three examples. The learned-readout experiments use three total labels in each batch: the selector can spend part of this budget on output evidence and uniform sampling supplies the remainder for the subsequent learned state. That total includes all labels used for either purpose.

\paragraph{Computational structure.}
Posterior means require $O(SK)$ operations and the full covariance $O(SK^2)$. The finite-horizon reachability calculation processes at most $SKB$ marginal increments for local allowance $B$, with sorting in the current implementation. One-label value evaluates $C$ possible outcomes per eligible stratum and competitive action. The matrices have candidate dimension, independent of the feature encoder's parameter count. Candidate generation remains a separate cost: four ResNet actions require original/flip forwards under source and batch statistics; eight-view CLIP requires all specified image encodings. Resource tables report these operations separately from selection arithmetic.

\subsection{Convex-readout implementation}
Section~\ref{app:convex-readout} defines the supervised simplex objective, its
Hessian and target-Fisher query score. The implementation centers every
candidate's class logits, initializes the weights uniformly, and solves the
strictly convex objective with analytic-gradient SLSQP at tolerance $10^{-9}$.
Weights reset at every batch and are refitted after each of at most three
acquired labels. The primary regularizer is $\lambda=1$; the complete grid also
retains $0.1$ and $10$.

The comparison uses the same queried examples for each fitted mixture and its
discrete ERM control. Uniform, disagreement and maximum-pair-KL acquisition use
their declared score, while target Fisher recomputes Equation~\eqref{eq:expanded-target-fisher}
after every label. Evaluation inputs contribute unlabeled logits to the target
curvature and supply labels only to the scorer. Tables~\ref{tab:expanded-convex-pacs}--\ref{tab:expanded-convex-vlcs}
contain every regularizer, acquisition, output rule and probability metric.
\space
% END inlined sections/expanded_algorithm.tex
\FloatBarrier
% BEGIN inlined sections/expanded_acquisition_controls.tex
\section{Acquisition and stopping controls}
\label{app:acquisition-controls}

\subsection{Changing only the current query locations}
The query comparison uses four fixed PACS candidates and five arrival orders. Each policy starts every batch from the same history, acquired labels, prior, deployment weights, and stopping rules. It can buy at most three labels, for a stream ceiling of 393. Only the choice of query inputs differs.

Uniform acquisition samples an eligible input. Disagreement acquisition prioritizes inputs where the active candidates predict different classes. The KL rule maximizes pairwise symmetric divergence. \PORTAL{} uses one-label decision value, with paired variance reduction when that value is zero. Each rule uses a fixed random seed for ties. After the comparison, \PORTAL{} supplies the next batch's shared history. This gives all query rules the same starting evidence.

% BEGIN inlined tables/expanded/acquisition_table.tex
\begin{table}[H]
\centering\small
\caption{Acquisition with a common posterior, history, available budget and stopping rule. Only the current query locations change. Means $\pm$ SD cover five PACS stream orders.}
\label{tab:common-posterior-acquisition}
\begin{tabular}{lrrr}
\toprule
Acquisition & Accuracy (\%) & Labels & Batch harm (pp) \\
\midrule
Uniform & 59.35 $\pm$ 0.13 & 132.8 & 1.78 \\
Disagreement & 59.31 $\pm$ 0.23 & 96.8 & 1.86 \\
Maximum-pair KL & 59.33 $\pm$ 0.26 & 118.8 & 1.85 \\
PORTAL value & 59.32 $\pm$ 0.34 & 85.6 & 1.86 \\
\bottomrule
\end{tabular}
\end{table}
\space
% END inlined tables/expanded/acquisition_table.tex

Table~\ref{tab:common-posterior-acquisition} shows that the four accuracy means
lie within 0.09 percentage points. PORTAL uses the fewest labels, 85.6 per
stream. Disagreement is the closest alternative at 96.8 labels and 59.31\%
accuracy. Uniform and maximum-pair KL spend 132.8 and 118.8 labels. The result
attributes most of the saving in this controlled replay to the shared posterior
and stopping logic; decision-value acquisition further reduces cost relative to
the other query locations.

\subsection{Purchasing labels after an observed stop}
For every PORTAL stop, we copy the posterior and remaining eligible queries.
The counterfactual continuation ranks those inputs by maximum-pair symmetric KL
without labels, then reveals prefixes of one, three and five query labels. These
labels affect only the copied current decision and never enter future stream
history. We record the stop reason, action change and held-out accuracy change.

The model-side check computes the exact one-step posterior value of the first
continuation query and verifies that it does not exceed the all-candidate bound
$U$. For a reachability stop, the original candidate remains a posterior
maximizer after every tested prefix within the original local budget. A tie can
change the deterministic output without contradicting this property.

% BEGIN inlined tables/expanded/continuation_table.tex
\begin{table}[H]
\centering\small
\caption{Counterfactual labels after stopping. Extra labels follow a fixed maximum-pair-KL order and never enter future history. Accuracy change is positive when continuation reduces held-out errors.}
\label{tab:post-stop-continuation}
\begin{tabular}{llrrrr}
\toprule
Setting & Stop reason & Stops & Extra & Choice change & $\Delta$ accuracy (pp) \\
\midrule
Primary & Unreachable & 653 & 1.0 & 0.00\% & +0.000 $\pm$ 0.000 \\
Primary & Unreachable & 653 & 3.0 & 0.00\% & +0.000 $\pm$ 0.000 \\
Primary & Unreachable & 653 & 5.0 & 0.31\% & +0.000 $\pm$ 0.051 \\
Price 0.05 & EVPI bound & 498 & 1.0 & 1.20\% & -0.075 $\pm$ 0.052 \\
Price 0.05 & EVPI bound & 498 & 3.0 & 2.21\% & -0.019 $\pm$ 0.100 \\
Price 0.05 & EVPI bound & 498 & 5.0 & 2.81\% & -0.012 $\pm$ 0.141 \\
\bottomrule
\end{tabular}
\end{table}
\space
% END inlined tables/expanded/continuation_table.tex

At the primary price, 653 of 655 batches stop because no rival can overtake the leader within the remaining budget. The other two exhaust the budget. Five extra queries change the tie-broken action in 0.31\% of stopped states and leave mean accuracy unchanged. At price 0.05, 498 batches stop because $U\le0.05$. Their mean $U$ is 0.0377, and the exact value of the first extra query is 0.00011. One, three, and five extra queries change accuracy by $-0.075$, $-0.019$, and $-0.012$ points. These are observed averages; the theorem bounds expected gain under the label model.

\subsection{Prior and historical evidence}
% BEGIN inlined tables/expanded/sensitivity_table.tex
\begin{table}[H]
\centering\small
\caption{Prior, stopping-price and history sensitivity on five PACS stream orders. All settings were fixed before this grid was evaluated.}
\label{tab:selector-sensitivity}
\begin{tabular}{lrrrr}
\toprule
Variant & Accuracy (\%) & Labels & Batch harm & New errors \\
\midrule
Primary & 59.31 $\pm$ 0.36 & 130.8 & 1.92 & 10.53 \\
Label price 0.01 & 59.31 $\pm$ 0.36 & 129.8 & 1.92 & 10.53 \\
Label price 0.025 & 59.32 $\pm$ 0.39 & 123.8 & 1.91 & 10.46 \\
Label price 0.05 & 59.37 $\pm$ 0.34 & 108.0 & 1.91 & 10.52 \\
Prior mass 0.25 & 59.31 $\pm$ 0.34 & 137.0 & 1.89 & 10.42 \\
Prior mass 4 & 59.34 $\pm$ 0.24 & 156.4 & 1.88 & 10.36 \\
Vote mass 0 & 59.35 $\pm$ 0.35 & 136.0 & 1.90 & 10.51 \\
Vote mass 0.5 & 59.33 $\pm$ 0.33 & 141.0 & 1.88 & 10.43 \\
Vote mass 0.9 & 59.32 $\pm$ 0.35 & 135.4 & 1.91 & 10.48 \\
No history & 58.98 $\pm$ 0.26 & 390.6 & 1.15 & 6.98 \\
History decay 0.9 & 59.28 $\pm$ 0.57 & 368.0 & 1.52 & 8.97 \\
History decay 0.5 & 58.90 $\pm$ 0.39 & 390.6 & 1.15 & 7.17 \\
One audit label & 59.44 $\pm$ 0.21 & 250.0 & 1.91 & 10.54 \\
Audit + surprise reset & 59.44 $\pm$ 0.16 & 274.8 & 1.91 & 10.56 \\
\bottomrule
\end{tabular}
\end{table}
\space
% END inlined tables/expanded/sensitivity_table.tex

Prior mass and vote mass have small effects on mean accuracy in this grid, while
label use ranges from 135.4 to 156.4. Historical evidence has a larger effect on
annotation cost. Removing history or multiplying a matched pattern's counts by
0.5 whenever it reappears spends 390.6 labels, compared with 130.8 for cumulative
history. Revisit decay 0.9 spends 368.0. One audit label from the highest-mass
available target stratum uses 250.0 labels, and resetting that stratum when its
observed class has posterior probability below 0.05 uses 274.8. These variants
receive the same three-credit stream budget. They show that stable reusable
evidence is a major source of efficiency in this PACS setting and make the cost
of forgetting and auditing explicit.

\subsection{Observed stratum scale and historical reuse}
% BEGIN inlined tables/expanded/stratum_scaling.tex
\begin{table}[H]
\centering\small
\caption{Observed prediction patterns. $C$ counts classes and $K$ counts candidates. Stratum counts are per batch; P95 is their 95th percentile. History mass is the fraction of deployment inputs whose pattern has acquired labels from earlier batches. RN50 denotes ResNet-50. Query rate is labels per batch.}
\label{tab:stratum-scaling}
\setlength{\tabcolsep}{3.2pt}
\begin{tabular}{lrrrrrr}
\toprule
Setting & $C$ & $K$ & Strata mean & P95 & History mass & Queries/batch \\
\midrule
PACS RN50 & 7 & 4 & 27.17 & 37.00 & 34.30\% & 1.00 \\
CIFAR-10-C & 10 & 4 & 30.62 & 42.00 & 20.20\% & 0.26 \\
CIFAR-100-C & 100 & 4 & 60.43 & 64.00 & 3.51\% & 0.16 \\
PACS CLIP & 7 & 4 & 7.53 & 9.00 & 0.01\% & 0.01 \\
OfficeHome CLIP & 65 & 4 & 42.28 & 48.55 & 0.31\% & 0.03 \\
VLCS CLIP & 5 & 4 & 8.34 & 12.00 & 8.74\% & 0.75 \\
VLCS CLIP + StatA & 5 & 5 & 10.17 & 15.00 & 8.72\% & 0.81 \\
\bottomrule
\end{tabular}
\end{table}
\space
% END inlined tables/expanded/stratum_scaling.tex

The finite candidate count does not determine the number of observed joint
prediction patterns. CIFAR-100-C has 60.43 strata per batch on average with
$C=100$, close to its batch size, while PACS CLIP has 7.53 because its readouts
usually agree. Adding StatA to VLCS raises the mean from 8.34 to 10.17 and the
95th percentile from 12 to 15. Historical evidence covers 34.30\% of target
mass in the PACS RN50 stream, 20.20\% on CIFAR-10-C, and only 0.31\% on
OfficeHome CLIP. These measurements connect class count, candidate diversity
and reusable evidence to the actual selector workload.
\space
% END inlined sections/expanded_acquisition_controls.tex
\FloatBarrier
% BEGIN inlined sections/expanded_cifar_learning.tex
\section{Causal residual-readout learning on corruption streams}
\label{app:cifar_residual}
The learned intervention adds a residual classifier to the fixed BN+flip output. For an input image, write $z_B$ for its BN+flip logits and $x$ for its unit-normalized classifier-input feature. All training labels available at time $t$ come from earlier batches. Matrix $X_{<t}$ contains their normalized features, and $R_{<t}$ contains targets $e_y-\operatorname{softmax}(z_B)$. With residual scale $s$ and regularization strength $\lambda$, the candidate is
\begin{equation}
 z_L(x)=z_B(x)+s xW_{t-1},\qquad
 W_{t-1}=(X_{<t}^{\top}X_{<t}+\lambda I)^{-1}X_{<t}^{\top}R_{<t}.
 \label{eq:cifar_residual_ridge}
\end{equation}
The transferred primary parameters are $\lambda=1$ and $s=4$. They are shared across both CIFAR datasets, all corruptions and all checkpoints. The CIFAR prediction banks contain the exact averaged features underlying the normalization-and-flip logits. The earlier PACS experiment uses its available original classifier-input features; both settings apply the same normalized-feature ridge objective.

\paragraph{Incremental computation.}
The ridge solution is updated without reconstructing its complete kernel matrix. Write $P=(X^{\top}X+\lambda I)^{-1}$ and let $X_q,R_q$ contain the current queried examples. The Woodbury identity gives
\begin{align}
 K_q&=PX_q^{\top}(I+X_qPX_q^{\top})^{-1},\\
 P'&=P-K_qX_qP,\\
 W'&=W+K_q(R_q-X_qW).
 \label{eq:cifar_recursive_ridge}
\end{align}
For the last identity, substitute $W=PX^{\top}R$ into $W'=P'(X^{\top}R+X_q^{\top}R_q)$ and use $P'X_q^{\top}=K_q$. Thus the incremental update solves the same ridge objective as Equation~\eqref{eq:cifar_residual_ridge}. Its state consists of a $512\times512$ inverse covariance and a $512\times C$ readout. Symmetrizing $P'$ removes floating-point asymmetry. The residual learner changes the readout; the backbone remains frozen.

\paragraph{Prediction and feedback order.}
Each current batch first receives frozen $z_B$ and $z_L$ predictions from the preceding state. Uniform querying, disagreement-first uniform querying and posterior querying each produce their own learning trajectory. The two empirical selectors compare errors on three current query labels, choosing $z_B$ on ties. Posterior selection may stop before using three comparison labels; the remaining slots are filled uniformly for future ridge training after its current output is fixed. The current labels are appended to the ridge objective only after the frozen current predictions are selected and scored. The update is performed for every batch, regardless of the selected output.

\paragraph{Coverage and accounting.}
Each corruption is an independent deployment episode with reset ridge and posterior states. Every episode includes all $10{,}000$ inputs, its fixed $5{,}000$ query and $5{,}000$ evaluation roles, and all 157 original batches, including the final 16 inputs. Every batch purchases three distinct query labels. A complete fifteen-corruption suite therefore uses $7{,}065$ labels per learned trajectory. Fixed output alone uses zero labels. Every learned trajectory counts all acquired labels, including batches that select the fixed output. The two datasets and three source-training checkpoints yield 90 episodes for each query policy. Evaluation labels enter the scorer after acquisition and are excluded from the learner and selector.

Results report fixed output, always-learned output and selected output on each policy's actual candidate trajectory. Accuracy, cross-entropy, sample-level newly introduced errors and batch-net harmful contributions are scored on the same evaluation examples. Across-checkpoint standard deviations describe the three training initializations. CPU readout timings exclude the previously completed backbone and candidate forwards.

\begin{table}[!tbp]
\centering
\caption{Causal residual-readout transfer on complete severity-five corruption suites. Accuracy is mean $\pm$ SD over three source checkpoints. Each learned trajectory uses 7,065 labels; decision labels are the subset used for current output comparison. All methods score the same 75,000 evaluation occurrences per checkpoint suite.}
\label{tab:cifar_causal_head}
\small
\begin{tabular}{llrrrr}
\toprule
Query policy & Output & Accuracy (\%) & $\Delta$BF & Total labels & Decision labels \\
\midrule
\multicolumn{6}{l}{\textit{CIFAR-10-C}}\\
--- & Fixed BN+flip & $79.65\pm0.56$ & $0.00$ & $0$ & $0$ \\
Uniform & Learned & $79.74\pm0.54$ & $+0.09$ & $7065$ & $7065.0$ \\
Uniform & Selected & $79.64\pm0.56$ & $0.00$ & $7065$ & $7065.0$ \\
Disagreement & Learned & $79.74\pm0.52$ & $+0.10$ & $7065$ & $7065.0$ \\
Disagreement & Selected & $79.65\pm0.58$ & $0.00$ & $7065$ & $7065.0$ \\
\PORTAL{} & Learned & $79.74\pm0.54$ & $+0.09$ & $7065$ & $50.3$ \\
\PORTAL{} & Selected & $79.66\pm0.56$ & $+0.01$ & $7065$ & $50.3$ \\
\midrule
\multicolumn{6}{l}{\textit{CIFAR-100-C}}\\
--- & Fixed BN+flip & $54.32\pm0.05$ & $0.00$ & $0$ & $0$ \\
Uniform & Learned & $54.31\pm0.09$ & $-0.01$ & $7065$ & $7065.0$ \\
Uniform & Selected & $54.32\pm0.05$ & $+0.00$ & $7065$ & $7065.0$ \\
Disagreement & Learned & $54.34\pm0.12$ & $+0.02$ & $7065$ & $7065.0$ \\
Disagreement & Selected & $54.34\pm0.07$ & $+0.01$ & $7065$ & $7065.0$ \\
\PORTAL{} & Learned & $54.31\pm0.08$ & $-0.01$ & $7065$ & $14.7$ \\
\PORTAL{} & Selected & $54.32\pm0.05$ & $0.00$ & $7065$ & $14.7$ \\
\bottomrule
\end{tabular}
\end{table}
\space
% END inlined sections/expanded_cifar_learning.tex
\FloatBarrier
% BEGIN inlined sections/expanded_active_appendix.tex
\section{Active model selection, shared learning and selector ablations}
\label{app:expanded_active}
\subsection{Contextual active model selection with permanent query roles}
Contextual Active Model Selection (CAMS)~\citep{liu2024contextual} chooses a model for each input. We apply its stochastic procedure to the four fixed PACS candidates: identity, BN, flip, and BN+flip. Each of four advice policies recommends one candidate. CAMS combines these policies through exponential weights and smoothed advice, then samples a model. It chooses that model before deciding whether to request the label. Label requests follow an entropy-dependent Bernoulli probability, and policy losses use inverse-probability weights. Only permanent query inputs can be labeled.

CAMS restarts at domain boundaries. The cumulative caps 131,393,655 are allocated to domains in proportion to their batch counts, corresponding to nominal budgets one,three,five. The algorithm can stop below its cap because its native requests remain stochastic. Five stream orders couple their own selector randomization to the fixed prediction banks. BN predictions are generated from the full unlabeled batch, so the per-example action timing applies to model selection while feature computation remains batch based.

With four single-candidate advice policies, CAMS smoothing assigns $7/16$ probability to the recommended model and $3/16$ to each other model. Model choice remains random even when one policy dominates. Our implementation retains this distribution. Twelve comparisons with the authors' routine reproduce label requests, counts, weights, and sampled prediction losses, including zero and exhausted budgets. Changing withheld evaluation labels leaves model choices and updates unchanged.

% BEGIN inlined tables/expanded/cams_complete.tex
\begin{table}[h]\centering\small
\caption{Full stochastic CAMS with four single-candidate advice policies, permanent query eligibility and per-example predictions. Mean $\pm$ SD over five PACS stream orders. Cap allocation is proportional to domain batch counts; realized requests follow CAMS.}\label{tab:cams_complete}
\begin{tabular}{rrrr}\toprule
Nominal labels/batch & Total cap & Actual labels & Accuracy (\%)\\\midrule
1 & 131 & 112.8 & 55.28 $\pm$ 0.45 \\
3 & 393 & 128.6 & 55.28 $\pm$ 0.44 \\
5 & 655 & 128.6 & 55.28 $\pm$ 0.44 \\
\bottomrule\end{tabular}\end{table}
\space
% END inlined tables/expanded/cams_complete.tex

On the same evaluation images, fixed identity, BN, flip, and BN+flip reach 50.88\%,58.37\%,50.90\%,59.68\%. CAMS makes per-example choices before the corresponding label request (Table~\ref{tab:cams_complete}). \PORTAL{} can use current query labels before choosing an output for the held-out batch. These experiments compare the methods at their respective decision times. The CAMS result uses the four single-candidate advice policies defined above.

\subsection{Human-in-the-loop adaptation with shared labels}
Human-in-the-loop test-time adaptation (HILTTA)~\citep{li2024hiltta} combines active querying with model updates. We use its released TENT-based implementation, which adapts normalization parameters by entropy minimization. The K-Margin rule scales features by uncertainty $1-(p_{(1)}-p_{(2)})$ and selects three query inputs by greedy Euclidean farthest-point selection, starting from the zero vector. Remaining inputs support unlabeled adaptation.

HILTTA takes one entropy step at each of seven learning rates in sequence:
\begin{equation}
 5\times10^{-5},\ 10^{-4},\ 2.5\times10^{-4},\ 5\times10^{-4},\
 10^{-3},\ 2.5\times10^{-3},\ 5\times10^{-3}.
\end{equation}
Each step continues the preceding state and saves model and optimizer parameters. Adam updates normalization affine parameters with $(\beta_1,\beta_2)=(0.9,0.999)$ and zero weight decay. The initial state lies outside the seven scored choices.

Each saved state receives a queried cross-entropy score and an anchor score, the mean squared difference between adapted and fixed-source softmax outputs on unqueried inputs. Each seven-score vector is shifted by its minimum and divided by its resulting maximum. A zero denominator produces an all-ones vector, following the implementation. The normalized loss vectors are added. The score history initializes on the first batch, then averages past and current scores with equal weight. The lowest score selects the saved state, using its first occurrence on ties.

The chosen state receives a supervised Adam update at rate $10^{-5}$. This step enables model gradients, computes acquired-label cross-entropy through a full-batch forward, and restores the TENT gradient configuration afterward. Model and score history continue across target domains; every new stream order restarts from the source checkpoint. The source anchor remains fixed.

We store the incoming and BN+flip predictions before acquiring current labels. A paired selector reuses HILTTA's labels and chooses BN+flip when it has strictly higher queried accuracy. The choice leaves HILTTA's learning updates and future queries unchanged. A further comparison uses predictions after the supervised update. Every output uses the same 393 acquired labels per order, each counted once.

% BEGIN inlined tables/expanded/hiltta_complete.tex
\begin{table}[h]\centering\small
\caption{Shared-label HILTTA on the same 4,155 permanent evaluation images per order. Every row uses 393 total labels. Incoming and BN+flip predictions are fixed before current labels; post-update predictions are available later. Initial-only uses BN+flip on the first batch and incoming output thereafter.}\label{tab:hiltta_complete}
\begin{tabular}{lrrrrr}\toprule
Order & Incoming & BN+flip & Shared gate & Post-update & Initial-only\\\midrule
201 & 79.06 & 79.95 & 79.21 & 79.28 & 79.04 \\
202 & 78.87 & 79.37 & 78.92 & 79.16 & 78.87 \\
203 & 79.78 & 80.31 & 80.05 & 79.88 & 79.78 \\
204 & 81.35 & 81.85 & 81.47 & 81.76 & 81.35 \\
205 & 80.00 & 80.82 & 80.26 & 80.17 & 80.05 \\
\midrule
Mean $\pm$ SD & 79.81 $\pm$ 0.98 & 80.46 $\pm$ 0.94 & 79.98 $\pm$ 1.00 & 80.05 $\pm$ 1.04 & 79.82 $\pm$ 0.99 \\
\bottomrule\end{tabular}\end{table}
\space
% END inlined tables/expanded/hiltta_complete.tex

The shared-label selector reaches 79.98\%, the incoming output 79.81\%, the fixed BN+flip output 80.46\% and the post-update output 80.05\%. These scores describe distinct available outputs of the same acquired-label learning trajectories. A frozen model uses running source statistics and receives a BN+flip intervention; the adaptive incoming model already uses batch statistics and continuously changes parameters. Their difference includes model state, candidate increment and acquisition, while the fixed-bank K-Margin control isolates acquisition on common predictions.

\subsection{Full posterior and empirical-rule ablations}
% BEGIN inlined tables/expanded/pilot_complete.tex
\begin{table}[t]\centering\small
\caption{Fixed-bank portfolio and selector ablations on PACS, five complete stream orders. Default-tied ERM uses BN+flip on empirical ties; first-tied ERM uses the first action. Accuracy is mean $\pm$ SD, and labels are actual per-stream means. Mean logits is an additional output function.}\label{tab:pilot_complete}
\begin{tabular}{lrrrr}\toprule
 & \multicolumn{2}{c}{Two candidates} & \multicolumn{2}{c}{Four candidates}\\
Rule & Accuracy & Labels & Accuracy & Labels\\\midrule
Identity & 50.88 $\pm$ 0.00 & 0.0 & 50.88 $\pm$ 0.00 & 0.0 \\
BN & --- & --- & 58.37 $\pm$ 0.14 & 0.0 \\
Flip & --- & --- & 50.90 $\pm$ 0.00 & 0.0 \\
BN+flip & 59.68 $\pm$ 0.28 & 0.0 & 59.68 $\pm$ 0.28 & 0.0 \\
Mean logits & 60.96 $\pm$ 0.12 & 0.0 & 61.63 $\pm$ 0.16 & 0.0 \\
Uniform, default ties & 59.27 $\pm$ 0.70 & 393.0 & 58.97 $\pm$ 0.72 & 393.0 \\
Disagreement, default ties & 59.21 $\pm$ 0.20 & 393.0 & 58.93 $\pm$ 0.47 & 393.0 \\
Maximum KL, default ties & 59.31 $\pm$ 0.31 & 393.0 & 58.73 $\pm$ 0.23 & 393.0 \\
Uniform, first ties & 55.51 $\pm$ 1.13 & 393.0 & 55.45 $\pm$ 0.66 & 393.0 \\
Disagreement, first ties & 57.83 $\pm$ 0.41 & 393.0 & 57.36 $\pm$ 0.82 & 393.0 \\
Maximum KL, first ties & 58.13 $\pm$ 0.35 & 393.0 & 57.36 $\pm$ 0.38 & 393.0 \\
\PORTAL{} & 59.31 $\pm$ 0.25 & 112.2 & 59.31 $\pm$ 0.36 & 130.8 \\
\PORTAL{}, constant three labels & 59.29 $\pm$ 0.44 & 393.0 & 59.39 $\pm$ 0.19 & 393.0 \\
\PORTAL{}, no history & 59.21 $\pm$ 0.54 & 392.4 & 58.98 $\pm$ 0.26 & 390.6 \\
\PORTAL{}, no history/three labels & 59.00 $\pm$ 0.57 & 393.0 & 58.90 $\pm$ 0.28 & 393.0 \\
\PORTAL{}, no competitive pruning & 59.52 $\pm$ 0.20 & 391.8 & 59.21 $\pm$ 0.23 & 392.4 \\
Batch oracle & 61.68 $\pm$ 0.40 & --- & 62.87 $\pm$ 0.28 & --- \\
\bottomrule\end{tabular}\end{table}
\space
% END inlined tables/expanded/pilot_complete.tex
Table~\ref{tab:pilot_complete} retains every distinct fixed prediction, the mean-logit output, both empirical tie conventions and each posterior ablation. All rows use identical evaluation images within a stream. The two-action portfolio contains identity and BN+flip; the four-action portfolio adds BN and flip separately. Adding exact copies to make four and eight nominal candidates, respectively, leaves the joint output and query sequence unchanged. This is checked on every batch of all five streams.

Constant-three spending turns off stopping and fills the available query budget even on agreement inputs. Removing historical evidence reinitializes the prior on each batch. Removing competitive pruning retains all candidates when evaluating one-label value. These changes isolate the implemented sources of budget savings. The posterior perfect-information calculation always contains all distinct candidates, including in the competitive-pruning variant.
\space
% END inlined sections/expanded_active_appendix.tex

\FloatBarrier
\section{Stopping, changing evidence, and expanded candidate sets}
\label{app:round_evidence}
This section connects the additional experiments to the decision studied in the main text. The stopping comparison holds the original PACS candidates fixed. The expanded portfolio makes the mean-logit ensemble available to the selector. The finite-pool calculation measures how difficult a real candidate difference is to identify. The conditional-label experiment changes that difference without changing the prediction patterns.

\subsection{Protocols and label accounting}
Three stopping experiments use the same four PACS candidates but different budget and history rules. The original experiment carries all unused credits, allows twelve queries per batch, and uses 130.8 labels on average. The independent-history experiment caps stored credits at four times the per-batch allowance. With three new credits per batch, it allows at most twelve queries and uses 140.0 labels. The shared-history experiment gives each copied state three current queries and uses 85.6 labels. Its decisions all start from the reference policy's history.

The five-candidate PACS, dynamic, CIFAR-10.1, and corruption-switch studies use three credits per batch with a stored balance capped at twelve. Mean logits is the default on ties. Each selector has its own history and label record. All labels, including audit labels, count toward cost. Fixed outputs use no labels. Every comparison retains the same permanent query and evaluation roles.

\subsection{Stopping with independent and matched histories}
\label{app:round_stopping}
The stopping study uses five PACS stream orders and the same four frozen candidate predictions as the original acquisition controls. Full \PORTAL{} uses both the value bound and reachable-winner test. Reachability-only disables the value-bound trigger. Value-bound-only disables the reachability trigger. Every variant retains the competitive candidate set for query scoring. If no positive query score remains, the latter variants select by target-stratum mass while waiting for their own stopping condition or budget exhaustion.

For the confidence comparator, approximate each posterior difference by a normal distribution with mean gap $d_a$ and variance $s_a^2$ from Theorem~\ref{thm:expanded_stop}. The comparator stops when the sum of the normal lower-tail probabilities over rivals is at most $0.05$. This is a Gaussian confidence proxy. The mean-gap rule instead stops when the leading posterior mean exceeds the runner-up by at least $0.05$. These are the fixed thresholds tested here.

For per-batch credit $b\in\{1,3,5\}$, stored credit is capped at $4b$. The implementation sets the per-batch query cap to $\max(12,4b)$; available credit therefore limits a batch to at most $4b$ queries. Independent-stream variants maintain their own histories. In the matched-current-batch experiment, every policy starts from a copy of the same history and record of acquired labels. Only the full-policy branch supplies the history for the next batch. Table~\ref{tab:round_stop_all} reports both experiments.
\begin{table}[htbp]
\centering\small
\setlength{\tabcolsep}{4pt}
\caption{Stopping controls on five PACS stream orders. Accuracy is mean $\pm$ order SD in percent. Labels count actual purchases. Harm is the positive batch-level accuracy loss relative to the unchanged model, averaged over all evaluation inputs, in percentage points.}
\label{tab:round_stop_all}
\begin{tabular}{lllr rr}
\toprule
History & Credit & Stop rule & Accuracy & Labels & Harm \\
\midrule
Independent & 1 & \PORTAL{} & $59.34\pm0.11$ & 90.4 & 1.74 \\
Independent & 1 & Reachability only & $59.34\pm0.11$ & 90.4 & 1.74 \\
Independent & 1 & Value bound only & $59.06\pm0.26$ & 131.0 & 1.69 \\
Independent & 1 & Confidence proxy & $59.06\pm0.26$ & 131.0 & 1.69 \\
Independent & 1 & Mean-gap threshold & $59.06\pm0.26$ & 131.0 & 1.69 \\
Independent & 3 & \PORTAL{} & $59.35\pm0.34$ & 140.0 & 1.88 \\
Independent & 3 & Reachability only & $59.35\pm0.34$ & 140.0 & 1.88 \\
Independent & 3 & Value bound only & $59.39\pm0.19$ & 392.8 & 1.82 \\
Independent & 3 & Confidence proxy & $59.39\pm0.19$ & 392.8 & 1.82 \\
Independent & 3 & Mean-gap threshold & $59.29\pm0.26$ & 392.2 & 1.88 \\
Independent & 5 & \PORTAL{} & $59.28\pm0.33$ & 130.0 & 1.95 \\
Independent & 5 & Reachability only & $59.28\pm0.33$ & 130.0 & 1.95 \\
Independent & 5 & Value bound only & $59.40\pm0.36$ & 654.0 & 1.81 \\
Independent & 5 & Confidence proxy & $59.40\pm0.36$ & 654.0 & 1.81 \\
Independent & 5 & Mean-gap threshold & $59.35\pm0.37$ & 652.4 & 1.81 \\
Matched & 3 & \PORTAL{} & $59.32\pm0.34$ & 85.6 & 1.86 \\
Matched & 3 & Reachability only & $59.32\pm0.34$ & 85.6 & 1.86 \\
Matched & 3 & Value bound only & $59.32\pm0.34$ & 392.8 & 1.86 \\
Matched & 3 & Confidence proxy & $59.32\pm0.34$ & 392.8 & 1.86 \\
Matched & 3 & Mean-gap threshold & $59.28\pm0.31$ & 391.0 & 1.86 \\
\bottomrule
\end{tabular}
\end{table}

With three credits, full \PORTAL{} and reachability-only match in all 655 batches. Of these, 654 stop because the posterior winner is unreachable by a rival; one exhausts the budget. The value-bound trigger does not fire at price $0.002$. Full and reachability-only also coincide at the other two credits and in the matched-current-batch comparison. Relative to value-bound-only, full \PORTAL{} saves 252.8 labels at three credits. The paired accuracy difference is $-0.043$ percentage points, with a 95\% stream-order $t$ interval of $[-0.315,0.228]$. This interval describes order variation on one checkpoint and the same target images.

\FloatBarrier
\subsection{An ensemble inside the candidate set}
\label{app:round_portfolios}

\begin{table}[!htb]
\centering
\captionsetup{font=small,skip=4pt}
\caption{Fixed accuracy and selector accuracy (\%) / labels on common evaluation inputs. $K$ counts selectable candidates. Labels are actual totals per suite or stream.}
\label{tab:round_overview}
\fontsize{8}{9.3}\selectfont
\setlength{\tabcolsep}{2pt}
\begin{tabular*}{\linewidth}{@{\extracolsep{\fill}}lrrr@{}}
\toprule
Data ($K$) & Fixed & \PORTAL{} & Top-KL \\
\midrule
CIFAR-10-C (4) & 79.65 & 79.26/612.3 & 79.20/7065 \\
CIFAR-100-C (4) & 54.32 & 53.87/381.7 & 53.76/7065 \\
PACS (5) & 61.63 & 62.07/144.2 & 60.57/393 \\
CIFAR-10.1 (5) & 89.87 & 89.59/1.9 & 89.06/96 \\
\bottomrule
\end{tabular*}
\par\vspace{3pt}\raggedright\fontsize{7.8}{8.7}\selectfont
Fixed outputs: BN+flip on CIFAR-C, mean logits on PACS, and flip on CIFAR-10.1. They use no labels.
\end{table}
The five-candidate portfolio contains identity, BN, flip, BN+flip, and the mean of their logits. All empirical and posterior selectors can select the ensemble. The primary PACS replay uses the original Photo-source checkpoint and five target orders. Table~\ref{tab:round_primary} reports cumulative history, decay, and ordinary queried-error selection. In a decay variant, a pattern's historical counts are multiplied by 0.9 each time that pattern reappears.
\begin{table}[htbp]
\centering\small
\setlength{\tabcolsep}{4pt}
\caption{Five-candidate PACS replay, five stream orders. The fixed ensemble is available to every selector and resolves ties. Harm is measured against that ensemble in percentage points. The audited detector is a separate exploratory variant.}
\label{tab:round_primary}
\begin{tabular}{lrrr}
\toprule
Rule & Accuracy (\%) & Labels & Harm \\
\midrule
Fixed BN+flip & $59.68\pm0.28$ & 0.0 & 5.05 \\
Fixed mean logits & $61.63\pm0.16$ & 0.0 & 0.00 \\
Uniform empirical selection & $60.45\pm0.72$ & 393.0 & 2.08 \\
Top-KL empirical selection & $60.57\pm0.43$ & 393.0 & 2.24 \\
\PORTAL{}, cumulative & $62.07\pm0.71$ & 144.2 & 1.55 \\
\PORTAL{}, decay 0.9 & $62.11\pm0.67$ & 250.4 & 1.62 \\
Audited change detector & $61.86\pm0.51$ & 263.2 & 1.74 \\
\bottomrule
\end{tabular}
\end{table}

Cumulative \PORTAL{} improves on fixed mean logits in four of five orders. The mean paired gain is $0.448$ percentage points, with a 95\% stream-order $t$ interval of $[-0.345,1.240]$. The audited detector was developed on the synthetic change experiment. It uses an eight-observation window, threshold 2, and one paid audit per batch. It reaches 61.86\% with 263.2 labels on this replay, compared with 62.07\% and 144.2 for cumulative history. Its additional monitoring cost does not improve this real-stream result.

\paragraph{Regenerated arrival conditions.}
The dynamic extension tests ten conditions with three regenerated orders each. Conditions change the source checkpoint, batch size, or arrival order. Every comparison holds the four original predictions fixed and adds their mean logits as a fifth candidate. Table~\ref{tab:round_dynamic} gives all conditions, including the original four-candidate Top-KL result and its five-candidate extension.
\begin{table}[htbp]
\centering\scriptsize
\setlength{\tabcolsep}{4pt}
\caption{All five-candidate PACS dynamic conditions, averaged over three orders. Condition lists source, arrival order, and batch size. Paired entries are accuracy (\%)/labels. Best fixed is the hindsight best single fixed output for that condition. KL-4 is the original four-candidate Top-KL accuracy from Table~\ref{tab:dynamic-matrix-performance}; KL-5 uses the expanded portfolio.}
\label{tab:round_dynamic}
\begin{tabular}{@{}lrrrrrr@{}}
\toprule
Condition & Best fixed & Ensemble & Cumulative & Decay & KL-5 & KL-4 \\
\midrule
Art, domain, 64 & 78.89 & 78.89 & 78.01/79.0 & 77.90/138.7 & 77.89/377.3 & 77.45 \\
Cartoon, domain, 64 & 82.20 & 82.20 & 81.79/51.3 & 81.65/72.3 & 81.72/362.7 & 80.97 \\
Photo, domain, 16 & 60.79 & 60.79 & 60.50/98.7 & 60.66/156.3 & 60.20/1561.7 & 58.12 \\
Photo, domain, 32 & 61.54 & 61.54 & 61.30/111.0 & 61.39/202.3 & 60.52/783.0 & 59.25 \\
Photo, domain, 128 & 61.47 & 61.47 & 61.94/156.7 & 61.76/182.3 & 60.71/198.0 & 59.29 \\
Photo, class, 64 & 50.90 & 45.25 & 51.74/211.3 & 53.60/330.0 & 49.58/393.0 & 54.11 \\
Photo, gradual, 64 & 60.92 & 60.92 & 60.99/138.0 & 60.67/236.0 & 59.98/390.0 & 59.40 \\
Photo, mixed, 64 & 60.56 & 59.15 & 60.02/90.3 & 59.97/197.7 & 59.05/390.3 & 58.34 \\
Photo, reverse, 64 & 61.34 & 61.34 & 61.84/143.7 & 61.68/238.7 & 60.19/393.0 & 58.91 \\
Sketch, domain, 64 & 63.02 & 55.22 & 62.03/66.0 & 61.83/137.7 & 59.64/287.7 & 62.71 \\
\bottomrule
\end{tabular}
\end{table}

Under class-block arrival, flip is the best fixed output at 50.90\%. Cumulative history reaches 51.74\% with 211.3 labels, and decay reaches 53.60\% with 330.0. Four-candidate Top-KL reaches 54.11\% with 393 labels. Interpolating between fixed flip and Top-KL gives 53.597\% expected accuracy at 330 labels. This mixture is fitted from the measured endpoints after evaluation.

\FloatBarrier
\subsection{CIFAR-10.1 and corruption-switch streams}
\label{app:round_confirmation}
Version 6 of the CIFAR-10.1 benchmark contains 2,000 images \citep{recht2018cifar101}. A fixed hash of image identity assigns half to querying and half to evaluation. We regenerate predictions for three source checkpoints and three image orders, using batches of 64 and the original CIFAR-C preprocessing. Model weights stay fixed. The ensemble averages the four original logits, and every posterior selector uses it on ties. Table~\ref{tab:round_cifar101} averages orders within each checkpoint before reporting the mean and checkpoint SD.
\begin{table}[htbp]
\centering\small
\setlength{\tabcolsep}{4pt}
\caption{CIFAR-10.1 with three checkpoints and three orders per checkpoint. Each order has 1,000 evaluation images. Accuracy is mean $\pm$ checkpoint SD after averaging orders within a checkpoint. Audit labels are included in cost.}
\label{tab:round_cifar101}
\begin{tabular}{lrr}
\toprule
Rule & Accuracy (\%) & Labels \\
\midrule
Fixed identity & $89.23\pm0.97$ & 0.0 \\
Fixed BN & $85.84\pm0.68$ & 0.0 \\
Fixed flip & $89.87\pm0.81$ & 0.0 \\
Fixed BN+flip & $86.22\pm0.42$ & 0.0 \\
Fixed mean logits & $89.69\pm0.70$ & 0.0 \\
Uniform empirical selection & $89.56\pm0.72$ & 96.0 \\
Top-KL empirical selection & $89.06\pm0.80$ & 96.0 \\
\PORTAL{}, cumulative & $89.59\pm0.69$ & 1.9 \\
\PORTAL{}, decay 0.9 & $89.59\pm0.69$ & 1.9 \\
Audited change detector & $89.59\pm0.69$ & 33.9 \\
\bottomrule
\end{tabular}
\end{table}

The corruption-switch replay uses saved severity-five CIFAR-10-C predictions. Each segment contains 1,600 consecutive base-image indices aligned with the saved 64-image normalization batches. A stream uses 6,400 distinct base images, so an image never reappears under another corruption within that stream. The original query/evaluation roles are preserved. The three schedules are Gaussian noise--brightness--shot noise--JPEG, shot noise--JPEG--impulse noise--brightness, and brightness--Gaussian noise--JPEG--shot noise. The same three source checkpoints are used for each schedule. Every selector starts with empty history at the beginning of a stream and carries it across segments.
\begin{table}[htbp]
\centering\small
\setlength{\tabcolsep}{4pt}
\caption{Sequential corruption-switch selection. Accuracy is mean $\pm$ checkpoint SD on the common evaluation images. Each stream has 100 batches and disjoint base images across segments. The fixed BN+flip output is the strongest fixed candidate in every complete schedule.}
\label{tab:round_corruption}
\begin{tabular}{llrr}
\toprule
Schedule & Rule & Accuracy (\%) & Labels \\
\midrule
Noise first & Fixed BN+flip & $76.67\pm0.63$ & 0.0 \\
Noise first & Fixed mean logits & $72.52\pm1.40$ & 0.0 \\
Noise first & Uniform empirical selection & $73.55\pm0.65$ & 300.0 \\
Noise first & Top-KL empirical selection & $74.07\pm0.76$ & 300.0 \\
Noise first & \PORTAL{}, cumulative & $74.96\pm1.00$ & 57.3 \\
Noise first & \PORTAL{}, decay 0.9 & $74.98\pm1.05$ & 80.7 \\
Noise first & Audited change detector & $75.15\pm0.90$ & 144.0 \\
Alternating & Fixed BN+flip & $75.54\pm0.76$ & 0.0 \\
Alternating & Fixed mean logits & $72.08\pm1.53$ & 0.0 \\
Alternating & Uniform empirical selection & $72.84\pm0.23$ & 300.0 \\
Alternating & Top-KL empirical selection & $73.03\pm0.77$ & 300.0 \\
Alternating & \PORTAL{}, cumulative & $74.18\pm0.63$ & 57.3 \\
Alternating & \PORTAL{}, decay 0.9 & $74.39\pm0.75$ & 85.7 \\
Alternating & Audited change detector & $74.40\pm0.67$ & 144.7 \\
Brightness first & Fixed BN+flip & $76.02\pm0.69$ & 0.0 \\
Brightness first & Fixed mean logits & $73.35\pm1.76$ & 0.0 \\
Brightness first & Uniform empirical selection & $74.10\pm0.98$ & 300.0 \\
Brightness first & Top-KL empirical selection & $74.25\pm1.22$ & 300.0 \\
Brightness first & \PORTAL{}, cumulative & $75.39\pm0.80$ & 52.0 \\
Brightness first & \PORTAL{}, decay 0.9 & $75.30\pm0.95$ & 87.0 \\
Brightness first & Audited change detector & $75.46\pm0.75$ & 138.3 \\
\bottomrule
\end{tabular}
\end{table}

\FloatBarrier
\subsection{Exact paired-label calculations on real predictions}
\label{app:round_paired}
Within each corruption segment, we compare BN+flip with mean logits. The two outputs are correct or incorrect on the same underlying images. Their paired correctness difference takes values $1$, $-1$, or $0$. Its query-pool mean and variance determine the finite-pool outcome counts in Equation~\eqref{eq:expanded_hypergeom}. We compute the probability that a uniform sample selects BN+flip, including ties, and use the separate evaluation pool to identify the better output. We then draw 2,000 independent query subsets per checkpoint, segment, and budget to check the exact result.

The pool contains about 800 eligible inputs per segment. Budgets are 0, 3, 15, 75, and 300 pooled labels. These calculations select one output for a complete segment; the sequential selectors in Table~\ref{tab:round_corruption} instead select once per batch. Table~\ref{tab:round_pair_error} gives the exact probability of choosing the worse output. The zero-label rule always selects BN+flip. It is correct in the noise segments and wrong in the brightness and JPEG segments. Small nonzero budgets can disturb a correct default before enough evidence has accumulated to stabilize the empirical choice.
\begin{table}[htbp]
\centering\small
\setlength{\tabcolsep}{4pt}
\caption{Exact probability (\%) of selecting the worse of BN+flip and mean logits. Means cover three checkpoints. Columns give the number of labels sampled uniformly without replacement from one corruption segment. Tie decisions use BN+flip. Differences between schedules also reflect their distinct image-index blocks.}
\label{tab:round_pair_error}
\begin{tabular}{llrrrr}
\toprule
Schedule & Corruption & 3 labels & 15 labels & 75 labels & 300 labels \\
\midrule
Noise first & Brightness & 90.82 & 64.90 & 24.05 & 2.35 \\
Noise first & Gaussian noise & 11.53 & 11.59 & 1.73 & 0.00 \\
Noise first & Jpeg compression & 86.02 & 59.90 & 35.03 & 14.52 \\
Noise first & Shot noise & 11.66 & 16.63 & 7.61 & 1.06 \\
Alternating & Brightness & 91.41 & 67.66 & 30.35 & 4.71 \\
Alternating & Impulse noise & 12.40 & 15.79 & 5.53 & 0.06 \\
Alternating & Jpeg compression & 85.00 & 57.94 & 31.59 & 8.00 \\
Alternating & Shot noise & 14.73 & 22.30 & 14.01 & 2.98 \\
Brightness first & Brightness & 91.20 & 68.50 & 37.15 & 10.74 \\
Brightness first & Gaussian noise & 10.64 & 11.26 & 1.84 & 0.00 \\
Brightness first & Jpeg compression & 86.71 & 61.90 & 38.47 & 15.51 \\
Brightness first & Shot noise & 14.04 & 18.81 & 7.87 & 0.35 \\
\bottomrule
\end{tabular}
\end{table}

\FloatBarrier
\subsection{Historical evidence under a controlled label change}
\label{app:round_history}
The synthetic experiment uses four candidates, three classes, and two fixed prediction patterns. Each batch has 64 inputs: 16 query and 16 evaluation inputs in each pattern. Every batch therefore has the same stratum weights. The class probabilities are shared by query and deployment inputs. Before a change, the probabilities of classes 0, 1, and 2 are $0.725$, $0.225$, and $0.05$. An abrupt change exchanges the first two probabilities at batch 50. A gradual change interpolates between them over batches 50--70. The stable stream keeps the initial law. Five independent seeds generate the potential labels shared by all policies.

Cumulative history retains acquired counts. No-history selection resets the prior each batch. Decay multiplies a pattern's historical counts by 0.9 when it reappears. An audit variant buys one label from the largest available stratum before ordinary selection. Audit/reset also clears that stratum's counts when the observed class had a posterior probability below its threshold. The strata have equal deployment weights and query counts here. The deterministic tie rule repeatedly sends the first audit to the same stratum.
\begin{table}[htbp]
\centering\small
\setlength{\tabcolsep}{4pt}
\caption{Conditional-label change with fixed predictions and stratum weights. Accuracy is in percent; labels count purchases within the displayed period, including audits. Stable and abrupt columns cover the final 50 batches. Gradual columns cover the final 30, after the transition. Means cover five synthetic seeds.}
\label{tab:round_drift}
\begin{tabular}{lrr|rr|rr}
\toprule
History & \multicolumn{2}{c|}{Stable} & \multicolumn{2}{c|}{Abrupt} & \multicolumn{2}{c}{Gradual} \\
 & Acc. & Labels & Acc. & Labels & Acc. & Labels \\
\midrule
Cumulative & 72.21 & 0.0 & 22.54 & 0.0 & 22.92 & 0.0 \\
No history & 62.60 & 150.4 & 64.10 & 150.2 & 63.42 & 90.0 \\
Decay 0.9 & 72.21 & 150.4 & 67.66 & 149.8 & 70.88 & 85.4 \\
One audit per batch & 72.21 & 50.0 & 27.24 & 72.8 & 22.92 & 30.0 \\
Audit/reset 0.05 & 72.21 & 54.2 & 38.34 & 74.4 & 33.71 & 38.2 \\
Audit/reset 0.30 & 71.78 & 132.4 & 55.36 & 125.4 & 52.73 & 71.4 \\
\bottomrule
\end{tabular}
\end{table}

After the abrupt change, cumulative history never selects the expected-optimal action for five consecutive batches in any seed. Decay reaches this criterion after 7.0 batches on average. After the gradual change crosses the decision boundary, its mean delay is 10.4 batches. The reset threshold of 0.30 improves recovery relative to 0.05, but also causes 19.4 resets on average in the stable stream. These observations separate acquiring a diagnostic label from replacing enough old evidence to change the decision.

A window-based change detector improves recovery on the synthetic stream. On PACS, it uses more labels and reaches lower accuracy than cumulative history (Table~\ref{tab:round_primary}). This motivates the budget-matched checks below.

\subsection{Budgeted audits under feedback bias and label drift}
\label{app:w1_guarded}
We next compare guarded selection with two explicitly defined estimates of class probabilities. Exponentially weighted (EW) counts discount past labels by 0.95 per batch. Direct inverse-propensity weighting with EW (Direct IPW+EW) also corrects for unequal query probabilities. For query probability $\pi_j$ and eligible pool size $|Q|$, its label weight is $\min\{4,1/(\pi_j|Q|)\}$. The cap is inactive in the reported runs. Each comparator chooses the candidate with the highest estimated deployment accuracy. Both share the guarded selector's predictions and total label ceiling. These are our reconstructions: the original implementations named in the experiment specification were unavailable.

The two-pattern stream keeps predictions fixed and shares class probabilities between query and deployment inputs within each batch. Those probabilities stay stable, reverse abruptly, or change gradually. The guarded selector reserves periodic labels, buys a short burst after surprising evidence, and tests a switch before retaining it.

At a 120-label ceiling, its stable-period difference from the strongest fixed rule is $-0.122\pp$, with paired interval $[-0.150,-0.094]$ over 960 fresh streams. On 360 paired changing streams, post-change gains over same-budget EW are $+9.52\pp$ for abrupt change and $+4.87\pp$ for gradual change. Recovery starts within ten batches in 0.92 and 0.80 of streams, respectively. Raising only the ceiling to 160 improves gradual recovery to 0.94. The stable-period difference becomes $-0.100\pp$, with interval $[-0.128,-0.071]$. We report each budget separately.

The sparse protocol draws four of 300 hidden modes per batch. A mode is a subgroup with its own class probabilities; several modes share a prediction stratum. A tenth of each mode's examples have an atypical label distribution. Prioritized audits oversample these examples, creating feedback bias even before the distribution changes.

The sparse selector corrects acquired labels using their known query probabilities. Every input type must have positive query probability. A run-length posterior estimates how long the current label distribution has persisted. Its change hazard, the prior probability of change per batch, is 0.02. Under the 100-label ceiling, the selector starts with eight labels and reserves periodic audits. Surprising evidence triggers a burst capped at five labels per batch, which may continue for three batches after the trigger. Uniform and prioritized schedules use the same candidates and budget. Table~\ref{tab:w1_sparse_cells} reports 28 independently generated blocks of 30 seed streams each. Gradual-change estimates exceed EW with intervals spanning zero; recurrent-change gains have positive intervals under both schedules.

\begin{table}[!htb]
\centering\small
\caption{Post-change accuracy difference from reconstructed EW in percentage points on the 300-mode sparse protocol. Priority oversamples atypical examples within a mode; uniform draws do not. Each cell contains 28 blocks and 840 seed streams, with the same scored examples and 100-label ceiling for both methods.}
\label{tab:w1_sparse_cells}
\begin{tabular*}{.78\linewidth}{@{\extracolsep{\fill}}lrr@{}}
\toprule
Change & Prioritized audit & Uniform audit \\
\midrule
Abrupt & $+3.94\ [0.59,7.28]$ & $+4.75\ [1.36,8.14]$ \\
Gradual & $+1.69\ [-3.24,6.62]$ & $+1.58\ [-3.36,6.51]$ \\
Recurrent & $+3.28\ [0.61,5.95]$ & $+3.84\ [1.24,6.43]$ \\
\bottomrule
\end{tabular*}
\end{table}

Under 100 labels and prioritized audits, recovery within ten batches is 0.685, 0.798, and 0.642 at recurrent changes 25, 50, and 75. The target is 0.90 at each change. A separate 160-label variant changes both the budget and selector settings. It reaches 0.907, 0.933, and 0.893 across five fresh blocks. Its pooled recovery is 0.911, with bootstrap interval $[0.884,0.936]$. The final change remains below the target. Post-change accuracy exceeds same-budget EW by $8.55\pp$, with paired interval $[7.68,9.41]$ over 150 streams. Mean harmful-batch rates are 3.15\% under prioritized abrupt change and 4.62\% under gradual change. The gradual-change 95\% upper bound is 5.07\%, above the 5\% threshold.

A harmful choice has lower accuracy than the prespecified strong fixed candidate on the same deployment inputs. We count the fraction of harmful batches in each stream. At 100 labels, each seed contributes an average over six combinations of change scenario and query schedule. Across 840 seeds, the pooled mean is 3.73\%, with a one-sided 95\% $t$ upper bound of 3.91\%. Recurrent streams have a higher mean of 7.46\% across the two schedules. Individual six-scenario means exceed 5\% for 193 seeds, and eleven of 28 block intervals cross 5\%. These measurements locate the low pooled rate and the higher harm under repeated changes.

On real PACS predictions, the guarded four-candidate selector reaches 59.610\% with 131 labels. The strongest fixed candidate reaches 59.677\%, and reconstructed EW reaches 56.987\%. The stable-domain difference from the strongest fixed candidate is $-0.020\pp$. Five arrival orders share one checkpoint, so this comparison measures arrival-order variation.

On CIFAR-10.1, a separate fixed probability ensemble uses zero labels and scores 89.733\%, against 89.867\% for the strongest fixed candidate. The paired difference is $-0.133\pp$, with interval $[-1.216,+0.949]$ over three checkpoints. The noninferiority criterion allows a loss of at most 0.20\,pp. The point estimate meets it; the interval is too wide to establish it. Repeated orders are averaged within checkpoints. This experiment evaluates an additional fixed candidate.

\paragraph{Numerical and implementation checks.}
The stored CIFAR-10.1 predictions reproduce batch errors and stream totals. Fixed outputs in the dynamic experiment match the original predictions. Label counts include all audit requests. Replacing withheld evaluation labels leaves query choices unchanged. An independent finite-pool calculation samples the disagreement count first and then its favorable outcomes; it agrees with the exact probabilities. These checks validate the calculations. Statistical uncertainty uses the checkpoints and orders specified for each experiment.
\FloatBarrier

\section{Related work}
\label{app:related_work}
The literature offers many ways to change a deployed model. It also offers ways to judge whether a change helps. \PORTAL{} brings these two questions together through a shared label budget.

\subsection{Feedback and the risk of a wrong choice}
Adaptive labeling and label-free accuracy estimation help evaluate models after distribution shift \citep{DBLP:conf/nips/MittalMJN24,DBLP:conf/cvpr/LeeCGL24}. Accuracy--agreement relations provide another signal from unlabeled data \citep{DBLP:conf/nips/KimSBRK24}. Binary feedback reduces the amount of information requested from an annotator \citep{DBLP:conf/icml/LeeCKSGL25}. In our setting, a full class label judges every candidate on the same input.

Feedback can lose its value as the stream changes. Risk monitoring, protected entropy matching, and domain-shift detection track different signs of trouble \citep{DBLP:conf/nips/SchirmerJNN25,DBLP:conf/nips/BarSR24,DBLP:conf/iccv/ParkPKM25}. Recovery time and prediction latency add two costs beyond current accuracy \citep{zhou2026learnability,sreeram2026tempora}. Our drift experiments measure how quickly fresh labels can replace stale evidence in the selection decision.

The deployment population may also change in ways that a prediction pattern hides. Open-set inputs, malicious test samples, and noisy inputs challenge adaptation in different ways \citep{DBLP:conf/cvpr/GaoZ024,DBLP:conf/cvpr/ParkHMPO24,DBLP:conf/iclr/CaoZZLLZH25}. Domain compensation addresses open-set streams, while selective adaptation isolates a shifted modality \citep{DBLP:journals/corr/abs-2604-21772,DBLP:conf/icml/ChenZHJWF0B25}. Long-tailed classes add another source of unequal feedback \citep{wang2026long}. These settings make the link between queried and deployed inputs a substantive part of the decision.

\subsection{How candidate predictions change}
An intervention can update a model through self-training or prediction energy \citep{DBLP:conf/cvpr/Ma24,DBLP:conf/cvpr/YuanXHSSC24}. Disentangled objectives show that prediction entropy alone does not describe every useful update \citep{DBLP:conf/iclr/0004JLPSHY24}. Ranked entropy and entropy--energy duality refine the signal used for adaptation \citep{DBLP:conf/icml/HanNH25,DBLP:conf/nips/ParkWRK25}. Uncertainty calibration, Tsallis entropy, and sharpness-aware regularization offer further choices \citep{DBLP:journals/pami/TanCWZCZN25,DBLP:journals/corr/abs-2602-11743,DBLP:journals/corr/abs-2509-04977}.

Bayesian updates and online expectation--maximization change how a candidate uses incoming examples \citep{DBLP:conf/cvpr/Zhou0LLZDLL25,DBLP:conf/cvpr/DaiY25}. Inter-class topology and adaptive representation alignment address structure across a changing stream \citep{DBLP:conf/cvpr/NiLTHY025,DBLP:conf/nips/ZhangXYS24}. Each method creates predictions that can be judged against a fixed rule on the next deployment batch.

An update's state can live in batch-normalization memory, an organized example buffer, or a tracked subspace \citep{DBLP:conf/eccv/KangKOK24,DBLP:conf/eccv/ZhuHMZMDW24,DBLP:conf/nips/DuanXLW25}. Expert blending chooses an input-dependent mixture \citep{DBLP:conf/icml/LeeYH24}. Diffusion-based transformations change corruptions or align an input with a synthetic domain \citep{DBLP:conf/eccv/OhLCJHY24,DBLP:conf/cvpr/GuoZDWGNSSH25}. Our retain-versus-undo experiment measures the later effect of keeping one such state change.

The cost of constructing a candidate also matters. Forward-only adaptation, backpropagation-free alignment, and activation-function adaptation alter the required computation \citep{DBLP:conf/icml/NiuMCWZ24,ma2026architecture,DBLP:journals/corr/abs-2603-26096}. Sparse updates and activation sparsity reduce work or memory \citep{DBLP:conf/nips/ChaKCGL25,DBLP:conf/cvpr/Ma0000ZWFC0025}. Cloud--edge distillation, cross-device merging, and zeroth-order prompt optimization offer other deployment choices \citep{DBLP:conf/iclr/ChenN0XST24,DBLP:journals/corr/abs-2607-02988,DBLP:journals/corr/abs-2603-04733}. Label cost and computation cost remain distinct when choosing a complete system.

\subsection{Vision--language prediction rules}
Simple adaptation rules and analyses of zero-shot generalization provide strong starting points for vision--language models \citep{DBLP:conf/nips/FarinaFIM024,DBLP:conf/cvpr/ZanellaA24}. Regional bootstrapping, attribute prompts, and hierarchical knowledge change what the model sees or asks of a class \citep{DBLP:conf/nips/00030G00X24,DBLP:conf/cvpr/ZhangXLPZ25,DBLP:conf/cvpr/ZhangZ0ZXZ25}. Semantic integration and dynamic prompts offer further candidate rules \citep{DBLP:conf/cvpr/HuangJJLKW25,DBLP:conf/iclr/XiaoYHCJHSWS25}. A prompt coreset carries selected context across the stream \citep{DBLP:conf/icml/ZhangMNH25}.

Candidates can also use spatial bias, both modalities, or fewer visual tokens \citep{DBLP:conf/icml/Huang0X0J25,DBLP:conf/iccv/MaharanaZKFG25,DBLP:conf/iccv/WangGWHL25}. Retrieval and label propagation use neighboring examples to change predictions \citep{DBLP:conf/iclr/LeeKKBS025,DBLP:conf/iccv/FanCYYQDYS25,DBLP:conf/iclr/LiSGJ025}. Prototypes and attributive--associative memory retain different summaries of past inputs \citep{huang2026prototype,zhanglearning}. Dual debiasing and dynamic logit adjustment modify the final scores \citep{citation-0,wu2026dynamic}. These designs enlarge the set of rules that a selector may compare.

Flatness-guided and robust prompt methods target prediction stability \citep{li2025flatness,zhu2026robustifying,kim2026ss}. Flatness-aware pretraining addresses prompt calibration \citep{DBLP:journals/corr/abs-2604-27715}. Bilateral information use and universal prompt learning explore different sources of test-time signal \citep{sun2026bilateral,DBLP:conf/cvpr/FuWLD26}. Test-time adaptation also supports zero-shot value functions \citep{ziakas2026vita}. Beyond single-label classification, cross-modal retrieval and multi-label prediction need different outputs \citep{DBLP:conf/iclr/Li0Z0XY25,DBLP:conf/iclr/WuY0CL25}. Our experiments use a shared class label so that one query directly compares all candidate rules.

\end{document}